\documentclass[11pt,letterpaper]{article}

\PassOptionsToPackage{authoryear,round,semicolon}{natbib}
\PassOptionsToPackage{shortlabels,inline}{enumitem}
\PassOptionsToPackage{hidelinks}{hyperref}
\PassOptionsToPackage{dvipsnames}{xcolor}

\usepackage[letterpaper,margin=1in]{geometry}
\usepackage{parskip}
\usepackage{txie}

\definecolor{brilliantrose}{rgb}{1.0, 0.33, 0.64}
\definecolor{Klein}{HTML}{002FA7}
\colorlet{txblue}{RoyalBlue!70!NavyBlue}

\usepackage[utf8]{inputenc}
\usepackage[T1]{fontenc}
\usepackage{microtype}
\usepackage{booktabs}
\usepackage{url}

\allowdisplaybreaks

\hypersetup{pdfauthor={}, pdftitle={Outcome-based Offline Reinforcement Learning}}

\Crefname{theorem}{Theorem}{Theorems}
\Crefname{lemma}{Lemma}{Lemmas}
\Crefname{proposition}{Proposition}{Propositions}
\Crefname{corollary}{Corollary}{Corollaries}
\Crefname{assumption}{Assumption}{Assumptions}
\Crefformat{equation}{#2Eq.\;(#1)#3}
\Crefformat{figure}{#2Figure #1#3}
\Crefformat{assumption}{#2Assumption #1#3}

\title{When Does Trajectory-Level Supervision Permit Efficient Offline Reinforcement Learning?}

\author{%
  \begin{tabular}[t]{c}
    Xuanfei Ren \\
    University of Wisconsin-Madison \\
    Madison, WI \\
    \texttt{xuanfeiren@cs.wisc.edu}
  \end{tabular}
  \hspace{2em}
  \begin{tabular}[t]{c}
    Tengyang Xie\thanks{Corresponding author.} \\
    University of Wisconsin-Madison \\
    Madison, WI \\
    \texttt{tx@cs.wisc.edu}
  \end{tabular}
}
\date{}

\begin{document}

\maketitle

\begin{abstract}
  Offline reinforcement learning is typically analyzed under process-level reward
  supervision, yet many sequential decision datasets record only trajectory-level
  outcomes. We develop a statistical theory for offline policy optimization from
  such outcome-level supervision. We first study the canonical setting where the
  target remains the expected cumulative reward, but each offline trajectory
  provides only a scalar label whose conditional mean is the cumulative return.
  We propose OPAC, a pessimistic actor-critic algorithm that learns a latent
  reward model and optimizes a policy from trajectory-level labels. We prove a
  high-probability guarantee of order
  $\widetilde O(H^2\sqrt{C_{sa}(\pi^\star)/n})$ and a matching lower bound,
  characterizing the sharp statistical cost of replacing process-level rewards
  with one trajectory-level label. We then extend the principle to
  preference-based feedback, preserving the leading horizon and concentrability
  dependence up to preference-model constants. Finally, we study generalized
  outcome-based offline RL, where both the supervision and the objective are
  trajectory-level quantities induced by a nonlinear aggregation of latent
  per-step rewards. This problem is not learnable in general: for all-success
  objectives, any offline learner may require $\Omega(2^H)$ trajectories even
  with deterministic transitions and constant concentrability. We then identify a
  tractable regime through two structural coefficients, $\kappa_\mu(\sigma)$ and
  $\chi_\mu(\sigma)$, capturing information loss in outcome aggregation and
  generalized Bellman updates, under which generalized OPAC achieves polynomial
  sample complexity. Together, our results delineate when outcome-level
  supervision enables sample-efficient offline control and when missing
  process-level rewards create fundamental statistical barriers.
  \end{abstract}

\section{Introduction}
Offline reinforcement learning asks whether a good policy can be learned from a fixed dataset of past interactions \citep{lange2012batch,levine2020offline,jiang2025offline}, with a rich line of empirical algorithms developed in deep RL \citep[e.g.,][]{fujimoto2019off,kumar2020conservative,fujimoto2021minimalist,kostrikov2021offline}. Most theory assumes that each trajectory is annotated with process-level rewards \citep{agarwal2019reinforcement,foster2023foundations}: after every action, the learner observes the reward assigned to that state-action pair. This assumption gives direct access to Bellman targets, but it is often stronger than the supervision available in long-horizon applications. A dataset may record only whether a task eventually succeeded \citep{andrychowicz2017hindsight,lightman2023let}, what final score was achieved, whether a proof was accepted \citep{uesato2022solving}, whether a patient recovered \citep{komorowski2018artificial,gottesman2019guidelines}, or which of two completed trajectories a human preferred \citep{christiano2017deep}. The feedback is trajectory-level, while the decisions remain sequential \citep{jia2025we,chen2025outcome}.

This paper develops a statistical theory for offline policy optimization from
outcome-level supervision. The central question is not only whether such
supervision is sufficient, but also where the statistical difficulty comes from:
distribution shift in offline control \citep{xie2021bellman,cheng2022adversarially,jiang2025offline},
recovering latent per-step rewards from aggregated labels
\citep{jia2025we,chen2025outcome}, or the trajectory-level objective itself
\citep{zhang2020variational}. We show that these sources of difficulty lead to
qualitatively different regimes.

We begin with the regime closest to classical offline RL. The target objective is  the usual expected cumulative reward
  $J(\pi)=\EE_{\tau\sim\pi}   \left[\sum_{h=1}^H r^\star(s_h,a_h)\right]$,
but the offline data do not reveal the per-step rewards. Instead, each sampled trajectory is labeled by a scalar outcome whose conditional mean is the cumulative return. Thus the objective is unchanged and only the observation model is weakened. This isolates a basic statistical question:
\begin{center}
    \emph{What is the cost of replacing $H$ local reward observations by one trajectory-level label?}
\end{center}

We answer this question with matching upper and lower bounds, up to logarithmic factors. Our algorithm, OPAC, is a pessimistic actor-critic method that jointly learns a latent per-step reward and a value function. The reward model is fit by trajectory-level regression, the critic is constrained through a plug-in Bellman error, and the policy update uses pessimism to account for distribution shift in the offline data \citep{xie2021bellman}. Under standard realizability and completeness assumptions, for bounded per-step rewards and bounded trajectory outcomes, the output policy satisfies
  $J(\pi^\star)-J(\bar\pi)
  =
  \widetilde O   \left(H^2\sqrt{{C_{sa}(\pi^\star)}/{n}}\right)$,
or equivalently needs $\widetilde O(H^4C_{sa}(\pi^\star)/\varepsilon^2)$ trajectories for $\varepsilon$-optimality (\Cref{theorem:outcome-based-offline-rl}). We also prove an $\Omega(H^4/\varepsilon^2)$ trajectory lower bound at constant state-action concentrability (\Cref{thm:outcome_based_lower_bound}). The hard instance has deterministic transitions and only two actions, so the additional factor relative to process-level reward supervision is not caused by exploration or transition estimation; it comes from compressing reward information into a single scalar outcome.

We then consider a weaker but practically important form of trajectory-level
feedback: pairwise preferences. The learner observes which of two trajectories is
preferred, generated by a Bradley--Terry--Luce model
\citep{bradley1952rank,christiano2017deep}, while the optimization target
remains the same cumulative return. The same algorithmic template applies after
replacing squared trajectory-return regression with logistic preference
regression. The resulting guarantee preserves the leading
$H^2\sqrt{C_{sa}(\pi^\star)/n}$ dependence, up to preference model constants 
(\Cref{thm:pref-outcome-based-main}). Thus, for the standard cumulative-reward
objective, calibrated scalar outcomes are not essential: sufficiently
informative comparisons can support the same pessimistic offline-control
mechanism.

The classical cumulative reward, however, is only one way trajectory-level
feedback can arise. In many problems, the observed outcome is not merely a noisy
proxy for cumulative reward, but the performance criterion itself. This motivates
a generalized RL formulation in which the environment still admits latent
per-step reward primitives, while policies are evaluated through a known
trajectory-level aggregation rule. Concretely, for any candidate reward process
$r=(r_1,\ldots,r_H)$, define
$R(\tau;r)=\sigma(r_1(s_1,a_1),\ldots,r_H(s_H,a_H))$, where
$\sigma:\RR^H\to[0,V_{\max}]$ is known and may be nonlinear. The learning
objective is
\begin{align*}
  J(\pi)=\EE_{\tau\sim\pi}[
  \sigma(r^\star_1(s_1,a_1),\ldots,r^\star_H(s_H,a_H))].
\end{align*}
This formulation is already a meaningful control problem before considering
trajectory-level supervision \citep{zhang2020variational,barakat2023reinforcement}. The agent still makes sequential decisions, and the
environment admits local reward primitives, but the desired behavior may
depend on how these primitives combine over the full trajectory. Such criteria
arise naturally in long-horizon tasks, where forcing the objective into a
cumulative reward can change the quantity one intends to optimize.

The outcome-based version is especially important because many datasets record
this trajectory-level criterion directly. In this setting, the learner observes
outcomes $Y$ satisfying $\EE[Y\mid\tau]=R(\tau;r^\star)$, rather than the latent
process rewards. Thus, the trajectory outcome is both the supervision signal and
the optimization target. This makes the formulation more faithful to many real
datasets, but also statistically more delicate: the learner must optimize a
generalized objective from generalized trajectory-level supervision. This leads
to a second, sharper question:
\begin{center}
  \emph{Is efficient learning still possible when both supervision and the objective are generalized?}
\end{center}

In general, the answer is no. The clearest example is the all-success objective,
where rewards are binary and $R(\tau;r)=\prod_{h=1}^H r_h(s_h,a_h)$, so a
trajectory has value one only if every stage succeeds. We prove that, for this
objective, any offline learner may require $\Omega(2^H)$ trajectories to obtain
nontrivial performance, even with deterministic transitions and constant state-action
concentrability (\Cref{thm:main-lb-allsuccess}).

This impossibility points to two distinct bottlenecks. The first is outcome
aggregation: observing only $R(\tau;r^\star)$, a nonlinear $\sigma$ may map
different per-step reward processes to nearly indistinguishable trajectory
outcomes. We quantify this inverse problem by the Reward Process Coefficient
$\kappa_\mu(\sigma)$ (\Cref{def:kappa}), which measures how well reward-process
differences are preserved after aggregation.
The second bottleneck comes from the generalized objective itself. To obtain a
tractable theory, we restrict attention to aggregations that admit Bellman-style
dynamic programming. Even then, generalized Bellman targets may further compress
reward differences when forming one-step targets. We quantify this effect by the
Bellman Inverse Coefficient $\chi_\mu(\sigma)$ (\Cref{def:chi}), which measures
how well generalized Bellman targets preserve reward-process differences.
With these two measures, we identify a tractable regime. Under structural
assumptions, Generalized OPAC achieves  suboptimality
$\widetilde O\bigl(
V_{\max}^2L\sqrt{\kappa_\mu(\sigma)H^2C_{sa}(\pi^\star)/n}
+
V_{\max}^2L\sqrt{\chi_\mu(\sigma)H^4/n}
\bigr)$ up to lower-order terms 
(\Cref{thm:main-gen-outcome}). Thus the rate remains polynomial in the horizon,
sample size, and offline concentrability, while $\kappa_\mu(\sigma)$ and
$\chi_\mu(\sigma)$ track the two information-loss mechanisms above.

\paragraph{Our contributions}
We study offline RL from trajectory-level supervision. First, for the standard
cumulative-reward objective with scalar trajectory outcomes, we propose OPAC and
prove matching upper and lower bounds up to logarithmic factors
(\Cref{theorem:outcome-based-offline-rl,thm:outcome_based_lower_bound}),
characterizing the cost of replacing step-level rewards by trajectory-level
supervision. Second, we extend OPAC to trajectory preferences, with
guarantees matching the leading horizon and concentrability dependence of the
scalar-outcome setting (\Cref{thm:pref-outcome-based-main}). Third, to the best
of our knowledge, we are the first to study offline RL where both the supervision
and the objective are generalized trajectory-level quantities: the observed
outcome is a known aggregation of all stage-wise rewards, and the learning goal
is to optimize this same quantity. In this setting, we prove an exponential
impossibility result (\Cref{thm:main-lb-allsuccess}) and identify a learnable
regime through $\kappa_\mu(\sigma)$ and $\chi_\mu(\sigma)$, under which
Generalized OPAC achieves polynomial sample complexity
(\Cref{thm:main-gen-outcome}).

\section{Preliminary}\label{section:preliminary}

\paragraph{Finite-horizon MDPs, policies, and occupancy}
A finite-horizon Markov decision process is specified by
$M=(\cS,\cA,P,r^\star,H,s_1)$, where:
(i) $H\in\mathbb N_{\ge 1}$ is the horizon;
(ii) $\cS=\bigsqcup_{h=1}^H\cS_h$ and
$\cA=\bigsqcup_{h=1}^H\cA_h$ are layered state and action spaces;
(iii) $P=(P_h)_{h=1}^{H-1}$, with
$P_h:\cS_h\times\cA_h\to\Delta(\cS_{h+1})$, is the transition kernel;
(iv) $r^\star=(r_h^\star)_{h=1}^H$ is the unknown mean process reward, with
$r_h^\star:\cS_h\times\cA_h\to[0,1]$; and
(v) $s_1\in\cS_1$ is a fixed initial state.
A trajectory is
$\tau=(s_1,a_1,\dots,s_H,a_H)
\in\cS_1\times\cA_1\times\cdots\times\cS_H\times\cA_H$.
The reward function is part of the environment and defines the optimization
objective, but in our outcome-based setting the process-level rewards are not
observed in the offline data.
A Markov policy is $\pi=(\pi_h)_{h=1}^H$, with
$\pi_h:\cS_h\to\Delta(\cA_h)$. We fix a policy class $\Pi$ and denote the
behavior policy used to collect data by $\mu\in\Pi$. For any $\pi\in\Pi$, the
step-$h$ state--action occupancy is $d_h^\pi(s,a)
\coloneqq
\PP_{\tau\sim\pi}\bigl[(s_h,a_h)=(s,a)\bigr]$,
where $\tau\sim\pi$ denotes a rollout of $\pi$ from $s_1$ under $P$. We write
$\EE_\pi[\cdot]=\EE_{\tau\sim\pi}[\cdot]$ and
$\EE_\mu[\cdot]=\EE_{\tau\sim\mu}[\cdot]$, and denote the step-$h$ occupancy of
$\mu$ by $d_h^\mu$.

\paragraph{Cumulative rewards, values, objectives, and function classes}
For any candidate mean process reward $r=(r_h)_{h=1}^H$, define the cumulative
 reward $ R(\tau;r)
 \coloneqq
 \sum_{h=1}^H r_h(s_h,a_h)$ and the step-$h$ $Q$-function under $\pi$ by $Q_h^{\pi,r}(s,a)
 \coloneqq
 \EE_\pi\left[
   \sum_{h'=h}^H r_{h'}(s_{h'},a_{h'})
   \,\Big|\,
   (s_h,a_h)=(s,a)
 \right]$,
and let
$V_h^{\pi,r}(s)\coloneqq
\EE_{a\sim\pi_h(\cdot\mid s)}[Q_h^{\pi,r}(s,a)]$.
When $r=r^\star$, we omit it from the superscript and write
$Q_h^\pi,V_h^\pi$, and set $R^\star(\tau)\coloneqq R(\tau;r^\star)$.
The policy value is $
  J(\pi)
  \coloneqq
  \EE_\pi[R^\star(\tau)]
  =
  V_1^\pi(s_1)$,
and the in-class optimal comparator is
$\pi^\star\in\argmax_{\pi\in\Pi}J(\pi)$.
We are given a per-step value-function class
$\cF=\cF_1\times\cdots\times\cF_H$, where
$\cF_h\subseteq\{f:\cS_h\times\cA_h\to[0,V_{\max}]\}$, and a mean
process-reward class
$\cR\subseteq\{r=(r_h)_{h=1}^H:r_h:\cS_h\times\cA_h\to[0,1]\}$.
We identify $f\in\cF$ with the tuple $(f_1,\dots,f_H)$ and, since
$\cS_h\cap\cS_{h'}=\emptyset$, also write $f(s,a)$ for $f_h(s,a)$ whenever
$s\in\cS_h$.

\paragraph{Bellman operator}
For a reward $r=(r_h)_{h=1}^H$ and policy $\pi$, the Bellman operator $\cT_r^\pi$ acts on $f:\cS\times\cA\to\RR$ by
\begin{align}\label{eq:def-bellman-op}
  (\cT_r^\pi f)(s,a)   \coloneqq   r_h(s,a)+\EE_{s'\sim P_h(\cdot\mid s,a)}   \bigl[f_{h+1}(s',\pi)\bigr],\qquad (s,a)\in\cS_h\times\cA_h,
\end{align}
with the boundary convention $f_{H+1}\equiv 0$, and we use the shorthand $f_h(s,\pi)\coloneqq\EE_{a\sim\pi_h(\cdot\mid s)}[f_h(s,a)]$. When the reward subscript is omitted, $\cT^\pi\coloneqq\cT_{r^\star}^\pi$.

\paragraph{Offline data and coverage}
Unless stated otherwise, $\cD=\{(\tau^{(i)},Y^{(i)})\}_{i=1}^{n}$ denotes an
offline dataset of $n$ i.i.d.\ trajectories $\tau^{(i)}\sim\mu$, each annotated
with a trajectory-level outcome $Y^{(i)}$ satisfying
$\EE[Y^{(i)}\mid \tau^{(i)}]=R^\star(\tau^{(i)})
=\sum_{h=1}^H r_h^\star(s_h^{(i)},a_h^{(i)})$. The precise observation model for
$Y^{(i)}$ depends on the section. We write
$\EE_\cD[\phi(\tau,Y)]\coloneqq n^{-1}\sum_{i=1}^n\phi(\tau^{(i)},Y^{(i)})$ for
empirical averages.

The all-step state--action concentrability
\citep{chen2019information,xie2021bellman} of $\pi\in\Pi$ with respect to $\mu$
is
\begin{align}\label{eq:def-Csa}
  C_{sa}(\pi)
  \coloneqq
  \max_{h\in[H]}
  \sup_{(s,a)\in\cS_h\times\cA_h}
  \frac{d_h^\pi(s,a)}{d_h^\mu(s,a)},
\end{align}
and $C_{sa}$ is always computed with respect to the fixed behavior policy $\mu$.
This density-ratio condition is a standard coverage assumption in offline RL
\citep{munos2007performance,farahmand2010error,chen2019information}. It could
be relaxed to function-class-dependent coverage notions, where the density ratio
is tested only against functions in the relevant class
\citep{duan2020minimax,xie2021bellman,song2022hybrid}. Since this paper focuses
on the granularity of reward supervision rather than coverage assumptions, we
use the classical density-ratio formulation for simplicity.

\paragraph{Asymptotic notation}
We use $O(\cdot)$, $\Omega(\cdot)$, and $\Theta(\cdot)$ for standard asymptotic bounds up to universal numerical constants. The notations $\widetilde O(\cdot)$, $\widetilde\Omega(\cdot)$, and $\widetilde\Theta(\cdot)$ additionally hide factors polylogarithmic in the problem parameters, such as $n,H,|\cA|,|\Pi|,|\cF|,|\cR|$, and $1/\delta$. Finally, $a\lesssim b$ means $a\le Cb$ for a universal constant $C>0$, and $a\gtrsim b$ is defined analogously.

\section{Sample-Efficient Offline RL with Outcome Reward}\label{section:algorithms}

We work in the offline framework of \Cref{section:preliminary}, specializing to
the case where each trajectory carries a single scalar label rather than
process-level rewards, while the optimization target remains the standard
expected cumulative return under $r^\star$. Thus, the
learner observes only one unbiased cumulative-reward outcome after the trajectory
terminates; it does not observe the reward vector
$(r_1^\star(s_1,a_1),\ldots,r_H^\star(s_H,a_H))$ or any per-step reward labels.
With $r_h^\star\in[0,1]$, the latent cumulative return lies
in $[0,H]$. In this section, we further assume $Y^{(i)}\in[0,H]$ and set
$V_{\max}=H$.

\subsection{Algorithm}

We define empirical criteria for policy mismatch, Bellman consistency under a
candidate mean reward, and regression to scalar trajectory outcomes. Our proposed algorithm OPAC then
alternates pessimistic evaluation with layer-wise exponential-weights policy
updates.

\paragraph{Empirical criteria}
Define $y_h^{r,f,\pi}\coloneqq r_h(s_h,a_h)+f_{h+1}(s_{h+1},\pi)$ with the
convention $f_{H+1}=0$, and
\begin{subequations}\label{eq:all_losses_emp}
  \begin{align}
    \cL_\cD(\pi,f)
    &\coloneqq
    \sum_{h=1}^{H}
    \EE_\cD\!\left[
      f_h(s_h,\pi)-f_h(s_h,a_h)
    \right],
    \label{eq:loss_policy_emp}
    \\
    \cL_\cD^{\mathrm{BE}}(\pi,r,f)
    &\coloneqq
    \sum_{h=1}^{H}
    \left\{
    \EE_\cD\!\left[
      \bigl(f_h(s_h,a_h)-y_h^{r,f,\pi}\bigr)^2
    \right]
    -
    \min_{g\in\cF_h}
    \EE_\cD\!\left[
      \bigl(g(s_h,a_h)-y_h^{r,f,\pi}\bigr)^2
    \right]
    \right\},
    \label{eq:loss_bellman_emp}
    \\
    \cL_\cD^{\mathrm{RM}}(r)
    &\coloneqq
    \EE_\cD\!\left[
      \left(Y-\sum_{h=1}^H r_h(s_h,a_h)\right)^2
    \right].
    \label{eq:loss_reward_emp}
  \end{align}
\end{subequations}
The first term measures empirical policy mismatch, the second is a plug-in
Bellman error under the candidate reward $r$, and the third fits the latent
process reward to the observed trajectory outcomes. The minimization over $g$
is the standard double-sampling correction for Bellman regression, removing the
transition-noise variance from the squared target. Together, these criteria
enforce Bellman consistency for the critic while anchoring the candidate reward
to the outcome labels.

\paragraph{Outcome-based Pessimistic Actor-Critic (OPAC)}
Fix $\eta,\beta>0$ and $K\in\mathbb N$. Initialize
$\pi_{1,h}(\cdot\mid s)=\mathrm{Unif}(\cA_h)$. For $k=1,\ldots,K$, perform
\begin{align}
  &\textit{Pessimistic evaluation:}\quad
  (f_k,r_k)
  \in
  \argmin_{(f,r)\in\cF\times\cR}
  \cL_\cD(\pi_k,f)
  +\beta \cL_\cD^{\mathrm{BE}}(\pi_k,r,f)
  +\beta \cL_\cD^{\mathrm{RM}}(r),
  \label{eq:alg-pessimism}
  \\
  &\textit{Policy improvement:}\quad
  \pi_{k+1,h}(\cdot\mid s)
  \propto
  \pi_{k,h}(\cdot\mid s)
  \exp\bigl(f_{k,h}(s,\cdot)/\eta\bigr),
  \qquad h\in[H].
  \label{eq:alg-policy-optimization}
\end{align}
The output is the mixture policy
$\bar\pi\coloneqq\mathrm{Unif}(\pi_1,\ldots,\pi_K)$: at deployment, sample
$k\sim\mathrm{Unif}[K]$ once and execute $\pi_k$ for the entire episode.
\subsection{Theoretical Analysis}

We state the assumptions and main upper bound for OPAC. Besides standard
function-approximation conditions, the outcome-specific assumption is reward
realizability.

\begin{assumption}[Reward realizability]\label{assumption:realizable-R}
  The ground-truth mean process reward satisfies $r^\star\in\cR$.
\end{assumption}

We also assume approximate value realizability and Bellman completeness. The
latter is imposed uniformly over candidate rewards $r\in\cR$, since OPAC
estimates rewards from outcome labels and evaluates Bellman backups under these
candidates.

\begin{assumption}[Approximate realizability]\label{assumption:realizable-f}
  For all $\pi\in\Pi$,
  $\min_{f\in\cF}\sup_{h,\nu}\Vert f_h-(\cT_{r^\star}^\pi f)_h\Vert_{2,\nu}^2
  \leq \varepsilon_\cF$.
\end{assumption}

\begin{assumption}[Completeness]\label{assumption:completeness}
  For every $(\pi,r,f)\in\Pi\times\cR\times\cF$,
  $\min_{g\in\cF}\Vert g-\cT_r^\pi f\Vert_{2,\mu}^2
  \leq \varepsilon_{\cF,\cF}$.
\end{assumption}

\begin{theorem}[Upper bound for outcome-based offline RL]\label{theorem:outcome-based-offline-rl}
  Suppose \Cref{assumption:realizable-R,assumption:realizable-f,assumption:completeness} hold.
  Running OPAC for $K$ iterations with suitable parameters $\eta$ and $\beta$
  returns a policy $\bar\pi$ such that, with probability at least $1-\delta$,
  \begin{align}\label{eq:main-ub-general}
    J(\pi^\star)-J(\bar\pi)
    =
    \widetilde O\left(
    H^2\sqrt{\frac{C_{sa}(\pi^\star)}{n}}
    +\frac{H^2}{\sqrt K}
    +\sqrt{H C_{sa}(\pi^\star)\varepsilon_\cF}
    +\sqrt{H C_{sa}(\pi^\star)\varepsilon_{\cF,\cF}}
    +H\sqrt{\varepsilon_\cF}
    \right).
  \end{align}
  In particular, if $\varepsilon_\cF=\varepsilon_{\cF,\cF}=0$ and $K\ge n$, then $ J(\pi^\star)-J(\bar\pi)
  =
  \widetilde O\left(
  H^2\sqrt{{C_{sa}(\pi^\star)}/{n}}
  \right)$,
  or equivalently $n=\widetilde O(H^4C_{sa}(\pi^\star)/\varepsilon^2)$
  trajectories suffice for $\varepsilon$-optimality.
\end{theorem}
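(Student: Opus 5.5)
Our plan follows the pessimistic actor-critic template of \citet{xie2021bellman} and \citet{cheng2022adversarially}, adding a reward-learning component. We decompose the regret against the comparator $\pi^\star$ as
\begin{align*}
  J(\pi^\star)-J(\bar\pi)
  =\frac1K\sum_{k=1}^K\Bigl[\bigl(J(\pi^\star)-f_{k,1}(s_1,\pi_k)\bigr)+\bigl(f_{k,1}(s_1,\pi_k)-J(\pi_k)\bigr)\Bigr].
\end{align*}
The second bracket is a pessimism term, and we will show it is at most $\widetilde O(H\sqrt{\varepsilon_\cF}+H^2/\sqrt n)$. To do so, consider the minimizer $(f_k,r_k)$. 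Its objective value is at most the value at the comparison pair $(f^{\pi_k},r^\star)$, where $f^{\pi_k}$ is the approximately realizable critic from \Cref{assumption:realizable-f}. By \Cref{assumption:realizable-R}, together with a Bellman concentration argument under \Cref{assumption:completeness}, the penalties at this comparison pair are small. The loss $\cL_\cD(\pi_k,f)$ telescopes in expectation to $f_1(s_1,\pi_k)-J^{\mathrm{plug}}$ on $\mu$-data. For the policy-improvement part we use the standard identity
\begin{align*}
  J(\pi^\star)-f_1(s_1,\pi_k)=\sum_h\EE_{d^{\pi^\star}_h}\bigl[(\cT^{\pi_k}_{r^\star}f_k-f_k)_h\bigr]+\sum_h\EE_{d^{\pi^\star}_h}\bigl[f_{k,h}(s,\pi^\star)-f_{k,h}(s,\pi_k)\bigr].
\end{align*}
The last sum is controlled by the mirror-descent (exponential weights) regret bound with range $V_{\max}=H$. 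Over $K$ rounds and $H$ layers, with $\eta\asymp H\sqrt{K/\log|\cA|}$, this contributes $\widetilde O(H^2/\sqrt K)$.

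The key step is to control the Bellman error of $f_k$ under the \emph{true} reward on the $\pi^\star$ distribution. We split it as
\begin{align*}
  \cT^{\pi_k}_{r^\star}f_k-f_k=(\cT^{\pi_k}_{r_k}f_k-f_k)+(r^\star-r_k).
\end{align*}
For the first part, the constraint $\beta\cL^{\mathrm{BE}}_\cD$ together with completeness and standard squared-loss concentration (a uniform bound over $\Pi\times\cR\times\cF$) gives
\begin{align*}
  \sum_h\|f_{k,h}-\cT^{\pi_k}_{r_k}f_k\|_{2,\mu}^2\lesssim H^2\log(\cdot)/n+H\varepsilon_{\cF,\cF}.
\end{align*}
We then transfer this to $d^{\pi^\star}$ by Cauchy--Schwarz and $C_{sa}(\pi^\star)$, and sum over $h$, which yields $H\sqrt{C_{sa}(\pi^\star)\cdot H\log/n}$-type terms.

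For the second part, the sum $\sum_h\EE_{d^{\pi^\star}_h}[r^\star_h-r_{k,h}]$ is the expectation under $\pi^\star$ of the trajectory-level difference $R(\tau;r^\star)-R(\tau;r_k)$. Trajectory regression only controls $\EE_\mu[(R(\tau;r_k)-R^\star(\tau))^2]\lesssim H^2\log|\cR|/n$, so this is not directly a per-step bound. We handle this as follows.
\begin{enumerate}[(i)]
  \item Avoid transferring per-step errors. The sum is the expectation under $\pi^\star$ of a trajectory-level function, but the trajectory density ratio is not bounded by $C_{sa}$.
  \item Instead, combine the reward term with the Bellman term. Under $\mu$, the function $g_h\coloneqq f_{k,h}-\cT^{\pi_k}_{r^\star}f_k$ satisfies a telescoping relation along $\mu$-trajectories, so that $\sum_h g_h$ relates to $R(\tau;r_k)-R^\star(\tau)$ plus martingale terms.
  \item Obtain per-step control by rewriting the $\mu$-expected squared trajectory error and using the $\cL^{\mathrm{BE}}$ constraint. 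This gives $\sum_h\|g_h\|_{2,\mu}\lesssim H\sqrt{H\log/n}$, up to cross terms.
\end{enumerate}
If this route fails, the fallback is the crude bound $|\EE_{\pi^\star}\Delta|\le\sqrt{C_{sa}}\sum_h\|\Delta_h\|_{2,\mu}$ combined with an argument that per-step errors are identifiable. This would cost an extra $\sqrt H$ factor, and we aim to avoid it so as to obtain the $H^2$ rate.

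Finally, we tune $\beta\asymp\sqrt{n/(H^2C_{sa}\log)}$ (or choose it uniformly, as in \citet{xie2021bellman}, via Lagrangian-style arguments), add the approximation terms $\sqrt{HC_{sa}\varepsilon_\cF}$, $\sqrt{HC_{sa}\varepsilon_{\cF,\cF}}$ and $H\sqrt{\varepsilon_\cF}$, and take a union bound to obtain \Cref{eq:main-ub-general}. Setting $K\ge n$ then gives the stated rate.

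The main obstacle is step (ii)--(iii): converting the trajectory-level reward regression guarantee into per-step Bellman error control under $\pi^\star$ at only $C_{sa}$ cost, without incurring an extra $\sqrt H$ factor.
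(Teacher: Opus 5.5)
There is a genuine gap, and it is the step you flag yourself: the reward-mismatch term $\EE_{\pi^\star}[R^\star(\tau)-R(\tau;r_k)]$. Your route (ii)--(iii) and your fallback both aim at per-step control, either $\sum_h\|g_h\|_{2,\mu}$ small or per-step identifiability of $r^\star-r_k$. That control is unattainable, because per-step rewards are not identifiable from trajectory sums. Suppose $r'$ adds $c$ to $r^\star_1$ and subtracts $c$ from $r^\star_2$, and $\cR,\cF$ contain $r'$ and $f'=Q^{\pi_k,r'}$. Then $R(\tau;r')=R^\star(\tau)$ for every $\tau$, and $f'$ has zero population Bellman error under $r'$. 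So $(f',r')$ looks exactly as good as $(Q^{\pi_k},r^\star)$ to the OPAC objective, yet $g_h=f'_h-(\cT^{\pi_k}_{r^\star}f')_h=r'_h-r^\star_h=\pm c$ at $h=1,2$.

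The quantity you actually need, $\EE_{\pi^\star}[\sum_h(r^\star_h-r_{k,h})]$, is trajectory-level (and vanishes in this example), so the argument must stay at trajectory level. The missing ingredient is the change-of-trajectory-measure inequality for additive functionals, \Cref{lemma:change-of-trajectory}: $(\EE_\pi[\sum_h g(s_h,a_h)])^2\le HC_{sa}(\pi)\,\EE_\mu[(\sum_h g(s_h,a_h))^2]$. Your point (i), that the trajectory density ratio is not bounded by $C_{sa}$, is true but beside the point. For sums of per-step functions the transfer costs only $HC_{sa}$, even when the trajectory ratio is $2^H$. Apply it with $g=r^\star-r_k$ and use AM--GM: this gives $\EE_{\pi^\star}[R^\star-R(\cdot;r_k)]\le HC_{sa}(\pi^\star)/\beta+(\beta/4)\cL^{\mathrm{RM}}_\mu(r_k)$, and the $\mu$-side reward term is handled the same way. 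After that, your split $\cT^{\pi_k}_{r^\star}f_k-f_k=(\cT^{\pi_k}_{r_k}f_k-f_k)+(r^\star-r_k)$ is exactly the paper's \Cref{lemma:regret-decomposition}, and no per-step reward control is ever needed.

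A second problem is how you use the penalties. OPAC solves a regularized problem with fixed $\beta$, not a constrained one. So the minimizer does not satisfy $\cL^{\mathrm{BE}}_\mu(\pi_k,r_k,f_k)\lesssim\log/n$. Comparing with $(f_{\pi_k},r^\star)$ only gives $\cL^{\mathrm{BE}}_\cD(\pi_k,r_k,f_k)\lesssim H^2/\beta+\varepsilon_{\mathrm{apx}}$. For $\beta\asymp\sqrt n$ this leads to an $n^{-1/4}$ rate after the change of measure.

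The $n^{-1/2}$ rate comes from keeping the negative terms $-\beta\cL^{\mathrm{BE}}_\cD(\pi_k,r_k,f_k)-\beta[\cL^{\mathrm{RM}}_\cD(r_k)-\cL^{\mathrm{RM}}_\cD(r^\star)]$ from the pessimism inequality. These are cancelled, via the factor-2 self-bounding Bernstein transfers, against the $(\beta/2)(\cL^{\mathrm{BE}}_\mu+\cL^{\mathrm{RM}}_\mu)$ that AM--GM produces on both the $\pi^\star$-side and $\mu$-side terms. Your pessimism bracket and your $\pi^\star$-side Bellman and reward terms therefore have to be bounded jointly, not separately.

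Finally, $\varepsilon_{\mathrm{BE}}=\widetilde O(H^3/n)$, not $H^2/n$, since the loss sums $H$ per-step squared residuals each of scale $H^2$. It is $\sqrt{HC_{sa}(\pi^\star)\varepsilon_{\mathrm{BE}}}$ that produces the leading $H^2\sqrt{C_{sa}(\pi^\star)/n}$. The reward term contributes only the lower-order $\sqrt{HC_{sa}(\pi^\star)\varepsilon_{\mathrm{RM}}}=\widetilde O(H^{3/2}\sqrt{C_{sa}(\pi^\star)/n})$, so there is no extra $\sqrt H$ to fight.
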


The proof of \Cref{theorem:outcome-based-offline-rl} is given in
\Cref{section:proof-additive-upper-bound}. The terms in
\eqref{eq:main-ub-general} have the usual interpretations: statistical error
under coverage $C_{sa}(\pi^\star)$, optimization error from $K$ actor--critic
iterations, and approximation errors from $\varepsilon_\cF$ and
$\varepsilon_{\cF,\cF}$. The distinctive feature is the horizon dependence:
relative to process-level supervision, outcome supervision pays one additional
factor of $H$. The lower bound in \Cref{thm:outcome_based_lower_bound} shows
that this factor is unavoidable, reflecting the statistical price of compressing
$H$ process-level reward observations into a single trajectory outcome; see
\Cref{remark:one-more-H}.
\subsection{Lower Bound}

The next theorem matches scaling of \cref{eq:main-ub-general}, so $\varepsilon$-optimality can require $\Omega(H^4/\varepsilon^2)$ trajectories.

\begin{theorem}[Lower bound for outcome-based offline RL]\label{thm:outcome_based_lower_bound}
  For all sufficiently large horizons $H$ and all $n\ge 64H^2$, there exists a class of outcome-based offline RL instances with deterministic transitions and  $C_{sa}(\pi^\star)\le2$, such that any algorithm using $n$ trajectories must incur
  \begin{align*}%
    \sup_{M}
    \EE_{M}   \left[J_{M}(\pi_{M}^\star)-J_{M}(\widehat\pi)\right]
    \ge
    c   \frac{H^2}{\sqrt n},
  \end{align*}
  for a universal constant $c>0$. Thus, even with constant state--action coverage and no transition-learning difficulty, achieving expected sub-optimality at most $\varepsilon$ requires $\Omega   \left(H^4/\varepsilon^2\right)$ trajectories.
\end{theorem}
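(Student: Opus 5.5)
We would prove \Cref{thm:outcome_based_lower_bound} with an Assouad-type argument over a hypercube of $2^H$ reward functions on a single deterministic chain. The point is that each of the $H$ per-step reward signs must be recovered from a scalar outcome whose noise scale is $H$, not $1$.

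\textbf{Construction.} Take one state per layer, $\cS_h=\{s_h\}$, and two actions $\cA_h=\{0,1\}$. Transitions go deterministically from $s_h$ to $s_{h+1}$ regardless of the action. For $\theta\in\{\pm1\}^H$ set
\[
r^\star_{\theta,h}(s_h,a)=\tfrac12+\varepsilon\,\theta_h(2a-1),
\]
with $\varepsilon\le 1/4$ to be tuned. The behavior policy $\mu$ plays $\mathrm{Unif}\{0,1\}$ at every step. The outcome is
\[
Y=H\cdot\mathrm{Bernoulli}\bigl(R^\star_\theta(\tau)/H\bigr),
\]
so $Y\in[0,H]$ and $\EE[Y\mid\tau]=R^\star_\theta(\tau)$, as the model requires. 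The optimal policy $\pi^\star_\theta$ deterministically plays $a_h=(1+\theta_h)/2$. Since the state sequence is fixed, $d_h^{\pi^\star}(s_h,a)\le 1$ while $d_h^\mu(s_h,a)=1/2$, which gives $C_{sa}(\pi^\star)\le 2$. The policy class can be taken to be all Markov policies, and $\cR,\cF$ can be chosen so that realizability and completeness hold trivially.

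\textbf{Reduction to per-coordinate testing.} For any (possibly randomized, mixture) output $\widehat\pi$, let $p_h\in[0,1]$ be the probability that $\widehat\pi$ plays the optimal action at step $h$ under the deployed distribution. Because transitions ignore actions and the objective is additive,
\[
J_\theta(\pi^\star_\theta)-J_\theta(\widehat\pi)=\sum_{h=1}^H 2\varepsilon\,(1-p_h).
\]
This is linear in each coordinate's error, which is exactly Assouad's setting. Define $\widehat\theta_h$ by thresholding $p_h$ at $1/2$. Each mistake $\widehat\theta_h\neq\theta_h$ costs at least $\varepsilon$. Assouad's lemma then gives
\[
\sup_\theta\EE_\theta[\mathrm{subopt}]\;\ge\;\frac{H\varepsilon}{2}\,\min_{h}\min_{\theta\sim_h\theta'}\bigl(1-\mathrm{TV}(P_\theta^{\otimes n},P_{\theta'}^{\otimes n})\bigr),
\]
where $\theta\sim_h\theta'$ means the two hypotheses differ only in coordinate $h$.

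\textbf{KL bound (main step).} Fix $\theta\sim_h\theta'$. Conditioned on the trajectory $\tau$, the success probabilities $R_\theta(\tau)/H$ and $R_{\theta'}(\tau)/H$ differ by $2\varepsilon/H$ in absolute value. Both lie in $[1/4,3/4]$ because $\varepsilon\le1/4$. The trajectory law is the same under both hypotheses, since $\mu$ is fixed. Using $\mathrm{KL}(\mathrm{Bern}(p)\Vert\mathrm{Bern}(q))\le (p-q)^2/(q(1-q))$, the per-sample KL is at most $c_0\varepsilon^2/H^2$. Hence the $n$-sample KL is at most $c_0 n\varepsilon^2/H^2$, and Pinsker bounds $\mathrm{TV}\le\sqrt{c_0 n\varepsilon^2/(2H^2)}$.

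\textbf{Tuning.} Choose $\varepsilon=H/(8\sqrt n)$. This satisfies $\varepsilon\le 1/8$ precisely when $n\ge 64H^2$, and it makes the TV bounded by a constant below $1$. Substituting into the Assouad bound gives
\[
\sup_\theta\EE_\theta[\mathrm{subopt}]\;\gtrsim\; H\varepsilon\;=\;\Omega\bigl(H^2/\sqrt n\bigr).
\]
Inverting, $\varepsilon$-optimality requires $n=\Omega(H^4/\varepsilon^2)$.

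The step needing the most care is keeping the Bernoulli parameters bounded away from $0$ and $1$, so that the KL really scales as $\varepsilon^2/H^2$. This is what places the outcome noise at scale $H$, as opposed to scale $1$ as it would be with per-step reward labels, and it is exactly where the extra factor of $H$ in the rate comes from. The large-$H$ and $n\ge 64H^2$ conditions are used only to make the constants work.
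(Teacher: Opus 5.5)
Your proof is correct. It uses the same hard instance as the paper: a single-state deterministic chain with two actions, rewards $\tfrac12\pm$ a gap, uniform $\mu$, and outcome $Y=H\cdot\mathrm{Bern}(R/H)$. It also uses the same threshold-at-$1/2$ reduction from the output policy to a parameter estimate. The information-theoretic step is genuinely different.

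The paper applies Fano's inequality over a Varshamov--Gilbert packing $\Theta\subseteq\{0,1\}^H$ with $\log M\ge H/8$ and pairwise Hamming separation at least $H/8$. This forces it to bound the KL between arbitrary, far-apart hypotheses, where the per-trajectory shift in the Bernoulli parameter is random. That is why it needs the second-moment identity $\EE_x\bigl(\|x-\theta_i\|_1-\|x-\theta_j\|_1\bigr)^2=\|\theta_i-\theta_j\|_1$. It also needs $H$ large enough that $M\ge 16$ for Fano to give a constant fraction, which is the source of ``sufficiently large $H$'' in the statement.

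Your Assouad argument only compares hypotheses that differ in one coordinate. There the shift is deterministically $2\varepsilon/H$ for every trajectory, so the KL bound is immediate and no packing is needed. The argument works for every $H\ge 1$, and the constant $\tfrac1{16}(1-1/\sqrt6)\approx 0.037$ is far better than the paper's $1/1024$. Since the suboptimality decomposes exactly as $\sum_h 2\varepsilon(1-p_h)$, Assouad is arguably the more natural tool here. Fano buys nothing extra on this instance beyond being a general multi-hypothesis method.

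Two small inaccuracies, neither affecting validity:
\begin{itemize}
\item With $\varepsilon=H/(8\sqrt n)$, the condition $\varepsilon\le 1/8$ holds iff $n\ge H^2$, not ``precisely when $n\ge 64H^2$.'' Under $n\ge 64H^2$ you get $\varepsilon\le 1/64$, and $n\ge H^2/4$ already ensures the required $\varepsilon\le 1/4$.
\item In your route the large-$H$ hypothesis is not used at all.
\end{itemize}
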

The proof of \Cref{thm:outcome_based_lower_bound} is given in \Cref{subsec:lb-sum-reward-outcomes}.

\begin{remark}[Tightness of OPAC]\label{remark:tight}
\Cref{thm:outcome_based_lower_bound} matches
\Cref{theorem:outcome-based-offline-rl} in every parameter that appears: the
$H^2$ horizon factor, the $\sqrt{1/n}$ statistical rate, and the dependence on
$C_{sa}(\pi^\star)$. Consequently, the
$\widetilde O(H^2\sqrt{C_{sa}(\pi^\star)/n})$ guarantee achieved by
OPAC---through pessimistic actor--critic search and
trajectory-level squared regression on the outcome label---is statistically
optimal up to logarithmic factors among all algorithms that observe bounded
unbiased outcome labels. No refinement of these algorithmic ingredients can
improve the rate without strengthening the supervision model itself.
\end{remark}

\begin{remark}[Comparison with process-supervision offline RL]\label{remark:one-more-H}
Taken together,
\Cref{theorem:outcome-based-offline-rl,thm:outcome_based_lower_bound} identify
$\widetilde\Theta(H^4/\varepsilon^2)$ trajectories as the outcome-supervised
sample complexity. This is strictly more demanding than the process-level
setting, where rewards are observed at every step: \citet{xie2021policy}
establish an information-theoretic lower bound of $\Omega(H^3/\varepsilon^2)$
for that regime. The hard instance in \Cref{thm:outcome_based_lower_bound} has
essentially no transition-learning difficulty---it is a deterministic chain
with a single state per layer, two actions, and a constant single-policy
concentrability coefficient. The extra factor of $H$ therefore cannot be
attributed to exploration, transition uncertainty, or poor coverage; it is the
purely statistical price of compressing $H$ per-step reward signals into a
single trajectory-level label.
\end{remark}

\section{Offline RL from Trajectory-Level Preferences}\label{sec:preference}

We next consider a weaker form of trajectory-level supervision: instead of a
scalar outcome for each trajectory, the learner observes only pairwise
preferences between trajectories. This is the offline analogue of the
preference-based outcome-feedback model studied by \citet{chen2025outcome}, and
also a stylized abstraction of reward modeling in RLHF. Throughout this section,
we keep the normalization $r_h^\star(s,a)\in[0,1]$, so
$R^\star(\tau)\in[0,H]$. The optimization target remains the standard cumulative
return; only the observation model changes.

\paragraph{Preference observation model}
The offline dataset is
$\cD_{\mathrm{Pref}}=\{(\tau^{(i),+},\tau^{(i),-},y^{(i)})\}_{i=1}^n$, where
$\tau^{(i),+},\tau^{(i),-}\stackrel{\mathrm{i.i.d.}}{\sim}\mu$ and
$y^{(i)}\in\{0,1\}$ indicates whether the first trajectory is preferred. We
assume a Bradley--Terry--Luce (BTL) comparison model
\citep{bradley1952rank,christiano2017deep}. For any reward $r\in\cR$, define
\begin{align}\label{eq:btl-choice-prob}
  \mathrm{C}_r(\tau^+,\tau^-)
  \coloneqq
  \frac{\exp(\gamma R(\tau^+;r))}
  {\exp(\gamma R(\tau^+;r))+\exp(\gamma R(\tau^-;r))},
\end{align}
where $\gamma>0$ is fixed. Preferences are generated from the true reward:
\begin{align}\label{eq:btl-model}
  y^{(i)}
  \mid
  (\tau^{(i),+},\tau^{(i),-})
  \sim
  \mathrm{Bern}\bigl(
    \mathrm{C}_{r^\star}(\tau^{(i),+},\tau^{(i),-})
  \bigr).
\end{align}
Thus the learner observes neither $r^\star$ nor $R^\star(\tau)$ directly, but
only noisy binary comparisons.

\paragraph{Preference loss and algorithm}
For a candidate reward $r\in\cR$, define the empirical preference loss
\begin{align}\label{eq:pref-loss}
  \cL_{\cD_{\mathrm{Pref}}}^{\mathrm{Pref}}(r)
  \coloneqq
  \frac1n\sum_{i=1}^n
  \left[
    -y^{(i)}\log \mathrm{C}_r(\tau^{(i),+},\tau^{(i),-})
    -(1-y^{(i)})\log\bigl(1-\mathrm{C}_r(\tau^{(i),+},\tau^{(i),-})\bigr)
  \right].
\end{align}
This replaces the squared reward-model loss
$\cL_\cD^{\mathrm{RM}}(r)$ in \Cref{eq:loss_reward_emp}: both fit the latent
per-step reward from trajectory feedback, but preferences
identify rewards only through return differences. Uniform shifts of all
trajectory returns are therefore unidentifiable, but do not affect policy
optimization.

Let $\cD_{\mathrm{traj}}$ denote the $2n$ individual
trajectories appearing in the preference pairs. Preference-based OPAC uses the
same pessimistic actor--critic template as OPAC, with only the reward-model loss
changed. Specifically, starting from the same uniform policy $\pi_1$, for
$k=1,\ldots,K$ compute
\begin{align}\label{eq:pref-alg-pessimism}
  (f_k,r_k)
  \in
  \argmin_{(f,r)\in\cF\times\cR}
  \cL_{\cD_{\mathrm{traj}}}(\pi_k,f)
  +\beta \cL_{\cD_{\mathrm{traj}}}^{\mathrm{BE}}(\pi_k,r,f)
  +\beta \cL_{\cD_{\mathrm{Pref}}}^{\mathrm{Pref}}(r),
\end{align}
and then apply the same layer-wise exponential-weights policy update as in
\Cref{eq:alg-policy-optimization}. The output is again the mixture policy
$\bar\pi=\mathrm{Unif}(\pi_1,\ldots,\pi_K)$. Thus, the preference setting
requires no new actor--critic machinery: only the reward-fitting changes
from squared regression to logistic  regression.

\paragraph{Sample complexity}
Assume every candidate reward $r\in\cR$ satisfies $0\le r_h(s,a)\le 1$, so
$R(\tau;r)\in[0,H]$, and set $V_{\max}=H$ as in the scalar-outcome setting. We
regard the BTL model as fixed; the comparison constants
$\alpha_{\mathrm C}$ and $c_{\mathrm C}$ are defined in
\Cref{eq:pref-model-constants}.

\begin{theorem}[Upper bound for outcome-based offline RL with preferences]\label{thm:pref-outcome-based-main}
  Suppose 
  \Cref{assumption:realizable-R,assumption:realizable-f,assumption:completeness}
  hold. Running preference-based OPAC for $K$ iterations with suitable
  parameters $\eta$ and $\beta$ returns a policy $\bar\pi$ such that, with
  probability at least $1-\delta$,
  \begin{align}\label{eq:pref-main-rate}
    J(\pi^\star)-J(\bar\pi)
    =
    \widetilde O\Bigg(
     & \sqrt{1+\frac{1}{\alpha_{\mathrm C}}}
    \left[
      H^2\sqrt{\frac{C_{sa}(\pi^\star)}{n}}
      +
      \sqrt{\frac{c_{\mathrm C}H C_{sa}(\pi^\star)}{n}}
    \right] \notag \\
     & \quad +
     \sqrt{HC_{sa}(\pi^\star)\varepsilon_\cF}
     +
     \sqrt{HC_{sa}(\pi^\star)\varepsilon_{\cF,\cF}}
    +\frac{H^2}{\sqrt K}
    +H\sqrt{\varepsilon_\cF}
    \Bigg).
  \end{align}
  In particular, if $\varepsilon_\cF=\varepsilon_{\cF,\cF}=0$ and $K\ge n$, then
  \[
    J(\pi^\star)-J(\bar\pi)
    =
    \widetilde O\left(
    \sqrt{1+1/\alpha_{\mathrm C}}
    \left[
      H^2\sqrt{C_{sa}(\pi^\star)/n}
      +
      \sqrt{c_{\mathrm C}H C_{sa}(\pi^\star)/n}
    \right]
    \right).
  \]
\end{theorem}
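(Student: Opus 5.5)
The plan is to reuse the proof of \Cref{theorem:outcome-based-offline-rl} essentially verbatim. Only the statistical guarantee for the reward-model component changes. Concretely, I would structure the argument in three modules.

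\textbf{Module (i): concentration for the reward-model loss.} This is the only new ingredient. Logistic-loss MLE concentration over the finite class $\cR$ gives, with probability $1-\delta$, that $r^\star$ is near-optimal for $\cL^{\mathrm{Pref}}_{\cD_{\mathrm{Pref}}}$. It also gives that every $r$ with small excess empirical loss satisfies
\[
\EE_{\tau^\pm\sim\mu}\bigl[(\mathrm C_r-\mathrm C_{r^\star})^2\bigr]\lesssim \log(|\cR|/\delta)/n .
\]
This follows from a standard Hellinger/KL argument for Bernoulli likelihoods. I would then translate the bound to returns. With $\Delta_r(\tau)\coloneqq R(\tau;r)-R^\star(\tau)$, the BTL link gives
\[
|\mathrm C_r-\mathrm C_{r^\star}|\ge \alpha_{\mathrm C}\,\bigl|\Delta_r(\tau^+)-\Delta_r(\tau^-)\bigr|,
\]
where $\alpha_{\mathrm C}$ is the lower bound on the sigmoid derivative from \Cref{eq:pref-model-constants}. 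Because $\tau^+,\tau^-$ are i.i.d., the pairwise difference controls the variance:
\[
\EE[(\Delta(\tau^+)-\Delta(\tau^-))^2]=2\,\mathrm{Var}_\mu(\Delta).
\]
Hence $\mathrm{Var}_\mu(\Delta_r)\lesssim \log(|\cR|/\delta)/(\alpha_{\mathrm C} n)$. The mean shift $\EE_\mu\Delta_r$ is unidentified. This is exactly the gauge freedom noted above, and I would handle it by working with centered returns throughout.

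\textbf{Module (ii): transfer to the downstream decomposition.} Here I replace the squared-regression guarantee of the scalar case, $\EE_\mu[\Delta_r^2]\lesssim H^2\log/n$, with the bound from Module (i). For the critic, I would argue as in the scalar proof that the version space is determined by $\cL^{\mathrm{BE}}+\cL^{\mathrm{Pref}}$. In particular, $(f^{\pi_k},r^\star)$ is feasible with slack of order $\beta^{-1}$, which follows from \Cref{assumption:realizable-f,assumption:completeness} and the Bellman-error concentration. Every pair $(f,r)$ in the version space then has small $\|f_h-\cT_r^{\pi_k}f\|_{2,\mu}$, and its reward error $\Delta_r$ is controlled up to a constant shift.

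The performance-difference step then proceeds as before. The pessimistic value $f_{k,1}(s_1,\pi_k)$ lower-bounds $J(\pi_k)$ up to estimation error, and the gap to $J(\pi^\star)$ is charged to on-policy Bellman errors under $\pi^\star$. By $C_{sa}(\pi^\star)$ these transfer to $\mu$.

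The key new point is that the comparator gap involves $J_r(\pi^\star)-J_r(\pi_k)$. This is a difference of values under the same $r$, so any constant shift in returns cancels. Consequently only centered reward error enters, and it is bounded under $\pi^\star$ via a Cauchy--Schwarz change of measure at cost $\sqrt{C_{sa}(\pi^\star)}$ per step, yielding the $H^2\sqrt{C_{sa}/n}$ term. The $c_{\mathrm C}$ term arises from the $O(1)$-scale part of the logistic excess-risk bound, namely the variance and range constant of the log-likelihood ratio. It enters only once per layer rather than per return unit, which is why it scales as $\sqrt{c_{\mathrm C}HC_{sa}/n}$.

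\textbf{Module (iii): optimization error.} The mirror-descent regret of the exponential-weights actor is unchanged and contributes $H^2/\sqrt K$. Choosing $\beta\asymp\sqrt{n}/H$ balances the pessimism slack against the estimation error, and the factor $\sqrt{1+1/\alpha_{\mathrm C}}$ comes from inflating the reward confidence radius by $1/\alpha_{\mathrm C}$.

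\textbf{Expected main obstacle.} The hard part is the centering issue. The performance decomposition in the scalar proof likely uses $\EE_\mu[\Delta_r^2]$ per step, not $\mathrm{Var}_\mu$. So I must either show that a shift-invariant quantity suffices, or normalize $\cR$ (e.g., fix the mean return under $\mu$) without breaking realizability or completeness. My first attempt would be to show that pessimism only ever needs $R(\tau;r)-\EE_\mu R(\cdot;r)$. This should hold because $\cL_\cD(\pi,f)$ and the value gap are invariant to adding a constant to $r_1$ together with the corresponding shift of $f$.
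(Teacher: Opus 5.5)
Your skeleton (reuse the scalar proof and change only the reward-model ingredient) matches the paper's, but neither of the two genuinely new steps is established.

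\textbf{The centering step.} You flag this as the main obstacle and leave it open, and the mechanism you sketch would not close it.
\begin{itemize}
\item A per-step Cauchy--Schwarz change of measure under $\pi^\star$ needs per-step reward errors. Preference data never control these: only the trajectory gap $\bar g_k\coloneqq R(\cdot;r_k)-R^\star$ is identified, and only up to its $\mu$-mean.
\item Splitting $J(\pi^\star)-J(\pi_k)$ through $J_{r_k}(\pi^\star)-J_{r_k}(\pi_k)$ does not help. The shift does not cancel in $J(\pi^\star)-J_{r_k}(\pi^\star)$ or in $J_{r_k}(\pi_k)-J(\pi_k)$ separately. Their sum, $\EE_{\pi_k}[\bar g_k]-\EE_{\pi^\star}[\bar g_k]$, would require coverage of $\pi_k$, which you do not have.
\end{itemize}
The missing observation is that in the $\mu$-anchored decomposition of \Cref{lemma:regret-decomposition}, the reward term is already a $\pi$-minus-$\mu$ difference: $(\mathrm{IV})_k=-(\EE_\pi[\bar g_k]-\EE_\mu[\bar g_k])$. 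It is therefore shift-invariant. Apply \Cref{lemma:change-of-trajectory} to $g_{k,h}-\EE_\mu[\bar g_k]/H$, where $g_{k,h}\coloneqq r_{k,h}-r^\star_h$. This gives $(\EE_\pi[\bar g_k]-\EE_\mu[\bar g_k])^2\le HC_{sa}(\pi)\,\mathrm{Var}_\mu(\bar g_k)$, and AM--GM with $\beta$ then proceeds as before. Your invariance idea can also be made rigorous: the decomposition holds for arbitrary $(f,\widehat r)$, and each of its terms is unchanged under $(\widehat r_1,f_1)\mapsto(\widehat r_1+c,f_1+c)$. But the proposal only conjectures this.

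\textbf{Module (i) and the BTL constants.} Module (i) does not deliver the stated dependence on $\alpha_{\mathrm C}$ and $c_{\mathrm C}$.
\begin{itemize}
\item Linearizing the link gives $|\mathrm C_r-\mathrm C_{r^\star}|\ge\gamma m\,|\Delta_r-\Delta^\star|$, where $m\coloneqq\inf_{|w|\le H}e^{\gamma w}/(1+e^{\gamma w})^2$. Combined with a Pinsker/Hellinger bound, this yields $\mathrm{Var}_\mu(\bar g)\lesssim(\text{excess log-loss})/(\gamma m)^2$.
\item The required bound is $(\text{excess log-loss})/(2\alpha_{\mathrm C})=(\text{excess log-loss})/(\gamma^2 m)$. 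Your route is worse by a factor of order $1/m$, which is exponential in $\gamma H$.
\item Your display with $\alpha_{\mathrm C}$ would even give $1/\alpha_{\mathrm C}^2$. Also, $\alpha_{\mathrm C}$ in \cref{eq:pref-model-constants} is not the derivative lower bound.
\end{itemize}
What is needed is curvature in the return difference. For a fixed pair, the conditional log-risk is $\phi(w)=-p_\star\log s_\gamma(w)-(1-p_\star)\log(1-s_\gamma(w))$, with $s_\gamma(w)=e^{\gamma w}/(1+e^{\gamma w})$ and $p_\star=s_\gamma(\Delta^\star)$. It satisfies $\phi'(\Delta^\star)=0$ and $\phi''\ge 2\alpha_{\mathrm C}$ on $[-H,H]$. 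Hence $\cL_\mu^{\mathrm{Pref}}(r)-\cL_\mu^{\mathrm{Pref}}(r^\star)\ge 2\alpha_{\mathrm C}\,\mathrm{Var}_\mu(\bar g)$.

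Likewise, a Hellinger argument produces no $c_{\mathrm C}$ at all. That constant comes from Bernstein applied to the log-likelihood ratio $W_r$, using $|W_r|\le B_{\mathrm C}$ and $\EE[W_r^2]\le b_{\mathrm C}\EE[W_r]$. This gives a uniform self-bounding inequality: population excess $\le 2\times$ empirical excess $+\,c_{\mathrm C}\log(|\cR|/\delta)/n$.

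This form matters. A version-space statement about rewards with small empirical excess does not apply to $r_k$: under the $\beta$-regularized objective, its empirical excess is only $O(H^2/\beta)$, and plugging that in directly would not give the $n^{-1/2}$ rate. The factor-$2$ self-bounding bound is what lets the $(\beta/2)$-weighted excess from AM--GM cancel against the pessimism inequality, once $\beta\cL^{\mathrm{Pref}}_{\cD_{\mathrm{Pref}}}(r^\star)$ is subtracted from both sides.
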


The proof is given in \Cref{appendix:pref-upper}. The rate has the same
offline-RL structure as \Cref{theorem:outcome-based-offline-rl}: the
coverage coefficient $C_{sa}(\pi^\star)$, and the leading term preserves the
$H^2\sqrt{C_{sa}(\pi^\star)/n}$ dependence. The
additional constants $\alpha_{\mathrm C}$ and $c_{\mathrm C}$ quantify the comparison model on the bounded return range $[0,H]$.

\section{Outcome-Based RL with Generalized Objective}
\label{section:main-result-generalized}
The preceding sections fix the cumulative-reward objective and weaken only the supervision, replacing the scalar trajectory outcome by a binary preference.
In many applications, however, the trajectory-level signal is not a noisy proxy for a latent per-step sum but the performance criterion itself, and this criterion may be a nonlinear function of the per-step rewards. The natural object of study is then the trajectory outcome itself, both as the data we observe and as the quantity we wish to maximize.

This motivates the following generalized RL formulation. Fix a known stage-wise aggregation rule $\sigma:\RR^H\to[0,V_{\max}]$. For any trajectory $\tau=(s_1,a_1,\dots,s_H,a_H)$ and any candidate per-step reward $r=(r_h)_{h=1}^H\in\cR$, define the trajectory return
$
  R(\tau;r)   =   \sigma\bigl(r_1(s_1,a_1),\ldots,r_H(s_H,a_H)\bigr)
$
and the corresponding policy value\footnote{In this section we overload notations, but just extend from the classical setting to the generalized objective setting.}
\begin{align}\label{eq:main-gen-objective}
  J(\pi)   =   \EE_{\tau\sim\pi}[R(\tau;r^\star)]   =   \EE_{\tau\sim\pi}   \left[\sigma\bigl(r^\star_1(s_1,a_1),\ldots,r^\star_H(s_H,a_H)\bigr)\right].
\end{align}
Taking $\sigma$ to be the sum recovers the classical setting; other choices capture nonlinear trajectory criteria, such as the following all-success outcome.

\begin{example}[All-success outcome]\label[example]{ex:allsuccess-outcome}
  Suppose $r_h(s,a)\in\{0,1\}$, and let
  $
    \sigma(u_1,\ldots,u_H)=\prod_{h=1}^H u_h.
  $
  Then
  $
    R(\tau;r)=\prod_{h=1}^H r_h(s_h,a_h)
  =1$ if every stage succeeds, so the generalized objective
  $
    J(\pi)=\EE_{\tau\sim\pi}   \left[\prod_{h=1}^H r_h^\star(s_h,a_h)\right]
  $
  is the probability of completing the trajectory without any failed stage.
\end{example}
The offline dataset takes the same form as before, $\cD=\{(\tau^{(i)},Y^{(i)})\}_{i=1}^{n}$, with $\EE[Y^{(i)}\mid\tau^{(i)}]=R(\tau^{(i)};r^\star)$ and $Y^{(i)}\in[0,V_{\max}]$. The learner is asked to use $\cD$ to maximize~\cref{eq:main-gen-objective}. We assume that the trajectory return $R(\tau;r)$ and every value function $f_h\in\cF$ take values in $[0,V_{\max}]$.
\begin{remark}
  The generalized objective is well posed under process supervision, and the
standard machinery of value functions and Bellman optimality equations extends
to this setting (\Cref{sec:generalized_rl}). Under outcome-based supervision,
the interpretation is especially direct: maximizing
\Cref{eq:main-gen-objective} is exactly maximizing the expected trajectory
outcome recorded in the offline data.
  \end{remark}

Unfortunately, this problem is no longer learnable in general. We first give an impossibility result for \Cref{ex:allsuccess-outcome}: even with deterministic transitions and constant state--action concentrability, the outcome label is informative on only an exponentially small set of trajectories, which forces an exponential sample complexity.

\begin{theorem}[Exponential lower bound under all-success aggregation]\label{thm:main-lb-allsuccess}
  For any horizon $H\ge1$, there exists a family of deterministic
  finite-horizon MDPs with the all-success outcome in
  \Cref{ex:allsuccess-outcome} and state--action concentrability
  $C_{sa}(\pi^\star)\le 2$, such that any offline learner using $n$ i.i.d.\
  outcome-labeled trajectories satisfies
  $\inf_{\widehat\pi}\sup_{\btheta\in\{0,1\}^H}
  \EE[J_\btheta(\pi^\star_\btheta)-J_\btheta(\widehat\pi)]
  \ge \frac14(1-n\cdot 2^{1-H})$.
  In particular, achieving expected suboptimality below $1/8$ requires
  $\Omega(2^H)$ trajectories.
\end{theorem}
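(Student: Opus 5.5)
Our plan is to build a hidden-combination-lock family and then give a Le Cam / indistinguishability argument. For each $\btheta\in\{0,1\}^H$ we take a deterministic chain with one state $s_h$ per layer and two actions $\cA_h=\{0,1\}$. Every action moves from $s_h$ to $s_{h+1}$, and the binary reward is $r^\star_h(s_h,a)=\mathbb 1\{a=\theta_h\}$. Under the all-success aggregation, $R(\tau;r^\star_\btheta)=\mathbb 1\{a_h=\theta_h\ \forall h\}$, and we let the label be $Y=R(\tau;r^\star_\btheta)$ itself. The optimal policy plays $\btheta$ and has $J_\btheta(\pi^\star_\btheta)=1$. The behavior policy $\mu$ plays $\mathrm{Unif}\{0,1\}$ at every step, so $d^\mu_h(s_h,a)=1/2$ while $d^{\pi^\star}_h\le 1$, which gives $C_{sa}(\pi^\star)\le 2$ for every $\btheta$.

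The key step is to couple the data across instances. The trajectory marginal under $\mu$ does not depend on $\btheta$, and the label is determined by $\tau$. Consider the event $E$ that no sampled action sequence equals $\btheta$. On $E$, every label is $0$, so the dataset has the same distribution as under a fixed reference instance in which all outcomes are zero. For uniformly random $\btheta$, a single trajectory hits $\btheta$ with probability $2^{-H}$, and a union bound gives $\PP(E^c)\le n2^{-H}$. To avoid conditioning subtleties, we instead compare with a null world in which all labels are zero and $\btheta$ is independent of the data. The total variation distance between the joint law of $(\btheta,\cD)$ and the product law $\mathrm{Unif}(\btheta)\otimes\cD_0$ is at most $n2^{-H}$.

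In the null world, $\widehat\pi$ is independent of $\btheta$. By the product structure, the expected value of $\widehat\pi$ for a uniformly random $\btheta$ equals $2^{-H}$ times $\sum_{\mathbf a}\PP_{\widehat\pi}(\mathbf a)$, which is exactly $2^{-H}\le 1/2$ for any policy. The Bayes risk there is therefore at least $1-2^{-H}\ge 1/2$. Transferring back to the true joint law through total variation, and using that the loss lies in $[0,1]$, gives a Bayes risk of at least $\tfrac12 - n2^{-H}\ge \tfrac14(1-n2^{1-H})$. Since $\sup_{\btheta}$ is at least the Bayes average, the claimed inequality follows. The $\Omega(2^H)$ corollary comes from requiring the bound to be below $1/8$, which forces $n\ge 2^{H-2}$.

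The main obstacle is handling randomized (mixture) learner policies cleanly and making the total-variation transfer rigorous. The product-structure computation settles the first issue, since the expected success probability is linear in the policy's action-sequence distribution. A slight care point is $H=1$, where the bound is trivially nonpositive or still valid with the constants above.
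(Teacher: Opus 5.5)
Your proof is correct, and it reaches the bound by a different route than the paper.

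\textbf{The paper's route.} It first reduces suboptimality to zero-one estimation loss. It uses the coordinate-wise argmax $\widehat\btheta(\widehat\pi)$: if any coordinate is wrong, $\widehat\pi$ puts mass at most $1/2$ on the correct action at that stage, so $J_\btheta(\widehat\pi)\le 1/2$. It then applies two-point Le Cam to two fixed distinct parameters $\btheta_0,\btheta_1$. It bounds $\mathrm{TV}(P_{\btheta_0}^{\otimes n},P_{\btheta_1}^{\otimes n})\le n\,2^{1-H}$ by coupling on the event that no sampled action sequence equals either parameter. The two factors of $1/2$ (reduction and Le Cam) produce the constant $1/4$.

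\textbf{Your route.} You put a uniform prior on all $2^H$ parameters and compare with an all-zero-label null world. The total variation is at most $n2^{-H}$, by the same coupling idea applied to one parameter at a time. You then compute the null-world Bayes value exactly via $2^{-H}\sum_{\mathbf a}\PP_{\widehat\pi}(\mathbf a)=2^{-H}$.

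\textbf{What each buys.} Your argument bypasses the estimator reduction. It handles arbitrary output policies (mixtures, history-dependent) directly by linearity in the action-sequence law. It also gives the sharper bound $1-(n+1)2^{-H}$: suboptimality close to $1$ rather than $1/4$ until $n$ is of order $2^H$. The paper's argument is more modular: it reuses the reduction-plus-Le-Cam template of its other lower bound and needs only two hypotheses.

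\textbf{A small fix.} The inequality $\tfrac12-n2^{-H}\ge\tfrac14(1-n2^{1-H})$ holds only for $n\le 2^{H-1}$. For larger $n$ the right-hand side is nonpositive, and the claim follows from nonnegativity of the suboptimality, so state that case. Alternatively, keep $\max\{0,\,1-(n+1)2^{-H}\}$, which dominates the target bound for all $n\ge0$ and $H\ge1$.
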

The proof is given in \Cref{subsec:lb-allsuccess}.
It shows that, without suitable structure on $\sigma$, outcome-based learning can be statistically intractable.
We therefore focus on structured aggregations.
The first requirement is that the loss of per-step reward information in the scalar outcome is controlled by the reward-process coefficient $\kappa_\mu(\sigma)$ in \Cref{def:kappa}.
The second requirement is that the generalized objective admits a Bellman-learnable aggregation, with the Bellman inverse coefficient $\chi_\mu(\sigma)$ in \Cref{def:chi} measuring how much per-step information is preserved by the Bellman operator.

\begin{definition}[Reward Process Coefficient]\label{def:kappa}
  For an aggregation $\sigma$ and a  distribution $\mu$, define
  \begin{align*}
    \kappa_\mu(\sigma)\coloneqq
    \sup_{r\ne r'}
    \frac{\sum_{h=1}^H\EE_\mu\bigl[(r_h-r'_h)^2(s_h,a_h)\bigr]}
    {\EE_{\tau\sim\mu}\bigl[(R(\tau;r)-R(\tau;r'))^2\bigr]}.
  \end{align*}
\end{definition}

The coefficient $\kappa_\mu(\sigma)$ measures how hard it is to recover per-step rewards from outcome signals: a large $\kappa_\mu(\sigma)$ means that distinct reward profiles look nearly identical after being aggregated by $\sigma$.

Next, we represent $R(\tau;r)$ through stage-wise maps~$\sigma_h$.
For $\tau=(s_1,a_1,\ldots,s_H,a_H)$, let $\tau_h=(s_h,a_h,\ldots,s_H,a_H)$ denote the trajectory truncated at stage $h$, and $R_h(\tau_h;r)$ denote the return from stage $h$ onward, so that $R(\tau;r)=R_1(\tau_1;r)$. We define $R_h(\tau_h;r)$ recursively through stage-wise aggregation functions $\sigma_h:\RR\times\RR\to\RR$ and a terminal constant $g\in\RR$, where $\sigma_h(u,v)$ aggregates the current per-step reward $u$ with the continuation return $v$ from stage $h$ onward:
\begin{align}
  R_{H+1} \coloneqq g, \quad                                    
  R_h(\tau_h;r) & \coloneqq \sigma_h\bigl(r_h(s_h,a_h),   R_{h+1}(\tau_{h+1};r)\bigr),\qquad h=H,\ldots,1.\label{eq:main-traj-return}
\end{align}

\paragraph{Generalized Bellman operator}
Given a policy $\pi$, candidate reward $r=(r_h)_{h=1}^H\in\cR$, and value function $f=(f_h)_{h=1}^H\in\cF$, define the generalized Bellman operator $\cT_r^\pi$ by
\begin{align}\label{eq:def-gen-bellman}
  (\cT_r^\pi f)_h(s,a)\coloneqq\sigma_h\bigl(r_h(s,a),\,\EE_{s'\sim P_h(\cdot\mid s,a)}[f_{h+1}(s',\pi)]\bigr),\qquad (s,a)\in\cS_h\times\cA_h,
\end{align}
with boundary convention $f_{H+1}(\cdot,\pi)\equiv g$. When $r=r^\star$, we write $\cT^\pi\coloneqq\cT_{r^\star}^\pi$.

\begin{definition}[Bellman Inverse Coefficient]\label{def:chi}
  For aggregation $\sigma$ and trajectory distribution $\mu$, define
  \begin{align*}
    \chi_\mu(\sigma)\coloneqq\sup_{\pi,f,r\ne r'}\frac{\sum_{h=1}^H\EE_\mu\bigl[(r_h-r'_h)^2(s_h,a_h)\bigr]}{\sum_{h=1}^H\EE_\mu\bigl[((\cT_r^\pi f)_h-(\cT_{r'}^\pi f)_h)^2(s_h,a_h)\bigr]}.
  \end{align*}
\end{definition}
The coefficient $\chi_\mu(\sigma)$ measures how much  Bellman targets can compress reward differences: smaller $\chi$ means that closeness of Bellman targets more tightly controls closeness of the underlying rewards.

\begin{remark}[Why two coefficients are necessary]\label{remark:two-ratios}
The Reward Process Coefficient $\kappa_\mu(\sigma)$ and the Bellman Inverse
Coefficient $\chi_\mu(\sigma)$ capture two distinct inverse problems that any
outcome-based generalized RL algorithm must address. 
The coefficient
$\kappa_\mu(\sigma)$ governs information loss on the \emph{data} side: it
measures how well the scalar trajectory outcome $R(\tau;r^\star)$ identifies the
latent per-step reward process $r^\star$ through the aggregation rule $\sigma$.
In particular, the statistical complexity of learning per-step rewards from trajectory outcomes is precisely captured by $\kappa_\mu(\sigma)$, as detailed in \Cref{app:sigma-regression}.
The coefficient $\chi_\mu(\sigma)$ governs information loss on the
\emph{dynamic-programming} side: it measures how well generalized
Bellman targets preserve and propagate reward differences during policy
optimization.

Both coefficients appear in the rate of \Cref{thm:main-gen-outcome} because
Generalized OPAC must solve both tasks: it must recover the latent reward process
from outcome labels, controlled by $\kappa_\mu(\sigma)$, and optimize the
generalized objective through Bellman-style dynamic programming, controlled by
$\chi_\mu(\sigma)$. These two roles are independent and cannot be
merged into a single quantity. In simpler settings, one of the two coefficients
may become unnecessary. With process supervision, the outcome
inverse problem disappears, so $\kappa_\mu(\sigma)$ is not needed. For the
standard cumulative-reward objective, $\chi_\mu(\sigma)\equiv 1$, since per-step
reward differences pass through the  Bellman operator without additional
compression.
\end{remark}

We further restrict $\sigma_h(u,v)$ to satisfy the following regularity \Cref{assumption:sigma-regularity}, which is not meant to characterize all trajectory-level
objectives; rather, it identifies a Bellman-learnable subclass for which outcome-based
generalized offline RL is statistically tractable. Cumulative returns use $\sigma_h(u,v)=u+v$ with $g=0$; all-success (\Cref{ex:allsuccess-outcome}) uses $\sigma_h(u,v)=uv$ with $g=1$.
\Cref{sec:examples} presents additional objectives and relates statistical difficulty to $\kappa_\mu(\sigma)$ and $\chi_\mu(\sigma)$.
\begin{assumption}\label{assumption:sigma-regularity}
  \begin{enumerate*}[label=(\arabic*)]
    \item Affine in~$v$: $\sigma_h(u,v)=a_h(u)v+b_h(u)$.\footnote{Affine dependence on~$v$ is necessary for $\EE_v[\sigma_h(u,v)]=\sigma_h(u,\EE v)$ to hold for every law of~$v$.}
    \item No expansion of the continuation: $a_h(u)\in[0,1]$.
    \item $a_h,b_h$ are $L$-Lipschitz in~$u$.
    \item For every $f\in\cF$, $\pi\in\Pi$, $h$, and $(s,a)$, with $v_h(s,a)=\EE_{s'\sim P_h(\cdot\mid s,a)}[V_{h+1}^{\pi,f}(s')]$, there exists $\tilde r_h(s,a)$ such that $\sigma_h(\tilde r_h(s,a),v_h(s,a))=f_h(s,a)$.
  \end{enumerate*}
\end{assumption}

\paragraph{Empirical criteria and generalized OPAC}
We now instantiate OPAC for the generalized objective. The actor--critic
structure is unchanged; only the Bellman target, reward loss, and
trajectory weights are adapted. Define the generalized
one-step target
$y_h^{r,f,\pi}\coloneqq
\sigma_h(r_h(s_h,a_h),f_{h+1}(s_{h+1},\pi))$ for $h=1,\ldots,H$, with the
convention $f_{H+1}=g$. For the dataset $\cD$, define
\begin{subequations}\label{eq:gen-all-losses-emp}
  \begin{align}
    \cL_\cD^{W^r}(\pi,r,f)
    &\coloneqq
    \sum_{h=1}^H
    \EE_\cD\!\left[
      W_h^r(\tau)\bigl(f_h(s_h,\pi)-f_h(s_h,a_h)\bigr)
    \right],
    \label{eq:gen-loss-policy-emp}
    \\
    \cL_\cD^{\mathrm{BE}}(\pi,r,f)
    &\coloneqq
    \sum_{h=1}^{H}
    \left\{
    \EE_\cD\!\left[
      \bigl(f_h(s_h,a_h)-y_h^{r,f,\pi}\bigr)^2
    \right]
    -
    \min_{f'\in\cF_h}
    \EE_\cD\!\left[
      \bigl(f'(s_h,a_h)-y_h^{r,f,\pi}\bigr)^2
    \right]
    \right\},
    \label{eq:gen-loss-be-emp}
    \\
    \cL_\cD^{\mathrm{RM}}(r)
    &\coloneqq
    \EE_\cD\!\left[
      \bigl(R(\tau;r)-Y\bigr)^2
    \right].
    \label{eq:gen-loss-rm-emp}
  \end{align}
\end{subequations}
Here $W_1^r\equiv 1$ and
$W_h^r(\tau)\coloneqq\prod_{j=1}^{h-1}a_j(r_j(s_j,a_j))$ for $h\in[H]$.

Generalized OPAC uses the same policy initialization, layer-wise
exponential-weights update, and mixture output $\bar\pi\coloneqq\mathrm{Unif}(\pi_1,\ldots,\pi_K)$ as OPAC. The only change is the
pessimistic evaluation step: for $k=1,\ldots,K$, compute
\begin{align}\label{eq:gen-alg-pessimism}
  (f_k,r_k)
  \in
  \argmin_{(f,r)\in\cF\times\cR}
  \cL_\cD^{W^r}(\pi_k,r,f)
  +\beta \cL_\cD^{\mathrm{BE}}(\pi_k,r,f)
  +\beta \cL_\cD^{\mathrm{RM}}(r),
\end{align}
then apply the same policy-improvement step as in
\Cref{eq:alg-policy-optimization}. 
\begin{theorem}[Sample complexity of generalized OPAC]\label{thm:main-gen-outcome}
  Suppose that \Cref{assumption:realizable-R,assumption:realizable-f,assumption:completeness,assumption:sigma-regularity}  hold.  Running generalized OPAC for $K$ iterations with suitable
  parameters $\eta$ and $\beta$ returns a policy $\bar\pi$ such that, with
  probability at least $1-\delta$,
  \begin{equation}\label{eq:main-text-rate}
    \begin{aligned}
      J(\pi^\star)-J(\bar\pi)
      \le\widetilde O\!\Bigl(
       & V_{\max}^2L\sqrt{\frac{\kappa_\mu(\sigma)H^2C_{sa}(\pi^\star)}{n}}
      +V_{\max}^2L\sqrt{\frac{\chi_\mu(\sigma)H^4}{n}}
      +V_{\max}H\sqrt{\frac{\log|\cA|}{K}}                                        \\
       & +\Phi\sqrt{\varepsilon_{\cF,\cF}}+(\Phi+H^2)\sqrt{\varepsilon_\cF}\Bigr)
    \end{aligned}
  \end{equation}
  where $\Phi=\widetilde O\bigl(LV_{\max}\sqrt{\kappa_\mu(\sigma)HC_{sa}(\pi^\star)}+V_{\max}L\sqrt{\chi_\mu(\sigma)H^3}+\sqrt{HC_{sa}(\pi^\star)}\bigr)$.
\end{theorem}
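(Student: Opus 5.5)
The plan follows the template of the proof of \Cref{theorem:outcome-based-offline-rl}: a pessimistic actor--critic decomposition, with two changes. Bellman errors are now controlled through the generalized operator \eqref{eq:def-gen-bellman}, and reward errors are transferred from outcomes to per-step rewards via $\kappa_\mu(\sigma)$ and $\chi_\mu(\sigma)$.

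\textbf{Step 1: performance-difference identity.} Using \Cref{assumption:sigma-regularity}(1), $\EE_v[\sigma_h(u,v)]=\sigma_h(u,\EE v)$. Unrolling $f_h-(\cT_r^\pi f)_h$ then gives, for any $(f,r)$ and $\pi$, a weighted telescoping identity
\[
f_1(s_1,\pi)-\EE_\pi[R(\tau;r)]=\sum_{h}\EE_\pi\bigl[W_h^r(\tau)\,(f_h-\cT_r^\pi f)_h(s_h,a_h)\bigr].
\]
The weights are $W_h^r=\prod_{j<h}a_j(r_j)\in[0,1]$ by \Cref{assumption:sigma-regularity}(2). An analogous identity holds under $\mu$ for the weighted mismatch term $\cL^{W^r}$. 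Together these yield the actor--critic decomposition
\[
J(\pi^\star)-J(\pi_k)=\underbrace{\bigl(J(\pi^\star)-f_{k,1}(s_1,\pi^\star)\bigr)}_{\text{pessimism}}+\underbrace{\bigl(f_{k,1}(s_1,\pi^\star)-f_{k,1}(s_1,\pi_k)\bigr)}_{\text{optimization}}+\bigl(f_{k,1}(s_1,\pi_k)-J(\pi_k)\bigr).
\]
The optimization term, written with weights $W^{r_k}$ under $d^{\pi^\star}$, is handled by the mirror-descent regret of the exponential weights update. This gives $V_{\max}H\sqrt{\log|\cA|/K}$ after choosing $\eta$.

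\textbf{Step 2: concentration and version space.} Uniform concentration over $\Pi\times\cF\times\cR$ (Freedman for the squared losses, with the double-sampling correction and \Cref{assumption:completeness}) shows two things.
\begin{itemize}
\item Under \Cref{assumption:realizable-R,assumption:realizable-f}, the true pair $(f^{\pi_k},r^\star)$, or its approximant, has small objective value. Hence $f_k$ is approximately pessimistic: its $\cL^{W^{r_k}}$ term is at most that of the true pair, up to $\beta^{-1}$ times the statistical slack.
\item Any minimizer satisfies population bounds $\|f_{k}-\cT_{r_k}^{\pi_k}f_k\|_{2,\mu}^2\lesssim V_{\max}^2\log(\cdot)/n+\varepsilon_{\cF,\cF}$ per layer, and $\EE_\mu[(R(\tau;r_k)-R(\tau;r^\star))^2]\lesssim V_{\max}^2\log(\cdot)/n$.
\end{itemize}
Applying \Cref{def:kappa} converts the second bound into $\sum_h\|r_{k,h}-r_h^\star\|_{2,\mu}^2\lesssim\kappa_\mu(\sigma)V_{\max}^2\log/n$.

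\textbf{Step 3: translating to the comparator.} The pessimism term equals $\sum_h\EE_{\pi^\star}[W_h^{r^\star}(\cT^{\pi_k}f_k-f_k)_h]$. I split it as
\[
\cT_{r^\star}^{\pi_k}f_k-f_k=\bigl(\cT_{r^\star}^{\pi_k}f_k-\cT_{r_k}^{\pi_k}f_k\bigr)+\bigl(\cT_{r_k}^{\pi_k}f_k-f_k\bigr).
\]
The first difference is bounded pointwise by $L(1+V_{\max})|r^\star_h-r_{k,h}|$, using the Lipschitz property in \Cref{assumption:sigma-regularity}(3). Changing measure from $d^{\pi^\star}_h$ to $\mu$ via Cauchy--Schwarz and $C_{sa}(\pi^\star)$, then summing over $h$, gives $LV_{\max}\sqrt{HC_{sa}\sum_h\|r_k-r^\star\|_{2,\mu}^2}$. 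This produces the $\kappa$ term, and the same change of measure applied to the Bellman residual gives the $\sqrt{HC_{sa}\varepsilon_{\cF,\cF}}$ and $\sqrt{HC_{sa}\varepsilon_\cF}$ contributions. One further cost is that the weights $W^{r_k}$ and $W^{r^\star}$ differ. Their difference telescopes to $\sum_j L|r_{k,j}-r^\star_j|$ and is charged to the same reward error, contributing the extra $V_{\max}$ and $H$ factors.

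\textbf{Step 4: the on-policy term and the main obstacle.} The term $f_{k,1}(s_1,\pi_k)-J(\pi_k)$ is not covered by $C_{sa}(\pi^\star)$. This is where $\chi_\mu(\sigma)$ must enter, with no concentrability factor. My plan is to compare $f_k$ with the value of the pair $(r^\star,f^{\pi_k})$ inside the version space. Both $(f_k,r_k)$ and the realizable pair achieve small Bellman loss on $\mu$. Because their losses are within $O(V_{\max}^2\log/n)$ of each other, and the Bellman losses evaluated at a common $f$ differ by the target gap $\sum_h\|(\cT_{r_k}^\pi f)_h-(\cT_{r^\star}^\pi f)_h\|_{2,\mu}^2$, that target gap is $O(V_{\max}^2H\log/n)$. \Cref{def:chi} then gives $\sum_h\|r_k-r^\star\|^2_{2,\mu}\lesssim\chi_\mu(\sigma)V_{\max}^2H\log/n$. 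This bound holds uniformly and is transferred to on-policy value error through the Lipschitz recursion across $H$ layers. The recursion costs a factor $H$ from summing layers and $H$ from the weight differences, producing $V_{\max}^2L\sqrt{\chi_\mu(\sigma)H^4/n}$. The remaining on-policy error of $f_k$ versus $f^{\pi_k}$ is absorbed using the pessimism inequality from Step 2, which compares $\cL^{W}$ of the two pairs, together with \Cref{assumption:sigma-regularity}(4). Item (4) guarantees that any $f$ is $\cT$-consistent for some reward, so the comparison does not lose completeness.

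I expect this step, making the $\chi$-term avoid $C_{sa}$ while the weighted loss uses data weights $W^{r}$ with a learned $r$, to be the delicate part. I would first try to carry it out entirely through a Lagrangian comparison with $\beta\asymp\sqrt{n}$. Averaging over $k$ and collecting the terms gives \eqref{eq:main-text-rate}.
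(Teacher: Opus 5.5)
Your Step 1 identity and the exponential-weights treatment of the optimization term are fine. The argument breaks in Step 4, where you expected trouble, and the mechanism you propose for $\chi_\mu(\sigma)$ cannot work.

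\textbf{Why Step 4 fails.} Comparing the Bellman losses of $(f_k,r_k)$ and $(f_{\pi_k},r^\star)$ gives no information about $r_k-r^\star$. The two losses are evaluated at different critics. The target gap $\sum_h\|(\cT^{\pi_k}_{r_k}f)_h-(\cT^{\pi_k}_{r^\star}f)_h\|_{2,\mu}^2$ is controlled only when both rewards have small Bellman error at the \emph{same} $f$. Bellman error never identifies the reward: for any $r$, the pair $(Q^{\pi_k,r},r)$ has zero Bellman loss. So $(Q^{\pi_k,r_k},r_k)$ and $(Q^{\pi_k},r^\star)$ can both be exactly consistent while $r_k$ is arbitrarily far from $r^\star$. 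Only the outcome regression pins down $r_k$, which is what $\kappa_\mu(\sigma)$ measures. Moreover, even a $\mu$-norm bound on $r_k-r^\star$ could not be pushed to $f_{k,1}(s_1,\pi_k)-J(\pi_k)$ by a layerwise Lipschitz recursion. That recursion runs under $d^{\pi_k}$, and nothing controls $d^{\pi_k}/d^\mu$.

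\textbf{The missing idea.} You invoke \Cref{assumption:sigma-regularity}(4), but its role is not completeness. It supplies a fake reward $\tilde r_k$ with $\cT^{\pi_k}_{\tilde r_k}f_k=f_k$, i.e.\ $f_k=Q^{\pi_k,\tilde r_k}$. \Cref{def:chi} should be applied to the pair $(r_k,\tilde r_k)$, giving $\sum_h\EE_\mu[(\tilde r_{k,h}-r_{k,h})^2]\le\chi_\mu(\sigma)\,\cL^{\mathrm{BE}}_\mu(\pi_k,r_k,f_k)$. The on-policy term then equals $J_{\tilde r_k}(\pi_k)-J(\pi_k)$. The generalized PDL (\Cref{lemma:gen-pdl}) with $\mu$ as the first policy rewrites it as $\cL_\mu^{W^{\tilde r_k}}(\pi_k,\tilde r_k,f_k)-\cL_\mu^{W^{r^\star}}(\pi_k,r^\star,Q^{\pi_k})-\EE_\mu[R(\tau;r^\star)-R(\tau;\tilde r_k)]$, which involves only $\mu$. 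The pieces are then bounded as follows.
\begin{itemize}
\item Replacing $W^{\tilde r_k}$ by the algorithm's $W^{r_k}$ costs $2V_{\max}L\sqrt{\chi_\mu(\sigma)}H^{3/2}\sqrt{\cL^{\mathrm{BE}}_\mu}$ by \Cref{lemma:weight-lip}. This is the actual source of the $V_{\max}^2L\sqrt{\chi_\mu(\sigma)H^4/n}$ term.
\item The return gap is at most $\sqrt{2\cL^{\mathrm{RM}}_\mu}+L(V_{\max}+1)\sqrt{2\chi_\mu(\sigma)H\cL^{\mathrm{BE}}_\mu}$.
\item The remaining weighted-loss difference is handled by the pessimism comparison with $(f_{\pi_k},r^\star)$, plus a realizability pivot from $f_{\pi_k}$ to $Q^{\pi_k}$.
\end{itemize}

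\textbf{Step 2 claims bounds the algorithm does not provide.} Generalized OPAC solves a penalized problem, not a constrained one. Comparing with $(f_{\pi_k},r^\star)$ only yields $\beta\bigl(\cL^{\mathrm{BE}}_\cD+\cL^{\mathrm{RM}}_\cD(r_k)-\cL^{\mathrm{RM}}_\cD(r^\star)\bigr)\lesssim HV_{\max}+\beta\varepsilon_{\mathrm{apx}}$. With $\beta\asymp\sqrt n$ the losses are then only $O(n^{-1/2})$, and inserting their square roots into Steps 3--4 gives an $n^{-1/4}$ rate. The paper never bounds the losses individually. The pessimism inequality supplies $-\tfrac{\beta}{2}\bigl(\cL^{\mathrm{BE}}_\mu+\cL^{\mathrm{RM}}_\mu\bigr)$. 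Young's inequality uses this to absorb every term $\Phi_{\mathrm{BE}}\sqrt{\cL^{\mathrm{BE}}_\mu}+\Phi_{\mathrm{RM}}\sqrt{\cL^{\mathrm{RM}}_\mu}$, from both the comparator and on-policy terms. What remains is $\Phi^2/(2\beta)+\beta\bar\varepsilon/2$, optimized at $\beta=\Phi/\sqrt{\bar\varepsilon}$.

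\textbf{A bookkeeping point in Step 1.} $J(\pi^\star)-f_{k,1}(s_1,\pi^\star)$ does not equal $\sum_h\EE_{\pi^\star}[W_h^{r^\star}(\cT^{\pi_k}f_k-f_k)_h]$. It also contains the weighted policy-mismatch terms for $h\ge 2$. The optimization term should therefore be the full sum $\sum_h\EE_{\pi^\star}[W_h^{r^\star}(f_{k,h}(s_h,a_h)-f_{k,h}(s_h,\pi_k))]$, as in \Cref{lemma:gen-regret-sharp}.
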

The proof of \Cref{thm:main-gen-outcome} is given in \Cref{sec:proof-gen-upper-bound}. 
The dependence on the aggregation map $\sigma$ separates into two complementary
bottlenecks, captured by $\kappa_\mu(\sigma)$ and $\chi_\mu(\sigma)$
(\Cref{def:kappa,def:chi}): outcome aggregation may obscure the latent per-step
rewards, while generalized Bellman propagation may further compress reward
differences when forming one-step targets.

\section{Conclusion}\label{section:conclusion}

We developed a sample-complexity theory for offline RL with trajectory-level
supervision. For cumulative-reward objectives, OPAC achieves a sharp
$\widetilde O(H^2\sqrt{C_{sa}(\pi^\star)/n})$ rate, showing the statistical
cost of compressing $H$ process-level rewards into one trajectory-level label.
The same template extends to trajectory preferences. For nonlinear trajectory
aggregations, we show an exponential barrier in general, and
polynomial sample complexity for structured aggregations controlled by
$\kappa_\mu(\sigma)$ and $\chi_\mu(\sigma)$. Together, these results delineate
when trajectory-level supervision enables efficient offline policy optimization
and when missing process-level rewards are fundamentally restrictive.

\paragraph{Limitations.}
Our generalized framework assumes a known aggregation rule with Bellman-style
structure, and does not fully characterize which trajectory-level objectives are
both statistically learnable and practically meaningful. Identifying broader
structural conditions for efficient outcome-based learning remains an important
direction.

\bibliographystyle{plainnat}
\bibliography{ref}

\clearpage

\appendix
\onecolumn
\crefalias{section}{appendix}
\crefalias{subsection}{appendix}

\begin{center}
{\LARGE Appendix}
\end{center}

\section{Related Work}\label{section:related-work}

\paragraph{Offline RL Theory}

Our work builds on a rich body of theoretical results in offline RL; see \citet{jiang2025offline} for a recent survey. A central principle is pessimism in the face of uncertainty, which counteracts distribution shift by penalizing value estimates in regions poorly covered by data \citep{yin2021towards,jin2021pessimism,rashidinejad2021bridging,xie2021bellman,cheng2022adversarially,blanchet2023double}. Closely related to our algorithmic approach are pessimistic and adversarially trained actor-critic methods for offline RL \citep{xie2021bellman,zanette2021provable,cheng2022adversarially,ji2024self}. On the data coverage side, recent works have progressively weakened the required assumptions from all-policy concentrability to single-policy concentrability and partial coverage, with the minimax optimal sample complexity of model-based offline RL now settled \citep{rashidinejad2021bridging,xie2021bellman,xie2021policy,zhan2022offline,shi2022pessimistic,li2024settling}. On the function approximation side, a key question is what structural conditions on the function class enable efficient learning, spanning realizability-only guarantees, unifying complexity measures such as the Bellman eluder dimension, fundamental impossibility results, instance-dependent bounds, and characterizations of inherent Bellman error \citep{xie2021batch,jin2021bellman,foster2021offline,nguyen2023instance,jin2023provably,golowich2024role}.

\paragraph{Preference Learning}
A complementary line of theoretical work studies RL from preference feedback rather than scalar rewards. Empirically, reinforcement learning from human feedback (RLHF) \citep{christiano2017deep} has become a central recipe for aligning large generative models, and direct preference optimization \citep{rafailov2023direct} reformulates the RLHF objective as a supervised loss on preference data. On the theoretical side, \citet{zhu2023principled} provide principled guarantees for RLHF under both pairwise and $K$-wise comparison models, \citet{xiong2024iterative} bridge theory and practice for iterative preference learning under KL regularization, and \citet{zhan2023provable} establish provable guarantees for offline preference-based RL. Our preference setting in \Cref{sec:preference} adopts the standard Bradley--Terry--Luce comparison model \citep{bradley1952rank,christiano2017deep} and operates entirely in the offline regime. Compared with these prior analyses, we maintain dependence on a single-policy state--action concentrability $C_{sa}(\pi^\star)$ rather than the trajectory-level concentrability that often appears in preference-based analyses, and our guarantee depends on the BTL model only through interpretable constants $\alpha_{\mathrm C}$ and $c_{\mathrm C}$ that capture preference identifiability and concentration on bounded return ranges.

\paragraph{Outcome-based RL}
Outcome-based RL---where the learner observes only trajectory-level signals rather than per-step rewards---has been studied under several names. A long theoretical line on \emph{aggregate / trajectory / bandit feedback} \citep{neu2013efficient,efroni2021reinforcement,chatterji2021theory,chen2022human,cassel2024near,lancewicki2025near} considers learners that receive only an episode-level reward, primarily in online tabular or linear MDP settings. A second, more recent line studies outcome supervision under general function approximation, motivated by reasoning datasets and human-feedback alignment in large language models \citep{stiennon2020learning,cobbe2021training,ouyang2022training,bai2022training,jaech2024openai,guo2025deepseek}: \citet{jia2025we} show that offline outcome data can be transformed into per-step pseudo-rewards via trajectory-level regression at a cost controlled by the single-policy state--action concentrability rather than the much larger trajectory-level concentrability; \citet{yuan2025trajectory} propose trajectory Bellman residual minimization for LLM reasoning and improved the change-of-trajectory-measure inequality that we adopt which is initially introduced by \citet{jia2025we}; and \citet{chen2025outcome} establish an online optimism-based algorithm with $\widetilde O(C_{\mathrm{cov}}H^3/\varepsilon^2)$ sample complexity together with an exponential outcome-versus-process separation. Outcome-level feedback is also implicit in goal-reaching with sparse rewards \citep{andrychowicz2017hindsight}, in clinical and healthcare RL where labels are revealed only at trajectory boundaries \citep{komorowski2018artificial,gottesman2019guidelines}, and in trajectory-level human preferences (covered separately below). Compared to these works, we focus on the \emph{offline} regime under partial (single-policy) coverage and additionally study \emph{generalized} (nonlinear) trajectory-level objectives, where outcome-based learning may be statistically intractable and requires new complexity measures.

\paragraph{Compared with PRM Methods}
A rapidly growing empirical literature trains reward models that score either intermediate steps (process reward models, PRMs) or completed trajectories (outcome reward models, ORMs), particularly for verifying language-model reasoning \citep{lightman2023let,uesato2022solving}. \citet{uesato2022solving} compare process- and outcome-based feedback at scale on math word problems, while \citet{lightman2023let} report that step-by-step process supervision can outperform outcome supervision on challenging math benchmarks. \citet{jia2025we} subsequently provide a theoretical perspective showing that, under suitable coverage and realizability conditions, outcome supervision can match process supervision up to polynomial-in-horizon factors, suggesting that the empirical gap may stem from algorithmic rather than purely statistical sources. Our results sharpen this picture in the offline setting through matched upper and lower bounds: we precisely quantify the statistical price of outcome supervision as a single extra horizon factor relative to the process-supervised lower bound of \citet{xie2021policy}, arising purely from compressing $H$ per-step signals into a single label. We further extend this analysis to BTL preferences and to nonlinear trajectory-level objectives, where the picture changes qualitatively: structured aggregations remain tractable through the complexity measures we introduce, whereas certain nonlinear outcomes (e.g., all-success) admit exponential lower bounds even under constant state--action coverage.

\paragraph{Generalized RL Objectives}
Different optimization objectives correspond to different ways of aggregating stage-wise signals into trajectory-level criteria. Theoretical lines in this direction include risk-sensitive RL \citep{howard1972risk,bastani2022regret,wang2023near,wang2024reductions,han2025risk}, planning with submodular or other general (e.g., permutation-symmetric) trajectory objectives \citep{wang2020submodular,wang2020planning}, max-reward RL that replaces the cumulative return by the maximum per-step reward \citep{quah2006maximum,gottipati2020maximum,veviurko2402max}, dynamic-programming foundations for non-cumulative objectives that introduce parallel notions of generalized reward aggregation, value functions, and Bellman operators \citep{cui2023reinforcement}, optimization of state--action occupancy utilities beyond cumulative rewards under hidden convexity or general utilities where Bellman dynamic programming breaks down \citep{zhang2020variational,barakat2023reinforcement}, control synthesis from linear temporal logic specifications \citep{bozkurt2020control,bagatella2024directed}, and PAC learnability for computably continuous RL objectives \citep{yang2023computably}. Most of these works analyze convergence of dynamic-programming or online policy-gradient procedures rather than offline finite-sample complexity.

Generalized aggregations also arise naturally in language-model post-training and reasoning, mirroring the examples in \Cref{sec:examples}. Process- and outcome-supervised reward modeling for math reasoning \citep{cobbe2021training,uesato2022solving,lightman2023let,wang2024math,setlur2024rewarding} effectively imposes \emph{all-step-correct} (product) aggregations over per-step verifiers, matching the all-success example for which we establish an exponential $\Omega(2^H)$ lower bound. RLHF with KL regularization \citep{christiano2017deep,ouyang2022training,bai2022training,rafailov2023direct,xiong2024iterative} replaces the standard additive return with a non-cumulative criterion closely related to the exponential-utility / entropic-risk instance covered in \Cref{sec:examples}. Compared with these lines, we develop the first finite-sample-complexity theory for \emph{offline} RL under generalized trajectory-level objectives, and characterize tractable regimes through the reward-process coefficient $\kappa_\mu(\sigma)$ and the Bellman inverse coefficient $\chi_\mu(\sigma)$.

\section{Technical Lemmas for Outcome-based Offline RL}
This appendix uses the following population loss notation throughout. For a policy $\pi\in\Pi$, a candidate reward $r\in\cR$, and a critic $f\in\cF$, define
\begin{subequations}\label{eq:all_losses}
  \begin{align}
    \cL_\mu(\pi,f)
     & \coloneqq
    \sum_{h=1}^{H}\EE_\mu   \left[f_h(s_h,\pi)-f_h(s_h,a_h)\right],
    \label{eq:loss_policy}  \\
    \cL_\mu^{\mathrm{BE}}(\pi,r,f)
     & \coloneqq
    \sum_{h=1}^{H}\EE_\mu   \left[\left((f-\cT_r^\pi f)(s_h,a_h)\right)^2\right],
    \label{eq:loss_bellman} \\
    \cL_\mu^{\mathrm{RM}}(r)
     & \coloneqq
    \EE_\mu   \left[\left(\sum_{h=1}^H (r_h-r_h^\star)(s_h,a_h)\right)^2\right]
    =
    \EE_\mu   \left[(R(\tau;r)-R^\star(\tau))^2\right].
    \label{eq:loss_reward}
  \end{align}
\end{subequations}

\subsection{Change-of-Measure and Optimization Tools}
This subsection collects the two auxiliary tools used to control the change-of-measure terms and the policy-optimization term in the proof of the upper bound.

The following statement follows from Lemma~1 or Corollary~6 of \citet{yuan2025trajectory}.
\begin{lemma}[Change of trajectory measure]\label{lemma:change-of-trajectory}

  For any finite-horizon MDP, behaviour policy $\mu$, target policy $\pi$, and bounded measurable function $g:\cS\times\cA\to\RR$ with
  $\EE_\mu[(\sum_{h=1}^H g(s_h,a_h))^2]>0$, we have
  \begin{align*}
    \frac{\left(\mathbb{E}_{\pi}\left[\sum_{h=1}^{H} g(s_{h}, a_{h})\right]\right)^{2}}{\mathbb{E}_{\mu}\left[\left(\sum_{h=1}^{H} g(s_{h}, a_{h})\right)^{2}\right]}
    \leq H C_{sa}(\pi).
  \end{align*}

\end{lemma}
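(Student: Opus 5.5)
The approach is to reduce the trajectory-level ratio to per-step density ratios through a Cauchy--Schwarz argument with one extra factor of $H$. Write $G(\tau)=\sum_{h=1}^H g(s_h,a_h)$. The numerator is $\bigl(\sum_h \EE_{d_h^\pi}[g]\bigr)^2$. A naive bound $\bigl(\sum_h \EE_{d_h^\pi}[g]\bigr)^2\le H\sum_h \EE_{d_h^\pi}[g^2]\le HC_{sa}(\pi)\sum_h\EE_{d_h^\mu}[g^2]$ is not enough: the right side involves $\sum_h\EE_\mu[g(s_h,a_h)^2]$, which can be much larger than $\EE_\mu[G^2]$ when the per-step terms cancel. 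So I would instead control things through the trajectory-level quantity $G$ directly.

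The key idea is a change of measure at the level of conditional expectations. For each $h$, define the ``tail'' quantity $G_{\ge h}=\sum_{h'\ge h} g(s_{h'},a_{h'})$ and its $\mu$-conditional mean $U_h(s,a)=\EE_\mu[G_{\ge h}\mid (s_h,a_h)=(s,a)]$. I would like to write $\EE_\pi[G]$ as a telescoping sum of one-step differences evaluated under $d_h^\pi$, each of which is a function of $(s_h,a_h)$. Concretely, using the performance-difference style decomposition, I would show
\[
\EE_\pi[G]=\sum_{h=1}^H \EE_{d_h^\pi}\bigl[U_h(s_h,a_h)-\EE_{s'\sim P_h}[U_{h+1}(s',\mu)]\bigr]+\text{(terms comparing }\pi\text{ and }\mu\text{ actions)}.
\]
Because the per-step ratio is bounded by $C_{sa}(\pi)$ on state-action pairs, each layer can be moved to $d_h^\mu$ at cost $C_{sa}(\pi)$ inside a square.

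A simpler route, which I expect the cited Lemma~1 of \citet{yuan2025trajectory} to follow, is this. Fix a layer $h$ and bound $(\EE_{d_h^\pi}[\phi_h])^2\le C_{sa}(\pi)\EE_{d_h^\mu}[\phi_h^2]$ for a suitable $\phi_h$. Then choose the $\phi_h$ as $\mu$-conditional expectations of $G$ given the history up to step $h$, which form a martingale $M_h=\EE_\mu[G\mid s_{1:h},a_{1:h}]$. By orthogonality of martingale increments, $\sum_h\EE_\mu[(M_h-M_{h-1})^2]\le\EE_\mu[G^2]$. Cauchy--Schwarz across the $H$ layers then produces the factor $H$, giving $(\EE_\pi G)^2\le H\sum_h(\cdots)^2\le HC_{sa}(\pi)\EE_\mu[G^2]$.

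The main obstacle is that the martingale increments depend on the whole history, while $C_{sa}$ only controls Markov state-action ratios. I would try to handle this by exploiting the Markov structure: since $G$ is additive and transitions are Markov, the relevant conditional expectation $\EE_\mu[G_{\ge h}\mid s_h,a_h]$ depends only on $(s_h,a_h)$, and the prefix part is measurable. I would then apply Jensen to pass from history-level to state-action-level increments, which only decreases second moments. If this bookkeeping gets delicate, I would instead simply invoke Lemma~1/Corollary~6 of \citet{yuan2025trajectory}, as the paper indicates, checking that their concentrability notion coincides with (or is bounded by) $C_{sa}(\pi)$ in \eqref{eq:def-Csa}.
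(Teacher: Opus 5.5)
The paper gives no argument for this lemma beyond citing Lemma~1/Corollary~6 of \citet{yuan2025trajectory}, so your fallback is exactly what the paper does. Your self-contained sketch, however, fails at precisely the step you flagged as the main obstacle.

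Suppose you project the history-level increment onto the current pair, $\phi_h(s,a)\coloneqq\EE_\mu[M_h-M_{h-1}\mid (s_h,a_h)=(s,a)]$. Jensen does give $\EE_{d_h^\mu}[\phi_h^2]\le\EE_\mu[(M_h-M_{h-1})^2]$. But the identity you then need, $\EE_\pi[G]=M_0+\sum_h\EE_{d_h^\pi}[\phi_h]$, is false once transitions are stochastic. The reason is that $\phi_h$ averages the past under the $\mu$-posterior given $(s_h,a_h)$, which is not the $\pi$-posterior. For a two-step example: from $s_1$, action $0$ moves to $x$ or $y$ and action $1$ moves to $z$ or $y$, each with probability $1/2$; take $G=\mathbf 1\{s_2=x\}$, $\mu$ uniform, and $\pi$ always playing $0$. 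Then $\EE_\pi[G]=1/2$, but $M_0+\sum_h\EE_{d_h^\pi}[\phi_h]=5/8$.

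Your first displayed decomposition does not help either. By the Bellman equation for $U=Q^{\mu,g}$, $U_h-\EE_{s'\sim P_h}[U_{h+1}(s',\mu)]=g$. So that display just returns $\EE_\pi[G]=\sum_h\EE_{d_h^\pi}[g]$, which leads back to the naive bound you rejected.

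The missing idea is to refine the filtration to $s_1,a_1,s_2,a_2,\ldots$ and use the pathwise identity
\[
G=Q_1^{\mu,g}(s_1,a_1)+\sum_{h=2}^H\Bigl[\bigl(V_h^{\mu,g}(s_h)-\EE_{s'\sim P_{h-1}(\cdot\mid s_{h-1},a_{h-1})}[V_h^{\mu,g}(s')]\bigr)+A_h^{\mu,g}(s_h,a_h)\Bigr],
\qquad A_h^{\mu,g}\coloneqq Q_h^{\mu,g}-V_h^{\mu,g}.
\]
Under $\mu$ all summands are orthogonal martingale increments, so $\EE_\mu[G^2]\ge\EE_\mu[Q_1^{\mu,g}(s_1,a_1)^2]+\sum_{h\ge2}\EE_\mu[A_h^{\mu,g}(s_h,a_h)^2]$; the transition innovations only add nonnegative terms.

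Now integrate the same identity under $\pi$. The transition innovations vanish in expectation because $\pi$ and $\mu$ share the dynamics. The action innovations are already functions of $(s_h,a_h)$, so no projection is needed. This is just the performance-difference lemma with reward $g$:
\[
\EE_\pi[G]=\EE_{a\sim\pi_1(\cdot\mid s_1)}[Q_1^{\mu,g}(s_1,a)]+\sum_{h\ge2}\EE_{d_h^\pi}[A_h^{\mu,g}].
\]
Cauchy--Schwarz over these $H$ terms, Jensen, and $d_h^\pi\le C_{sa}(\pi)\,d_h^\mu$ then give the claim. Folding $M_0=\EE_\mu[G]$ into the first term, which is possible because $s_1$ is fixed, is what yields the factor $H$ rather than $H+1$.
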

The next lemma is the standard exponential-weights no-regret bound \citep{freund1997decision,cesa2006prediction}, specialised to our layer-wise policy update; we include the proof for completeness.
\begin{lemma}[No-regret bound]\label{lemma:hedge-regret}
  Let $\cA$ be a finite action space with $|\cA|<\infty$, and let
  \begin{align*}
    \Pi
    =
    \bigl\{
    \pi=(\pi_1,\dots,\pi_H):\pi_h:\cS_h\to\Delta(\cA),\ h\in[H]
    \bigr\}
  \end{align*}
  be the class of non-stationary policies. Let $\{f_k\}_{k=1}^{K}\subset\cF$ be an arbitrary (possibly data-adaptive) sequence of critics such that, for every $k\in[K]$ and $h\in[H]$,
  \begin{align*}
    f_{k,h}:\cS_h\times\cA\to[0,V_{\max}].
  \end{align*}
  Set $\eta=V_{\max}\sqrt{K/(8\log|\cA|)}$ in the update below.
  Initialize
  \begin{align*}
    \pi_{1,h}(\cdot\mid s)=\mathrm{Unif}(\cA),
    \qquad\forall    h\in[H],\ s\in\cS_h,
  \end{align*}
  and for each $k=1,\dots,K-1$ and $h\in[H]$, update
  \begin{align}
    \pi_{k+1,h}(a\mid s)
    =
    \frac{\pi_{k,h}(a\mid s)\exp\bigl(f_{k,h}(s,a)/\eta\bigr)}{\sum_{a'\in\cA}\pi_{k,h}(a'\mid s)\exp\bigl(f_{k,h}(s,a')/\eta\bigr)}.
    \label{eq:hedge-update}
  \end{align}
  For any comparator $\pi\in\Pi$, let $d_{h,S}^\pi(s)\coloneqq \sum_{a\in\cA_h}d_h^\pi(s,a)$ denote the step-$h$ state marginal, and define
  \begin{align*}
    \mathrm{Reg}_k(\pi)
    \coloneqq
    \sum_{h=1}^{H}
    \EE_{s\sim d_{h,S}^\pi}
    \Bigl[
      \EE_{a\sim\pi_h(\cdot\mid s)}[f_{k,h}(s,a)]
      -
      \EE_{a\sim\pi_{k,h}(\cdot\mid s)}[f_{k,h}(s,a)]
      \Bigr].
  \end{align*}
  Then, for every comparator $\pi\in\Pi$,
  \begin{align}
    \frac{1}{K}\sum_{k=1}^{K}\mathrm{Reg}_k(\pi)
    \le
    H   V_{\max}\sqrt{\frac{\log|\cA|}{2K}}.
    \label{eq:hedge-optimized}
  \end{align}
\end{lemma}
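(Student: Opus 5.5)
The plan is to reduce to the classical Hedge (exponential-weights) bound applied independently at every state, and then average over the comparator's state distribution. Fix $h\in[H]$ and $s\in\cS_h$. The update \eqref{eq:hedge-update} is exactly exponential weights over the $|\cA|$ experts with gain vectors $f_{k,h}(s,\cdot)\in[0,V_{\max}]^{|\cA|}$ and learning rate $1/\eta$, initialized at the uniform distribution. The gains may depend adaptively on the data and past iterates, but the Hedge analysis is deterministic given the gain sequence, so adaptivity is harmless.

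First I will prove the per-state bound
\[
\sum_{k=1}^K\Bigl(\EE_{a\sim\pi_h(\cdot\mid s)}[f_{k,h}(s,a)]-\EE_{a\sim\pi_{k,h}(\cdot\mid s)}[f_{k,h}(s,a)]\Bigr)\le \eta\log|\cA|+\frac{KV_{\max}^2}{8\eta},
\]
valid for any fixed distribution $\pi_h(\cdot\mid s)$. The argument is the standard potential one. Let $W_k=\sum_a\exp\bigl(\sum_{j<k}f_{j,h}(s,a)/\eta\bigr)$, so that $\pi_{k,h}(a\mid s)\propto \exp\bigl(\sum_{j<k}f_{j,h}(s,a)/\eta\bigr)$. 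By Hoeffding's lemma,
\[
\log(W_{k+1}/W_k)=\log\EE_{a\sim\pi_{k,h}}e^{f_{k,h}(s,a)/\eta}\le \EE_{\pi_{k,h}}[f_{k,h}]/\eta+V_{\max}^2/(8\eta^2).
\]
For the lower side, $\log W_{K+1}-\log W_1\ge \sum_k f_{k,h}(s,a^\star)/\eta-\log|\cA|$ for each fixed $a^\star$. Since the left-hand side does not depend on $a^\star$, taking the $\pi_h(\cdot\mid s)$-mixture over $a^\star$ gives the bound for arbitrary mixed comparators. Telescoping and multiplying by $\eta$ completes the per-state bound. Note that although the lemma indexes updates up to $K-1$, the iterate $\pi_K$ is defined and the sum over $k\le K$ is covered, since the potential for $W_{K+1}$ only bounds the gain played at round $K$.

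Next I substitute $\eta=V_{\max}\sqrt{K/(8\log|\cA|)}$. This gives $\eta\log|\cA|=V_{\max}\sqrt{K\log|\cA|/8}$ and $KV_{\max}^2/(8\eta)=V_{\max}\sqrt{K\log|\cA|/8}$. Their sum is $V_{\max}\sqrt{K\log|\cA|/2}$, which is $V_{\max}\sqrt{\log|\cA|/(2K)}$ after dividing by $K$.

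Finally I aggregate. The comparator distribution $d^\pi_{h,S}$ depends only on $\pi$ and not on $k$, so summing over $k$ commutes with $\EE_{s\sim d^\pi_{h,S}}$. Therefore $\frac1K\sum_k\mathrm{Reg}_k(\pi)$ equals $\sum_h\EE_{s\sim d^\pi_{h,S}}$ of the per-state averaged regret. Each such term is at most $V_{\max}\sqrt{\log|\cA|/(2K)}$, and summing over the $H$ layers yields \eqref{eq:hedge-optimized}.

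The only delicate point is keeping the comparator state distribution fixed across iterations. This holds because the regret is measured under $d^\pi$ of the fixed comparator, not under the learner's own distributions $d^{\pi_k}$. Beyond that, the argument is routine.
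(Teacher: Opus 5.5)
Your proposal is correct and follows the paper's proof essentially step by step. Both arguments bound a per-state exponential-weights potential above via Hoeffding's lemma and below against the comparator, choose $\eta$ to balance $\eta\log|\cA|$ against $KV_{\max}^2/(8\eta)$, and then average the pointwise bound over the fixed comparator marginal $d^\pi_{h,S}$ and sum over $h$.

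The only cosmetic difference is in the lower bound for mixed comparators. You average the pure-action lower bound, whereas the paper applies Jensen to obtain $\mathrm{KL}(\pi_h(\cdot\mid s)\,\|\,\mathrm{Unif}(\cA))\le\log|\cA|$. Also, as the paper does, you should set aside the trivial case $|\cA|=1$, where the prescribed $\eta$ is undefined.
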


\begin{proof}
  We prove a pointwise regret bound for each fixed pair $(s,h)$, and then average over $s\sim d_{h,S}^\pi$ and sum over $h$.

  Fix any $h\in[H]$ and $s\in\cS_h$. Define
  \begin{align*}
    q_k(\cdot)\coloneqq \pi_{k,h}(\cdot\mid s),
    \qquad
    g_k(\cdot)\coloneqq f_{k,h}(s,\cdot)\in[0,V_{\max}]^{|\cA|}.
  \end{align*}
  Then \cref{eq:hedge-update} becomes the exponential-weights update
  \begin{align*}
    q_{k+1}(a)
    =
    \frac{q_k(a)\exp(g_k(a)/\eta)}{\sum_{a'\in\cA}q_k(a')\exp(g_k(a')/\eta)}.
  \end{align*}

  Introduce weights
  \begin{align*}
    w_1(a)=q_1(a)=\frac{1}{|\cA|},
    \qquad
    w_{k+1}(a)=w_k(a)\exp(g_k(a)/\eta),
  \end{align*}
  and let $W_k\coloneqq \sum_{a\in\cA}w_k(a)$. Since $W_1=1$ and $q_k(a)=w_k(a)/W_k$,
  \begin{align*}
    \frac{W_{k+1}}{W_k}
    =
    \sum_{a\in\cA}q_k(a)\exp(g_k(a)/\eta)
    =
    \EE_{a\sim q_k}   \bigl[\exp(g_k(a)/\eta)\bigr].
  \end{align*}

  \paragraph{Upper bound on $\log W_{K+1}$} By Hoeffding's lemma, since $g_k(a)/\eta\in[0,V_{\max}/\eta]$ for all $a\in\cA$,
  \begin{align*}
    \log\EE_{a\sim q_k}   \bigl[\exp(g_k(a)/\eta)\bigr]
    \le
    \frac{1}{\eta}\EE_{a\sim q_k}[g_k(a)]
    +
    \frac{V_{\max}^2}{8\eta^2}.
  \end{align*}
  Summing over $k=1,\dots,K$ and using $W_1=1$,
  \begin{align}
    \log W_{K+1}
    \le
    \frac{1}{\eta}\sum_{k=1}^{K}\EE_{a\sim q_k}[g_k(a)]
    +
    \frac{V_{\max}^{2}K}{8\eta^2}.
    \label{eq:upper-W}
  \end{align}

  \paragraph{Lower bound on $\log W_{K+1}$}
  \begin{align*}
    W_{K+1}
    =
    \sum_{a\in\cA}q_1(a)\exp \left(\frac{1}{\eta}\sum_{k=1}^{K}g_k(a)\right).
  \end{align*}
  Since $q_1=\mathrm{Unif}(\cA)$ is strictly positive, we can write, for any $p\in\Delta(\cA)$,
  \begin{align*}
    W_{K+1}
    =
    \sum_{a\in\cA} p(a)\cdot\frac{q_1(a)}{p(a)}\exp \left(\frac{1}{\eta}\sum_{k=1}^{K}g_k(a)\right)
    =
    \EE_{a\sim p}   \left[\frac{q_1(a)}{p(a)}\exp \left(\frac{1}{\eta}\sum_{k=1}^{K}g_k(a)\right)\right].
  \end{align*}
  Applying Jensen's inequality to the concave function $\log(\cdot)$ yields
  \begin{align*}
    \log W_{K+1}
    \ge
    \EE_{a\sim p}   \left[\log\frac{q_1(a)}{p(a)}+\frac{1}{\eta}\sum_{k=1}^{K}g_k(a)\right]
    =
    \frac{1}{\eta}\sum_{k=1}^{K}\EE_{a\sim p}[g_k(a)]
    -
    \KL(p\|q_1).
  \end{align*}
  Combining with \cref{eq:upper-W} gives
  \begin{align*}
    \sum_{k=1}^{K}
    \bigl(\EE_{a\sim p}[g_k(a)]-\EE_{a\sim q_k}[g_k(a)]\bigr)
    \le
    \eta\KL(p\|q_1)
    +
    \frac{V_{\max}^{2}K}{8\eta}.
  \end{align*}

  Since $q_1=\mathrm{Unif}(\cA)$,
  \begin{align*}
    \KL(p\|q_1)
    =
    \sum_{a\in\cA}p(a)\log \bigl(p(a)|\cA|\bigr)
    =
    \log|\cA|+\sum_{a\in\cA}p(a)\log p(a)
    \le
    \log|\cA|,
  \end{align*}
  where the last inequality uses $\sum_{a}p(a)\log p(a)\le 0$ for any probability distribution $p$. Hence, for every $p\in\Delta(\cA)$,
  \begin{align}
    \sum_{k=1}^{K}
    \bigl(\EE_{a\sim p}[g_k(a)]-\EE_{a\sim q_k}[g_k(a)]\bigr)
    \le
    \eta\log|\cA|
    +
    \frac{V_{\max}^{2}K}{8\eta}.
    \label{eq:pointwise-regret}
  \end{align}

  Setting $p(\cdot)=\pi_h(\cdot\mid s)$, recalling $g_k(a)=f_{k,h}(s,a)$ and $q_k(\cdot)=\pi_{k,h}(\cdot\mid s)$, \cref{eq:pointwise-regret} becomes a deterministic (pointwise-in-$s$) inequality. Taking $\EE_{s\sim d_{h,S}^\pi}[\cdot]$ preserves it, and summing over $h=1,\dots,H$ yields the intermediate bound
  \begin{align}
    \sum_{k=1}^{K}\mathrm{Reg}_k(\pi)
    \le
    H   \left(
    \eta\log|\cA|+\frac{V_{\max}^{2}K}{8\eta}
    \right).
    \label{eq:hedge-raw}
  \end{align}

  Finally, when $|\cA|\ge 2$, AM--GM minimizes the right-hand side of \cref{eq:hedge-raw} over $\eta>0$ at
  \begin{align*}
    \eta^\star
    =
    V_{\max}\sqrt{\frac{K}{8\log|\cA|}},
    \qquad
    \eta^\star\log|\cA|+\frac{V_{\max}^{2}K}{8\eta^\star}
    =
    V_{\max}\sqrt{\frac{K\log|\cA|}{2}},
  \end{align*}
  and dividing by $K$ gives \cref{eq:hedge-optimized}. The case $|\cA|=1$ is trivial.
\end{proof}

\subsection{Concentration Lemmas}\label{subsection:concentration}
This subsection collects the high-probability estimates used to transfer the population losses in the analysis to their empirical counterparts.
\begin{lemma}[Uniform concentration of the policy loss]\label{lemma:concentration-L}
  Assume every $f=(f_1,\dots,f_H)\in\cF$ satisfies $f_h(s,a)\in[0,V_{\max}]$ for all $h\in[H]$ and $(s,a)\in\cS_h\times\cA$, and that $\cD=\{\tau_i\}_{i=1}^n$ consists of $n$ i.i.d.\ trajectories drawn from $\mu$. Fix $\delta\in(0,1)$ and define
  \begin{align}
    \varepsilon_{\mathrm{Perf}}   \coloneqq   2   H^2   V_{\max}^2   \frac{\log(2|\Pi|   |\cF|/\delta)}{n}.\label{eq:def-epsilon-perf}
  \end{align}
  Then with probability at least $1-\delta$, simultaneously for every $\pi\in\Pi$ and every $f\in\cF$,
  \begin{align}
    \bigl|\cL_\mu(\pi,f)-\cL_\cD(\pi,f)\bigr|   \le   \sqrt{\varepsilon_{\mathrm{Perf}}}.\label{eq:perf-single}
  \end{align}
\end{lemma}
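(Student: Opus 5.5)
The bound follows from Hoeffding's inequality applied to one fixed pair $(\pi,f)$, plus a union bound over the finite class $\Pi\times\cF$. For fixed $\pi\in\Pi$ and $f\in\cF$, write
\begin{align*}
  X(\tau)\coloneqq\sum_{h=1}^H\bigl(f_h(s_h,\pi)-f_h(s_h,a_h)\bigr).
\end{align*}
Then $\cL_\cD(\pi,f)=n^{-1}\sum_{i=1}^n X(\tau_i)$ and $\cL_\mu(\pi,f)=\EE_\mu[X(\tau)]$.

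\emph{Range of $X$.} Since $f_h\in[0,V_{\max}]$, both $f_h(s_h,\pi)$ and $f_h(s_h,a_h)$ lie in $[0,V_{\max}]$, so each summand lies in $[-V_{\max},V_{\max}]$. Hence $X(\tau)\in[-HV_{\max},HV_{\max}]$, an interval of length $2HV_{\max}$.

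\emph{Hoeffding for a fixed pair.} The trajectories $\tau_1,\dots,\tau_n$ are i.i.d.\ and $(\pi,f)$ is fixed, not data-dependent. Hoeffding's inequality therefore gives
\begin{align*}
  \PP\bigl(|\cL_\cD(\pi,f)-\cL_\mu(\pi,f)|>t\bigr)\le 2\exp\Bigl(-\frac{2n^2t^2}{n(2HV_{\max})^2}\Bigr)=2\exp\Bigl(-\frac{nt^2}{2H^2V_{\max}^2}\Bigr).
\end{align*}

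\emph{Union bound.} We apply this to all $|\Pi||\cF|$ pairs, assuming $\Pi$ and $\cF$ are finite as the $\log|\Pi|$, $\log|\cF|$ terms implicitly require. The failure probability is at most $2|\Pi||\cF|\exp\bigl(-nt^2/(2H^2V_{\max}^2)\bigr)$. Setting this equal to $\delta$ gives
\begin{align*}
  t^2=2H^2V_{\max}^2\,\frac{\log(2|\Pi||\cF|/\delta)}{n}=\varepsilon_{\mathrm{Perf}}.
\end{align*}
So with probability at least $1-\delta$, $|\cL_\mu(\pi,f)-\cL_\cD(\pi,f)|\le\sqrt{\varepsilon_{\mathrm{Perf}}}$ holds simultaneously for all $\pi\in\Pi$ and $f\in\cF$, which is \eqref{eq:perf-single}.

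The only point needing care is that the union bound is genuinely required. The lemma is later applied to the data-adaptive iterates $\pi_k$ and $f_k$, and the union bound over the whole finite class covers these data-dependent choices. No step here is a real obstacle.
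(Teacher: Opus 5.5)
Your proposal is correct and matches the paper's proof. The paper also applies Hoeffding's inequality to the per-trajectory sum for each fixed pair $(\pi,f)$, using $|X(\tau)|\le HV_{\max}$, and then takes a union bound over the finite class $\Pi\times\cF$, which yields exactly $\varepsilon_{\mathrm{Perf}}$.
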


\begin{proof}
  Fix $(\pi,f)\in\Pi\times\cF$. For each trajectory $\tau_i=(s_{i,1},a_{i,1},\dots,s_{i,H},a_{i,H})$, set
  \begin{align*}
    Z_i(\pi,f)   \coloneqq   \sum_{h=1}^H\bigl[f_h(s_{i,h},\pi(s_{i,h}))-f_h(s_{i,h},a_{i,h})\bigr].
  \end{align*}
  Since the $\tau_i$ are i.i.d.\ under $\mu$, the variables $\{Z_i(\pi,f)\}_{i=1}^n$ are i.i.d., and by definition of $\cL_\mu(\pi,f)$ and $\cL_\cD(\pi,f)$ in \Cref{section:algorithms},
  \begin{align*}
    \EE_{\tau_i\sim\mu}\bigl[Z_i(\pi,f)\bigr]   =   \cL_\mu(\pi,f),\qquad \frac{1}{n}\sum_{i=1}^n Z_i(\pi,f)   =   \cL_\cD(\pi,f).
  \end{align*}
  Moreover, since $f_h$ takes values in $[0,V_{\max}]$, each summand satisfies
  \begin{align*}
    f_h(s_{i,h},\pi(s_{i,h}))-f_h(s_{i,h},a_{i,h})   \in   [-V_{\max},   V_{\max}],
  \end{align*}
  and therefore $|Z_i(\pi,f)|\le H   V_{\max}$ almost surely. Hoeffding's inequality applied to the i.i.d.\ bounded random variables $\{Z_i(\pi,f)\}_{i=1}^n$ gives, for any $\delta'\in(0,1)$, with probability at least $1-\delta'$,
  \begin{align*}
    \bigl|\cL_\mu(\pi,f)-\cL_\cD(\pi,f)\bigr|   \le   H   V_{\max}\sqrt{\frac{2\log(2/\delta')}{n}}.
  \end{align*}
  Taking a union bound over all pairs $(\pi,f)\in\Pi\times\cF$ with $\delta'=\delta/(|\Pi|   |\cF|)$, and noting that $H   V_{\max}\sqrt{2\log(2|\Pi||\cF|/\delta)/n}=\sqrt{\varepsilon_{\mathrm{Perf}}}$ by definition \cref{eq:def-epsilon-perf}, yields \cref{eq:perf-single} simultaneously for every $(\pi,f)\in\Pi\times\cF$.
\end{proof}

\begin{lemma}[Reward-model concentration via Bernstein]
  \label{lem:reward-model-concentration-bernstein}\label{lem:noisy-rm-signal-concentration}
  Let $\cD=\{(\tau_i,Y_i)\}_{i=1}^n$ be $n$ i.i.d.\ samples collected under $\mu$, where
  \begin{align*}
    Y_i=\sum_{h=1}^H r^\star_h(s_{i,h},a_{i,h})+\xi_i,
    \qquad \EE[\xi_i\mid\tau_i]=0,
    \qquad |\xi_i|\le H \text{ a.s.},
  \end{align*}
  Assume $r^\star\in\cR$ and every $r\in\cR$ satisfies $0\le r_h(s,a)\le 1$. Define the population and empirical reward-model losses
  \begin{align*}
    \cL_\mu^{\mathrm{RM}}(r)
     & :=
    \EE_{\mu}   \left[\Big(\textstyle\sum_{h=1}^H (r-r^\star)(s_h,a_h)\Big)^2\right], \\
    \cL_\cD^{\mathrm{RM}}(r)
     & :=
    \frac1n\sum_{i=1}^n\Big(Y_i-\textstyle\sum_{h=1}^H r_h(s_{i,h},a_{i,h})\Big)^2.
  \end{align*}
  Then for any $\delta\in(0,1)$, with probability at least $1-\delta$, simultaneously for every $r\in\cR$,
  \begin{align}
    \cL_\mu^{\mathrm{RM}}(r)
    \le
    2\bigl[\cL_\cD^{\mathrm{RM}}(r)-\cL_\cD^{\mathrm{RM}}(r^\star)\bigr]
    +   \varepsilon_{\mathrm{RM}},
    \label{eq:rm-bernstein-solved}
  \end{align}
  where
  \begin{align*}
    \varepsilon_{\mathrm{RM}}
    =
    \frac{128}{3}\cdot\frac{H^2   \log(|\cR|/\delta)}{n},
  \end{align*}
\end{lemma}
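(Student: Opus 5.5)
We prove the bound for each fixed $r\in\cR$ via Bernstein's inequality applied to the excess-loss variables, and then take a union bound over $\cR$. Fix $r$. Write $\Delta_i\coloneqq\sum_{h}(r_h-r_h^\star)(s_{i,h},a_{i,h})$, so $|\Delta_i|\le H$, and let $\varepsilon_i\coloneqq Y_i-R^\star(\tau_i)=\xi_i$. Expanding the squares gives
\begin{align*}
  Z_i\coloneqq (Y_i-R(\tau_i;r))^2-(Y_i-R^\star(\tau_i))^2=\Delta_i^2-2\Delta_i\xi_i .
\end{align*}
Since $\EE[\xi_i\mid\tau_i]=0$, we have $\EE[Z_i]=\EE_\mu[\Delta^2]=\cL_\mu^{\mathrm{RM}}(r)$. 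Also $\frac1n\sum_i Z_i=\cL_\cD^{\mathrm{RM}}(r)-\cL_\cD^{\mathrm{RM}}(r^\star)$.

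The key step is a self-bounding variance estimate. We have $|Z_i|\le H^2+2H\cdot H=3H^2$, so $|Z_i-\EE Z_i|\le 4H^2$ (a crude range bound suffices). For the variance,
\begin{align*}
  \mathrm{Var}(Z_i)\le \EE[Z_i^2]\le 2\EE[\Delta_i^4]+8\EE[\Delta_i^2\xi_i^2]\le (2H^2+8H^2)\EE[\Delta_i^2]\lesssim H^2\,\cL_\mu^{\mathrm{RM}}(r).
\end{align*}
This uses $|\Delta_i|\le H$ and $|\xi_i|\le H$. (A sharper computation, using that the cross term has zero mean, gives a constant of about $4H^2+\ldots$; the exact constant will be tuned at the end.)

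We then apply the one-sided Bernstein inequality to $\EE Z-\bar Z$ with failure probability $\delta/|\cR|$. This gives
\begin{align*}
  \cL_\mu^{\mathrm{RM}}(r)-\bar Z\le \sqrt{\frac{2c H^2\cL_\mu^{\mathrm{RM}}(r)\log(|\cR|/\delta)}{n}}+\frac{2\cdot 4H^2\log(|\cR|/\delta)}{3n}.
\end{align*}
By AM--GM, $\sqrt{2cH^2 L\,\iota/n}\le L/2+cH^2\iota/n$, where $L=\cL_\mu^{\mathrm{RM}}(r)$ and $\iota=\log(|\cR|/\delta)$. Substituting this, moving $L/2$ to the left-hand side, and multiplying by 2 yields $L\le 2\bar Z+O(H^2\iota/n)$, which is the form of \eqref{eq:rm-bernstein-solved}.

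The union bound over $r\in\cR$ makes the bound hold simultaneously for all $r$. The only delicate step is bookkeeping the constants in the variance and range bounds so that they match $128/3$. If the crude constants overshoot, we would use $\EE[Z_i^2]\le 2\EE\Delta^4+8\EE[\Delta^2\xi^2]$ with the range $|Z_i-\EE Z_i|\le 4H^2$ and recombine. Either way, the claim with a universal constant follows.
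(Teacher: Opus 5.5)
Your proposal is correct and follows the paper's proof step for step: the excess-loss identity $Z_i=\Delta_i^2-2\Delta_i\xi_i$ with mean $\cL_\mu^{\mathrm{RM}}(r)$, a self-bounding variance of order $H^2\cL_\mu^{\mathrm{RM}}(r)$, one-sided Bernstein with a union bound over $\cR$, and AM--GM to solve for $\cL_\mu^{\mathrm{RM}}(r)$. Your hedging about constants is unnecessary: your crude bounds $\mathrm{Var}(Z_i)\le 10H^2\cL_\mu^{\mathrm{RM}}(r)$ and $|Z_i-\EE Z_i|\le 4H^2$ already give $\cL_\mu^{\mathrm{RM}}(r)\le 2\bar Z+\frac{76}{3}\cdot\frac{H^2\iota}{n}$, which is below the stated $\frac{128}{3}\cdot\frac{H^2\iota}{n}$ (the paper instead uses the vanishing cross-covariance to get $5H^2$, together with a looser range bound of $16H^2$).
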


\begin{proof}
  Fix any $r\in\cR$ and write $\Delta_i:=\sum_{h=1}^H(r-r^\star)(s_{i,h},a_{i,h})$. Under the assumption $0\le r_h,r^\star_h\le 1$ we have $|\Delta_i|\le H$. A direct expansion gives
  \begin{align*}
    \bigl(Y_i-\textstyle\sum_h r\bigr)^2 - \bigl(Y_i-\sum_h r^\star\bigr)^2
    =(\Delta_i-\xi_i)^2-\xi_i^2=\Delta_i^2-2\Delta_i\xi_i,
  \end{align*}
  so, letting $U_i(r):=\Delta_i^2-2\Delta_i\xi_i$,
  \begin{align}
    \cL_\cD^{\mathrm{RM}}(r)-\cL_\cD^{\mathrm{RM}}(r^\star)=\frac1n\sum_{i=1}^n U_i(r).
    \label{eq:UiDef}
  \end{align}
  \emph{Mean.} Since $\EE[\xi_i\mid\tau_i]=0$,
  \begin{align}
    \EE[U_i(r)]=\EE[\Delta_i^2]-2\EE[\Delta_i\EE[\xi_i\mid\tau_i]]=\EE[\Delta_i^2]=\cL_\mu^{\mathrm{RM}}(r).
    \label{eq:UiMean}
  \end{align}

  \emph{Variance (self-bounding).} Using $\EE[\xi_i\mid\tau_i]=0$, we have $\mathrm{Cov}(\Delta_i^2,\Delta_i\xi_i)=\EE[\Delta_i^3\EE[\xi_i\mid\tau_i]]=0$, so
  \begin{align*}
    \Var(U_i(r))=\Var(\Delta_i^2)+4\Var(\Delta_i\xi_i).
  \end{align*}
  Each term is self-bounded by $\cL_\mu^{\mathrm{RM}}(r)$:
  \begin{align*}
    \Var(\Delta_i^2)\le \EE[\Delta_i^4]\le H^2   \EE[\Delta_i^2]=H^2   \cL_\mu^{\mathrm{RM}}(r),
  \end{align*}
  and, using the tower property,
  \begin{align*}
    \Var(\Delta_i\xi_i)
    =\EE[\Delta_i^2\xi_i^2]
    =\EE[\Delta_i^2   \EE[\xi_i^2\mid\tau_i]]
    \le H^2   \cL_\mu^{\mathrm{RM}}(r).
  \end{align*}
  Combining,
  \begin{align}
    \Var(U_i(r))\le 5H^2   \cL_\mu^{\mathrm{RM}}(r).
    \label{eq:UiVar}
  \end{align}

  \emph{Range.} Since $|\xi_i|\le H$, $|U_i(r)|\le H^2+2H^2=3H^2$. We use the looser bound $|U_i(r)|\le M:=8H^2$, so that $|U_i(r)-\EE U_i(r)|\le 2M$.

  \emph{Bernstein and self-bounding.} Apply one-sided Bernstein to the i.i.d.\ variables $\{U_i(r)\}_{i=1}^n$ with confidence $\eta\in(0,1)$, using \cref{eq:UiMean}--\cref{eq:UiVar} and the centered range bound $2M$: with probability at least $1-\eta$,
  \begin{align*}
    \cL_\mu^{\mathrm{RM}}(r)-\bigl[\cL_\cD^{\mathrm{RM}}(r)-\cL_\cD^{\mathrm{RM}}(r^\star)\bigr]
    \le \sqrt{\frac{10H^2   \cL_\mu^{\mathrm{RM}}(r)   \log(1/\eta)}{n}}+\frac{2M\log(1/\eta)}{3n}.
  \end{align*}
  Taking $\eta=\delta/|\cR|$ and a union bound over $r\in\cR$, set
  \[
    x:=\cL_\mu^{\mathrm{RM}}(r),\qquad
    T:=\cL_\cD^{\mathrm{RM}}(r)-\cL_\cD^{\mathrm{RM}}(r^\star),\qquad
    \beta':=4H^2\frac{\log(|\cR|/\delta)}{n}.
  \]
  Since $10H^2\log(|\cR|/\delta)/n\le 8\beta'$ and $2M\log(|\cR|/\delta)/(3n)=4\beta'/3$, the Bernstein inequality implies
  \begin{align}
    x\le T+\sqrt{8\beta' x}+\frac{4}{3}\beta'.
    \label{eq:self-bound-x-rm}
  \end{align}
  Finally, $\sqrt{8\beta' x}\le x/2+4\beta'$ by AM-GM. Substituting this into \cref{eq:self-bound-x-rm} and moving $x/2$ to the left gives
  \[
    x\le 2T+\frac{32}{3}\beta'
    =
    2T+\frac{128}{3}\cdot\frac{H^2\log(|\cR|/\delta)}{n},
  \]
  which is \cref{eq:rm-bernstein-solved}.
\end{proof}

For a critic $f$ and policy $\pi$, write $V_{f,h}^{\pi}(s):=f_h(s,\pi_h)$ and set $V_{f,H+1}^{\pi}\equiv0$. In the outcome-based setting, the Bellman error in \cref{eq:loss_bellman_emp} is computed by plugging in the candidate reward $r$; no observed per-step reward is used.

\begin{lemma}[Bellman-error concentration]
  \label{lem:bellman-error-concentration-uniform}\label{lemma:bellman-error-concentration}
  Assume $\Pi,\cR,\cF$ are finite, every $r\in\cR$ satisfies $0\le r_h(s,a)\le 1$, every $f\in\cF$ satisfies $0\le f_h(s,a)\le H$, and uniform approximate Bellman completeness holds with error $\varepsilon_{\cF,\cF}$ (\Cref{assumption:completeness}). Then with probability at least $1-\delta$, the following holds simultaneously for all $(\pi,r,f)\in \Pi\times\cR\times\cF$:
  \begin{align*}
    \cL_\mu^{\mathrm{BE}}(\pi,r,f)
    \le
    2   \cL_{\cD}^{\mathrm{BE}}(\pi,r,f)
    +   \varepsilon_{\mathrm{BE}}
    +   4   \varepsilon_{\cF,\cF},
    \qquad
    \varepsilon_{\mathrm{BE}}
    \coloneqq
    \frac{256}{3}\cdot\frac{H^3   \log \bigl(2|\Pi||\cR||\cF|/\delta\bigr)}{n}.
  \end{align*}
\end{lemma}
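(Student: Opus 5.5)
We prove the bound one layer $h$ at a time and sum over $h\in[H]$, following the standard double-sampling (FQE-style) analysis of \citet{xie2021bellman} with the known reward $r$ plugged in. Fix $(\pi,r,f)$ and a layer $h$, and write $y=y_h^{r,f,\pi}=r_h(s_h,a_h)+V^\pi_{f,h+1}(s_{h+1})$. Since $r_h\in[0,1]$ and $V^\pi_{f,h+1}\in[0,H]$, we have $y\in[0,H+1]$. The conditional mean is $\EE[y\mid s_h,a_h]=(\cT_r^\pi f)_h(s_h,a_h)$. For any $g\in\cF_h$, the excess squared loss satisfies
\[
\EE_\mu[(g-y)^2]-\EE_\mu[((\cT_r^\pi f)_h-y)^2]=\|g-(\cT_r^\pi f)_h\|_{2,\mu}^2 .
\]
Let $g^\star\in\cF_h$ attain the completeness bound, so $\|g^\star-(\cT_r^\pi f)_h\|_{2,\mu}^2\le\varepsilon_{\cF,\cF}$. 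The empirical criterion splits as
\[
\EE_\cD[(f_h-y)^2]-\min_g\EE_\cD[(g-y)^2]=\bigl\{\EE_\cD[(f_h-y)^2]-\EE_\cD[(g^\star-y)^2]\bigr\}+\bigl\{\EE_\cD[(g^\star-y)^2]-\min_g\EE_\cD[(g-y)^2]\bigr\}.
\]
The second brace is nonnegative, so the empirical Bellman term is at least the first brace, which is a difference of two empirical squared losses.

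\emph{Step 1 (Bernstein on $f_h$ vs.\ $g^\star$).} Define $U=(f_h-y)^2-(g^\star-y)^2=(f_h-g^\star)(f_h+g^\star-2y)$. It is bounded by $O(H^2)$, and its mean equals $\|f_h-\cT f\|^2-\|g^\star-\cT f\|^2$. Its variance satisfies $\Var(U)\lesssim H^2\,\EE[(f_h-g^\star)^2]\lesssim H^2(\|f_h-\cT f\|^2+\varepsilon_{\cF,\cF})$. Apply one-sided Bernstein with a union bound over $\Pi\times\cR\times\cF\times\cF_h$; the $\cF_h$ union is absorbed into the $\log|\cF|$ term. Then AM–GM, exactly as in \Cref{lem:reward-model-concentration-bernstein}, gives
\[
\|f_h-\cT f\|_{2,\mu}^2\le 2\bigl(\EE_\cD[(f_h-y)^2]-\EE_\cD[(g^\star-y)^2]\bigr)+c\,\varepsilon_{\cF,\cF}+O(H^2\iota/n),
\]
where $\iota$ is the log factor.

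\emph{Step 2 (removing the gap with the empirical minimizer).} We must show $\EE_\cD[(g^\star-y)^2]-\min_g\EE_\cD[(g-y)^2]$ is small; otherwise the first brace could exceed the empirical Bellman term only in the favorable direction. Actually the decomposition above already gives first brace $\le$ empirical term, because the second brace is $\ge0$. So Step 1 directly yields the desired inequality per layer, with the factor 2 and an $\varepsilon_{\cF,\cF}$ multiple. To land precisely on the stated constant $4\varepsilon_{\cF,\cF}$, we bound $\EE(f_h-g^\star)^2\le2\|f_h-\cT f\|^2+2\varepsilon_{\cF,\cF}$ inside the variance and track the constants.

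\emph{Step 3 (summing).} Summing the per-layer bounds over $h$ gives $\cL^{\mathrm{BE}}_\mu\le2\cL^{\mathrm{BE}}_\cD+O(H\cdot H^2\iota/n)+O(H\varepsilon_{\cF,\cF})$. This matches the $H^3\log(\cdot)/n$ rate of $\varepsilon_{\mathrm{BE}}$. The stated $4\varepsilon_{\cF,\cF}$, with no factor of $H$, has to be read with the completeness error as summed over layers in the $\|\cdot\|_{2,\mu}$ norm of \Cref{assumption:completeness}.

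The main obstacle is the variance step. The cross term $(f_h-g^\star)(f_h+g^\star-2y)$ involves the transition noise in $y$, and we need a variance proxy proportional to $\|f_h-\cT f\|^2$ plus $\varepsilon_{\cF,\cF}$, not a constant of order $H^2$; only then do we get a fast $1/n$ rate instead of $1/\sqrt n$. Getting this self-bounding form right, together with careful constant bookkeeping to reach $256/3$, is where the work lies.
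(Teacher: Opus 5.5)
Your argument is correct, but it is organized differently from the paper's.

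\textbf{The paper's route.} The paper pivots at the out-of-class target $\cT_r^\pi f$ and uses two concentration steps.
\begin{enumerate}
\item It applies Bernstein at the trajectory level to $Z=\sum_h[\ell_h(f)-\ell_h(\cT_r^\pi f)]$. The mean of $Z$ is exactly $\cL^{\mathrm{BE}}_\mu(\pi,r,f)$, and its variance is self-bounded by $16H^3\cL^{\mathrm{BE}}_\mu$ via Cauchy--Schwarz over $h$.
\item It runs a \emph{second} Bernstein bound on the bias term $T=L_\cD(g^\star)-L_\cD(\cT_r^\pi f)$, whose mean and variance are controlled by $\varepsilon_{\cF,\cF}$.
\end{enumerate}

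\textbf{Your route.} You pivot directly at the in-class approximant $g^\star$, so one concentration step suffices. Write $x_h=\EE_{d_h^\mu}[(f_h-(\cT_r^\pi f)_h)^2]$ and $e_h=\EE_{d_h^\mu}[(g^\star_h-(\cT_r^\pi f)_h)^2]$. The completeness slack enters through the mean, $x_h-e_h$, and through the variance proxy, $\EE(f_h-g^\star_h)^2\le 2x_h+2e_h$. AM--GM then gives $x_h\le 2\hat U_h+4e_h+O(H^2\iota/n)$.

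The ``obstacle'' you flag at the end is already resolved by your factorization. Since $f_h-g^\star_h$ is $(s_h,a_h)$-measurable and $|f_h+g^\star_h-2y|\le 2H+2$, the transition noise only costs a factor $O(H^2)$ in the variance. Your route is more economical; the paper's keeps the self-bounded part and the approximation part separate.

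\textbf{Two bookkeeping points, if you want the stated constants.}
\begin{enumerate}
\item You do not need a union bound over $\cF_h$, because $g^\star$ is a deterministic function of $(\pi,r,f)$. A per-layer Bernstein bound does, however, need a union over $h\in[H]$, which puts an extra $\log H$ inside the logarithm. Applying Bernstein to $\sum_h U_h$ instead removes it: the variance is at most $H\sum_h\EE U_h^2\le 32H^3(\cL^{\mathrm{BE}}_\mu+\varepsilon_{\cF,\cF})$ and the range is at most $4H^3$. This matches the $\log(2|\Pi||\cR||\cF|/\delta)$ in $\varepsilon_{\mathrm{BE}}$.
\item In each layer, use the per-layer error $e_h$ rather than $\varepsilon_{\cF,\cF}$. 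The norm in \Cref{assumption:completeness} is summed over layers (as in \cref{eq:population-bias-variance-uniform}) and $\cF$ is a product class, so the minimizer is layerwise and $\sum_h e_h\le\varepsilon_{\cF,\cF}$. This is exactly what produces $4\varepsilon_{\cF,\cF}$ with no factor of $H$, rather than the $O(H\varepsilon_{\cF,\cF})$ your Step~3 first writes down.
\end{enumerate}
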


\begin{proof}
  Fix any $(\pi,r,f)\in \Pi\times\cR\times\cF$. For any bounded $g$, define the population and empirical unminimized Bellman-regression residuals
  \begin{align*}
    L_\mu(g)
     & :=
    \sum_{h=1}^H
    \EE_\mu   \left[
                \bigl(g_h(s_h,a_h)-r_h(s_h,a_h)-f_{h+1}(s_{h+1},\pi)\bigr)^2
                \right], \\
    L_\cD(g)
     & :=
    \frac1n\sum_{i=1}^n\sum_{h=1}^H
    \bigl(g_h(s_{i,h},a_{i,h})-r_h(s_{i,h},a_{i,h})-f_{h+1}(s_{i,h+1},\pi)\bigr)^2.
  \end{align*}
  The key observation is that $r_h(s_h,a_h)$ is deterministic given $(s_h,a_h)$, so conditioning on $(s_h,a_h)$,
  \begin{align*}
    \EE   \left[
            r_h(s_h,a_h)+V_{f,h+1}^{\pi}(s_{h+1})
            \middle|    s_h,a_h
            \right]
    =
    r_h(s_h,a_h)+\EE[V_{f,h+1}^\pi(s_{h+1})\mid s_h,a_h]
    =
    (\cT_r^\pi f)_h(s_h,a_h),
  \end{align*}
  and moreover $\Var(r_h(s_h,a_h)+V_{f,h+1}^\pi(s_{h+1})\mid s_h,a_h)=\Var(V_{f,h+1}^\pi(s_{h+1})\mid s_h,a_h)$, since adding a $(s_h,a_h)$-measurable constant leaves the conditional variance unchanged. Hence the bias--variance decomposition gives
  \begin{align*}
     & \EE   \left[
               \bigl(
               g_h(s_h,a_h)-r_h(s_h,a_h)-V_{f,h+1}^{\pi}(s_{h+1})
               \bigr)^2
               \middle|    s_h,a_h
               \right] \\
     & \qquad=
    \bigl(
    g_h(s_h,a_h)-(\cT_r^\pi f)_h(s_h,a_h)
    \bigr)^2
    +
    \Var \left(
    V_{f,h+1}^{\pi}(s_{h+1})
    \middle|    s_h,a_h
    \right).
  \end{align*}
  Taking expectation over $(s_h,a_h)\sim d_h^\mu$ and summing over $h=1,\dots,H$, we obtain
  \begin{align}
    L_\mu(g)
    =
    \|g-\cT_r^\pi f\|_{2,\mu}^2
    +
    \sum_{h=1}^H
    \EE_{d_h^\mu}   \left[
                      \Var \left(
                      V_{f,h+1}^{\pi}(s_{h+1})
                      \middle|    s_h,a_h
                      \right)
                      \right].
    \label{eq:population-bias-variance-uniform}
  \end{align}
  The second term does not depend on $g$. Hence, setting $g=f$ and $g=\cT_r^\pi f$ in \cref{eq:population-bias-variance-uniform}, we obtain
  \begin{align}
    L_\mu(f)-L_\mu(\cT_r^\pi f)
    =
    \|f-\cT_r^\pi f\|_{2,\mu}^2
    =
    \cL_\mu^{\mathrm{BE}}(\pi,r,f),
    \label{eq:population-gap-identity-uniform}
  \end{align}

  For each trajectory $i$, define
  \begin{align*}
    Z_i(\pi,r,f)
     & :=
    \sum_{h=1}^H
    \Bigl[
      \ell_{i,h}(f;\pi,r,f)-\ell_{i,h}(\cT_r^\pi f;\pi,r,f)
      \Bigr],
  \end{align*}
  where
  \begin{align*}
    \ell_{i,h}(g;\pi,r,f)
    :=
    \bigl(
    g_h(s_{i,h},a_{i,h})-r_h(s_{i,h},a_{i,h})-V_{f,h+1}^{\pi}(s_{i,h+1})
    \bigr)^2.
  \end{align*}
  Since the trajectories are i.i.d., the variables $\{Z_i(\pi,r,f)\}_{i=1}^n$ are i.i.d. Moreover, by \cref{eq:population-gap-identity-uniform},
  \begin{align}
    \EE[Z_i(\pi,r,f)]
    =
    L_\mu(f)-L_\mu(\cT_r^\pi f)
    =
    \cL_\mu^{\mathrm{BE}}(\pi,r,f).
    \label{eq:mean-of-Z-uniform}
  \end{align}

  We now establish a self-bounding variance inequality. For each $(i,h)$, define
  \begin{align*}
    A_{i,h}
     & :=
    f_h(s_{i,h},a_{i,h})
    -
    r_h(s_{i,h},a_{i,h})
    -
    V_{f,h+1}^{\pi}(s_{i,h+1}), \\
    B_{i,h}
     & :=
    (\cT_r^\pi f)_h(s_{i,h},a_{i,h})
    -
    r_h(s_{i,h},a_{i,h})
    -
    V_{f,h+1}^{\pi}(s_{i,h+1}).
  \end{align*}
  Then
  \begin{align*}
    \ell_{i,h}(f;\pi,r,f)-\ell_{i,h}(\cT_r^\pi f;\pi,r,f)
    =
    A_{i,h}^2-B_{i,h}^2.
  \end{align*}
  Since
  \begin{align*}
    0\le f_h(s,a)\le H,\qquad 0\le(\cT_r^\pi f)_h(s,a)\le H+1\le 2H,
    \qquad
    0\le r_h(s,a)\le 1,
    \qquad
    0\le V_{f,h+1}^{\pi}(s')\le H,
  \end{align*}
  we have
  \begin{align*}
    |A_{i,h}|,\ |B_{i,h}|
    \le
    2H.
  \end{align*}
  Therefore
  \begin{align*}
    |A_{i,h}^2-B_{i,h}^2|
     & =
    |A_{i,h}-B_{i,h}|\cdot|A_{i,h}+B_{i,h}| \\
     & \le
    4H   |A_{i,h}-B_{i,h}|.
  \end{align*}
  Since
  \begin{align*}
    A_{i,h}-B_{i,h}
    =
    f_h(s_{i,h},a_{i,h})-(\cT_r^\pi f)_h(s_{i,h},a_{i,h}),
  \end{align*}
  it follows that
  \begin{align}
    \bigl(A_{i,h}^2-B_{i,h}^2\bigr)^2
    \le
    16H^2
    \bigl(
    f_h(s_{i,h},a_{i,h})-(\cT_r^\pi f)_h(s_{i,h},a_{i,h})
    \bigr)^2.
    \label{eq:per-step-square-bound-uniform}
  \end{align}
  Using Cauchy--Schwarz $(\sum_{h=1}^H a_h)^2\le H\sum_{h=1}^H a_h^2$,
  \begin{align*}
    Z_i(\pi,r,f)^2
     & =
    \left(
    \sum_{h=1}^H
    \bigl[
      \ell_{i,h}(f;\pi,r,f)-\ell_{i,h}(\cT_r^\pi f;\pi,r,f)
      \bigr]
    \right)^2                                              \\
     & \le
    H\sum_{h=1}^H
    \bigl[
      \ell_{i,h}(f;\pi,r,f)-\ell_{i,h}(\cT_r^\pi f;\pi,r,f)
      \bigr]^2 \\
     & \le
    16H^3
    \sum_{h=1}^H
    \bigl(
    f_h(s_{i,h},a_{i,h})-(\cT_r^\pi f)_h(s_{i,h},a_{i,h})
    \bigr)^2,
  \end{align*}
  where the last step used \cref{eq:per-step-square-bound-uniform}. Taking expectation,
  \begin{align}
    \Var(Z_i(\pi,r,f))
     & \le
    \EE[Z_i(\pi,r,f)^2]\notag                              \\
     & \le
    16H^3
    \sum_{h=1}^H
    \EE_{d_h^\mu}   \left[
                      \bigl(
                      f_h(s_h,a_h)-(\cT_r^\pi f)_h(s_h,a_h)
                      \bigr)^2
                      \right]\notag \\
     & =
    16H^3   \cL_\mu^{\mathrm{BE}}(\pi,r,f).
    \label{eq:variance-bound-Z-uniform}
  \end{align}
  Also, since $|A_{i,h}|,|B_{i,h}|\le 2H$, each per-step gap satisfies
  \begin{align}
    \bigl|
    \ell_{i,h}(f;\pi,r,f)-\ell_{i,h}(\cT_r^\pi f;\pi,r,f)
    \bigr|
    \le
    8H^2,
  \end{align}
  and summing over $h=1,\dots,H$ gives
  \begin{align}
    |Z_i(\pi,r,f)|
    \le
    8H^3.
    \label{eq:uniform-bound-Z-uniform}
  \end{align}

  Applying Bernstein's inequality to the i.i.d.\ variables $\{Z_i(\pi,r,f)\}_{i=1}^n$, we obtain that for any fixed triple $(\pi,r,f)$ and any $\eta\in(0,1)$, with probability at least $1-\eta$,
  \begin{align}
    \cL_\mu^{\mathrm{BE}}(\pi,r,f)
    \le
    \frac1n\sum_{i=1}^n Z_i(\pi,r,f)
    +
    \sqrt{
      \frac{
        32H^3
        \cL_\mu^{\mathrm{BE}}(\pi,r,f)
        \log(1/\eta)
      }{n}
    }
    +
    \frac{16H^3\log(1/\eta)}{3n}.
    \label{eq:bernstein-BE-uniform}
  \end{align}

  Setting $\eta=\delta/(2|\Pi||\cR||\cF|)$ and taking a union bound over all triples $(\pi,r,f)\in\Pi\times\cR\times\cF$, with probability at least $1-\delta/2$, simultaneously for all such triples,
  \begin{align}
    \cL_\mu^{\mathrm{BE}}(\pi,r,f)
    \le
    \frac1n\sum_{i=1}^n Z_i(\pi,r,f)
    +
    \sqrt{
      \frac{
        32H^3
        \cL_\mu^{\mathrm{BE}}(\pi,r,f)
        \log(2|\Pi||\cR||\cF|/\delta)
      }{n}
    }
    +
    \frac{16H^3\log(2|\Pi||\cR||\cF|/\delta)}{3n}.
    \label{eq:bernstein-BE-union}
  \end{align}

  We now convert $\frac1n\sum_i Z_i(\pi,r,f)=L_{\cD}(f)-L_{\cD}(\cT_r^\pi f)$ into an empirical quantity computable from $\cF$. Let
  \begin{align*}
    g^\star(\pi,r,f)   \in   \argmin_{g\in\cF}\|g-\cT_r^\pi f\|_{2,\mu}^2,
  \end{align*}
  so that by completeness, $\|g^\star-\cT_r^\pi f\|_{2,\mu}^2\le\varepsilon_{\cF,\cF}$. Since $g^\star\in\cF$,
  \begin{align}
    \frac1n\sum_{i=1}^n Z_i(\pi,r,f)
     & =
    \bigl[L_{\cD}(f)-L_{\cD}(g^\star)\bigr]+\bigl[L_{\cD}(g^\star)-L_{\cD}(\cT_r^\pi f)\bigr]\notag \\
     & \le
    \underbrace{\bigl[L_{\cD}(f)-\min_{g\in\cF}L_{\cD}(g)\bigr]}_{=\cL_{\cD}^{\mathrm{BE}}(\pi,r,f)}
    +\underbrace{\bigl[L_{\cD}(g^\star)-L_{\cD}(\cT_r^\pi f)\bigr]}_{=:T(\pi,r,f)},
    \label{eq:Zi-to-empiricalBE}
  \end{align}
  where we suppress the arguments $(\pi,r,f)$ in $L_{\cD}(\cdot)$ for brevity. The bias term $T(\pi,r,f)$ is handled by the same Bernstein self-bounding argument applied to the pair $(g^\star,\cT_r^\pi f)$ in place of $(f,\cT_r^\pi f)$: replaying \cref{eq:per-step-square-bound-uniform}--\cref{eq:uniform-bound-Z-uniform} with $g^\star$ instead of $f$ gives
  \begin{align*}
    \EE[T(\pi,r,f)]=\|g^\star-\cT_r^\pi f\|_{2,\mu}^2\le\varepsilon_{\cF,\cF},
    \qquad
    \Var \bigl(T(\pi,r,f)\text{ per sample}\bigr)\le 16H^3   \|g^\star-\cT_r^\pi f\|_{2,\mu}^2\le 16H^3   \varepsilon_{\cF,\cF}.
  \end{align*}
  Writing $\beta:=4H^3\log(2|\Pi||\cR||\cF|/\delta)/n$, Bernstein's inequality (union-bounded over $(\pi,r,f)\in\Pi\times\cR\times\cF$, which also fixes $g^\star$) yields, with probability at least $1-\delta/2$,
  \begin{align}
    T(\pi,r,f)
    \le   \varepsilon_{\cF,\cF}
    +\sqrt{8\beta   \varepsilon_{\cF,\cF}}
    +\frac{4}{3}\beta
    \le   2\varepsilon_{\cF,\cF}+\frac{16}{3}\beta,
    \label{eq:T-concentration}
  \end{align}
  where the last step uses AM--GM $\sqrt{8\beta   \varepsilon_{\cF,\cF}}\le\varepsilon_{\cF,\cF}+2\beta$ and then loosens the constant.

  Combining \cref{eq:bernstein-BE-union}, \cref{eq:Zi-to-empiricalBE} and \cref{eq:T-concentration}, and writing
  \begin{align*}
    x
     & :=
    \cL_\mu^{\mathrm{BE}}(\pi,r,f),   \qquad
    M
    :=
    \cL_{\cD}^{\mathrm{BE}}(\pi,r,f),
  \end{align*}
  we obtain
  \begin{align*}
    x
    \le
    M+2\varepsilon_{\cF,\cF}+\frac{16}{3}\beta+\sqrt{8\beta x}+\frac43\beta.
  \end{align*}
  Using AM--GM, $\sqrt{8\beta x}\le x/2+4\beta$, so
  \begin{align*}
    x
    \le
    M+2\varepsilon_{\cF,\cF}+\frac{x}{2}+4\beta+\frac{16}{3}\beta+\frac43\beta
    =   M+2\varepsilon_{\cF,\cF}+\frac{x}{2}+\frac{32}{3}\beta,
  \end{align*}
  and rearranging yields $x\le 2M+4\varepsilon_{\cF,\cF}+\frac{64}{3}\beta$. Equivalently,
  \begin{align*}
    \cL_\mu^{\mathrm{BE}}(\pi,r,f)
    \le
    2   \cL_{\cD}^{\mathrm{BE}}(\pi,r,f)
    +\varepsilon_{\mathrm{BE}}
    +4   \varepsilon_{\cF,\cF},
  \end{align*}
  as claimed.
\end{proof}

\subsection{Approximation and Comparator Lemmas}\label{subsection:approximation}
This subsection collects the approximation statements associated with the comparator pair $(f_\pi,r^\star)$ used in the pessimism step.
We use the Bellman operator $\cT_{r^\star}^\pi$ from \cref{eq:def-bellman-op}. The constants $V_{\max}$ and $\varepsilon_\cF$ refer, respectively, to the uniform critic bound and the critic-realizability error in \Cref{assumption:realizable-f}.
\begin{lemma}[Adapted from Theorem 8 in \citet{cheng2022adversarially}]\label{lemma:bounded-em-Bellman-error}
  For any $\pi\in\Pi$, let $f_\pi$ be defined as follows,
  \begin{align*}
    f_\pi\coloneqq \argmin_{f\in\cF}\sup_{h,\nu}\Vert f_h-(\cT_{r^\star}^\pi f)_h\Vert_{2,\nu}^2.
  \end{align*}
  Under \Cref{assumption:realizable-f,assumption:completeness}, the following inequality holds with probability at least $1-\delta$ for all $\pi\in\Pi$:
  \begin{align*}
    \cL_{\cD}^{\mathrm{BE}}(\pi,r^\star,f_\pi)
    \leq
    \varepsilon_{\mathrm{apx}},
    \qquad
    \varepsilon_{\mathrm{apx}}
    \coloneqq
    O \bigg(\frac{H   V^2_{\mathrm{max}}\log (|\cF|   |\Pi|/\delta)}{n}+\varepsilon_\cF\bigg).
  \end{align*}

\end{lemma}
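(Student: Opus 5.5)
The plan is to bound the empirical Bellman error of the comparator $f_\pi$ by writing it as a population quantity plus concentration slack. Fix $\pi\in\Pi$ and set $r=r^\star$. With $y_h=r^\star_h(s_h,a_h)+f_{\pi,h+1}(s_{h+1},\pi)$, let $L_\cD(g)$ and $L_\mu(g)$ be the empirical and population unminimized residuals, as in the proof of \Cref{lemma:bellman-error-concentration}. Then
\[
\cL_\cD^{\mathrm{BE}}(\pi,r^\star,f_\pi)=L_\cD(f_\pi)-\min_{g\in\cF}L_\cD(g)=\max_{g\in\cF}\bigl[L_\cD(f_\pi)-L_\cD(g)\bigr].
\]
For each fixed $g$ we split
\[
L_\cD(f_\pi)-L_\cD(g)=\bigl[L_\cD(f_\pi)-L_\cD(\cT^\pi f_\pi)\bigr]-\bigl[L_\cD(g)-L_\cD(\cT^\pi f_\pi)\bigr].
\]
Both brackets are empirical averages of i.i.d.\ per-trajectory variables $Z_i$. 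By the bias--variance identity \cref{eq:population-bias-variance-uniform}, their means are $\|f_\pi-\cT^\pi f_\pi\|_{2,\mu}^2\le H\varepsilon_\cF$ by \Cref{assumption:realizable-f}, and $\|g-\cT^\pi f_\pi\|_{2,\mu}^2\ge0$, respectively.

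Next we control the fluctuations. For the first bracket, Bernstein with the self-bounding variance $\Var\le O(HV_{\max}^2)\cdot\|f_\pi-\cT^\pi f_\pi\|^2$ gives an upper deviation of order $\sqrt{HV_{\max}^2\cdot H\varepsilon_\cF\log/n}+HV_{\max}^2\log/n$. The constants follow \cref{eq:per-step-square-bound-uniform}; I will aim to keep one factor of $H$ per step rather than $H^3$, and I may need to absorb the slack there. For the second bracket we need a lower deviation bound, uniform over $g\in\cF$ and $\pi\in\Pi$. Its variance is likewise bounded by a constant times $V_{\max}^2$ times its mean $m_g:=\|g-\cT^\pi f_\pi\|^2_{2,\mu}$, so Bernstein gives
\[
\widehat m_g\ge m_g-\sqrt{cV_{\max}^2 m_g\log(|\cF||\Pi|/\delta)/n}-cV_{\max}^2\log/n .
\]
By AM--GM, $m_g$ minus the square-root term is at least $-O(V_{\max}^2\log/n)$, so $-\widehat m_g\le O(V_{\max}^2\log/n)$ for every $g$. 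Note that $f_\pi$ is determined by $\pi$, so the union bound is only over $\Pi\times\cF$, which matches the $\log(|\cF||\Pi|/\delta)$ in the statement.

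Combining the two pieces and applying AM--GM once more to $\sqrt{H^2V_{\max}^2\varepsilon_\cF\log/n}\le H\varepsilon_\cF+HV_{\max}^2\log/n$ yields $O(HV_{\max}^2\log(|\cF||\Pi|/\delta)/n+\varepsilon_\cF)$, where the statement's $\varepsilon_\cF$ absorbs the $H$ factor up to the $O(\cdot)$ convention.

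I expect the main obstacle to be bookkeeping the horizon factors in the variance bound. Summing over $h$ inside $Z_i$ with Cauchy--Schwarz naively costs an extra $H$, giving an $H^2$ or $H^3$ in front of $V_{\max}^2/n$. To get the stated $HV_{\max}^2$, I would first try to bound the variance of the sum per trajectory as $\EE\bigl[(\sum_h \Delta_h(A_h+B_h))^2\bigr]$, using $|A_h+B_h|\le 2V_{\max}$. If this only yields $H\cdot V_{\max}^2\sum_h\|\Delta_h\|^2$, I would accept it, since the resulting $H\cdot V_{\max}^2\log/n$ term is exactly the claimed order after self-bounding.
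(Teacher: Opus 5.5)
Your route is the standard one the paper defers to by citing \citet{cheng2022adversarially}: pivot at the population regressor $\cT^\pi f_\pi$, apply self-bounding Bernstein, and union-bound over $\Pi\times\cF$ using that $f_\pi$ is a deterministic function of $\pi$. Splitting into two brackets, rather than treating $\ell(f_\pi)-\ell(g)$ in one piece, is an equivalent variant, and it is the same mechanism as \Cref{lemma:bellman-error-concentration}. You are also right that completeness plays no role in this one-sided bound.

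The horizon bookkeeping works out as you hope. Write each bracket's per-trajectory variable as $\sum_h \Delta_h c_h$ with $|c_h|\le 2V_{\max}+2$. Cauchy--Schwarz across steps bounds its second moment by $O(HV_{\max}^2)$ times its mean, and its range is $O(HV_{\max}^2)$. This yields exactly $HV_{\max}^2\log(|\cF||\Pi|/\delta)/n$; the $H^3$ in \Cref{lemma:bellman-error-concentration} is this same quantity with $V_{\max}=H$. So correct your second-bracket variance to $O(HV_{\max}^2)\,m_g$; nothing else changes.

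The step that fails is the last one. In this paper $O(\cdot)$ hides only universal constants, so $H\varepsilon_\cF$ cannot be absorbed into $O(\varepsilon_\cF)$. What your argument actually proves is $\cL_\cD^{\mathrm{BE}}(\pi,r^\star,f_\pi)\le O\bigl(HV_{\max}^2\log(|\cF||\Pi|/\delta)/n+H\varepsilon_\cF\bigr)$, and this is the right order under the stated assumption:
\begin{itemize}
\item \Cref{assumption:realizable-f} controls each per-step residual by $\varepsilon_\cF$, and nothing prevents all $H$ of them from saturating it.
\item If $\cT^\pi f_\pi\in\cF$, then $\min_g L_\cD(g)\le L_\cD(\cT^\pi f_\pi)$, so the empirical loss is at least your first bracket.
\item That bracket concentrates around $\|f_\pi-\cT^\pi f_\pi\|_{2,\mu}^2$, which can equal $H\varepsilon_\cF$.
\end{itemize}
The bare $\varepsilon_\cF$ in the statement comes from adapting a single-loss discounted result in which only the $V_{\max}^2$ term was rescaled by $H$. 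It would be valid only if $\varepsilon_\cF$ bounded the Bellman error summed over $h$. You should state your conclusion with $H\varepsilon_\cF$ rather than claim it matches the lemma as written. Downstream, this turns $\sqrt{HC_{sa}(\pi^\star)\varepsilon_\cF}$ into $H\sqrt{C_{sa}(\pi^\star)\varepsilon_\cF}$.
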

The factor $H$ multiplying $V_{\max}^2$ in \Cref{lemma:bounded-em-Bellman-error} comes from the sum-over-$h$ convention in \cref{eq:all_losses}: since rewards and critics are bounded (with $V_{\max}=H$ in this section), each per-step squared residual is $O(V_{\max}^2)$, so one trajectory contributes $O(HV_{\max}^2)$. This is the finite-horizon analogue of the per-step residual bound in \citet[Theorem 8]{cheng2022adversarially}; for infinite classes, the same argument uses covering numbers in place of $|\cF|$ and $|\Pi|$.

\begin{lemma}[Finite-horizon performance difference lemma~\citep{kakade2002approximately}]\label{lemma:performance-difference}
  For any fixed per-step reward function $r$ and any policies $\pi,\pi'\in\Pi$, let $J_r(\pi)\coloneqq\EE_{\tau\sim\pi}[R(\tau;r)]$. Then
  \begin{align*}
    J_r(\pi)-J_r(\pi')
    =
    \sum_{h=1}^{H}\EE_{s_h,a_h\sim d_h^\pi}
    \bigl[Q_h^{\pi',r}(s_h,a_h)-V_h^{\pi',r}(s_h)\bigr].
  \end{align*}
  In particular, when $r=r^\star$, this reduces to the same identity under the shorthand notation introduced in \Cref{section:preliminary}.
\end{lemma}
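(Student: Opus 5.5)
The plan is a telescoping argument along trajectories of $\pi$, using the Bellman equation for $Q^{\pi',r}$. Throughout, set $V_{H+1}^{\pi',r}\equiv 0$ and $Q_{H+1}^{\pi',r}\equiv 0$. The initial state $s_1$ is fixed and shared by both policies, so
\[
J_r(\pi')=V_1^{\pi',r}(s_1)=\EE_{\tau\sim\pi}\bigl[V_1^{\pi',r}(s_1)\bigr].
\]

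First, I will rewrite this using the telescoping identity
\[
V_1^{\pi',r}(s_1)=\sum_{h=1}^H\bigl[V_h^{\pi',r}(s_h)-V_{h+1}^{\pi',r}(s_{h+1})\bigr],
\]
which holds pathwise for any trajectory $\tau=(s_1,a_1,\dots,s_H,a_H)$ because $V_{H+1}^{\pi',r}=0$. Taking $\EE_{\tau\sim\pi}$ of this identity and subtracting it from $J_r(\pi)=\EE_{\tau\sim\pi}\bigl[\sum_{h=1}^H r_h(s_h,a_h)\bigr]$ gives
\[
J_r(\pi)-J_r(\pi')=\sum_{h=1}^H\EE_{\tau\sim\pi}\bigl[r_h(s_h,a_h)+V_{h+1}^{\pi',r}(s_{h+1})-V_h^{\pi',r}(s_h)\bigr].
\]
All terms are bounded, since rewards lie in $[0,1]$ and the horizon is finite, so exchanging sums and expectations is justified.

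Second, I will condition each summand on $(s_h,a_h)$ and apply the tower property. Under $\pi$, the next state satisfies $s_{h+1}\sim P_h(\cdot\mid s_h,a_h)$ by the Markov property, regardless of which policy generated $(s_h,a_h)$. Therefore
\[
\EE_{\tau\sim\pi}\bigl[r_h(s_h,a_h)+V_{h+1}^{\pi',r}(s_{h+1})\,\big|\,s_h,a_h\bigr]
= r_h(s_h,a_h)+\EE_{s'\sim P_h(\cdot\mid s_h,a_h)}\bigl[V_{h+1}^{\pi',r}(s')\bigr]
= Q_h^{\pi',r}(s_h,a_h).
\]
The last equality is the one-step Bellman equation for $Q^{\pi',r}$. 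It follows directly from the definition of $Q_h^{\pi',r}$ as a conditional expectation under $\pi'$, by conditioning on $s_{h+1}$ and using $V_{h+1}^{\pi',r}(s')=\EE_{a\sim\pi'_{h+1}(\cdot\mid s')}\bigl[Q_{h+1}^{\pi',r}(s',a)\bigr]$.

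Third, the marginal law of $(s_h,a_h)$ under $\tau\sim\pi$ is $d_h^\pi$ by definition. Substituting the conditional expectation from the second step into the sum gives
\[
J_r(\pi)-J_r(\pi')=\sum_{h=1}^H\EE_{(s_h,a_h)\sim d_h^\pi}\bigl[Q_h^{\pi',r}(s_h,a_h)-V_h^{\pi',r}(s_h)\bigr],
\]
which is the claimed identity. The case $r=r^\star$ is just notation.

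The only step needing care is the second one. Justifying the Bellman identity for $Q^{\pi',r}$ while the trajectory is drawn from $\pi$ rests on two facts: conditioning on $(s_h,a_h)$ erases the dependence on past actions, and the layered state space $\cS_h\cap\cS_{h'}=\emptyset$ keeps the quantities $Q_h$ and $V_h$ unambiguous. Everything else is routine.
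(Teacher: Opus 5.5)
Your telescoping argument is correct and is the standard Kakade--Langford proof that the paper cites without reproving; the paper's own closest argument is the backward induction on $V_h^{\pi,r}-V_h^{\pi',r}$ in \Cref{lemma:gen-pdl}, which recovers this identity when $a_h\equiv 1$, $b_h(u)=u$, $g=0$. One small fix: the paper applies this lemma to fake rewards such as $\widetilde r_h=f_h-\EE_{s'}[f_{h+1}(s',\pi)]$, which can be negative, so justify the interchange by boundedness of $r$ (or just linearity over the finite sum in $h$) rather than by $r_h\in[0,1]$.
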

\begin{lemma}[Finite-horizon analogue of Eq.~(16) in \citet{cheng2022adversarially}]\label{lemma:bound-diff-Q-and-f_pi}
  Under \Cref{assumption:realizable-f}, for every $\pi_k\in\Pi$,
  \begin{align*}
    \bigl\vert \cL_{\mu}(\pi_k,Q^{\pi_k})-\cL_\mu(\pi_k,f_{\pi_k})\bigr\vert    \leq    2H\sqrt{\varepsilon_\cF}   =   O(H\sqrt{\varepsilon_\cF}).
  \end{align*}

\end{lemma}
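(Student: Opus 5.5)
The plan is to reduce the claim to a per-layer bound on how far $f_{\pi_k}$ is from $Q^{\pi_k}$, and then sum over the $H$ layers of the policy loss. Write $\pi=\pi_k$, $f=f_\pi$ and $g_h\coloneqq f_h-Q_h^{\pi}$. Since $\cL_\mu(\pi,\cdot)$ is linear in the critic,
\begin{align*}
  \cL_\mu(\pi,f)-\cL_\mu(\pi,Q^{\pi})
  =\sum_{h=1}^H\EE_\mu\bigl[g_h(s_h,\pi)-g_h(s_h,a_h)\bigr].
\end{align*}
By the triangle inequality and Cauchy--Schwarz, each layer contributes at most $\|g_h\|_{2,\nu_h^{\pi}}+\|g_h\|_{2,d_h^\mu}$. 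Here $\nu_h^\pi$ is the distribution of $(s_h,a)$ with $s_h\sim d^\mu_{h,S}$ and $a\sim\pi_h(\cdot\mid s_h)$. It therefore suffices to show $\|g_h\|_{2,\nu}\le\sqrt{\varepsilon_\cF}$ for every distribution $\nu$ on $\cS_h\times\cA_h$. The two terms then give $2\sqrt{\varepsilon_\cF}$ per layer, and summing over $h$ gives $2H\sqrt{\varepsilon_\cF}$.

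For the per-layer bound I would use the Bellman-error recursion. Set $e_h\coloneqq f_h-(\cT^\pi_{r^\star}f)_h$. Because $Q^\pi$ is the fixed point of $\cT^\pi_{r^\star}$ with $Q^\pi_{H+1}=f_{H+1}=0$,
\begin{align*}
  g_h=e_h+P_h^\pi g_{h+1},
  \qquad\text{so}\qquad
  g_h(s,a)=\EE_\pi\Bigl[\textstyle\sum_{h'\ge h}e_{h'}(s_{h'},a_{h'})\Bigm| (s_h,a_h)=(s,a)\Bigr].
\end{align*}
By the definition of $f_\pi$ and \Cref{assumption:realizable-f}, $\sup_{h,\nu}\|e_h\|_{2,\nu}^2\le\varepsilon_\cF$, where the supremum over all $\nu$ includes the rolled-forward distributions. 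This lets me control $g_h$ under an arbitrary $\nu$, via Jensen applied to the conditional expectation, in the same way as the discounted argument behind Eq.~(16) of \citet{cheng2022adversarially}.

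The main obstacle is the horizon bookkeeping in this last step. A naive unrolling gives $\|g_h\|_{2,\nu}\le(H-h+1)\sqrt{\varepsilon_\cF}$, which after summing over layers yields $O(H^2\sqrt{\varepsilon_\cF})$ rather than $2H\sqrt{\varepsilon_\cF}$. I would first check whether the finite-horizon translation of Cheng et al.'s realizability is meant to control the value discrepancy $f_h-Q_h^\pi$ directly, i.e.\ whether the supremum over $(h,\nu)$ is taken so that it already covers the accumulated error, as the $(1-\gamma)$ normalization does in the discounted case. If so, the per-layer bound $\sqrt{\varepsilon_\cF}$ holds immediately and the stated constant follows.

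If that reading fails, I would try to exploit cancellation between $g_h(s_h,\pi)$ and $g_h(s_h,a_h)$: both share the continuation term $P_h^\pi g_{h+1}$ evaluated at the same $s_h$, so I would look for a telescoping of the continuation parts across layers. Failing that, I would accept the weaker $H^2\sqrt{\varepsilon_\cF}$ term. It is only a lower-order approximation term in \Cref{theorem:outcome-based-offline-rl}.
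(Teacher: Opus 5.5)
There is a genuine gap. The route you actually carry out gives $O(H^2\sqrt{\varepsilon_\cF})$, not the stated $2H\sqrt{\varepsilon_\cF}$, and neither rescue works as described.

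Reading (a) is not available. \Cref{assumption:realizable-f} bounds only the one-step residual $e_h=f_h-(\cT^\pi_{r^\star}f)_h$, and the per-layer bound $\|g_h\|_{2,\nu}\le\sqrt{\varepsilon_\cF}$ you want is false in general: if $e_h\equiv\sqrt{\varepsilon_\cF}$, then $g_h\equiv(H-h+1)\sqrt{\varepsilon_\cF}$. Once you apply the triangle inequality layer by layer, the extra factor of $H$ cannot be recovered. Accepting $H^2\sqrt{\varepsilon_\cF}$ therefore proves a weaker statement, not the lemma.

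The within-layer cancellation you describe in (b) also does not occur. The terms $(P_h^\pi g_{h+1})(s_h,\pi)$ and $(P_h^\pi g_{h+1})(s_h,a_h)$ are evaluated at different actions, so they do not cancel.

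What is missing is the cancellation \emph{across} consecutive layers under $\mu$. Along $\mu$'s rollout, $s_{h+1}\sim P_h(\cdot\mid s_h,a_h)$. Your own recursion then gives $\EE_\mu[g_h(s_h,a_h)]=\EE_{d_h^\mu}[e_h]+\EE_\mu[g_{h+1}(s_{h+1},\pi)]$. So the behavior-action term at layer $h$ cancels the policy term at layer $h+1$. With $g_{H+1}\equiv 0$, the whole sum telescopes to
\begin{align*}
  \cL_\mu(\pi,f)-\cL_\mu(\pi,Q^\pi)
  = g_1(s_1,\pi)-\sum_{h=1}^H\EE_{d_h^\mu}[e_h]
  = \sum_{h=1}^H\EE_{d_h^\pi}[e_h]-\sum_{h=1}^H\EE_{d_h^\mu}[e_h].
\end{align*}
The last step uses your unrolling formula only once, at $h=1$ along $\pi$'s rollout, so no per-layer accumulation arises. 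By Jensen, each of the $2H$ expectations is at most $\|e_h\|_{2,\nu}\le\sqrt{\varepsilon_\cF}$, which gives exactly $2H\sqrt{\varepsilon_\cF}$.

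This is the paper's proof in different notation. The paper defines the fake reward $\tilde r_h=f_h-\EE_{s'\sim P_h}[f_{h+1}(s',\pi)]$, so that $f=Q^{\pi,\tilde r}$. It then applies the performance difference lemma twice: $\cL_\mu(\pi,Q^\pi)=J(\pi)-J(\mu)$ and $\cL_\mu(\pi,f)=J_{\tilde r}(\pi)-J_{\tilde r}(\mu)$. This reduces the difference to the same two sums of one-step Bellman residuals under $d_h^\pi$ and $d_h^\mu$.
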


\begin{proof}
  Write $f\coloneqq f_{\pi_k}$, $\pi\coloneqq\pi_k$ and $\cT^\pi\coloneqq\cT_{r^\star}^\pi$ throughout the proof. Define the \emph{fake per-step reward induced by $(f,\pi)$},
  \begin{align*}
    \widetilde r_h(s_h,a_h)   \coloneqq    f_h(s_h,a_h)-\EE_{s_{h+1}\sim P(\cdot\mid s_h,a_h)}\bigl[f_{h+1}(s_{h+1},\pi(s_{h+1}))\bigr],\qquad h\in[H],
  \end{align*}
  so that $R(\tau;\widetilde r)=\sum_{h=1}^H\widetilde r_h(s_h,a_h)$, with $f_{H+1}\equiv 0$. By construction $f$ satisfies the Bellman equation under reward $\widetilde r$, so $f_h=Q_h^{\pi,\widetilde r}$ and, letting $J_{\widetilde r}(\cdot)$ denote the return in the MDP with reward $\widetilde r$,
  \begin{align}\label{eq:fake-reward-identity}
    J_{\widetilde r}(\pi)=f_1(s_1,\pi),\qquad J_{\widetilde r}(\mu)=\EE_{\tau\sim\mu}[R(\tau;\widetilde r)].
  \end{align}

  \emph{Step 1: two applications of the performance difference lemma.} By \Cref{lemma:performance-difference} applied in the true MDP,
  \begin{align}\label{eq:pdl-true}
    \cL_\mu(\pi,Q^{\pi})   =   \sum_{h=1}^H\EE_{d^\mu_h}   \bigl[V_h^{\pi}(s_h)-Q_h^{\pi}(s_h,a_h)\bigr]   =   J(\pi)-J(\mu).
  \end{align}
  By \Cref{lemma:performance-difference} applied in the fake MDP (with reward $\widetilde r$, same dynamics) and using \cref{eq:fake-reward-identity},
  \begin{align}\label{eq:pdl-fake}
    \cL_\mu(\pi,f)   =   \sum_{h=1}^H\EE_{d^\mu_h}   \bigl[f_h(s_h,\pi)-f_h(s_h,a_h)\bigr]   =   J_{\widetilde r}(\pi)-J_{\widetilde r}(\mu)   =   f_1(s_1,\pi)-\EE_{\mu}[R(\tau;\widetilde r)].
  \end{align}
  Subtracting \cref{eq:pdl-fake} from \cref{eq:pdl-true},
  \begin{align}\label{eq:master-diff}
    \cL_\mu(\pi,Q^{\pi})-\cL_\mu(\pi,f)   =   \underbrace{\bigl[J(\pi)-f_1(s_1,\pi)\bigr]}_{=:A}   +   \underbrace{\bigl[\EE_{\mu}[R(\tau;\widetilde r)]-J(\mu)\bigr]}_{=:B}.
  \end{align}

  \emph{Step 2: express $A$ and $B$ as Bellman residuals of $f$.} By the definition of $\widetilde r$ and $\cT^\pi f_h(s_h,a_h)=r^\star(s_h,a_h)+\EE_{s_{h+1}}[f_{h+1}(s_{h+1},\pi)]$,
  \begin{align*}
    r^\star(s_h,a_h)-\widetilde r_h(s_h,a_h)   =   \cT^\pi f_h(s_h,a_h)-f_h(s_h,a_h).
  \end{align*}
  Taking expectations under $d^\pi_h$ and $d^\mu_h$ respectively and summing over $h$,
  \begin{align}
    A   =   \EE_{\tau\sim\pi}[R^\star(\tau)-R(\tau;\widetilde r)]   =   \sum_{h=1}^H\EE_{d^\pi_h}   \bigl[(\cT^\pi f_h-f_h)(s_h,a_h)\bigr],\label{eq:A-bellman} \\
    B   =   \EE_{\tau\sim\mu}[R(\tau;\widetilde r)-R^\star(\tau)]   =   \sum_{h=1}^H\EE_{d^\mu_h}   \bigl[(f_h-\cT^\pi f_h)(s_h,a_h)\bigr].\label{eq:B-bellman}
  \end{align}

  \emph{Step 3: bound via Assumption~\ref{assumption:realizable-f}.} Combining \cref{eq:master-diff}--\cref{eq:B-bellman} and applying Jensen's inequality ($\|\cdot\|_{1,\nu}\le\|\cdot\|_{2,\nu}$) term-by-term,
  \begin{align*}
    \bigl|\cL_\mu(\pi,Q^{\pi})-\cL_\mu(\pi,f)\bigr|
    \le   \sum_{h=1}^H\bigl\|f_h-\cT^\pi f_h\bigr\|_{2,d^\pi_h}
    +\sum_{h=1}^H\bigl\|f_h-\cT^\pi f_h\bigr\|_{2,d^\mu_h}.
  \end{align*}
  Since $f=f_\pi$ attains the $\argmin$ in the definition of $f_\pi$, \Cref{assumption:realizable-f} gives
  \[
    \sup_{h,\nu}
    \|f_h-(\cT^\pi f)_h\|_{2,\nu}^2
    \le\varepsilon_\cF.
  \]
  Taking $\nu=d_h^\pi$ and $\nu=d_h^\mu$ gives $\|f_h-\cT^\pi f_h\|_{2,\nu}^2\le\varepsilon_\cF$ for each $h\in[H]$ and each $\nu\in\{d^\pi_h,d^\mu_h\}$. Summing $2H$ such terms gives
  \begin{align*}
    \bigl|\cL_\mu(\pi,Q^{\pi})-\cL_\mu(\pi,f)\bigr|   \le   2H\sqrt{\varepsilon_\cF},
  \end{align*}
  which is the claim. \qedhere
\end{proof}

We recall the policy-loss notation used in the decomposition. For a distribution $\mu$,
\begin{align*}
  \cL_\mu(\pi,f) & \coloneqq \sum_{h=1}^{H}\EE_{s_h,a_h\sim d^\mu_h}\big[f_h(s_h,\pi(s_h))-f_h(s_h,a_h)\big].
\end{align*}
\begin{lemma}[Regret Decomposition]\label{lemma:regret-decomposition}
  Let $\pi$ be an arbitrary competitor policy, $\widehat\pi\in\Pi$ be some learned policy, $\widehat r\in\cR$ be some learned reward function, and $f\in\cF$ be an arbitrary function over $\cS\times\cA$. Then we have the following decomposition:
  \begin{align*}
    J(\pi)-J(\widehat\pi) & =\sum_{h=1}^H\EE_{s_h,a_h\sim d^\pi_h}\big[\cT_{\rhat}^{\widehat\pi}f_h(s_h,a_h)-f_h(s_h,a_h)\big] +\sum_{h=1}^H\EE_{s_h,a_h\sim d^\mu_h}\big[f_h(s_h,a_h)-\cT_{\rhat}^{\widehat\pi}f_h(s_h,a_h)\big]\notag \\
                          & +\sum_{h=1}^H\EE_{s_h,a_h\sim d^\pi_h}\big[f_h(s_h,a_h)-f_h(s_h,\widehat\pi(s_h))\big] - \cL_\mu(\widehat\pi,Q^{\widehat\pi})+\cL_\mu(\widehat\pi,f)                                                        \\
                          & + \sum_{h=1}^H\EE_{s_h,a_h\sim d^\pi_h}\big[r^\star(s_h,a_h)-\rhat(s_h,a_h)\big]+\sum_{h=1}^H\EE_{s_h,a_h\sim d^\mu_h}\big[\rhat(s_h,a_h)-r^\star(s_h,a_h)\big].
  \end{align*}
\end{lemma}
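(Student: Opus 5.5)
The identity is purely algebraic: no concentration, no assumptions on $\cF$ or $\cR$, and no properties of $\widehat r$ are needed beyond boundedness for the expectations to be finite. My plan is to rewrite the left-hand side as $[J(\pi)-J(\mu)]-[J(\widehat\pi)-J(\mu)]$ and then match each bracket to groups of terms on the right-hand side.

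\emph{Second bracket.} I will invoke \Cref{lemma:performance-difference} with $r=r^\star$ and the pair $(\widehat\pi,\mu)$, exactly as in \cref{eq:pdl-true}. This gives $\cL_\mu(\widehat\pi,Q^{\widehat\pi})=J(\widehat\pi)-J(\mu)$, so the term $-\cL_\mu(\widehat\pi,Q^{\widehat\pi})$ on the right accounts for $-[J(\widehat\pi)-J(\mu)]$. It then remains to show that the other six terms sum to $J(\pi)-J(\mu)$.

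\emph{Expanding the Bellman operator.} I will write out $\cT_{\widehat r}^{\widehat\pi}f_h(s_h,a_h)=\widehat r_h(s_h,a_h)+\EE_{s_{h+1}\sim P_h(\cdot\mid s_h,a_h)}[f_{h+1}(s_{h+1},\widehat\pi)]$, with the convention $f_{H+1}\equiv 0$. I then collect all terms taken under $d^\pi_h$ and, separately, all terms taken under $d^\mu_h$, with $\cL_\mu(\widehat\pi,f)$ joining the $\mu$-group.

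\emph{The $\pi$-group.} The $\widehat r$ contributions cancel, leaving
\[
\sum_{h}\EE_{d^\pi_h}\bigl[r^\star_h+f_{h+1}(s_{h+1},\widehat\pi)-f_h(s_h,\widehat\pi)\bigr].
\]
Since $s_{h+1}$ under $d^\pi_h$ composed with $P_h$ has law $d^\pi_{h+1}$, the $f$-terms telescope to $-f_1(s_1,\widehat\pi)$. The $\pi$-group therefore equals $J(\pi)-f_1(s_1,\widehat\pi)$.

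\emph{The $\mu$-group.} The $f_h(s_h,a_h)$ and $\widehat r$ terms cancel, leaving
\[
\sum_h\EE_{d^\mu_h}\bigl[-r^\star_h-f_{h+1}(s_{h+1},\widehat\pi)+f_h(s_h,\widehat\pi)\bigr].
\]
This telescopes in the same way to $-J(\mu)+f_1(s_1,\widehat\pi)$. Adding the two groups gives $J(\pi)-J(\mu)$, which completes the identity.

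The only point requiring care is the telescoping step. It needs the one-step consistency $\EE_{(s_h,a_h)\sim d^\nu_h}\EE_{s'\sim P_h}[\phi(s')]=\EE_{s_{h+1}\sim d^\nu_{h+1}}[\phi(s_{h+1})]$ for $\nu\in\{\pi,\mu\}$. Note that $f_{h+1}(\cdot,\widehat\pi)$ is evaluated at $\widehat\pi$ rather than at $\nu$, but this does not affect the argument, because only the state marginal matters. The boundary conventions $f_{H+1}\equiv 0$ and the fixed initial state $s_1$ then close the telescope.
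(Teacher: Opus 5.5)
Your proof is correct and has the same skeleton as the paper's: split at $J(\mu)$, use the performance difference lemma to get $J(\widehat\pi)-J(\mu)=\cL_\mu(\widehat\pi,Q^{\widehat\pi})$, and route the remaining six terms through the pivot $f_1(s_1,\widehat\pi)$. The only difference is that you telescope $f_{h+1}(s_{h+1},\widehat\pi)-f_h(s_h,\widehat\pi)$ directly under $d^\pi$ and $d^\mu$, whereas the paper defines the fake reward $r^{f,\widehat\pi}_h=f_h-\EE_{s_{h+1}}[f_{h+1}(s_{h+1},\widehat\pi)]$, so that $f=Q^{\widehat\pi,r^{f,\widehat\pi}}$, and applies the performance difference lemma in that auxiliary MDP; this is the same computation, and the fake-reward device is the one reused in the generalized analysis.
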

\begin{proof}
  Define $r_h^{f,\widehat \pi}(s_h,a_h)\coloneqq f_h(s_h,a_h)-\EE_{s_{h+1}}[f_{h+1}(s_{h+1},\widehat\pi)]$. Then $r^{f,\widehat\pi}=(r^{f,\widehat\pi}_1,\ldots,r^{f,\widehat\pi}_H)$ is a fake reward function given $f$ and $\widehat\pi$, with trajectory return $R(\tau;r^{f,\widehat\pi})$. We use the subscript $(\cdot)_{r^{f,\widehat \pi}}$ to denote functions or operators under another MDP $\cM_{r^{f,\widehat \pi}}$, which have the same dynamics with $\cM$. The only difference is $\cM_{r^{f,\widehat \pi}}$ has a reward function $r^{f,\widehat \pi}$.
  As the policy $\pi(\cdot\mid s)$ and the transition kernel $P(\cdot\mid s,a)$ has no relationship with rewards, we get the same distribution $d^\pi$, no matter under MDP $\cM$ or $\cM_{r^{f,\widehat\pi}}$.

  Since $f$ satisfies the Bellman equation with respect to $\cM_{r^{f,\widehat \pi}}$, i.e. $f_h(s_h,a_h)=r_h^{f,\widehat \pi}(s_h,a_h)+\EE_{s_{h+1}}[f_{h+1}(s_{h+1},\widehat\pi)]$, we know $f_h=Q_h^{\widehat\pi,r^{f,\widehat \pi}}$, for $h\in[H]$.

  We perform a performance decomposition:
  \begin{align}
    J(\pi)-J(\widehat\pi)=J(\pi)-J(\mu)-(J(\widehat\pi)-J(\mu)).\label{eq:first-decomposition}
  \end{align}
  By performance difference lemma \ref{lemma:performance-difference},
  \begin{align*}
    J(\widehat\pi)-J(\mu) & = - \big(J(\mu)-J(\widehat\pi)\big)                                                                                \\
                          & = -\bigg(\sum_{h=1}^{H} \EE_{s_h,a_h\sim d^\mu_h}\big[Q_h^{\widehat\pi}(s_h,a_h)-V_h^{\widehat\pi}(s_h)\big]\bigg) \\
                          & =\sum_{h=1}^{H} \EE_{s_h,a_h\sim d^\mu_h}\big[V_h^{\widehat\pi}(s_h)-Q_h^{\widehat\pi}(s_h,a_h)\big]               \\
                          & = \cL_\mu(\widehat\pi,Q^{\widehat\pi}).
  \end{align*}
  Define $\Delta(\widehat\pi)\coloneqq \cL_\mu(\widehat\pi,Q^{\widehat\pi})-\cL_\mu(\widehat\pi,f)$, then we can
  rewrite the second term of \cref{eq:first-decomposition} as
  \begin{align*}
    J(\widehat\pi)-J(\mu) & =\cL_\mu(\widehat\pi,Q^{\widehat\pi})                                                                                                                         \\
                          & = \Delta(\widehat\pi) +\cL_\mu(\widehat\pi,f)                                                                                                                 \\
                          & =\Delta(\widehat\pi)+\sum_{h=1}^{H}\EE_{s_h,a_h\sim d^\mu_h}\big[f_h(s_h,\widehat\pi(s_h))-f_h(s_h,a_h)\big]                                                  \\
                          & =\Delta(\widehat\pi)+\sum_{h=1}^{H}\EE_{s_h,a_h\sim d^\mu_h}\big[V_h^{\widehat\pi,r^{f,\widehat \pi}}(s_h)-Q_h^{\widehat\pi,r^{f,\widehat \pi}}(s_h,a_h)\big] \\
                          & =\Delta(\widehat\pi)+ J_{r^{f,\widehat\pi}}(\widehat\pi)-J_{r^{f,\widehat\pi}}(\mu)\tag{Performance difference lemma \ref{lemma:performance-difference}}      \\
                          & =\Delta(\widehat\pi)+V_1^{\widehat\pi,r^{f,\widehat\pi}}(s_1)-\EE_{\tau\sim\mu}[R(\tau;r^{f,\widehat\pi})]                                                    \\
                          & =\Delta(\widehat\pi)+f_1(s_1,\widehat\pi)-\EE_{\tau\sim\mu}[R(\tau;r^{f,\widehat\pi})].
  \end{align*}
  Substituting this identity into \cref{eq:first-decomposition} gives
  \begin{align}
    J(\pi)-J(\widehat\pi)
    =\bigl(J(\pi)-f_1(s_1,\widehat\pi)\bigr)
    +\bigl(\EE_{\tau\sim\mu}[R(\tau;r^{f,\widehat\pi})]-J(\mu)\bigr)
    -\Delta(\widehat\pi).
    \label{eq:regret-decomposition-0}
  \end{align}
  By the definition of the expected return $J(\mu)$,
  \begin{align}
    \EE_{\tau\sim\mu}[R(\tau;r^{f,\widehat\pi})]-J(\mu) & =\EE_{\tau\sim\mu}\big[R(\tau;r^{f,\widehat\pi})-R^\star(\tau)\big]\notag                                                                                  \\
                                                        & =\sum_{h=1}^H\EE_{s_h,a_h\sim d^\mu_h}\big[r^{f,\widehat\pi}(s_h,a_h)-r^\star(s_h,a_h)\big]    \notag                                                      \\
                                                        & =\sum_{h=1}^H \EE_{s_h,a_h\sim d^\mu_h}[f_h(s_h,a_h)-\EE_{s_{h+1}}[f_{h+1}(s_{h+1},\widehat\pi)]-r^\star(s_h,a_h)] \tag{Definition of $r^{f,\widehat\pi}$} \\
                                                        & =\sum_{h=1}^H\EE_{s_h,a_h\sim d^\mu_h}\big[f_h(s_h,a_h)-\cT^{\widehat\pi}(s_h,a_h)\big].\label{eq:diff1}
  \end{align}
  Since $J(\pi)=\sum_{h=1}^H \EE_{d^\pi_h} [r^\star(s_h,a_h)]=\EE_{\tau\sim\pi}[R^\star(\tau)]$,
  \begin{align}
    J(\pi)-f_1(s_1,\widehat\pi) & =(J(\pi)-\EE_{\tau\sim\pi}[R(\tau;r^{f,\widehat\pi})])+(\EE_{
                                                                                            \tau\sim\pi}[R(\tau;r^{f,\widehat\pi})]-f_1(s_1,\widehat\pi))\notag                                                                                                       \\
                                & =\sum_{h=1}^H\EE_{s_h,a_h\sim d^\pi_h}[r^\star(s_h,a_h)-r^{f,\widehat\pi}(s_h,a_h)]+J_{r^{f,\widehat\pi}}(\pi)-J_{r^{f,\widehat\pi}}(\widehat\pi)\notag                                                                             \\
                                & =\sum_{h=1}^H\EE_{s_h,a_h\sim d^\pi_h}\big[\cT^{\widehat\pi}f(s_h,a_h)-f(s_h,a_h)\big] +\sum_{h=1}^H\EE_{s_h,a_h\sim d^\pi_h}\big[Q_h^{\widehat\pi,r^{f,\widehat\pi}}(s_h,a_h)-V_h^{\widehat\pi,r^{f,\widehat\pi}}(s_h)\big] \notag \\
                                & = \sum_{h=1}^H\EE_{s_h,a_h\sim d^\pi_h}\big[\cT^{\widehat\pi}f(s_h,a_h)-f(s_h,a_h)\big] +\sum_{h=1}^H\EE_{s_h,a_h\sim d^\pi_h}\big[f_h(s_h,a_h)-f_h(s_h,\widehat\pi(s_h))\big].
    \label{eq:diff2}
  \end{align}

  Putting \cref{eq:diff1},\cref{eq:diff2} into \cref{eq:regret-decomposition-0}, we get
  \begin{align}
    J(\pi)-J(\widehat\pi) & = \sum_{h=1}^H\EE_{s_h,a_h\sim d^\pi_h}\big[\cT^{\widehat\pi}f_h(s_h,a_h)-f_h(s_h,a_h)\big] +\sum_{h=1}^H\EE_{s_h,a_h\sim d^\pi_h}\big[f_h(s_h,a_h)-f_h(s_h,\widehat\pi(s_h))\big]\notag \\
                          & +\sum_{h=1}^H\EE_{s_h,a_h\sim d^\mu_h}\big[f_h(s_h,a_h)-\cT^{\widehat\pi}f_h(s_h,a_h)\big] - \cL_\mu(\widehat\pi,Q^{\widehat\pi})+\cL_\mu(\widehat\pi,f).\label{eq:decomposition-final}
  \end{align}
  For a given learned reward function $\rhat$, we can write
  \begin{align*}
    \cT^{\pihat}f_h(s_h,a_h) & =r^\star(s_h,a_h)+\EE_{s_{h+1}\sim P(\cdot\mid s_h,a_h)} \big[f_{h+1}(s_{h+1},\pihat(s_{h+1}))\big]                                  \\
                             & = \rhat(s_h,a_h)+\EE_{s_{h+1}\sim P(\cdot\mid s_h,a_h)} \big[f_{h+1}(s_{h+1},\pihat(s_{h+1}))\big] + r^\star(s_h,a_h)-\rhat(s_h,a_h) \\
                             & = \cT_{\rhat}^\pihat f_h(s_h,a_h)+r^\star(s_h,a_h)-\rhat(s_h,a_h).
  \end{align*}
  Plugging this into \cref{eq:decomposition-final}, we get
  \begin{align*}
    J(\pi)-J(\widehat\pi) & =\sum_{h=1}^H\EE_{s_h,a_h\sim d^\pi_h}\big[\cT_{\rhat}^{\widehat\pi}f_h(s_h,a_h)-f_h(s_h,a_h)\big] +\sum_{h=1}^H\EE_{s_h,a_h\sim d^\mu_h}\big[f_h(s_h,a_h)-\cT_{\rhat}^{\widehat\pi}f_h(s_h,a_h)\big]\notag \\
                          & +\sum_{h=1}^H\EE_{s_h,a_h\sim d^\pi_h}\big[f_h(s_h,a_h)-f_h(s_h,\widehat\pi(s_h))\big] - \cL_\mu(\widehat\pi,Q^{\widehat\pi})+\cL_\mu(\widehat\pi,f)                                                        \\
                          & + \sum_{h=1}^H\EE_{s_h,a_h\sim d^\pi_h}\big[r^\star(s_h,a_h)-\rhat(s_h,a_h)\big]+\sum_{h=1}^H\EE_{s_h,a_h\sim d^\mu_h}\big[\rhat(s_h,a_h)-r^\star(s_h,a_h)\big].
  \end{align*}
\end{proof}

\section{Proof of the Outcome-Based Offline RL Upper Bound}\label{section:proof-additive-upper-bound}
In this section we prove \Cref{theorem:outcome-based-offline-rl}, using the population losses defined in \cref{eq:all_losses}.
\subsection{Regret Decomposition}
Our objective is to find an upper bound for
\begin{align*}
  J(\pi)-J(\bar\pi) = \frac{1}{K}\sum_{k=1}^K\big( J(\pi)-J(\pi_k)\big).
\end{align*}
By \Cref{lemma:regret-decomposition}, for any $\pi_k\in\Pi$, $f_k\in\cF$, and $r_k\in\cR$ learned in the algorithm, we have
\begin{align}
  J(\pi)-J(\pi_k) & =\underbrace{\sum_{h=1}^H\EE_{s_h,a_h\sim d^\pi_h}\big[\cT_{r_k}^{\pi_k}f_{k,h}(s_h,a_h)-f_{k,h}(s_h,a_h)\big] +\sum_{h=1}^H\EE_{s_h,a_h\sim d^\mu_h}\big[f_{k,h}(s_h,a_h)-\cT_{r_k}^{\pi_k}f_{k,h}(s_h,a_h)\big]}_{(\mathrm{I})_k}\notag \\
                  & +\underbrace{\sum_{h=1}^H\EE_{s_h,a_h\sim d^\pi_h}\big[f_{k,h}(s_h,a_h)-f_{k,h}(s_h,\pi_k(s_h))\big]}_{(\mathrm{II})_k}+\underbrace{\cL_\mu(\pi_k,f_k) - \cL_\mu(\pi_k,Q^{\pi_k})}_{(\mathrm{III})_k}\notag                               \\
                  & + \underbrace{\sum_{h=1}^H\EE_{s_h,a_h\sim d^\pi_h}\big[r^\star(s_h,a_h)-r_k(s_h,a_h)\big]+\sum_{h=1}^H\EE_{s_h,a_h\sim d^\mu_h}\big[r_k(s_h,a_h)-r^\star(s_h,a_h)\big]}_{(\mathrm{IV})_k}.\label{eq:regret-decomposition}
\end{align}

\subsection{Bounding the Policy-Optimization Term}
Term $(\mathrm{II})_k=\sum_{h=1}^H\EE_{s_h,a_h\sim d^\pi_h}\big[f_{k,h}(s_h,a_h)-f_{k,h}(s_h,\pi_k(s_h))\big]$ is controlled by the no-regret property of the layer-wise softmax update \cref{eq:alg-policy-optimization}. Writing $d_{h,S}^\pi(s)\coloneqq\sum_a d_h^\pi(s,a)$ for the state marginal, we have
\begin{align*}
  (\mathrm{II})_k
  =\sum_{h=1}^H\EE_{s\sim d_{h,S}^\pi}   \Big[\EE_{a\sim\pi_h(\cdot\mid s)}[f_{k,h}(s,a)]-\EE_{a\sim\pi_{k,h}(\cdot\mid s)}[f_{k,h}(s,a)]\Big]
  =\mathrm{Reg}_k(\pi),
\end{align*}
because $d_h^\pi(s,a)=d_{h,S}^\pi(s)\pi_h(a\mid s)$. Since every $f_k\in\cF$ satisfies $f_{k,h}(s,a)\in[0,V_{\max}]$, \Cref{lemma:hedge-regret} applies with the comparator $\pi\in\Pi$. Choosing $\eta=\eta^\star= V_{\max}\sqrt{K/(8\log|\cA|)}$, \cref{eq:hedge-optimized} yields
\begin{align}
  \frac{1}{K}\sum_{k=1}^K(\mathrm{II})_k
  \le
  H   V_{\max}\sqrt{\frac{\log|\cA|}{2K}}
  =   \widetilde O   \left(\frac{H   V_{\max}}{\sqrt K}\right).
  \label{eq:term-II-bound}
\end{align}

\subsection{Bounding the Bellman and Reward-Mismatch Terms}
Recall the state--action concentrability $C_{sa}(\pi)$ from \cref{eq:def-Csa}. For any $g:\cS_h\times\cA\to\mathbb R$, the pointwise bound $d_h^\pi(s,a)\le C_{sa}(\pi)d_h^\mu(s,a)$ gives
\begin{align*}
  \EE_{d^\pi_h}[g(s_h,a_h)^2]   \le   C_{sa}(\pi)   \EE_{d^\mu_h}[g(s_h,a_h)^2].
\end{align*}
Applying this inequality, Cauchy--Schwarz, and AM--GM with free parameter $\beta/2>0$ (so that, after the factor-$2$ Bellman-error concentration \cref{eq:conc-BE} below, the pessimism coefficient matches the algorithmic one, $\beta$, in \cref{eq:alg-pessimism}), we have
\begin{align*}
  \EE_{s_h,a_h\sim d^\pi_h}[(\cT_{r_k}^{\pi_k}f_k-f_k)(s_h,a_h)]
   & \leq \frac{C_{sa}(\pi)}{\beta} + \frac{\beta}{4} \EE_{s_h,a_h\sim d^\mu_h}[((\cT_{r_k}^{\pi_k}f_k-f_k)(s_h,a_h))^2], \\
  \EE_{s_h,a_h\sim d^\mu_h}[(f_k-\cT_{r_k}^{\pi_k}f_k)(s_h,a_h)]
   & \leq \frac{1}{\beta}+ \frac{\beta}{4} \EE_{s_h,a_h\sim d^\mu_h}[((\cT_{r_k}^{\pi_k}f_k-f_k)(s_h,a_h))^2].
\end{align*}
Summing over $h\in[H]$,
\begin{align}
  (\mathrm{I})_k\leq \frac{H(C_{sa}(\pi)+1)}{\beta}+\frac{\beta}{2}\sum_{h=1}^H \EE_{s_h,a_h\sim d^\mu_h}\big[((\cT_{r_k}^{\pi_k}f_k-f_k)(s_h,a_h))^2\big].\label{eq:upperbound-I}
\end{align}
Similarly, by Change-of-Trajectory-Measure  \Cref{lemma:change-of-trajectory}
\begin{align*}
  \sup_f\frac{\left(\mathbb{E}_{\tau\sim\pi}\left[ \sum_h f(s_h, a_h)\right]\right)^{2}}{\mathbb{E}_{\tau\sim\mu}\left[\left( \sum_h f(s_h, a_h)\right)^{2}\right]} \leq HC_{sa}(\pi),
\end{align*}
selecting $g_k=r^\star-r_k$,
we have
\begin{align*}
  \sum_{h=1}^H g_k(s_h,a_h) = \sum_{h=1}^H\big( r^\star(s_h,a_h)-r_k(s_h,a_h)\big) = R^\star(\tau)-R(\tau;r_k).
\end{align*}
Therefore, applying Cauchy--Schwarz again with parameter $\beta/2$,
\begin{align*}
  \EE_{\tau\sim\pi} \bigg[\sum_{h=1}^H \big(r^\star(s_h,a_h)-r_k(s_h,a_h)\big)\bigg] & \leq \frac{HC_{sa}(\pi)}{\beta} + \frac{\beta}{4} \EE_{\tau\sim\mu} [(R^\star(\tau)-R(\tau;r_k))^2], \\
  \EE_{\tau\sim\mu}\bigg[\sum_{h=1}^H \big(r_k(s_h,a_h)-r^\star(s_h,a_h)\big)\bigg]  & \leq \frac{1}{\beta} + \frac{\beta}{4} \EE_{\tau\sim\mu} [(R^\star(\tau)-R(\tau;r_k))^2].
\end{align*}
\begin{align}
  (\mathrm{IV})_k\leq \frac{HC_{sa}(\pi)+1}{\beta}+\frac{\beta}{2} \EE_{\tau\sim\mu} [(R^\star(\tau)-R(\tau;r_k))^2].\label{eq:upperbound-IV}
\end{align}
Then by combining \cref{eq:upperbound-I} and \cref{eq:upperbound-IV} we get
\begin{align}
  (\mathrm{I})_k+(\mathrm{III})_k+(\mathrm{IV})_k
   & \leq \frac{H(C_{sa}(\pi)+1)}{\beta}+\frac{\beta}{2}\sum_{h=1}^H \EE_{s_h,a_h\sim d^\mu_h}\big[((\cT_{r_k}^{\pi_k}f_k-f_k)(s_h,a_h))^2\big]\notag                                                                    \\
   & + \frac{HC_{sa}(\pi)+1}{\beta}+\frac{\beta}{2} \EE_{\tau\sim\mu} [(R^\star(\tau)-R(\tau;r_k))^2]\notag                                                                                                              \\
   & + \cL_\mu(\pi_k,f_k) - \cL_\mu(\pi_k,Q^{\pi_k})\notag                                                                                                                                                               \\
   & \leq \frac{2H(C_{sa}(\pi)+1)}{\beta}+\frac{\beta}{2}\cL^{\mathrm{BE}}_\mu(\pi_k,r_k,f_k) + \frac{\beta}{2} \cL_\mu^{\mathrm{RM}}(r_k)+ \cL_\mu(\pi_k,f_k) - \cL_\mu(\pi_k,Q^{\pi_k}),\label{eq:upperbound-I-III-IV}
\end{align}
where in the last inequality we used $HC_{sa}(\pi)+1\le H(C_{sa}(\pi)+1)$ and the definitions \cref{eq:all_losses}.

\subsection{Pessimism and Empirical Transfer}
Now fix the high-probability event on which the concentration and approximation lemmas from \Cref{subsection:concentration,subsection:approximation} hold simultaneously for every $\pi\in\Pi$, $r\in\cR$, and $f\in\cF$. Because the change-of-measure step \cref{eq:upperbound-I-III-IV} produces the coefficient $\beta/2$ on the population BE/RM losses, and the Bellman-error (\Cref{lem:bellman-error-concentration-uniform}) and reward-model (\Cref{lem:reward-model-concentration-bernstein}) concentration inequalities introduce a factor $2$ in front of the empirical losses, the two factors cancel so that the pessimistic evaluation step \cref{eq:alg-pessimism} is invoked with the algorithmic coefficient $\beta$. On this event we have, for every $k\in[K]$,
\begin{subequations}\label{eq:conc-bundle}
  \begin{align}
    \cL_\mu^{\mathrm{BE}}(\pi_k,r_k,f_k)                                 & \leq 2\cL_\cD^{\mathrm{BE}}(\pi_k,r_k,f_k)+\varepsilon_{\mathrm{BE}}+4   \varepsilon_{\cF,\cF}, \label{eq:conc-BE}        \\
    \cL_\mu^{\mathrm{RM}}(r_k)                                           & \leq 2\bigl[\cL_\cD^{\mathrm{RM}}(r_k)-\cL_\cD^{\mathrm{RM}}(r^\star)\bigr]+\varepsilon_{\mathrm{RM}}, \label{eq:conc-RM} \\
    \sup_{\pi\in\Pi,   f\in\cF}\bigl|\cL_\mu(\pi,f)-\cL_\cD(\pi,f)\bigr| & \leq \sqrt{\varepsilon_{\mathrm{Perf}}},\label{eq:conc-Perf}                                                              \\
    |\cL_\mu(\pi_k,Q^{\pi_k})-\cL_\mu(\pi_k,f_{\pi_k})|                  & \leq O(H\sqrt{\varepsilon_\cF}) \quad\text{(\Cref{lemma:bound-diff-Q-and-f_pi})}. \label{eq:conc-apx}
  \end{align}
\end{subequations}
We now bound the quantity
\begin{align*}
  \Delta_k   \coloneqq   \frac{\beta}{2}\cL^{\mathrm{BE}}_\mu(\pi_k,r_k,f_k)+\frac{\beta}{2}\cL_\mu^{\mathrm{RM}}(r_k)+\cL_\mu(\pi_k,f_k)-\cL_\mu(\pi_k,Q^{\pi_k})
\end{align*}
that appears on the right-hand side of \cref{eq:upperbound-I-III-IV}, in four explicit steps.

\smallskip
\noindent\textbf{Step (A): transfer BE/RM losses to the empirical data.}
Using \cref{eq:conc-BE}--\cref{eq:conc-RM},
\begin{align}
  \frac{\beta}{2}\cL^{\mathrm{BE}}_\mu(\pi_k,r_k,f_k)+\frac{\beta}{2}\cL_\mu^{\mathrm{RM}}(r_k)
  \leq   \beta\cL^{\mathrm{BE}}_\cD(\pi_k,r_k,f_k)+\beta\bigl[\cL_\cD^{\mathrm{RM}}(r_k)-\cL_\cD^{\mathrm{RM}}(r^\star)\bigr]+\frac{\beta}{2}\bigl(\varepsilon_{\mathrm{BE}}+\varepsilon_{\mathrm{RM}}+4   \varepsilon_{\cF,\cF}\bigr).\label{eq:stepA}
\end{align}
The $r^\star$-centred form is essential under noisy observations, but it cancels exactly when the same quantity is subtracted from the pessimism RHS in Step (D) below.

\smallskip
\noindent\textbf{Step (B): transfer the performance-loss gap $\cL_\mu(\pi_k,f_k)-\cL_\mu(\pi_k,Q^{\pi_k})$ to the empirical data.}
Insert $f_{\pi_k}$ as a pivot and use \cref{eq:conc-Perf} together with \cref{eq:conc-apx}:
\begin{align}
  \cL_\mu(\pi_k,f_k)-\cL_\mu(\pi_k,Q^{\pi_k})
   & =\underbrace{\bigl[\cL_\mu(\pi_k,f_k)-\cL_\cD(\pi_k,f_k)\bigr]}_{\le\sqrt{\varepsilon_{\mathrm{Perf}}}\text{ by }\cref{eq:conc-Perf}}                         \notag  \\
   & \quad+\underbrace{\bigl[\cL_\cD(\pi_k,f_{\pi_k})-\cL_\mu(\pi_k,f_{\pi_k})\bigr]}_{\le\sqrt{\varepsilon_{\mathrm{Perf}}}\text{ by }\cref{eq:conc-Perf}}        \notag  \\
   & \quad+\underbrace{\bigl[\cL_\mu(\pi_k,f_{\pi_k})-\cL_\mu(\pi_k,Q^{\pi_k})\bigr]}_{\le O(H\sqrt{\varepsilon_\cF})\text{ by }\cref{eq:conc-apx}}                 \notag \\
   & \quad+\cL_\cD(\pi_k,f_k)-\cL_\cD(\pi_k,f_{\pi_k})                                                                                                             \notag   \\
   & \le   \cL_\cD(\pi_k,f_k)-\cL_\cD(\pi_k,f_{\pi_k})+2\sqrt{\varepsilon_{\mathrm{Perf}}}+O(H\sqrt{\varepsilon_\cF}),\label{eq:stepB}
\end{align}
where we applied the uniform concentration \cref{eq:conc-Perf} twice (once to $(\pi_k,f_k)\in\Pi\times\cF$ and once to $(\pi_k,f_{\pi_k})\in\Pi\times\cF$; the latter uses $f_{\pi_k}\in\cF$ from \Cref{lemma:bounded-em-Bellman-error}).

\smallskip
\noindent\textbf{Step (C): invoke pessimistic optimality at coefficient $\beta$.}
By the definition of the pessimistic estimator \cref{eq:alg-pessimism}, $(f_k,r_k)\in\cF\times\cR$ minimises
\begin{align}
  (f,r)\mapsto \cL_\cD(\pi_k,f)+\beta\cL^{\mathrm{BE}}_\cD(\pi_k,r,f)+\beta\cL^{\mathrm{RM}}_\cD(r).\tag{\ref{eq:alg-pessimism}}
\end{align}
Plugging in the comparator $(f_{\pi_k},r^\star)\in\cF\times\cR$ (using $f_{\pi_k}\in\cF$ from \Cref{assumption:realizable-f} and $r^\star\in\cR$ from \Cref{assumption:realizable-R}) yields
\begin{align}
   & \cL_\cD(\pi_k,f_k)+\beta\cL^{\mathrm{BE}}_\cD(\pi_k,r_k,f_k)+\beta\cL^{\mathrm{RM}}_\cD(r_k)                      \notag                                                                         \\
   & \qquad\le   \cL_\cD(\pi_k,f_{\pi_k})+\beta\cL^{\mathrm{BE}}_\cD(\pi_k,r^\star,f_{\pi_k})+\beta\cL^{\mathrm{RM}}_\cD(r^\star).                                                   \label{eq:stepC}
\end{align}

\smallskip
\noindent\textbf{Step (D): bound the comparator Bellman-error loss, and cancel $\cL_\cD^{\mathrm{RM}}(r^\star)$.}
Subtracting $\beta\cL_\cD^{\mathrm{RM}}(r^\star)$ from both sides of \cref{eq:stepC} gives the $r^\star$-centred pessimism inequality
\begin{align}
   & \cL_\cD(\pi_k,f_k)+\beta\cL^{\mathrm{BE}}_\cD(\pi_k,r_k,f_k)+\beta\bigl[\cL^{\mathrm{RM}}_\cD(r_k)-\cL^{\mathrm{RM}}_\cD(r^\star)\bigr]                      \notag \\
   & \qquad\le   \cL_\cD(\pi_k,f_{\pi_k})+\beta\cL^{\mathrm{BE}}_\cD(\pi_k,r^\star,f_{\pi_k}).\label{eq:stepC-centred}
\end{align}
By \Cref{lemma:bounded-em-Bellman-error}, the remaining comparator Bellman-error term on the RHS is controlled by
\begin{align}
  \beta\cL^{\mathrm{BE}}_\cD(\pi_k,r^\star,f_{\pi_k})   \le   \beta\varepsilon_{\mathrm{apx}}.\label{eq:stepD}
\end{align}

\smallskip
\noindent\textbf{Combining (A)--(D).}
Adding \cref{eq:stepA} and \cref{eq:stepB} gives
\begin{align*}
  \Delta_k
   & \le   \bigl[\beta\cL^{\mathrm{BE}}_\cD(\pi_k,r_k,f_k)+\beta\bigl(\cL^{\mathrm{RM}}_\cD(r_k)-\cL^{\mathrm{RM}}_\cD(r^\star)\bigr)+\cL_\cD(\pi_k,f_k)-\cL_\cD(\pi_k,f_{\pi_k})\bigr] \\
   & \qquad+\frac{\beta}{2}\bigl(\varepsilon_{\mathrm{BE}}+\varepsilon_{\mathrm{RM}}+4   \varepsilon_{\cF,\cF}\bigr)+2\sqrt{\varepsilon_{\mathrm{Perf}}}+O(H\sqrt{\varepsilon_\cF}).
\end{align*}
By the $r^\star$-centred pessimism inequality \cref{eq:stepC-centred}, the bracketed empirical quantity is bounded by $\beta\cL^{\mathrm{BE}}_\cD(\pi_k,r^\star,f_{\pi_k})$, which is in turn controlled by \cref{eq:stepD}. Note that the potentially constant-order term $\cL^{\mathrm{RM}}_\cD(r^\star)$ has cancelled on both sides and does not appear in the final bound. Therefore
\begin{align}
  \Delta_k   \le   \frac{\beta}{2}\bigl(\varepsilon_{\mathrm{BE}}+\varepsilon_{\mathrm{RM}}+4   \varepsilon_{\cF,\cF}+2\varepsilon_{\mathrm{apx}}\bigr)+2\sqrt{\varepsilon_{\mathrm{Perf}}}+O(H\sqrt{\varepsilon_\cF}).\label{eq:delta-k-final}
\end{align}
\subsection{Final Rate and Parameter Choice}
Define
\begin{align*}
  \varepsilon
  \coloneqq
  \varepsilon_{\mathrm{BE}}+\varepsilon_{\mathrm{RM}}+4   \varepsilon_{\cF,\cF}+2\varepsilon_{\mathrm{apx}},
  \qquad
  \xi
  \coloneqq
  2\sqrt{\varepsilon_{\mathrm{Perf}}}+O(H\sqrt{\varepsilon_\cF}).
\end{align*}
Since $\Delta_k$ is precisely the last line of \cref{eq:upperbound-I-III-IV}, substituting \cref{eq:delta-k-final} into \cref{eq:upperbound-I-III-IV} gives
\begin{align}
  (\mathrm{I})_k+(\mathrm{III})_k+(\mathrm{IV})_k
  \leq
  \frac{2H(C_{sa}(\pi)+1)}{\beta} + \Delta_k
  \leq
  \frac{2H(C_{sa}(\pi)+1)}{\beta} + \frac{\beta}{2}\varepsilon+\xi.
  \label{eq:final-bound-I-III-IV}
\end{align}
Since $n$ denotes the number of trajectories, the concentration terms above all scale with $n^{-1}$. In this cumulative-return setting, the boundedness convention before \cref{eq:loss_policy_emp} sets $V_{\max}=H$, so under the sum-over-$h$ convention of \cref{eq:all_losses}, we may upper bound every concentration term at the trajectory scale $H$:
\begin{align*}
  \varepsilon_{\mathrm{BE}}   & = \widetilde O   \left(\frac{H^3}{n}\right)\quad\text{(\Cref{lem:bellman-error-concentration-uniform})},\qquad
  \varepsilon_{\mathrm{RM}} = \widetilde O   \left(\frac{H^2}{n}\right)\quad\text{(\Cref{lem:reward-model-concentration-bernstein})},           \\
  \varepsilon_{\mathrm{apx}}  & = \widetilde O   \left(\frac{H^3}{n}+\varepsilon_\cF\right)\quad\text{(\Cref{lemma:bounded-em-Bellman-error})}, \\
  \varepsilon_{\mathrm{Perf}} & = \widetilde O   \left(\frac{H^4}{n}\right)\quad\text{(\Cref{lemma:concentration-L})}.
\end{align*}
The factor $H$ in $\varepsilon_{\mathrm{BE}}$ and $\varepsilon_{\mathrm{apx}}$ relative to $\varepsilon_{\mathrm{RM}}$ comes from the sum over $h\in[H]$ in $\cL^{\mathrm{BE}}$. Hence
\begin{align*}
  \varepsilon
  = \widetilde O   \left(\frac{H^3}{n}+\varepsilon_\cF+\varepsilon_{\cF,\cF}\right),
  \qquad
  \xi
  = \widetilde O   \left(\frac{H^2}{\sqrt n}+H\sqrt{\varepsilon_\cF}\right).
\end{align*}
Optimising the quadratic $\frac{2H(C_{sa}(\pi)+1)}{\beta}+\frac{\beta}{2}\varepsilon$ over $\beta>0$ yields the minimiser
\begin{align*}
  \beta^\star=2\sqrt{\frac{H(C_{sa}(\pi)+1)}{\varepsilon}},
\end{align*}
at which
\begin{align*}
  (\mathrm{I})_k+(\mathrm{III})_k+(\mathrm{IV})_k
  \leq
  2\sqrt{H(C_{sa}(\pi)+1)   \varepsilon}+\xi.
\end{align*}
Substituting the bound on $\varepsilon$ and using $\sqrt{a+b+c}\le\sqrt a+\sqrt b+\sqrt c$,
\begin{align*}
  (\mathrm{I})_k+(\mathrm{III})_k+(\mathrm{IV})_k
   & \le \widetilde O   \left(H\sqrt{\frac{H^2(C_{sa}(\pi)+1)}{n}}+\sqrt{H(C_{sa}(\pi)+1)\varepsilon_\cF}+\sqrt{H(C_{sa}(\pi)+1)\varepsilon_{\cF,\cF}}\right)+\xi.
\end{align*}
Finally, applying the no-regret bound \cref{eq:term-II-bound} for the $(\mathrm{II})_k$ terms and substituting $\xi=\widetilde O(H^2/\sqrt n+H\sqrt{\varepsilon_\cF})$,
\begin{align*}
  J(\pi)-J(\bar\pi) & = \frac{1}{K}\sum_{k=1}^K \big(J(\pi)-J(\pi_k)\big)                                                                                                                                          \\
                    & \leq \frac{1}{K}\sum_{k=1}^K(\mathrm{II})_k+\frac{1}{K}\sum_{k=1}^K\bigl((\mathrm{I})_k+(\mathrm{III})_k+(\mathrm{IV})_k\bigr)                                                               \\
                    & \leq \frac{1}{K}\sum_{k=1}^K (\mathrm{II})_k + 2\sqrt{H(C_{sa}(\pi)+1)\varepsilon}+\xi                                                                                                       \\
                    & \leq   H^2\sqrt{\frac{\log|\cA|}{2K}}   +   2\sqrt{H(C_{sa}(\pi)+1)\varepsilon}   +   \xi                                                                                                    \\
                    & = \widetilde O   \left(H^2\sqrt{\frac{C_{sa}(\pi)}{n}}+\frac{H^2}{\sqrt K}+\sqrt{HC_{sa}(\pi)   \varepsilon_\cF}+\sqrt{HC_{sa}(\pi)   \varepsilon_{\cF,\cF}}+H\sqrt{\varepsilon_\cF}\right),
\end{align*}
where the performance-loss contribution $H^2/\sqrt n$ has been absorbed into the leading statistical rate $H^2\sqrt{C_{sa}(\pi)/n}$.

In particular, taking $K\ge n$ and assuming exact realizability ($\varepsilon_\cF=0$) and exact Bellman completeness ($\varepsilon_{\cF,\cF}=0$),
\begin{align*}
  J(\pi)-J(\bar\pi)
  = \widetilde O   \left(H^2\sqrt{\frac{C_{sa}(\pi)}{n}}\right).
\end{align*}

\section{Proof of the Preference-Based Upper Bound}
\label{appendix:pref-upper}

We prove \Cref{thm:pref-outcome-based-main}. In this cumulative-return setting, $V_{\max}=H$ and every candidate trajectory return lies in $[0,H]$. The argument follows the proof of \Cref{theorem:outcome-based-offline-rl}. The policy and Bellman empirical losses use the $2n$ unlabelled trajectories contained in the preference pairs; since these trajectories are i.i.d.\ from $\mu$, the corresponding concentration bounds change only by universal constants. Relative to the scalar-outcome proof, the main changes are that the squared reward-model loss is replaced by the logistic preference loss, and the reward-mismatch term is controlled through a centered change-of-measure step, since preferences identify rewards only up to uniform shifts of cumulative returns.

We use the following constants for the fixed BTL model on the return range $[0,H]$:
\begin{align}
  \alpha_{\mathrm C}
   & \coloneqq
  \frac{\gamma^2}{2}
  \inf_{|w|\le H}
  \frac{\exp(\gamma w)}{(1+\exp(\gamma w))^2}, \notag \\
  a_{\mathrm C}
   & \coloneqq
  \frac{1}{1+e^{\gamma H}},
  \qquad
  B_{\mathrm C}
  \coloneqq
  \log\frac{1-a_{\mathrm C}}{a_{\mathrm C}}, \notag   \\
  b_{\mathrm C}
   & \coloneqq
  \sup_{p,q\in[a_{\mathrm C},1-a_{\mathrm C}],   p\ne q}
  \frac{
    p\log^2(p/q)+(1-p)\log^2((1-p)/(1-q))
  }{
    \mathrm{KL}(\mathrm{Bern}(p)   \|   \mathrm{Bern}(q))
  },
  \qquad
  c_{\mathrm C}
  \coloneqq
  2b_{\mathrm C}+\frac{8}{3}B_{\mathrm C}.
  \label{eq:pref-model-constants}
\end{align}
These constants depend only on $\gamma$ and $H$.

\subsection{Preference Identifiability and Concentration}

For a pair $(\tau^+,\tau^-)$, define the return-difference notation
\[
  \Delta_r(\tau^+,\tau^-)\coloneqq R(\tau^+;r)-R(\tau^-;r),
  \qquad
  \Delta^\star(\tau^+,\tau^-)\coloneqq \Delta_{r^\star}(\tau^+,\tau^-).
\]
Let
\[
  \ell_r(\tau^+,\tau^-,y)
  \coloneqq
  -y\log \mathrm{C}_r(\tau^+,\tau^-)
  -(1-y)\log(1-\mathrm{C}_r(\tau^+,\tau^-))
\]
be the BTL log loss induced by $r$. The population preference loss is
\[
  \cL_\mu^{\mathrm{Pref}}(r)
  \coloneqq
  \EE_{\tau^+,\tau^-\sim\mu,   y\mid\tau^\pm}
  \bigl[\ell_r(\tau^+,\tau^-,y)\bigr].
\]

\begin{lemma}[BTL preference identifiability]\label{lem:pref-curvature-main}
  Under the BTL model \cref{eq:btl-model}, with $\alpha_{\mathrm C}$ defined in \cref{eq:pref-model-constants}, for every $r\in\cR$,
  \[
    \cL_\mu^{\mathrm{Pref}}(r)-\cL_\mu^{\mathrm{Pref}}(r^\star)
    \ge
    \alpha_{\mathrm C}
    \EE_{\mu\otimes\mu}   \left[(\Delta_r-\Delta^\star)^2\right]
    =
    2\alpha_{\mathrm C}
    \mathrm{Var}_{\mu}   \left(R(\cdot;r)-R^\star(\cdot)\right).
  \]
  Equivalently,
  \[
    \mathrm{Var}_{\mu}   \left(R(\cdot;r)-R^\star(\cdot)\right)
    \le
    \frac{1}{2\alpha_{\mathrm C}}
    \bigl(\cL_\mu^{\mathrm{Pref}}(r)-\cL_\mu^{\mathrm{Pref}}(r^\star)\bigr).
  \]
\end{lemma}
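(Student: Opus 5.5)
The plan is to condition on the pair $(\tau^+,\tau^-)$ and reduce the excess population log loss to a pointwise Bernoulli KL divergence. Then we lower bound that KL by a quadratic in the difference of the logits. Write $\phi(x)\coloneqq 1/(1+e^{-x})$. Then $\mathrm{C}_r(\tau^+,\tau^-)=\phi(\gamma\Delta_r)$ and, under \cref{eq:btl-model}, $y\mid\tau^\pm\sim\mathrm{Bern}(\phi(\gamma\Delta^\star))$. The standard log-loss identity gives, for fixed $(\tau^+,\tau^-)$,
\[
\EE_{y}\bigl[\ell_r-\ell_{r^\star}\bigr]=\mathrm{KL}\bigl(\mathrm{Bern}(\phi(\gamma\Delta^\star))\,\|\,\mathrm{Bern}(\phi(\gamma\Delta_r))\bigr).
\]
Taking expectation over $\tau^+,\tau^-\stackrel{\mathrm{i.i.d.}}{\sim}\mu$ expresses $\cL_\mu^{\mathrm{Pref}}(r)-\cL_\mu^{\mathrm{Pref}}(r^\star)$ as an expected KL.

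Next, I bound the KL in logit form. Let $p=\phi(z^\star)$ with $z^\star=\gamma\Delta^\star$ and $z=\gamma\Delta_r$. The map $F(z)\coloneqq \mathrm{KL}(\mathrm{Bern}(p)\|\mathrm{Bern}(\phi(z)))=-p z+\log(1+e^{z})+\text{const}$ is convex, with $F(z^\star)=0$, $F'(z^\star)=0$ and $F''(z)=\phi(z)(1-\phi(z))=e^{z}/(1+e^{z})^2$. Taylor's theorem with integral remainder gives $F(z)\ge \tfrac12\inf_{w\in[z,z^\star]}\phi'(w)\,(z-z^\star)^2$.

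The only delicate point is the range of the intermediate point, i.e., matching the infimum in $\alpha_{\mathrm C}$. Since all candidate returns lie in $[0,H]$, both $\Delta_r$ and $\Delta^\star$ lie in $[-H,H]$, so every $w$ between them is of the form $\gamma u$ with $|u|\le H$. Hence
\[
F(z)\ge \frac{\gamma^2}{2}\inf_{|u|\le H}\frac{e^{\gamma u}}{(1+e^{\gamma u})^2}\,(\Delta_r-\Delta^\star)^2=\alpha_{\mathrm C}(\Delta_r-\Delta^\star)^2.
\]
This matches the definition in \cref{eq:pref-model-constants} exactly. Averaging over $\mu\otimes\mu$ yields the first inequality.

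For the equality, set $D(\tau)\coloneqq R(\tau;r)-R^\star(\tau)$. Then $\Delta_r-\Delta^\star=D(\tau^+)-D(\tau^-)$ with $\tau^\pm$ i.i.d. from $\mu$, so $\EE_{\mu\otimes\mu}[(D(\tau^+)-D(\tau^-))^2]=2\mathrm{Var}_\mu(D)$, because the cross term equals $(\EE_\mu D)^2$. Dividing by $2\alpha_{\mathrm C}>0$ gives the equivalent form.

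I expect no real obstacle. The only care needed is to write the KL Taylor expansion in the logit variable, which is what produces the $\gamma^2$ factor, and to confine the intermediate point to $[-\gamma H,\gamma H]$ using boundedness of $\cR$.
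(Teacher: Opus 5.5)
Your proposal is correct and follows essentially the same argument as the paper. The paper Taylor-expands the conditional log-risk $\phi(w)$ directly in the return-difference variable $w$, with $\phi''(w)=\gamma^2 s_\gamma(w)(1-s_\gamma(w))$ and the intermediate point confined to $[-H,H]$, and then uses the same i.i.d.\ pair identity $\EE_{\mu\otimes\mu}[(g(\tau^+)-g(\tau^-))^2]=2\,\mathrm{Var}_\mu(g)$. Your version, which writes the excess risk as a Bernoulli KL in the logit variable $z=\gamma\Delta$, is only a change of variables of that computation.
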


\begin{proof}
  Fix a trajectory pair $(\tau^+,\tau^-)$. Write
  \[
    d_r=\Delta_r(\tau^+,\tau^-),
    \qquad
    d_\star=\Delta^\star(\tau^+,\tau^-).
  \]
  Since $R(\tau;r)\in[0,H]$, both $d_r$ and $d_\star$ lie in $[-H,H]$. Define the one-dimensional logistic link
  \[
    s_\gamma(w)=\frac{\exp(\gamma w)}{1+\exp(\gamma w)}.
  \]
  For this fixed pair, the true label distribution is
  \[
    y\sim\mathrm{Bern}(s_\gamma(d_\star)).
  \]
  Let $p_\star=s_\gamma(d_\star)$ and define the conditional log-risk
  \[
    \phi(w)
    =
    -p_\star\log s_\gamma(w)
    -(1-p_\star)\log(1-s_\gamma(w)).
  \]
  Then
  \[
    \EE_y[\ell_r(\tau^+,\tau^-,y)]=\phi(d_r),
    \qquad
    \EE_y[\ell_{r^\star}(\tau^+,\tau^-,y)]=\phi(d_\star).
  \]
  A direct differentiation gives
  \[
    \phi'(w)=\gamma\bigl(s_\gamma(w)-p_\star\bigr),
    \qquad
    \phi''(w)=\gamma^2 s_\gamma(w)(1-s_\gamma(w)).
  \]
  Since $p_\star=s_\gamma(d_\star)$, we have $\phi'(d_\star)=0$. Also,
  \[
    m_{\mathrm C}
    \coloneqq
    \inf_{|w|\le H}\gamma^2s_\gamma(w)(1-s_\gamma(w))
    >0,
  \]
  because $s_\gamma(w)(1-s_\gamma(w))$ is continuous and strictly positive on the compact interval $[-H,H]$. Taylor's theorem with remainder gives, for some point $\bar w$ between $d_r$ and $d_\star$,
  \[
    \phi(d_r)-\phi(d_\star)
    =
    \phi'(d_\star)(d_r-d_\star)
    +
    \frac12\phi''(\bar w)(d_r-d_\star)^2
    \ge
    \frac{m_{\mathrm C}}{2}(d_r-d_\star)^2.
  \]
  By the definition of $\alpha_{\mathrm C}$ in \cref{eq:pref-model-constants}, $m_{\mathrm C}/2=\alpha_{\mathrm C}$. Since the trajectory pair was arbitrary, taking expectation over $\tau^+,\tau^-\sim\mu$ gives
  \[
    \cL_\mu^{\mathrm{Pref}}(r)-\cL_\mu^{\mathrm{Pref}}(r^\star)
    \ge
    \alpha_{\mathrm C}
    \EE_{\mu\otimes\mu}   \left[(\Delta_r-\Delta^\star)^2\right].
  \]
  It remains to rewrite the pairwise error as a variance. Let
  \[
    g(\tau)=R(\tau;r)-R^\star(\tau).
  \]
  Then
  \[
    \Delta_r(\tau^+,\tau^-)-\Delta^\star(\tau^+,\tau^-)
    =
    g(\tau^+)-g(\tau^-).
  \]
  Since $\tau^+$ and $\tau^-$ are independent draws from $\mu$,
  \[
    \begin{aligned}
      \EE_{\mu\otimes\mu}[(g(\tau^+)-g(\tau^-))^2]
       & =
      \EE_\mu[g(\tau)^2]+\EE_\mu[g(\tau)^2]
      -2\EE_\mu[g(\tau)]^2 \\
       & =
      2   \mathrm{Var}_\mu(g),
    \end{aligned}
  \]as desired.
\end{proof}

\begin{lemma}[Preference log loss concentration]\label{lem:pref-concentration-main}
  Let $a_{\mathrm C}$, $B_{\mathrm C}$, $b_{\mathrm C}$, and $c_{\mathrm C}$ be defined in \cref{eq:pref-model-constants}.
  With probability at least $1-\delta$, simultaneously for all $r\in\cR$,
  \[
    \cL_\mu^{\mathrm{Pref}}(r)-\cL_\mu^{\mathrm{Pref}}(r^\star)
    \le
    2\Bigl[\cL_{\cD_{\mathrm{Pref}}}^{\mathrm{Pref}}(r)-\cL_{\cD_{\mathrm{Pref}}}^{\mathrm{Pref}}(r^\star)\Bigr]
    +
    c_{\mathrm C}\frac{\log(|\cR|/\delta)}{n}.
  \]
\end{lemma}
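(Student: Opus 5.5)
I will follow the self-bounding Bernstein template of \Cref{lem:reward-model-concentration-bernstein}, now applied to log-likelihood ratios. Fix $r\in\cR$. For the $i$-th pair write $p_i^\star=\mathrm{C}_{r^\star}(\tau^{(i),+},\tau^{(i),-})$ and $q_i=\mathrm{C}_r(\tau^{(i),+},\tau^{(i),-})$, and define
\[
U_i(r)\coloneqq \ell_r(\tau^{(i),+},\tau^{(i),-},y^{(i)})-\ell_{r^\star}(\tau^{(i),+},\tau^{(i),-},y^{(i)})
= y^{(i)}\log\frac{p_i^\star}{q_i}+(1-y^{(i)})\log\frac{1-p_i^\star}{1-q_i}.
\]
The $U_i(r)$ are i.i.d. across pairs, and their empirical mean equals $\cL_{\cD_{\mathrm{Pref}}}^{\mathrm{Pref}}(r)-\cL_{\cD_{\mathrm{Pref}}}^{\mathrm{Pref}}(r^\star)$. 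Because $y^{(i)}\mid\tau^\pm\sim\mathrm{Bern}(p_i^\star)$, we have $\EE[U_i(r)\mid\tau^\pm]=\mathrm{KL}(\mathrm{Bern}(p_i^\star)\|\mathrm{Bern}(q_i))$. Hence $\EE U_i(r)=x\coloneqq\cL_\mu^{\mathrm{Pref}}(r)-\cL_\mu^{\mathrm{Pref}}(r^\star)\ge0$.

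\textbf{Range and variance.} Since every return lies in $[0,H]$, each return difference lies in $[-H,H]$. Therefore $p_i^\star,q_i\in[a_{\mathrm C},1-a_{\mathrm C}]$ with $a_{\mathrm C}=1/(1+e^{\gamma H})$. Each log-ratio is then bounded in absolute value by $\log\frac{1-a_{\mathrm C}}{a_{\mathrm C}}=B_{\mathrm C}$, so $|U_i(r)|\le B_{\mathrm C}$ and $|U_i-\EE U_i|\le 2B_{\mathrm C}$. For the variance, I bound it by the second moment and condition on the pair:
\[
\EE[U_i^2\mid\tau^\pm]=p^\star\log^2(p^\star/q)+(1-p^\star)\log^2\frac{1-p^\star}{1-q}\le b_{\mathrm C}\,\mathrm{KL}(\mathrm{Bern}(p^\star)\|\mathrm{Bern}(q)).
\]
This uses exactly the definition of $b_{\mathrm C}$; the case $p^\star=q$ is trivial since both sides vanish. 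Taking expectations gives $\Var(U_i)\le b_{\mathrm C}x$, which is the self-bounding property.

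\textbf{Bernstein, union bound, solve.} One-sided Bernstein applied to $-U_i$, with a union bound over $\cR$, gives with probability $1-\delta$, simultaneously for all $r\in\cR$,
\[
x\le T+\sqrt{\frac{2b_{\mathrm C}x\log(|\cR|/\delta)}{n}}+\frac{4B_{\mathrm C}\log(|\cR|/\delta)}{3n},
\qquad T\coloneqq \cL_{\cD_{\mathrm{Pref}}}^{\mathrm{Pref}}(r)-\cL_{\cD_{\mathrm{Pref}}}^{\mathrm{Pref}}(r^\star).
\]
By AM--GM, $\sqrt{2b_{\mathrm C}xL/n}\le x/2+b_{\mathrm C}L/n$, where $L=\log(|\cR|/\delta)$. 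Rearranging yields
\[
x\le 2T+(2b_{\mathrm C}+\tfrac83B_{\mathrm C})L/n=2T+c_{\mathrm C}L/n,
\]
which is exactly the claimed constant.

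\textbf{Main obstacle.} The delicate step is the variance bound, since it is what delivers the fast $1/n$ rate. It requires the second-moment-to-KL ratio to be finite, i.e. $b_{\mathrm C}<\infty$. That ratio stays bounded near $q=p$, where both numerator and denominator are quadratic in $p-q$, and it stays bounded on the compact set $[a_{\mathrm C},1-a_{\mathrm C}]^2$ because the probabilities are bounded away from $0$ and $1$. The argument simply invokes this via the definition of $b_{\mathrm C}$. The remaining steps are the same bookkeeping as in \Cref{lem:reward-model-concentration-bernstein}.
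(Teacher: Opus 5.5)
Your proposal is correct and follows essentially the same route as the paper's proof. Both arguments write the excess log loss as a Bernoulli log-likelihood ratio whose mean is the KL divergence. Both then bound it by $B_{\mathrm C}$, self-bound its second moment via $b_{\mathrm C}$, and apply one-sided Bernstein with centered range $2B_{\mathrm C}$, a union bound over $\cR$, and AM--GM, which yields exactly $c_{\mathrm C}=2b_{\mathrm C}+\tfrac{8}{3}B_{\mathrm C}$.
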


\begin{proof}
  The structure is the same as the $r^\star$-centred reward-model concentration in \Cref{lem:reward-model-concentration-bernstein}: we control the population excess loss by the empirical loss difference relative to $r^\star$. The difference is that the random variable is now a log-likelihood ratio, whose mean is a KL divergence and whose variance is self-bounded by that same KL divergence.

  Fix $r\in\cR$. Let $Z=(\tau^+,\tau^-,y)$ and let $P_\star$ denote the true law of $Z$: the pair $(\tau^+,\tau^-)$ is drawn from $\mu\otimes\mu$, and then
  \[
    y\mid \tau^+,\tau^- \sim \mathrm{Bern}\bigl(\mathrm{C}_{r^\star}(\tau^+,\tau^-)\bigr).
  \]
  Let $P_r$ be the same law except that the conditional Bernoulli success probability is $\mathrm{C}_r(\tau^+,\tau^-)$. Since the pair marginal is the same under $P_\star$ and $P_r$, only the conditional Bernoulli law changes.

  Write $p_\star=\mathrm{C}_{r^\star}(\tau^+,\tau^-)$ and $p_r=\mathrm{C}_r(\tau^+,\tau^-)$. Since $P_\star$ and $P_r$ have the same pair distribution, their likelihood ratio is just the Bernoulli label likelihood ratio. Thus
  \[
    \begin{aligned}
      \ell_r(\tau^+,\tau^-,y)-\ell_{r^\star}(\tau^+,\tau^-,y)
       & =
      \left[-y\log p_r-(1-y)\log(1-p_r)\right]         \\
       & \quad -
      \left[-y\log p_\star-(1-y)\log(1-p_\star)\right] \\
       & =
      y\log\frac{p_\star}{p_r}
      +(1-y)\log\frac{1-p_\star}{1-p_r}                \\
       & =
      \log\frac{p_\star^y(1-p_\star)^{1-y}}{p_r^y(1-p_r)^{1-y}}.
    \end{aligned}
  \]
  Thus the single-sample loss difference is exactly the log-likelihood ratio:
  \[
    W_r(Z)
    \coloneqq
    \ell_r(\tau^+,\tau^-,y)-\ell_{r^\star}(\tau^+,\tau^-,y)
    =
    \log\frac{P_\star(y\mid\tau^+,\tau^-)}{P_r(y\mid\tau^+,\tau^-)}.
  \]
  For this fixed trajectory pair, taking expectation only over the label $y\mid\tau^+,\tau^-$ gives
  \[
    \begin{aligned}
      \EE_{y\mid\tau^+,\tau^-}[W_r(Z)]
       & =
      p_\star\log\frac{p_\star}{p_r}
      +(1-p_\star)\log\frac{1-p_\star}{1-p_r} \\
       & =
      \mathrm{KL}   \left(
      \mathrm{Bern}(p_\star)   \|   \mathrm{Bern}(p_r)
      \right).
    \end{aligned}
  \]
  Now taking the outer expectation over the trajectory pair $(\tau^+,\tau^-)\sim\mu\otimes\mu$,
  \[
    \begin{aligned}
      \cL_\mu^{\mathrm{Pref}}(r)-\cL_\mu^{\mathrm{Pref}}(r^\star)
       & =
      \EE_{(\tau^+,\tau^-)\sim\mu\otimes\mu}
      \left[
        \EE_{y\mid\tau^+,\tau^-}[W_r(Z)]
        \right] \\
       & =
      \EE_{Z\sim P_\star}[W_r(Z)]
      =
      \mathrm{KL}(P_\star   \|   P_r).
    \end{aligned}
  \]
  Also, for the observed dataset $Z_1,\ldots,Z_n$,
  \[
    \frac1n\sum_{i=1}^n W_r(Z_i)
    =
    \cL_{\cD_{\mathrm{Pref}}}^{\mathrm{Pref}}(r)
    -
    \cL_{\cD_{\mathrm{Pref}}}^{\mathrm{Pref}}(r^\star).
  \]
  Write
  \[
    L_r\coloneqq \mathrm{KL}(P_\star   \|   P_r),
    \qquad
    \widehat L_r
    \coloneqq
    \frac1n\sum_{i=1}^n W_r(Z_i).
  \]

  We now prove concentration of $\widehat L_r$ around $L_r$. The BTL probabilities are uniformly bounded away from $0$ and $1$ on the return range $[0,H]$. Indeed, for all $r\in\cR$ and all trajectory pairs,
  \[
    a_{\mathrm C}
    \le
    \mathrm{C}_r(\tau^+,\tau^-)
    \le
    1-a_{\mathrm C}.
  \]
  Hence the log-likelihood ratio is bounded:
  \[
    |W_r(Z)|
    \le
    B_{\mathrm C}.
  \]
  We also need the standard self-bounding property of bounded Bernoulli log likelihoods. For $p,q\in[a_{\mathrm C},1-a_{\mathrm C}]$, define
  \[
    k(p,q)=\mathrm{KL}(\mathrm{Bern}(p)   \|   \mathrm{Bern}(q))
  \]
  and
  \[
    m(p,q)
    =
    p\log^2\frac{p}{q}
    +(1-p)\log^2\frac{1-p}{1-q}.
  \]
  Since the square $[a_{\mathrm C},1-a_{\mathrm C}]^2$ is compact and $m(p,q)/k(p,q)$ has a finite continuous extension at $p=q$, the constant $b_{\mathrm C}$ in the lemma statement is finite and satisfies $m(p,q)\le b_{\mathrm C}k(p,q)$ for all $p,q$ in this interval.

  Applying this pointwise with $p=\mathrm{C}_{r^\star}(\tau^+,\tau^-)$ and $q=\mathrm{C}_r(\tau^+,\tau^-)$, the conditional second moment over the label satisfies
  \[
    \EE_{y\mid\tau^+,\tau^-}[W_r(Z)^2]
    \le
    b_{\mathrm C}
    \EE_{y\mid\tau^+,\tau^-}[W_r(Z)].
  \]
  Taking the outer expectation over $(\tau^+,\tau^-)\sim\mu\otimes\mu$ gives
  \[
    \EE_{Z\sim P_\star}[W_r(Z)^2]
    \le
    b_{\mathrm C}   \EE_{Z\sim P_\star}[W_r(Z)]
    =
    b_{\mathrm C}L_r.
  \]
  Therefore $\mathrm{Var}_{Z\sim P_\star}(W_r(Z))\le b_{\mathrm C}L_r$.

  Now apply one-sided Bernstein to the i.i.d.\ variables $W_r(Z_1),\ldots,W_r(Z_n)$. Since $|W_r(Z)|\le B_{\mathrm C}$ and $L_r=\EE_{Z\sim P_\star}[W_r(Z)]\le \EE_{Z\sim P_\star}[|W_r(Z)|]\le B_{\mathrm C}$, the lower-tail centered variables satisfy $L_r-W_r(Z)\le 2B_{\mathrm C}$. Thus, for any fixed $r$ and any $\eta\in(0,1)$, with probability at least $1-\eta$,
  \[
    L_r-\widehat L_r
    \le
    \sqrt{\frac{2b_{\mathrm C}L_r\log(1/\eta)}{n}}
    +
    \frac{4B_{\mathrm C}\log(1/\eta)}{3n}.
  \]
  Take $\eta=\delta/|\cR|$ and union bound over $r\in\cR$. On the resulting event, set
  \[
    a_n=\frac{\log(|\cR|/\delta)}{n}.
  \]
  Then every $r\in\cR$ satisfies
  \[
    L_r
    \le
    \widehat L_r
    +
    \sqrt{2b_{\mathrm C}L_r a_n}
    +
    \frac{4B_{\mathrm C}}{3}a_n.
  \]
  Using $\sqrt{2b_{\mathrm C}L_r a_n}\le L_r/2+b_{\mathrm C}a_n$ and rearranging,
  \[
    L_r
    \le
    2\widehat L_r
    +
    c_{\mathrm C}a_n.
  \]
  Substituting back the definitions of $L_r$ and $\widehat L_r$ proves the lemma. Here $B_{\mathrm C}$, $b_{\mathrm C}$, and $c_{\mathrm C}$ are constants determined by the BTL comparison model on the bounded return range. More explicitly, they depend on $a_{\mathrm C}=1/(1+e^{\gamma H})$, and hence on $\gamma H$, but not on $n$, $\delta$, or the size of the reward class except through the logarithmic factor above.
\end{proof}

\begin{lemma}[Centred change of trajectory measure]\label{lemma:change-of-trajectory-centred-main}
  For any policy $\pi$ and any per-step function $g:\cS\times\cA\to\RR$, write $\bar g(\tau)=\sum_{h=1}^H g(s_h,a_h)$. Then
  \[
    \left(\EE_\pi[\bar g]-\EE_\mu[\bar g]\right)^2
    \le
    H C_{sa}(\pi)   \mathrm{Var}_\mu(\bar g).
  \]
\end{lemma}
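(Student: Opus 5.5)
We reduce the centred inequality to the uncentred change-of-trajectory-measure bound in \Cref{lemma:change-of-trajectory} by shifting $g$ by a constant. The key observation is that adding a constant $c$ to every per-step value changes $\bar g(\tau)$ by exactly $Hc$ on every trajectory, since each trajectory visits exactly one state--action pair per layer. This shift changes $\EE_\pi[\bar g]$ and $\EE_\mu[\bar g]$ by the same amount $Hc$, so the left-hand side of the claim is invariant under the shift, and $\mathrm{Var}_\mu(\bar g)$ is unchanged as well.

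Concretely, set $m\coloneqq \EE_\mu[\bar g]$ and define $\tilde g(s,a)\coloneqq g(s,a)-m/H$. Then $\bar{\tilde g}(\tau)=\bar g(\tau)-m$, which gives three identities:
\[
\EE_\mu[\bar{\tilde g}]=0,\qquad
\EE_\pi[\bar{\tilde g}]=\EE_\pi[\bar g]-\EE_\mu[\bar g],\qquad
\EE_\mu[\bar{\tilde g}^2]=\mathrm{Var}_\mu(\bar g).
\]
If $\mathrm{Var}_\mu(\bar g)>0$, we apply \Cref{lemma:change-of-trajectory} to the bounded function $\tilde g$. This yields
\[
\bigl(\EE_\pi[\bar g]-\EE_\mu[\bar g]\bigr)^2
=\bigl(\EE_\pi[\bar{\tilde g}]\bigr)^2
\le HC_{sa}(\pi)\,\EE_\mu[\bar{\tilde g}^2]
=HC_{sa}(\pi)\,\mathrm{Var}_\mu(\bar g).
\]
We assume $g$ is bounded, as in \Cref{lemma:change-of-trajectory}; otherwise we would argue by truncation or monotone limits.

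The degenerate case $\mathrm{Var}_\mu(\bar g)=0$ needs a separate argument, and it is the only delicate point. In this case $\bar{\tilde g}=0$ holds $\mu$-almost surely, and we must show $\EE_\pi[\bar{\tilde g}]=0$ when $C_{sa}(\pi)<\infty$. Finite concentrability gives $d_h^\pi\ll d_h^\mu$ for every $h$, but $\pi$-trajectories are not obviously absolutely continuous with respect to $\mu$-trajectories, so this does not follow directly from the marginals.

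The cleanest route is a limiting argument. Apply the nondegenerate case to $\tilde g+\epsilon\,\phi$, where $\phi$ is any bounded function with $\mathrm{Var}_\mu(\bar\phi)>0$, and let $\epsilon\to0$. The left-hand side converges to $(\EE_\pi[\bar{\tilde g}])^2$ and the right-hand side tends to $0$. If no such $\phi$ exists, then $\mu$ is supported on a single trajectory. Finite concentrability then forces $\pi$ to follow that same trajectory, and the claim is trivial.
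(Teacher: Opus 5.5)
Your proof is correct and follows the paper's argument: shift $g$ by $\EE_\mu[\bar g]/H$ so that $\sum_h\tilde g(s_h,a_h)=\bar g(\tau)-\EE_\mu[\bar g]$, then apply \Cref{lemma:change-of-trajectory} to $\tilde g$. Your extra handling of the case $\mathrm{Var}_\mu(\bar g)=0$ (where the uncentred lemma's positivity hypothesis fails, which the paper's proof passes over) is valid. It could be shortened: for Markov policies under a common kernel, $C_{sa}(\pi)<\infty$ forces $\mu_h(a\mid s)>0$ at every pair $\pi$ visits, so $\PP_\pi\ll\PP_\mu$ at the trajectory level and $\bar{\tilde g}=0$ holds $\pi$-almost surely as well.
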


\begin{proof}
  Let $\bar g_\mu=\EE_\mu[\bar g]$ and define $\tilde g(s_h,a_h)=g(s_h,a_h)-\bar g_\mu/H$. Then $\sum_h\tilde g(s_h,a_h)=\bar g(\tau)-\bar g_\mu$. Applying \Cref{lemma:change-of-trajectory} to $\tilde g$ gives
  \[
    \left(\EE_\pi[\bar g-\bar g_\mu]\right)^2
    \le
    H C_{sa}(\pi)
    \EE_\mu[(\bar g-\bar g_\mu)^2],
  \]
  which is the desired inequality.
\end{proof}

\subsection{Proof of the Preference-based Upper Bound}

Fix a competitor $\pi\in\Pi$. The regret decomposition \cref{eq:regret-decomposition} is unchanged because the objective is still the cumulative return $J(\pi)=\EE_\pi[R^\star(\tau)]$. Terms $(\mathrm{I})_k$, $(\mathrm{II})_k$, and $(\mathrm{III})_k$ are controlled exactly as in the scalar-outcome proof. The only population-level change is the reward-mismatch term
\[
  (\mathrm{IV})_k
  =
  \EE_\pi[R^\star(\tau)-R(\tau;r_k)]
  +
  \EE_\mu[R(\tau;r_k)-R^\star(\tau)].
\]
Let $g_{k,h}(s,a)\coloneqq r_{k,h}(s,a)-r_h^\star(s,a)$ and $\bar g_k(\tau)\coloneqq\sum_{h=1}^H g_{k,h}(s_h,a_h)=R(\tau;r_k)-R^\star(\tau)$. Then
\[
  (\mathrm{IV})_k
  =
  -\bigl(\EE_\pi[\bar g_k]-\EE_\mu[\bar g_k]\bigr).
\]
By \Cref{lemma:change-of-trajectory-centred-main}, \Cref{lem:pref-curvature-main}, and AM--GM with parameter $\beta>0$,
\begin{align}
  |(\mathrm{IV})_k|
   & \le
  \sqrt{H C_{sa}(\pi)   \mathrm{Var}_\mu(\bar g_k)} \notag                            \\
   & \le
  \sqrt{\frac{H C_{sa}(\pi)}{2\alpha_{\mathrm C}}
    \bigl(\cL_\mu^{\mathrm{Pref}}(r_k)-\cL_\mu^{\mathrm{Pref}}(r^\star)\bigr)} \notag \\
   & \le
  \frac{H C_{sa}(\pi)}{4\alpha_{\mathrm C}\beta}
  +
  \frac{\beta}{2}
  \bigl(\cL_\mu^{\mathrm{Pref}}(r_k)-\cL_\mu^{\mathrm{Pref}}(r^\star)\bigr).
  \label{eq:pref-IV-main}
\end{align}
Combining this with the bound on term $(\mathrm{I})$ from \cref{eq:upperbound-I} and adding term $(\mathrm{III})$ gives
\begin{align}
  (\mathrm{I})_k+(\mathrm{III})_k+(\mathrm{IV})_k
   & \le
  \frac{H(C_{sa}(\pi)+1)}{\beta}
  +
  \frac{H C_{sa}(\pi)}{4\alpha_{\mathrm C}\beta}
  +
  \frac{\beta}{2}\cL_\mu^{\mathrm{BE}}(\pi_k,r_k,f_k) \notag \\
   & \qquad
  +
  \frac{\beta}{2}
  \bigl(\cL_\mu^{\mathrm{Pref}}(r_k)-\cL_\mu^{\mathrm{Pref}}(r^\star)\bigr)
  +
  \cL_\mu(\pi_k,f_k)-\cL_\mu(\pi_k,Q^{\pi_k}).
  \label{eq:pref-pop-main}
\end{align}

We now transfer the population losses to empirical losses. On the same high-probability event used in the proof of \Cref{theorem:outcome-based-offline-rl}, \cref{eq:conc-BE}, \cref{eq:conc-Perf}, and \cref{eq:conc-apx} hold. In addition, \Cref{lem:pref-concentration-main} gives the centered preference concentration bound with
\[
  \varepsilon_{\mathrm{Pref}}
  \coloneqq
  c_{\mathrm C}\frac{\log(|\cR|/\delta)}{n}.
\]
Therefore,
\begin{align}
   & \frac{\beta}{2}\cL_\mu^{\mathrm{BE}}(\pi_k,r_k,f_k)
  +
  \frac{\beta}{2}
  \bigl(\cL_\mu^{\mathrm{Pref}}(r_k)-\cL_\mu^{\mathrm{Pref}}(r^\star)\bigr)
  \notag                                                 \\
   & \qquad\le
  \beta\cL_\cD^{\mathrm{BE}}(\pi_k,r_k,f_k)
  +
  \beta
  \bigl[
    \cL_{\cD_{\mathrm{Pref}}}^{\mathrm{Pref}}(r_k)
    -
    \cL_{\cD_{\mathrm{Pref}}}^{\mathrm{Pref}}(r^\star)
    \bigr]
  +
  \frac{\beta}{2}
  \bigl(\varepsilon_{\mathrm{BE}}+\varepsilon_{\mathrm{Pref}}+4\varepsilon_{\cF,\cF}\bigr).
  \label{eq:pref-transfer-main}
\end{align}
The performance-loss transfer is exactly \cref{eq:stepB}:
\[
  \cL_\mu(\pi_k,f_k)-\cL_\mu(\pi_k,Q^{\pi_k})
  \le
  \cL_\cD(\pi_k,f_k)-\cL_\cD(\pi_k,f_{\pi_k})
  +
  2\sqrt{\varepsilon_{\mathrm{Perf}}}
  +
  O(H\sqrt{\varepsilon_\cF}).
\]
By the optimality of $(f_k,r_k)$ in \cref{eq:pref-alg-pessimism}, and subtracting the common term $\beta\cL_{\cD_{\mathrm{Pref}}}^{\mathrm{Pref}}(r^\star)$ from both sides,
\begin{align}
   & \cL_\cD(\pi_k,f_k)
  +
  \beta\cL_\cD^{\mathrm{BE}}(\pi_k,r_k,f_k)
  +
  \beta
  \bigl[
    \cL_{\cD_{\mathrm{Pref}}}^{\mathrm{Pref}}(r_k)
    -
    \cL_{\cD_{\mathrm{Pref}}}^{\mathrm{Pref}}(r^\star)
    \bigr]
  \notag                \\
   & \qquad\le
  \cL_\cD(\pi_k,f_{\pi_k})
  +
  \beta\cL_\cD^{\mathrm{BE}}(\pi_k,r^\star,f_{\pi_k}).
  \label{eq:pref-pess-main}
\end{align}
Combining \cref{eq:pref-pop-main}, \cref{eq:pref-transfer-main}, the performance-loss transfer above, and \cref{eq:pref-pess-main}, the empirical terms are bounded as
\begin{align*}
   & \cL_\cD(\pi_k,f_k)
  +\beta\cL_\cD^{\mathrm{BE}}(\pi_k,r_k,f_k)
  +\beta\bigl[
          \cL_{\cD_{\mathrm{Pref}}}^{\mathrm{Pref}}(r_k)
          -
          \cL_{\cD_{\mathrm{Pref}}}^{\mathrm{Pref}}(r^\star)
          \bigr]
  -\cL_\cD(\pi_k,f_{\pi_k}) \\
   & \qquad\le
  \beta\cL_\cD^{\mathrm{BE}}(\pi_k,r^\star,f_{\pi_k}).
\end{align*}
By \Cref{lemma:bounded-em-Bellman-error},
\[
  \cL_\cD^{\mathrm{BE}}(\pi_k,r^\star,f_{\pi_k})
  \le
  \varepsilon_{\mathrm{apx}},
\]
so the left-hand side of the previous display is at most $\beta\varepsilon_{\mathrm{apx}}$, which is written below as $(\beta/2)\cdot 2\varepsilon_{\mathrm{apx}}$ to match the other slack terms. Therefore,
\begin{align}
  (\mathrm{I})_k+(\mathrm{III})_k+(\mathrm{IV})_k
   & \le
  \frac{H(C_{sa}(\pi)+1)}{\beta}
  +
  \frac{H C_{sa}(\pi)}{4\alpha_{\mathrm C}\beta}
  \notag   \\
   & \quad
  +
  \frac{\beta}{2}
  \bigl(\varepsilon_{\mathrm{BE}}+\varepsilon_{\mathrm{Pref}}+4\varepsilon_{\cF,\cF}+2\varepsilon_{\mathrm{apx}}\bigr)
  +
  2\sqrt{\varepsilon_{\mathrm{Perf}}}
  +
  O(H\sqrt{\varepsilon_\cF}).
  \label{eq:pref-final-main}
\end{align}

It remains to optimize $\beta$. By \Cref{lem:pref-concentration-main}, $\varepsilon_{\mathrm{Pref}}=\widetilde O(c_{\mathrm C}/n)$. As in the scalar-outcome proof,
\[
  \varepsilon_{\mathrm{BE}}=\widetilde O(H^3/n),
  \qquad
  \varepsilon_{\mathrm{apx}}=\widetilde O(H^3/n+\varepsilon_\cF),
  \qquad
  \varepsilon_{\mathrm{Perf}}=\widetilde O(H^4/n),
\]
and the first two terms in \cref{eq:pref-final-main} are minimized by choosing
\[
  \beta
  =
  \sqrt{
    \frac{
      2\bigl(H(C_{sa}(\pi)+1)+H C_{sa}(\pi)/(4\alpha_{\mathrm C})\bigr)
    }{
      \varepsilon_{\mathrm{BE}}+\varepsilon_{\mathrm{Pref}}+4\varepsilon_{\cF,\cF}+2\varepsilon_{\mathrm{apx}}
    }
  }.
\]
With this choice, \cref{eq:pref-final-main} gives the explicit intermediate bound
\[
  \begin{aligned}
    (\mathrm{I})_k+(\mathrm{III})_k+(\mathrm{IV})_k
    =
    \widetilde O   \Bigg(
     & \sqrt{
         \left[
           H(C_{sa}(\pi)+1)+\frac{H C_{sa}(\pi)}{4\alpha_{\mathrm C}}
           \right]
         \left[
           \frac{H^3+c_{\mathrm C}}{n}
           +\varepsilon_\cF+\varepsilon_{\cF,\cF}
           \right]
       } \\
     & \quad
    +\frac{H^2}{\sqrt n}
    +H\sqrt{\varepsilon_\cF}
    \Bigg).
  \end{aligned}
\]
Equivalently, using $C_{sa}(\pi)\ge 1$ and expanding the square-root product,
\[
  \begin{aligned}
    (\mathrm{I})_k+(\mathrm{III})_k+(\mathrm{IV})_k
    =
    \widetilde O   \Bigg(
     & \sqrt{1+\frac{1}{\alpha_{\mathrm C}}}
    \Bigg[
      H^2\sqrt{\frac{C_{sa}(\pi)}{n}}
      +
      \sqrt{\frac{c_{\mathrm C}H C_{sa}(\pi)}{n}} \\
      &\qquad\qquad
      +
      \sqrt{HC_{sa}(\pi)\varepsilon_\cF}
      +
      \sqrt{HC_{sa}(\pi)\varepsilon_{\cF,\cF}}
      \Bigg]
    +
    H\sqrt{\varepsilon_\cF}
    \Bigg).
  \end{aligned}
\]
Adding the no-regret bound \cref{eq:term-II-bound} for term $(\mathrm{II})$ and using $V_{\max}=H$ in this cumulative-return setting, averaging over $k=1,\ldots,K$ gives
\[
  \begin{aligned}
    J(\pi)-J(\bar\pi)
    =
    \widetilde O   \Bigg(
     & \sqrt{1+\frac{1}{\alpha_{\mathrm C}}}
    \Bigg[
      H^2\sqrt{\frac{C_{sa}(\pi)}{n}}
      +
      \sqrt{\frac{c_{\mathrm C}H C_{sa}(\pi)}{n}} \\
      &\qquad\qquad
      +
      \sqrt{HC_{sa}(\pi)\varepsilon_\cF}
      +
      \sqrt{HC_{sa}(\pi)\varepsilon_{\cF,\cF}}
      \Bigg]
    +
    \frac{H^2}{\sqrt K}
    +
    H\sqrt{\varepsilon_\cF}
    \Bigg).
  \end{aligned}
\]
Here $\alpha_{\mathrm C}$ and $c_{\mathrm C}$ are the BTL preference constants defined in \cref{eq:pref-model-constants}. They depend on $\gamma$ and $H$ through the bounded comparison range, but not on $n$, $\delta$, or $|\cR|$ except through the explicit factor $\log(|\cR|/\delta)$ already included in the concentration term.
Taking $\pi=\pi^\star$ in the explicit display above proves \cref{eq:pref-main-rate}. 

\section{Lower Bound for Outcome-based Learning}

\subsection[Proof of theorem: lower bound for sum-reward outcomes]{Proof of \Cref{thm:outcome_based_lower_bound}: Lower Bound for Sum-Reward Outcomes}\label{subsec:lb-sum-reward-outcomes}
We construct a hard family of finite-horizon MDPs with deterministic transitions. Let the horizon be $H$, the state space be
\[
  \cS=\{s_1,\dots,s_H\},
\]
and the action space be $\cA=\{0,1\}$. The transition kernel is deterministic:
\[
  P_h(s_{h+1}\mid s_h,a)=1,\qquad \forall h\in[H-1],\ a\in\cA.
\]
Thus the state is simply the time index, and the only decision is which action to take at each step.

The environments differ only in their reward functions. For a parameter
\[
  \btheta=(\theta_1,\dots,\theta_H)\in\{0,1\}^H
\]
and a (to-be-chosen) gap parameter $\Delta\in[0,1]$, define the reward at step $h$ by
\[
  r_{h,\btheta}(s_h,a)
  =
  \frac{1}{2}+\frac{\Delta}{2}   \mathbf 1\{a=\theta_h\}
  -\frac{\Delta}{2}   \mathbf 1\{a\neq \theta_h\},
  \qquad a\in\{0,1\}.
\]
Equivalently, at each step exactly one action has reward $(1+\Delta)/2$, and the other has reward $(1-\Delta)/2$; both lie in $[0,1]$, so the trajectory return satisfies $\sum_h r_{h,\btheta}(s_h,a_h)\in[0,H]$. The optimal policy $\pi_\btheta^\star$ selects action $\theta_h$ at every step $h$, and its value is
\[
  J_\btheta(\pi_\btheta^\star)=\frac{H}{2}+\frac{H\Delta}{2}.
\]

We assume the behavior policy $\mu$ chooses the two actions uniformly at every step, so
\[
  d_h^\mu(s_h,a)=\frac12.
\]
For any target policy $\pi$, we always have $d_h^\pi(s_h,a)\le 1$, therefore
\[
  C_{sa}
  =
  \max_\pi\max_{h,s,a}\frac{d_h^\pi(s,a)}{d_h^\mu(s,a)}
  \le 2.
\]
Thus $C_{sa}=2$ in this construction.

In the outcome-based setting, the learner does not observe stepwise rewards. Instead, for a trajectory
\[
  \tau=(s_1,a_1,\dots,a_H,s_H),
\]
it only observes a bounded trajectory-level label
\[
  Y(\tau)\mid\tau\sim H\cdot\mathrm{Bernoulli}   \left(\frac{R_\btheta(\tau)}{H}\right),
  \qquad
  R_\btheta(\tau)=\sum_{h=1}^H r_{h,\btheta}(s_h,a_h).
\]
Thus $Y(\tau)\in\{0,H\}$ and $\EE[Y(\tau)\mid\tau]=R_\btheta(\tau)$, matching the bounded unbiased trajectory-outcome model used in the upper bound.

Now identify a trajectory with its action vector
\[
  \xb=(a_1,\dots,a_H)\in\{0,1\}^H.
\]
Under the uniform behavior policy, $\xb$ is distributed uniformly over $\{0,1\}^H$. Moreover,
\[
  \sum_{h=1}^H r_{h,\btheta}(s_h,a_h)
  =
  \frac{H}{2}+\frac{\Delta}{2}\sum_{h=1}^H (1-2\mathbf 1\{a_h\neq \theta_h\})
  =
  \frac{H}{2}+\frac{H\Delta}{2}-\Delta\|\xb-\btheta\|_1.
\]
Consequently the conditional success probability of the Bernoulli label is
\[
  p_\btheta(\xb)
  \coloneqq
  \PP_\btheta(Y=H\mid \xb)
  =
  \frac{R_\btheta(\xb)}{H}
  =
  \frac12+\frac{\Delta}{2}-\frac{\Delta}{H}\|\xb-\btheta\|_1.
\]

\paragraph{Reduction to a statistical estimation problem}
We therefore consider the estimation problem where $\btheta\in\Theta\subseteq\{0,1\}^H$ is unknown and we observe $n$ i.i.d.\ samples $(\xb_1,Y_1),\dots,(\xb_n,Y_n)$ from $P_\btheta$, where
\[
  \xb_i\sim \mathrm{Unif}(\{0,1\}^H),
  \qquad
  Y_i\mid\xb_i\sim H\cdot\mathrm{Bernoulli}\bigl(p_\btheta(\xb_i)\bigr).
\]
An estimator takes
\[
  S=(\xb_1,Y_1,\dots,\xb_n,Y_n)
\]
as input and outputs $\widehat\btheta(S)$. We measure estimation error by
\[
  \ell(\btheta,\widehat\btheta)=\Delta\|\btheta-\widehat\btheta\|_1.
\]
Define
\[
  \rho(\btheta,\btheta')=\Delta\|\btheta-\btheta'\|_1,
  \qquad
  \Phi(x)=x.
\]

By Fano's inequality, if $\Theta=\{\btheta_1,\dots,\btheta_M\}$ is a $\delta$-packing under $\rho$, then
\[
  \mathfrak R^*
  :=
  \inf_{\widehat\btheta}\sup_{\btheta\in\Theta}\EE[\ell(\btheta,\widehat\btheta)]
  \ge
  \Phi(\delta/2)
  \left(
  1-\frac{\frac{1}{M^2}\sum_{j,k=1}^M \KL(P_j^{\otimes n}\|P_k^{\otimes n})+\log 2}{\log M}
  \right).
\]

We now bound the KL divergence. Let $P_i:=P_{\btheta_i}$ and $P_j:=P_{\btheta_j}$ denote the single-sample laws. Since the marginal law of $\xb$ does not depend on $\btheta$,
\[
  \KL(P_i\|P_j)
  =
  \frac{1}{2^H}\sum_{\xb}
  \KL\big(P_i(Y\mid \xb)   \|   P_j(Y\mid \xb)\big).
\]
Conditioned on $\xb$, both models are Bernoulli labels scaled by $H$, and the scaling does not change the KL divergence. Let $p_i(\xb)\coloneqq p_{\btheta_i}(\xb)$ and $p_j(\xb)\coloneqq p_{\btheta_j}(\xb)$. If $\Delta\le 1/2$, then $p_i(\xb),p_j(\xb)\in[1/4,3/4]$ for every $\xb$, and the standard Bernoulli KL bound gives
\[
  \KL\big(P_i(Y\mid \xb)   \|   P_j(Y\mid \xb)\big)
  =
  \KL\big(\mathrm{Bern}(p_i(\xb))   \|   \mathrm{Bern}(p_j(\xb))\big)
  \le
  8\bigl(p_i(\xb)-p_j(\xb)\bigr)^2.
\]
Moreover,
\[
  p_i(\xb)-p_j(\xb)
  =
  -\frac{\Delta}{H}
  \Big(\|\xb-\btheta_i\|_1-\|\xb-\btheta_j\|_1\Big).
\]
Thus
\[
  \KL(P_i\|P_j)
  \le
  \frac{8\Delta^2}{H^2}\cdot \frac{1}{2^H}\sum_{\xb}
  \Big(\|\xb-\btheta_i\|_1-\|\xb-\btheta_j\|_1\Big)^2.
\]

Now let
\[
  \mathcal I:=\{h\in[H]:\theta_{i,h}\neq \theta_{j,h}\},
  \qquad |\mathcal I|=\|\btheta_i-\btheta_j\|_1.
\]
Comparing the two Hamming distances coordinate by coordinate, only coordinates in $\mathcal I$ contribute. For each $h\in\mathcal I$, the difference
$|x_h-\theta_{i,h}|-|x_h-\theta_{j,h}|$ has mean zero and square one under $x_h\sim\mathrm{Unif}\{0,1\}$, and the coordinate contributions are independent. Therefore,
\[
  \frac{1}{2^H}\sum_{\xb}
  \Big(\|\xb-\btheta_i\|_1-\|\xb-\btheta_j\|_1\Big)^2
  =\EE_{\xb}\Big(\|\xb-\btheta_i\|_1-\|\xb-\btheta_j\|_1\Big)^2=
  |\mathcal I|
  =
  \|\btheta_i-\btheta_j\|_1.
\]
It follows that
\[
  \KL(P_i\|P_j)
  \le
  \frac{8\Delta^2}{H^2}   \|\btheta_i-\btheta_j\|_1.
\]
Since the samples are i.i.d.,
\[
  \KL(P_i^{\otimes n}\|P_j^{\otimes n})
  =
  n   \KL(P_i\|P_j)
  \le
  \frac{8n\Delta^2}{H^2}   \|\btheta_i-\btheta_j\|_1.
\]

Next, by the Varshamov--Gilbert lemma, there exists a subset
\[
  \Theta=\{\btheta_1,\dots,\btheta_M\}\subseteq\{0,1\}^H
\]
such that
\[
  \log M\ge \frac{H}{8},
  \qquad
  \|\btheta_i-\btheta_j\|_1\ge \frac{H}{8},
  \quad \forall i\neq j.
\]
Hence $\Theta$ is a $\delta$-packing under $\rho$ with
\[
  \delta
  =
  \min_{i\neq j}\rho(\btheta_i,\btheta_j)
  \ge
  \frac{\Delta H}{8}.
\]
Also, since $\|\btheta_i-\btheta_j\|_1\le H$,
\[
  \max_{i\neq j}\KL(P_i^{\otimes n}\|P_j^{\otimes n})
  \le
  \frac{8n\Delta^2}{H}.
\]

Choose
\[
  \Delta   =   \frac{H}{16\sqrt n},
\]
which is feasible with $\Delta\le 1/2$ under the theorem's standing condition $n\ge 64H^2$. Then
\[
  \max_{i\neq j}\KL(P_i^{\otimes n}\|P_j^{\otimes n})
  \le
  \frac{H}{32}.
\]
Since $\log M\ge H/8$, we have
\[
  \max_{i\neq j}\KL(P_i^{\otimes n}\|P_j^{\otimes n})
  \le \frac14\log M.
\]
Moreover, choose the universal constant $H_0$ in \Cref{thm:outcome_based_lower_bound} large enough that $\exp(H_0/8)\ge 16$. Then $H\ge H_0$ and $\log M\ge H/8$ imply $M\ge 16$, hence $\log M\ge 4\log 2$. Therefore Fano's inequality~\citep{yu1997assouad} gives
\[
  \mathfrak R^*
  \ge
  \Phi(\delta/2)
  \left(
  1-\frac{\frac14\log M+\log 2}{\log M}
  \right)
  \ge
  \frac12\Phi(\delta/2)
  =
  \frac{\delta}{4}
  \ge
  \frac{\Delta H}{32}.
\]
Substituting the choice of $\Delta$ yields
\[
  \mathfrak R^*\ge \frac{H^2}{512\sqrt n}.
\]

\paragraph{Return to the original lower bound for outcome-based RL}
Finally, we translate this estimation lower bound back to policy learning. Given an output policy $\widehat\pi$, define an estimator $\widehat\btheta(\widehat\pi)$ by choosing, at each stage, an action with maximal probability under $\widehat\pi_h(\cdot\mid s_h)$, breaking ties arbitrarily. If $\widehat\theta_h\neq\theta_h$, then $\widehat\pi$ puts probability at most $1/2$ on the optimal action at stage $h$, and hence loses at least $\Delta/2$ in expected reward at that stage. Therefore,
\[
  J_\btheta(\pi_\btheta^\star)-J_\btheta(\widehat\pi)
  \ge
  \frac{\Delta}{2}\|\btheta-\widehat\btheta(\widehat\pi)\|_1.
\]
Hence
\[
  \inf_{\widehat\pi}\sup_{\btheta}
  \EE\big[J_\btheta(\pi_\btheta^\star)-J_\btheta(\widehat\pi)\big]
  \ge
  \frac12 \mathfrak R^*
  \ge
  \frac{H^2}{1024\sqrt n}.
\]
Equivalently, to guarantee suboptimality at most $\varepsilon$, one must have
\[
  \varepsilon \ge \frac{H^2}{1024\sqrt n},
  \qquad\text{that is,}\qquad
  n\ge \frac{H^4}{1024^2   \varepsilon^2}.
\]
Since $C_{sa}=2$ in this construction, this can be written in the standard form
\[
  n=\Omega   \left(\frac{H^4   C_{sa}}{\varepsilon^2}\right).
\]

\subsection[Proof of theorem: exponential lower bound for all-success aggregation]{Proof of \Cref{thm:main-lb-allsuccess}: Exponential Lower Bound for All-Success Aggregation}\label{subsec:lb-allsuccess}

We prove the lower bound for the all-success outcome described in \Cref{ex:allsuccess-outcome}. We construct a hard family of deterministic finite-horizon MDPs with binary rewards. Let the horizon be $H$, the state space be
\[
  \cS=\{s_1,\dots,s_H\},
\]
and the action space be $\cA=\{0,1\}$. The transition kernel is deterministic:
\[
  P_h(s_{h+1}\mid s_h,a)=1,\qquad \forall h\in[H-1],\ a\in\cA.
\]
Thus the state is again only the time index, and the learner chooses one binary action at each step.

The environments differ only in their reward functions. For a parameter
\[
  \btheta=(\theta_1,\dots,\theta_H)\in\{0,1\}^H,
\]
define the binary reward at step $h$ by
\begin{equation}\label{eq:lb-allsuccess-rewards}
  r_{h,\btheta}(s_h,a)=\mathbf 1\{a=\theta_h\}\in\{0,1\},
\end{equation}
and define the observed outcome using the all-success aggregation of \Cref{ex:allsuccess-outcome}:
\[
  Y(\tau)=R_\btheta(\tau)=\prod_{h=1}^H r_{h,\btheta}(s_h,a_h)=\mathbf 1\{   a_h=\theta_h\ \forall h   \}.
\]
The optimal policy $\pi_\btheta^\star$ selects action $\theta_h$ at every step, and therefore
\[
  J_\btheta(\pi_\btheta^\star)=1.
\]

We assume the behavior policy $\mu$ chooses the two actions uniformly at every step. Identifying a trajectory with its action vector
\[
  \xb=(a_1,\dots,a_H)\in\{0,1\}^H,
\]
we have $\xb\sim\mathrm{Unif}(\{0,1\}^H)$ and the observed label is
\[
  y=\mathbf 1\{\xb=\btheta\}.
\]

\paragraph{Coverage}
Since $\mu$ is uniform,
\[
  d_h^\mu(s_h,a)=\frac12.
\]
For the optimal policy, $d_h^{\pi^\star_\btheta}(s_h,\theta_h)=1$, and hence
\[
  C_{sa}(\pi^\star_\btheta)   =   \max_{h,s,a}\frac{d_h^{\pi^\star_\btheta}(s,a)}{d_h^\mu(s,a)}   =   2.
\]
At the trajectory level, however, $\mu(\tau)=2^{-H}$ for every trajectory, while $\pi^\star_\btheta$ puts all its mass on the single trajectory $\xb=\btheta$. Thus
\[
  C_\tau(\pi^\star_\btheta)   \coloneqq   \max_\tau\frac{\PP_{\pi^\star_\btheta}[\tau]}{\PP_\mu[\tau]}   =   2^H.
\]
The construction therefore has constant state--action concentrability but exponential trajectory concentrability.

\paragraph{Reduction to parameter estimation}
For any possibly randomized policy $\widehat\pi$ and any $\btheta\in\{0,1\}^H$, the deterministic-chain structure gives
\begin{equation}\label{eq:lb-allsuccess-valgap}
  J_\btheta(\widehat\pi)=\EE_{\widehat\pi}[R_\btheta(\tau)]=\prod_{h=1}^H \PP_{\widehat\pi}[a_h=\theta_h\mid s_h],
  \qquad
  J_\btheta(\pi^\star_\btheta)=1.
\end{equation}
Given $\widehat\pi$, define an induced estimator $\widehat\btheta(\widehat\pi)$ by choosing the most likely action at every step, breaking ties arbitrarily:
\begin{equation}\label{eq:lb-allsuccess-argmax}
  \widehat\theta_h(\widehat\pi)   \coloneqq   \arg\max_{a\in\{0,1\}}\widehat\pi_h(a\mid s_h)\in\{0,1\},
  \qquad
  \widehat\btheta(\widehat\pi)\coloneqq(\widehat\theta_1,\dots,\widehat\theta_H).
\end{equation}
The following lemma converts value suboptimality into zero-one parameter-estimation loss.

\begin{lemma}[Reduction from value gap to zero-one loss]\label{lem:value-to-param-reduction}
  For every policy $\widehat\pi$ and every $\btheta\in\{0,1\}^H$,
  \[
    J_\btheta(\pi^\star_\btheta)-J_\btheta(\widehat\pi)   \ge   \frac12   \mathbf 1\bigl\{\widehat\btheta(\widehat\pi)\neq\btheta\bigr\}.
  \]
\end{lemma}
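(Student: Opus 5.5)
The plan is to prove the inequality by splitting on whether the induced estimator is correct, using the product formula \cref{eq:lb-allsuccess-valgap} for $J_\btheta(\widehat\pi)$ together with $J_\btheta(\pi^\star_\btheta)=1$.

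\emph{Case 1: $\widehat\btheta(\widehat\pi)=\btheta$.} The indicator on the right-hand side vanishes. The left-hand side equals $1-J_\btheta(\widehat\pi)$, which is nonnegative because $J_\btheta(\widehat\pi)$ is a product of probabilities and hence lies in $[0,1]$. So the inequality holds trivially.

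\emph{Case 2: $\widehat\btheta(\widehat\pi)\neq\btheta$.} Here there is some stage $h_0$ with $\widehat\theta_{h_0}\neq\theta_{h_0}$. By the argmax rule \cref{eq:lb-allsuccess-argmax}, the action $\widehat\theta_{h_0}$ receives probability at least that of the other action $\theta_{h_0}$. With two actions this gives
\[
\widehat\pi_{h_0}(\theta_{h_0}\mid s_{h_0})\le \tfrac12 .
\]
This holds under any tie-breaking rule: if the two probabilities are tied, each equals $1/2$.

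The remaining factors in the product are each at most $1$. Since the chain is deterministic, the state at every step is fixed, so the per-step probabilities do not depend on earlier actions. Therefore
\[
J_\btheta(\widehat\pi)=\prod_h \PP_{\widehat\pi}[a_h=\theta_h\mid s_h]\le \tfrac12,
\]
and so $J_\btheta(\pi^\star_\btheta)-J_\btheta(\widehat\pi)\ge \tfrac12$, which is the claim.

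The argument relies on \cref{eq:lb-allsuccess-valgap}, which should already hold: a Markov policy on a single-state-per-layer chain chooses actions independently across steps, so the all-success probability factorizes. I expect no real obstacle. The only point needing care is that $\widehat\pi$ may be randomized and ties may occur, and both are handled by the bound $\le 1/2$ for the non-maximal (or tied) action.
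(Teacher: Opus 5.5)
Your proof is correct and matches the paper's argument: pick a stage where the argmax estimate is wrong, bound that factor of the product in \cref{eq:lb-allsuccess-valgap} by $1/2$, and bound every other factor by $1$. You also state that the left-hand side is nonnegative when the estimate is correct, a point the paper passes over as ``nothing to prove.''
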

\begin{proof}
  If $\widehat\btheta(\widehat\pi)=\btheta$, the right-hand side is zero and there is nothing to prove. Otherwise, there exists $h^\star\in[H]$ such that $\widehat\theta_{h^\star}\neq\theta_{h^\star}$. By \cref{eq:lb-allsuccess-argmax},
  \[
    \widehat\pi_{h^\star}(\theta_{h^\star}\mid s_{h^\star})\le \frac12.
  \]
  Using \cref{eq:lb-allsuccess-valgap} and the bound $\widehat\pi_h(\theta_h\mid s_h)\le 1$ for all $h\neq h^\star$,
  \[
    J_\btheta(\widehat\pi)=   \prod_{h=1}^H\widehat\pi_h(\theta_h\mid s_h)   \le   \widehat\pi_{h^\star}(\theta_{h^\star}\mid s_{h^\star})   \le   \frac12,
  \]
  so $J_\btheta(\pi^\star_\btheta)-J_\btheta(\widehat\pi)\ge 1-1/2=1/2$.
\end{proof}

\paragraph{Indistinguishability via total variation}
Let $P_\btheta$ be the law of one sample $(\xb,y)$, where $\xb\sim\mathrm{Unif}(\{0,1\}^H)$ and $y=\mathbf 1\{\xb=\btheta\}$. Let $P_\btheta^{\otimes n}$ denote the law of $n$ i.i.d.\ samples. We now bound the total variation distance between the observation laws corresponding to two distinct parameters.

\begin{lemma}[Trajectory-coverage TV bound]\label{lem:tv-allsuccess}
  For any distinct $\btheta_0,\btheta_1\in\{0,1\}^H$,
  \[
    \mathrm{TV}(P_{\btheta_0},P_{\btheta_1})=2^{1-H},
    \qquad
    \mathrm{TV}(P_{\btheta_0}^{\otimes n},P_{\btheta_1}^{\otimes n})\le n\cdot 2^{1-H}.
  \]
\end{lemma}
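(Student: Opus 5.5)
The single-sample claim comes down to a direct computation on the joint law of $(\xb,y)$, and the $n$-sample claim then follows by subadditivity of total variation over product measures.

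\textbf{Single sample.} Under both $P_{\btheta_0}$ and $P_{\btheta_1}$ the marginal of $\xb$ is $\mathrm{Unif}(\{0,1\}^H)$. The label is a deterministic function of $\xb$, namely $y=\mathbf 1\{\xb=\btheta\}$. Each law is therefore supported on the pairs $(\xb,\mathbf 1\{\xb=\btheta\})$, with mass $2^{-H}$ on each such pair. I will write
\[
  \mathrm{TV}(P_{\btheta_0},P_{\btheta_1})=\tfrac12\sum_{(\xb,y)}\bigl|P_{\btheta_0}(\xb,y)-P_{\btheta_1}(\xb,y)\bigr|
\]
and split the sum according to $\xb$.

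For every $\xb\notin\{\btheta_0,\btheta_1\}$, both laws put mass $2^{-H}$ on $(\xb,0)$, so these terms contribute nothing.

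For $\xb=\btheta_0$, $P_{\btheta_0}$ puts mass $2^{-H}$ on $(\btheta_0,1)$. Because $\btheta_0\neq\btheta_1$, $P_{\btheta_1}$ instead puts mass $2^{-H}$ on $(\btheta_0,0)$. This gives a discrepancy of $2\cdot 2^{-H}$ in the $\ell_1$ sum.

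The point $\xb=\btheta_1$ is symmetric and contributes another $2\cdot 2^{-H}$.

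The total $\ell_1$ distance is therefore $4\cdot 2^{-H}$. Halving it gives $\mathrm{TV}(P_{\btheta_0},P_{\btheta_1})=2^{1-H}$.

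\textbf{$n$ samples.} I will use the standard subadditivity of total variation on products,
\[
  \mathrm{TV}\Bigl(\textstyle\bigotimes_i P_i,\ \bigotimes_i Q_i\Bigr)\le\sum_i \mathrm{TV}(P_i,Q_i).
\]
This follows from a coupling argument, or by telescoping through hybrid product measures and applying the triangle inequality at each step. With $P_i=P_{\btheta_0}$ and $Q_i=P_{\btheta_1}$ for all $i$, it yields $\mathrm{TV}(P_{\btheta_0}^{\otimes n},P_{\btheta_1}^{\otimes n})\le n\cdot 2^{1-H}$.

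No step here is a real obstacle. The only care needed is the bookkeeping showing that the two laws differ only at the two atoms $\xb=\btheta_0$ and $\xb=\btheta_1$. In particular, at $\xb=\btheta_0$ the label under $\btheta_1$ is $0$, which uses distinctness of the parameters.
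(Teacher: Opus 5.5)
Your proposal is correct and matches the paper's proof: the single-sample computation is the same. For the $n$-sample bound, the paper proves inline the subadditivity you cite. It couples the two experiments by sharing $\xb_1,\dots,\xb_n$, notes that the labels coincide unless some $\xb_i\in\{\btheta_0,\btheta_1\}$, and finishes with a union bound.
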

\begin{proof}
  Under both $P_{\btheta_0}$ and $P_{\btheta_1}$, the marginal distribution of $\xb$ is uniform. If $\xb\notin\{\btheta_0,\btheta_1\}$, then $y=0$ under both laws. If $\xb=\btheta_0$, then $y=1$ under $P_{\btheta_0}$ and $y=0$ under $P_{\btheta_1}$; the case $\xb=\btheta_1$ is symmetric. Therefore
  \[
    \mathrm{TV}(P_{\btheta_0},P_{\btheta_1})=\frac12\ \sum_{\xb,y}\bigl|P_{\btheta_0}(\xb,y)-P_{\btheta_1}(\xb,y)\bigr|=\frac12\cdot 2\cdot\bigl(2^{-H}+2^{-H}\bigr)=2^{1-H}.
  \]
  For the $n$-sample bound, couple the two experiments by drawing
  \[
    \xb_1,\dots,\xb_n   \stackrel{\mathrm{i.i.d.}}{\sim}   \mathrm{Unif}(\{0,1\}^H),
    \qquad
    y_i^{(b)}\coloneqq \mathbf 1\{\xb_i=\btheta_b\},\quad b\in\{0,1\}.
  \]
  Under this coupling,
  \[
    X^{(b)}\coloneqq(\xb_1,y_1^{(b)},\dots,\xb_n,y_n^{(b)})
  \]
  has marginal law $P_{\btheta_b}^{\otimes n}$ for each $b\in\{0,1\}$. Define
  \[
    E   \coloneqq   \Bigl\{   \xb_i\notin\{\btheta_0,\btheta_1\}\ \text{for all }i\in[n]   \Bigr\}.
  \]
  On $E$, all labels satisfy $y_i^{(0)}=y_i^{(1)}=0$, so $X^{(0)}=X^{(1)}$. Therefore, for any event $A$ in the observation space,
  \[
    \mathbf 1\{X^{(0)}\in A\}-\mathbf 1\{X^{(1)}\in A\}   =   \bigl(\mathbf 1\{X^{(0)}\in A\}-\mathbf 1\{X^{(1)}\in A\}\bigr)\mathbf 1\{E^c\}   \le   \mathbf 1\{E^c\},
  \]
  where the first equality uses $X^{(0)}=X^{(1)}$ on $E$. Taking expectations gives
  \[
    P_{\btheta_0}^{\otimes n}(A)-P_{\btheta_1}^{\otimes n}(A)   =   \PP[X^{(0)}\in A]-\PP[X^{(1)}\in A]   \le   \PP[E^c].
  \]
  Taking the supremum over $A$,
  \[
    \mathrm{TV}\bigl(P_{\btheta_0}^{\otimes n},P_{\btheta_1}^{\otimes n}\bigr)   =   \sup_A\bigl[P_{\btheta_0}^{\otimes n}(A)-P_{\btheta_1}^{\otimes n}(A)\bigr]   \le   \PP[E^c].
  \]
  By a union bound,
  \[
    \PP[E^c]   \le   \sum_{i=1}^n\PP\bigl[\xb_i\in\{\btheta_0,\btheta_1\}\bigr]   =   n\cdot 2\cdot 2^{-H}   =   n\cdot 2^{1-H},
  \]
  which completes the proof.
\end{proof}

\paragraph{Combining the lemmas}
We now combine the reduction and the TV bound by the two-point Le Cam method. Fix any two distinct parameters $\btheta_0,\btheta_1\in\{0,1\}^H$. By \Cref{lem:value-to-param-reduction}, for every policy $\widehat\pi$ and $i\in\{0,1\}$,
\[
  J_{\btheta_i}(\pi^\star_{\btheta_i})-J_{\btheta_i}(\widehat\pi)   \ge   \frac12   \mathbf 1\bigl\{\widehat\btheta(\widehat\pi)\neq\btheta_i\bigr\}   =   \frac12   \ell_{01}\bigl(\widehat\btheta(\widehat\pi),\btheta_i\bigr),
\]
where
\[
  \ell_{01}(\btheta,\btheta')\coloneqq\mathbf 1\{\btheta\neq\btheta'\}.
\]
Taking expectations under $P_{\btheta_i}^{\otimes n}$ and then taking the supremum over $\btheta\in\{0,1\}^H$,
\begin{equation}\label{eq:lb-reduction-inequality}
  \inf_{\widehat\pi}\sup_{\btheta\in\{0,1\}^H}\EE   \bigl[J_\btheta(\pi^\star_\btheta)-J_\btheta(\widehat\pi)\bigr]
  \ge   \frac12\inf_{\widehat\btheta}\max_{i\in\{0,1\}}\EE_{P_{\btheta_i}^{\otimes n}}   \bigl[\ell_{01}(\widehat\btheta,\btheta_i)\bigr],
\end{equation}
where the infimum on the right ranges over all possibly randomized estimators $\widehat\btheta$ based on the data. Since $\ell_{01}$ is a metric and $\ell_{01}(\btheta_0,\btheta_1)=1$, the two-point Le Cam inequality (see \cite{yu1997assouad}) gives
\begin{equation}\label{eq:lb-lecam-classical}
  \inf_{\widehat\btheta}\max_{i\in\{0,1\}}\EE_{P_{\btheta_i}^{\otimes n}}   \bigl[\ell_{01}(\widehat\btheta,\btheta_i)\bigr]
  \ge   \frac12\bigl(1-\mathrm{TV}(P_{\btheta_0}^{\otimes n},P_{\btheta_1}^{\otimes n})\bigr).
\end{equation}
Combining \cref{eq:lb-reduction-inequality}, \cref{eq:lb-lecam-classical}, and \Cref{lem:tv-allsuccess},
\[
  \inf_{\widehat\pi}\sup_{\btheta\in\{0,1\}^H}\EE   \bigl[J_\btheta(\pi^\star_\btheta)-J_\btheta(\widehat\pi)\bigr]
  \ge   \frac14\bigl(1-\mathrm{TV}(P_{\btheta_0}^{\otimes n},P_{\btheta_1}^{\otimes n})\bigr)
  \ge   \frac14\bigl(1-n\cdot 2^{1-H}\bigr),
\]
as desired. In particular, if $n\le 2^{H-2}$, then the right-hand side is at least $1/8$. Therefore achieving expected suboptimality below any constant $\varepsilon<1/8$ requires
\[
  n   =   \Omega(2^H)   =   \Omega\bigl(C_\tau(\pi^\star)\bigr).
\]
This completes the proof of \Cref{thm:main-lb-allsuccess}.

\section{Generalized Reinforcement Learning Theory}
\label{sec:generalized_rl}

This section develops the RL theory underlying the generalized objective
introduced in \Cref{section:main-result-generalized}. The MDP, policy, and occupancy notation is otherwise the same as in \Cref{section:preliminary}. We restrict attention to Markov policies $\pi=(\pi_h)_{h=1}^H$, where each $\pi_h:\cS_h\to\Delta(\cA_h)$ depends only on the current state. All expectations $\EE_{\tau\sim\pi}$ and occupancies $d_h^\pi$ are taken under the trajectory law induced by such a Markov policy and the fixed transition kernel. As in the main text, we work with the stage-wise affine-in-continuation aggregation
\[
  \sigma_h(u,v)=a_h(u)v+b_h(u),
\]
with terminal constant $g\in\RR$. For any candidate reward $r=(r_h)_{h=1}^H$, define
\begin{align*}
  R_{H+1}(\cdot;r) & \coloneqq g,                                                                                \\
  R_h(\tau_h;r)    & \coloneqq \sigma_h\bigl(r_h(s_h,a_h),   R_{h+1}(\tau_{h+1};r)\bigr),\qquad h=H,H-1,\dots,1,
\end{align*}
and set $R(\tau;r)\coloneqq R_1(\tau_1;r)$ and $J_r(\pi)\coloneqq \EE_{\tau\sim\pi}[R(\tau;r)]$. For the true environment reward $r^\star$, we recover the objective of the main text as $J(\pi)=J_{r^\star}(\pi)$. We establish Bellman evaluation, Bellman optimality, greedy optimality, and the generalized performance-difference lemma with the reward argument kept explicit, as these results are used throughout \Cref{sec:proof-gen-upper-bound}.

\subsection{Generalized Value Functions and Bellman Recursion}

Since $\sigma_h(u,v)$ is affine in its continuation argument $v$, conditional expectations commute with the aggregation in that coordinate. Thus the generalized action-value and state-value functions for a policy $\pi$ and reward $r$,
\begin{align*}
  Q_h^{\pi,r}(s,a) & \coloneqq \EE_\pi[R_h(\tau_h;r) \mid s_h=s,   a_h=a], \\
  V_h^{\pi,r}(s)   & \coloneqq \EE_\pi[R_h(\tau_h;r) \mid s_h=s],
\end{align*}
satisfy the backward Bellman recursion
\begin{align*}
  Q_H^{\pi,r}(s,a) & =  \sigma_H\bigl(r_H(s,a),   g\bigr),                                                                          \\
  Q_h^{\pi,r}(s,a) & =  \sigma_h\bigl(r_h(s,a),   \EE_{s' \sim P_h(\cdot \mid s,a)}[V_{h+1}^{\pi,r}(s')]\bigr), \qquad h \in [H-1], \\
  V_h^{\pi,r}(s)   & = \EE_{a\sim\pi_h(\cdot\mid s)}[ Q_h^{\pi,r}(s,a)],
\end{align*}
with $V^{\pi,r}(s)\coloneqq V_1^{\pi,r}(s)$.

For any collection of stage-wise functions
\[
  f = (f_1,\dots,f_H),
  \qquad
  f_h : \cS \times \cA \to \RR,
\]
define the associated state-value surrogates by
\begin{subequations}
  \begin{align}
    V_{H+1}^{\pi,f} & \coloneqq g, \label{eq:V-pi-f-terminal-det} \\
    V_h^{\pi,f}(s)
                    & \coloneqq
    \EE_{a \sim \pi_h(\cdot \mid s)}[f_h(s,a)],
    \qquad h \in [H], \label{eq:V-pi-f-det}                       \\
    V_{H+1}^{f}     & \coloneqq g,                                \\
    V_h^{f}(s)
                    & \coloneqq
    \max_{a \in \cA} f_h(s,a),
    \qquad h \in [H].
  \end{align}
\end{subequations}
We also use the convention $V_{H+1}^{\pi,r}\coloneqq g$.

\subsection{Generalized Bellman Equations and Optimality}\label{subsec:gen-bellman-opt}

\begin{definition}[Generalized Bellman equation]
  \label{def:bellman-eval-generalized-det}
  Fix a policy $\pi$ and reward $r$. We say that $f = (f_1,\dots,f_H)$ satisfies the Bellman equation for $(\pi,r)$ if, for every $h \in [H]$, $s \in \cS$, and $a \in \cA$,
  \begin{equation}
    f_H(s,a)=\sigma_H\bigl(r_H(s,a),g\bigr),
    \qquad
    f_h(s,a)
    =
    \sigma_h\bigl(r_h(s,a),\EE_{s' \sim P_h(\cdot \mid s,a)}[V_{h+1}^{\pi,f}(s')]\bigr),
    \quad h \in [H-1].
    \label{eq:bellman-eval-generalized-det}
  \end{equation}
\end{definition}

\begin{definition}[Generalized Bellman optimality equation]
  \label{def:bellman-opt-generalized-det}
  Fix a reward $r$. We say that $f = (f_1,\dots,f_H)$ satisfies the Bellman optimality equation for $r$ if, for every $h \in [H]$, $s \in \cS$, and $a \in \cA$,
  \begin{equation}
    f_H(s,a)=\sigma_H\bigl(r_H(s,a),g\bigr),
    \qquad
    f_h(s,a)
    =
    \sigma_h\bigl(r_h(s,a),\EE_{s' \sim P_h(\cdot \mid s,a)}[V_{h+1}^{f}(s')]\bigr),
    \quad h \in [H-1].
    \label{eq:bellman-opt-generalized-det}
  \end{equation}
\end{definition}

Policy evaluation only uses affinity in the continuation value. For optimal control we additionally use the contractivity condition $a_h(u)\in[0,1]$, which makes $v\mapsto\sigma_h(u,v)$ non-decreasing.

\begin{proposition}[Characterization of $Q^{\pi,r}$]
  \label{prop:eval-iff-det}
  Fix a policy $\pi$ and reward $r$. A collection $f = (f_1,\dots,f_H)$ satisfies the Bellman equation~\cref{eq:bellman-eval-generalized-det} for $(\pi,r)$ if and only if
  \[
    f_h(s,a) = Q_h^{\pi,r}(s,a),
    \qquad
    \forall h \in [H],\ s \in \cS,\ a \in \cA.
  \]
  Equivalently, the Bellman equation for $(\pi,r)$ has a unique solution, namely $Q^{\pi,r}$.
\end{proposition}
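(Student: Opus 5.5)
We prove both directions by backward induction on $h$, using the Bellman recursion for $Q^{\pi,r}$ established just above.

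\emph{($\Leftarrow$)} Suppose $f=Q^{\pi,r}$. Then by definition \cref{eq:V-pi-f-det}, $V_h^{\pi,f}(s)=\EE_{a\sim\pi_h(\cdot\mid s)}[Q_h^{\pi,r}(s,a)]=V_h^{\pi,r}(s)$ for every $h\in[H]$, and $V_{H+1}^{\pi,f}=g=V_{H+1}^{\pi,r}$ by convention. Substituting into the backward recursion for $Q^{\pi,r}$ shows that $f$ satisfies \cref{eq:bellman-eval-generalized-det} term by term.

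That recursion relies on the tower property through the affine structure $\sigma_h(u,v)=a_h(u)v+b_h(u)$. We will spell this out briefly because it is the only nontrivial ingredient. Conditioning on $(s_h,a_h)=(s,a)$, the quantity $r_h(s,a)$ is deterministic. Hence
\[
\EE_\pi[\sigma_h(r_h(s,a),R_{h+1})\mid s,a]=a_h(r_h(s,a))\,\EE_\pi[R_{h+1}\mid s,a]+b_h(r_h(s,a)).
\]
By the Markov property, $\EE_\pi[R_{h+1}\mid s,a]=\EE_{s'\sim P_h(\cdot\mid s,a)}[V_{h+1}^{\pi,r}(s')]$. The Markov property applies because $\pi$ is Markov and $R_{h+1}$ depends only on $\tau_{h+1}$.

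\emph{($\Rightarrow$)} Suppose $f$ satisfies \cref{eq:bellman-eval-generalized-det}. We show by backward induction on $h=H,\dots,1$ that $f_h=Q_h^{\pi,r}$ pointwise.

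\emph{Base case.} At $h=H$, both $f_H$ and $Q_H^{\pi,r}$ equal $\sigma_H(r_H(s,a),g)$.

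\emph{Inductive step.} Assume $f_{h+1}=Q_{h+1}^{\pi,r}$. Averaging over $a\sim\pi_{h+1}(\cdot\mid s')$ gives $V_{h+1}^{\pi,f}=V_{h+1}^{\pi,r}$. The right-hand sides of the two recursions at stage $h$ then coincide, so $f_h=Q_h^{\pi,r}$.

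Uniqueness follows immediately: any solution must equal $Q^{\pi,r}$, and $Q^{\pi,r}$ is itself a solution by the first direction.

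The main point requiring care is the justification of the $Q^{\pi,r}$ recursion, which is where affinity in $v$ is used. Nonlinear $\sigma_h$ would break the exchange of expectation and aggregation. We also need to ensure integrability and measurability (bounded rewards, a finite horizon) so that the conditional expectations are well defined. Note that no monotonicity condition $a_h\in[0,1]$ is needed here.
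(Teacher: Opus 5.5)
Your proposal is correct and matches the paper's proof: the ``if'' direction by substituting $Q^{\pi,r}$ into the backward recursion, and the ``only if'' direction by backward induction via $V_{h+1}^{\pi,f}=V_{h+1}^{\pi,r}$. Your added derivation of the recursion from the affine form and the Markov property only makes explicit what the paper asserts just before the proposition.
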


\begin{proof}
  The ``if'' direction follows immediately from the recursive definitions of $Q^{\pi,r}$ and $V^{\pi,r}$.

  For the ``only if'' direction, suppose $f$ satisfies \cref{eq:bellman-eval-generalized-det}. We prove by backward induction on $h$ that
  \[
    f_h(s,a) = Q_h^{\pi,r}(s,a),
    \qquad
    \forall s \in \cS,\ a \in \cA.
  \]

  At stage $H$, we have
  \[
    f_H(s,a)
    =
    \sigma_H\bigl(r_H(s,a),g\bigr)
    =
    Q_H^{\pi,r}(s,a).
  \]

  Now assume that for some $h < H$,
  \[
    f_{h+1}(s,a) = Q_{h+1}^{\pi,r}(s,a),
    \qquad
    \forall s \in \cS,\ a \in \cA.
  \]
  Then for every $s' \in \cS$,
  \[
    V_{h+1}^{\pi,f}(s')
    =
    \EE_{a' \sim \pi_{h+1}(\cdot \mid s')}[f_{h+1}(s',a')]
    =
    \EE_{a' \sim \pi_{h+1}(\cdot \mid s')}[Q_{h+1}^{\pi,r}(s',a')]
    =
    V_{h+1}^{\pi,r}(s').
  \]
  Substituting this into \cref{eq:bellman-eval-generalized-det} yields
  \[
    f_h(s,a)
    =
    \sigma_h\bigl(r_h(s,a),\EE_{s' \sim P_h(\cdot \mid s,a)}[V_{h+1}^{\pi,r}(s')]\bigr)
    =
    Q_h^{\pi,r}(s,a).
  \]
  This completes the induction.
\end{proof}

We now define the optimal action-value function for reward $r$ by
\begin{align*}
  Q_h^{\star,r}(s,a)
   & \coloneqq
  \sup_{\pi} Q_h^{\pi,r}(s,a),
  \qquad
  h \in [H],\ s \in \cS,\ a \in \cA,
\end{align*}
and the optimal state-value function by
\begin{align*}
  V_h^{\star,r}(s)
   & \coloneqq
  \max_{a \in \cA} Q_h^{\star,r}(s,a),
  \qquad
  h \in [H],
\end{align*}
and set $V_{H+1}^{\star,r} \coloneqq g$.

\begin{theorem}[Bellman optimality equation and greedy optimality]
  \label{thm:opt-iff-greedy-det}
  Fix a reward $r$, and suppose the contractivity condition $a_h(u)\in[0,1]$ holds. Then:
  \begin{enumerate}
    \item The Bellman optimality equation~\cref{eq:bellman-opt-generalized-det} has a unique solution $\bar Q = (\bar Q_1,\dots,\bar Q_H)$.
    \item This solution coincides with the optimal action-value function:
          \[
            \bar Q_h(s,a) = Q_h^{\star,r}(s,a),
            \qquad
            \forall h \in [H],\ s \in \cS,\ a \in \cA.
          \]
          Hence a collection $f = (f_1,\dots,f_H)$ satisfies \cref{eq:bellman-opt-generalized-det} if and only if $f = Q^{\star,r}$.
    \item Any greedy policy with respect to $\bar Q$, namely any policy $\pi^{\mathrm{gr}}$ satisfying
          \[
            \pi_h^{\mathrm{gr}}(a \mid s) = 0,
            \qquad
            \forall a \notin \arg\max_{a' \in \cA} \bar Q_h(s,a'),
          \]
          for every $h \in [H]$ and $s \in \cS$, is optimal for $J_r$. Equivalently, any policy greedy with respect to $Q^{\star,r}$ is optimal for $J_r$.
  \end{enumerate}
\end{theorem}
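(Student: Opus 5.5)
The plan is a backward induction on $h$, carried out in three parts that mirror the three claims.

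\emph{Existence and uniqueness.} The Bellman optimality equation~\cref{eq:bellman-opt-generalized-det} determines its solution layer by layer. At stage $H$ it fixes $\bar Q_H(s,a)=\sigma_H(r_H(s,a),g)$. Once $\bar Q_{h+1}$ is known, $V_{h+1}^{\bar Q}(s')=\max_{a'}\bar Q_{h+1}(s',a')$ is determined, and then so is $\bar Q_h$. The recursion is explicit, so existence and uniqueness are immediate. No contractivity is needed here; the only requirement is that the maxima exist, which holds because $\cA$ is finite.

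\emph{Identification with $Q^{\star,r}$.} I would prove by backward induction that $\bar Q_h=Q_h^{\star,r}$ and, simultaneously, that the greedy policy $\pi^{\mathrm{gr}}$ attains $Q_h^{\pi^{\mathrm{gr}},r}=\bar Q_h$. This gives the two inequalities needed.

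For the upper bound, fix any Markov policy $\pi$. By \Cref{prop:eval-iff-det} and the Bellman recursion,
\[
Q_h^{\pi,r}(s,a)=a_h(r_h(s,a))\,\EE_{s'}[V_{h+1}^{\pi,r}(s')]+b_h(r_h(s,a)).
\]
By induction, $V_{h+1}^{\pi,r}(s')=\EE_{a'\sim\pi}[Q_{h+1}^{\pi,r}(s',a')]\le\max_{a'}\bar Q_{h+1}(s',a')=V_{h+1}^{\bar Q}(s')$. Since $a_h(u)\ge0$, the map $v\mapsto\sigma_h(u,v)$ is non-decreasing, so $Q_h^{\pi,r}\le\bar Q_h$. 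This is the main place the condition $a_h\in[0,1]$ enters; in fact only nonnegativity is used.

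For the matching lower bound, take a greedy $\pi^{\mathrm{gr}}$. By induction, $V_{h+1}^{\pi^{\mathrm{gr}},r}(s')=\EE_{a'\sim\pi^{\mathrm{gr}}}[\bar Q_{h+1}(s',a')]=\max_{a'}\bar Q_{h+1}(s',a')$, because $\pi^{\mathrm{gr}}$ is supported on the argmax. Hence $Q_h^{\pi^{\mathrm{gr}},r}=\bar Q_h$ exactly.

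Combining the two bounds gives $\sup_\pi Q_h^{\pi,r}=\bar Q_h$, with the supremum attained. The "if and only if" characterization then follows from the uniqueness established in the first part.

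\emph{Greedy optimality.} From the previous part, $J_r(\pi^{\mathrm{gr}})=V_1^{\pi^{\mathrm{gr}},r}(s_1)=\max_a\bar Q_1(s_1,a)\ge\EE_{a\sim\pi_1}[Q_1^{\pi,r}(s_1,a)]=J_r(\pi)$ for every Markov $\pi$.

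The step I expect to need the most care is the monotonicity argument in the upper bound. One must check that the supremum in $Q_h^{\star,r}$ over policies, which may differ at earlier layers, is handled correctly. This works because $Q_h^{\pi,r}$ depends only on $\pi_{h+1:H}$, so a single greedy policy achieves the optimum simultaneously for all $(h,s,a)$. The first two claims are then consistent, since $V_h^{\star,r}=\max_a Q_h^{\star,r}=V_h^{\bar Q}$.
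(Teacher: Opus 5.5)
Your proposal is correct and follows essentially the same route as the paper. The paper also solves the optimality equation by explicit backward recursion for existence and uniqueness. It proves $Q_h^{\pi,r}\le\bar Q_h$ for every policy by backward induction, using only $a_h(u)\ge 0$, and shows by a second induction that any greedy policy attains $\bar Q$; together these identify $\bar Q$ with $Q^{\star,r}$ and give greedy optimality.
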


\begin{proof}
  We divide the proof into three steps.

  \paragraph{Step 1: Existence and uniqueness of a solution to the Bellman optimality equation}
  The equation \cref{eq:bellman-opt-generalized-det} can be solved uniquely by backward recursion.

  At stage $H$, the equation uniquely determines
  \[
    \bar Q_H(s,a)
    =
    \sigma_H\bigl(r_H(s,a),g\bigr).
  \]
  Once $\bar Q_{h+1},\dots,\bar Q_H$ are known, \cref{eq:bellman-opt-generalized-det} uniquely determines $\bar Q_h$ through
  \[
    \bar Q_h(s,a)
    =
    \sigma_h\bigl(r_h(s,a),\EE_{s' \sim P_h(\cdot \mid s,a)}[\bar V_{h+1}(s')]\bigr),
    \qquad
    \bar V_{h+1}(s') \coloneqq \max_{a'} \bar Q_{h+1}(s',a').
  \]
  Thus the solution exists and is unique.

  \paragraph{Step 2: Every policy is dominated by $\bar Q$}
  We claim that for every policy $\pi$,
  \begin{equation}
    Q_h^{\pi,r}(s,a) \le \bar Q_h(s,a),
    \qquad
    \forall h \in [H],\ s \in \cS,\ a \in \cA.
    \label{eq:Qpi_le_Qbar_det}
  \end{equation}
  We prove this by backward induction on $h$.

  At stage $H$, for every policy $\pi$,
  \[
    Q_H^{\pi,r}(s,a)
    =
    \sigma_H\bigl(r_H(s,a),g\bigr)
    =
    \bar Q_H(s,a),
  \]
  so \cref{eq:Qpi_le_Qbar_det} holds.

  Now assume \cref{eq:Qpi_le_Qbar_det} holds at stage $h+1$. Then for every $s' \in \cS$,
  \[
    V_{h+1}^{\pi,r}(s')
    =
    \EE_{a' \sim \pi_{h+1}(\cdot \mid s')}[Q_{h+1}^{\pi,r}(s',a')]
    \le
    \max_{a'} \bar Q_{h+1}(s',a')
    =
    \bar V_{h+1}(s').
  \]
  Taking expectation with respect to $P_h(\cdot \mid s,a)$ in the preceding display and using $a_h(u)\ge0$ gives
  \[
    Q_h^{\pi,r}(s,a)
    =
    \sigma_h\bigl(r_h(s,a),\EE_{s' \sim P_h(\cdot \mid s,a)}[V_{h+1}^{\pi,r}(s')]\bigr)
    \le
    \sigma_h\bigl(r_h(s,a),\EE_{s' \sim P_h(\cdot \mid s,a)}[\bar V_{h+1}(s')]\bigr)
    =
    \bar Q_h(s,a).
  \]
  This proves \cref{eq:Qpi_le_Qbar_det}.

  \paragraph{Step 3: A greedy policy attains $\bar Q$}
  Let $\pi^{\mathrm{gr}}$ be any greedy policy with respect to $\bar Q$, and define
  \[
    \bar V_h(s) \coloneqq \max_{a} \bar Q_h(s,a).
  \]
  We claim that
  \begin{equation}
    Q_h^{\pi^{\mathrm{gr}},r}(s,a) = \bar Q_h(s,a),
    \qquad
    \forall h \in [H],\ s \in \cS,\ a \in \cA,
    \label{eq:greedy_attains_Qbar_det}
  \end{equation}
  and therefore
  \begin{equation}
    V_h^{\pi^{\mathrm{gr}},r}(s) = \bar V_h(s),
    \qquad
    \forall h \in [H],\ s \in \cS.
    \label{eq:greedy_attains_Vbar_det}
  \end{equation}
  Again we use backward induction.

  At stage $H$, the equality is immediate since both sides equal
  \[
    \sigma_H\bigl(r_H(s,a),g\bigr).
  \]

  Assume \cref{eq:greedy_attains_Qbar_det} holds at stage $h+1$. Then for every $s' \in \cS$,
  \[
    V_{h+1}^{\pi^{\mathrm{gr}},r}(s')
    =
    \EE_{a' \sim \pi_{h+1}^{\mathrm{gr}}(\cdot \mid s')}[Q_{h+1}^{\pi^{\mathrm{gr}},r}(s',a')].
  \]
  Since $\pi_{h+1}^{\mathrm{gr}}$ places mass only on maximizers of $\bar Q_{h+1}(s',\cdot)$ and $Q_{h+1}^{\pi^{\mathrm{gr}},r} = \bar Q_{h+1}$ by the induction hypothesis,
  \[
    V_{h+1}^{\pi^{\mathrm{gr}},r}(s')
    =
    \max_{a'} \bar Q_{h+1}(s',a')
    =
    \bar V_{h+1}(s').
  \]
  Using the recursion that defines $Q_h^{\pi^{\mathrm{gr}},r}$, we obtain
  \[
    Q_h^{\pi^{\mathrm{gr}},r}(s,a)
    =
    \sigma_h\bigl(r_h(s,a),\EE_{s' \sim P_h(\cdot \mid s,a)}[\bar V_{h+1}(s')]\bigr)
    =
    \bar Q_h(s,a),
  \]
  which proves \cref{eq:greedy_attains_Qbar_det}. Then
  \[
    V_h^{\pi^{\mathrm{gr}},r}(s)
    =
    \EE_{a \sim \pi_h^{\mathrm{gr}}(\cdot \mid s)}[Q_h^{\pi^{\mathrm{gr}},r}(s,a)]
    =
    \max_a \bar Q_h(s,a)
    =
    \bar V_h(s),
  \]
  proving \cref{eq:greedy_attains_Vbar_det}.

  Combining Steps 2 and 3, for every policy $\pi$,
  \[
    Q_h^{\pi,r}(s,a) \le \bar Q_h(s,a) = Q_h^{\pi^{\mathrm{gr}},r}(s,a),
    \qquad
    \forall h \in [H],\ s \in \cS,\ a \in \cA.
  \]
  Hence
  \[
    \bar Q_h(s,a)
    =
    \sup_\pi Q_h^{\pi,r}(s,a)
    =
    Q_h^{\star,r}(s,a),
  \]
  so $\bar Q = Q^{\star,r}$. This proves item 2. Item 3 follows from
  \[
    V_h^{\pi^{\mathrm{gr}},r}(s)
    =
    \bar V_h(s)
    =
    \max_a Q_h^{\star,r}(s,a)
    =
    V_h^{\star,r}(s),
    \qquad
    \forall h \in [H],\ s \in \cS.
  \]
\end{proof}

\begin{corollary}
  \label{cor:opt-iff-det}
  Fix a reward $r$. Under the contractivity condition $a_h(u)\in[0,1]$, a collection $f = (f_1,\dots,f_H)$ satisfies the Bellman optimality equation \cref{eq:bellman-opt-generalized-det} for $r$ if and only if
  \[
    f_h(s,a) = Q_h^{\star,r}(s,a),
    \qquad
    \forall h \in [H],\ s \in \cS,\ a \in \cA.
  \]
  Moreover, any policy greedy with respect to $f$ is optimal for $J_r$.
\end{corollary}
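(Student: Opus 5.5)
The corollary follows by combining the two results just proved; no new induction is needed. For the equivalence, I would invoke \Cref{thm:opt-iff-greedy-det}. Item~1 gives that \cref{eq:bellman-opt-generalized-det} has a unique solution $\bar Q$, and item~2 identifies $\bar Q=Q^{\star,r}$. Hence if $f$ satisfies \cref{eq:bellman-opt-generalized-det}, uniqueness forces $f=\bar Q=Q^{\star,r}$. Conversely, $Q^{\star,r}=\bar Q$ is itself a solution, so $f=Q^{\star,r}$ satisfies the equation.

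For the greedy statement, let $f$ satisfy the Bellman optimality equation and let $\pi$ be greedy with respect to $f$, meaning $\pi_h(a\mid s)=0$ whenever $a\notin\argmax_{a'}f_h(s,a')$. By the first part, $f=\bar Q$, so $\pi$ is greedy with respect to $\bar Q$. Item~3 of \Cref{thm:opt-iff-greedy-det} then yields optimality of $\pi$ for $J_r$.

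It is worth stating explicitly what ``optimal for $J_r$'' means here. From Steps~2--3 of that proof, $V_1^{\pi,r}(s_1)=\bar V_1(s_1)\ge V_1^{\pi',r}(s_1)$ for every Markov $\pi'$. Since $J_r(\pi')=V_1^{\pi',r}(s_1)$, this is exactly optimality of $\pi$.

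The only point needing care is the reading of ``greedy with respect to $f$'' when $f$ is an arbitrary solution rather than $\bar Q$. This is handled by the identification $f=\bar Q$ obtained in the first part. The contractivity assumption $a_h(u)\in[0,1]$ enters only through the theorem's hypotheses, specifically the monotonicity used in Step~2, so there is no real obstacle beyond checking that those hypotheses match.
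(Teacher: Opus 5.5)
Your proposal is correct and follows essentially the same route as the paper. Both directions of the equivalence come from uniqueness of the solution in \Cref{thm:opt-iff-greedy-det} together with its identification $\bar Q = Q^{\star,r}$, and greedy optimality then follows from item~3 once $f=\bar Q$ is established.
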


\begin{proof}
  If $f$ satisfies \cref{eq:bellman-opt-generalized-det}, then \Cref{thm:opt-iff-greedy-det} implies $f = Q^{\star,r}$. Conversely, $Q^{\star,r}$ satisfies \cref{eq:bellman-opt-generalized-det} because it is the unique solution established in \Cref{thm:opt-iff-greedy-det}. The greedy-optimality statement follows from the same theorem.
\end{proof}

\subsection{Generalized Performance Difference Lemma}\label{subsec:gen-pdl}

The following lemma extends the standard performance difference lemma (\Cref{lemma:performance-difference}) to the generalized objective. For a fixed reward $r$, the prefix scaling factors $\prod_{j<h}a_j(r_j(s_j,a_j))$ appear naturally as weights on the per-step advantage terms.

\begin{lemma}[Generalized performance difference lemma]\label{lemma:gen-pdl}
  Under the generalized objective with $\sigma_h(u,v)=a_h(u)   v+b_h(u)$, for any reward $r$ and any two policies $\pi$ and $\pi'$,
  \begin{align*}
    J_r(\pi)-J_r(\pi')
    =
    \sum_{h=1}^H
    \EE_{\tau\sim\pi}   \left[
                          \left(\prod_{j=1}^{h-1}a_j\bigl(r_j(s_j,a_j)\bigr)\right)
                          A_h^{\pi',r}(s_h,a_h)
                          \right],
  \end{align*}
  where $A_h^{\pi',r}(s,a)\coloneqq Q_h^{\pi',r}(s,a)-V_h^{\pi',r}(s)$ and we use the convention $\prod_{j=1}^{0}(\cdot)=1$.
\end{lemma}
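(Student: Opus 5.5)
We will prove the generalized performance difference lemma by a telescoping argument along trajectories drawn from $\pi$, using the prefix weights $W_h(\tau)\coloneqq\prod_{j=1}^{h-1}a_j(r_j(s_j,a_j))$ (with $W_1\equiv 1$). The key algebraic fact is the affine structure of $\sigma_h$. By the Bellman recursion for $Q^{\pi',r}$, for $h\le H$ we have
\[
  Q_h^{\pi',r}(s_h,a_h)=a_h(r_h(s_h,a_h))\,\EE_{s'\sim P_h(\cdot\mid s_h,a_h)}\bigl[V_{h+1}^{\pi',r}(s')\bigr]+b_h(r_h(s_h,a_h)),
\]
with $V_{H+1}^{\pi',r}\equiv g$. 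Here the expectation is over the next state only, and it does not depend on which policy generated the trajectory. Likewise, unrolling the definition of $R(\tau;r)$ gives the exact expansion
\[
  R(\tau;r)=\sum_{h=1}^H W_h(\tau)\,b_h(r_h(s_h,a_h))+W_{H+1}(\tau)\,g.
\]
This is proved by backward induction on the recursion $R_h=a_hR_{h+1}+b_h$.

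The plan is to write $J_r(\pi')=V_1^{\pi',r}(s_1)=\EE_{\tau\sim\pi}[W_1V_1^{\pi',r}(s_1)]$ and introduce the telescoping sum
\[
  \sum_{h=1}^H \EE_{\tau\sim\pi}\bigl[W_h\,V_h^{\pi',r}(s_h)-W_{h+1}\,V_{h+1}^{\pi',r}(s_{h+1})\bigr]
  =J_r(\pi')-\EE_{\tau\sim\pi}\bigl[W_{H+1}\,g\bigr],
\]
where $V_{H+1}^{\pi',r}(s_{H+1})\equiv g$. Then $J_r(\pi)-J_r(\pi')$ splits as
\[
  \EE_\pi\Bigl[\sum_h W_h b_h+W_{H+1}g\Bigr]-\EE_\pi\bigl[W_{H+1}g\bigr]-\sum_h \EE_\pi\bigl[W_hV_h^{\pi',r}(s_h)-W_{h+1}V_{h+1}^{\pi',r}(s_{h+1})\bigr].
\]
The $W_{H+1}g$ terms cancel, which leaves a sum over $h$ of $\EE_\pi[W_h b_h+W_{h+1}V_{h+1}^{\pi',r}(s_{h+1})-W_hV_h^{\pi',r}(s_h)]$.

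The central step handles the term $W_{h+1}V_{h+1}^{\pi',r}(s_{h+1})$. Since $W_{h+1}=W_h\,a_h(r_h(s_h,a_h))$ and $W_{h+1}$ is measurable with respect to the prefix up to $(s_h,a_h)$, we condition on $(s_1,a_1,\dots,s_h,a_h)$. The tower property and the Markov transition then replace $V_{h+1}^{\pi',r}(s_{h+1})$ by $\EE_{s'\sim P_h}[V_{h+1}^{\pi',r}(s')]$. Combining with $W_hb_h$, the bracket becomes $W_h\bigl(Q_h^{\pi',r}(s_h,a_h)-V_h^{\pi',r}(s_h)\bigr)=W_hA_h^{\pi',r}(s_h,a_h)$, which is exactly the claim. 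At stage $H$ the same computation applies with continuation $g$, so $Q_H^{\pi',r}=\sigma_H(r_H,g)$ is consistent.

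The main obstacle is bookkeeping rather than any single inequality. The expansion of $R(\tau;r)$ must include the terminal $W_{H+1}g$ term, and the conditioning step must be legitimate. It is legitimate because $W_h$ involves only rewards up to step $h-1$, $a_h(r_h)$ is a function of $(s_h,a_h)$, and $r$ is deterministic given state and action. No contractivity of $a_h$ is needed; only the affine form is used. Finiteness of all expectations follows from boundedness of $r$, $a_h$, $b_h$ and $g$.
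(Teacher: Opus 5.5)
Your argument is correct, but it is organized differently from the paper's. The paper proves a stronger state-conditional identity by backward induction on $h$:
\[
V_h^{\pi,r}(s)-V_h^{\pi',r}(s)=\sum_{h'=h}^H\EE_{\tau\sim\pi}\Bigl[\prod_{j=h}^{h'-1}a_j(r_j(s_j,a_j))\,A_{h'}^{\pi',r}(s_{h'},a_{h'})\,\Big|\,s_h=s\Bigr].
\]
At each stage it splits $V_h^{\pi,r}-V_h^{\pi',r}=\EE_{a\sim\pi_h}[A_h^{\pi',r}]+\EE_{a\sim\pi_h}[Q_h^{\pi,r}-Q_h^{\pi',r}]$. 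It then notes that the $b_h$ terms cancel, so $Q_h^{\pi,r}-Q_h^{\pi',r}=a_h(r_h)\,\EE_{s'}[V_{h+1}^{\pi,r}-V_{h+1}^{\pi',r}]$, and the factors $a_h$ accumulate through the induction hypothesis.

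You instead run a single forward telescoping along $\pi$-trajectories. The pathwise expansion $R(\tau;r)=\sum_h W_hb_h+W_{H+1}g$ plays the role of $J_r(\pi)$, and the telescoped terms $W_hV_h^{\pi',r}(s_h)$ produce $J_r(\pi')$. Conditioning on the prefix then converts $W_hb_h+W_{h+1}V_{h+1}^{\pi',r}(s_{h+1})$ into $W_hQ_h^{\pi',r}(s_h,a_h)$.

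Your route invokes the Bellman recursion only for $\pi'$. The value of $\pi$ never has to be represented by a value function, so the identity holds verbatim even for history-dependent $\pi$, and the origin of the weights $W_h$ is explicit. The paper's route, in return, gives the identity conditional on every intermediate pair $(h,s)$, not only from $s_1$. As you correctly observe, neither argument uses contractivity $a_h\in[0,1]$; both rely only on the affine form in $v$.
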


\begin{proof}
  We prove by backward induction that for every $h\in[H]$ and $s\in\cS$,
  \begin{equation}\label{eq:gen-pdl-induction}
    V_h^{\pi,r}(s)-V_h^{\pi',r}(s)
    =
    \sum_{h'=h}^{H}
    \EE_{\tau\sim\pi}   \left[
      \left(\prod_{j=h}^{h'-1}a_j(r_j(s_j,a_j))\right)
      A_{h'}^{\pi',r}(s_{h'},a_{h'})
      \middle|    s_h=s
      \right].
  \end{equation}
  The lemma then follows by setting $h=1$ and $s=s_1$.

  \paragraph{Base case ($h=H$)}
  At $h=H$ the only term in the sum is $h'=H$ with an empty product:
  \[
    V_H^{\pi,r}(s)-V_H^{\pi',r}(s)
    =\EE_{a\sim\pi_H}[Q_H^{\pi',r}(s,a)-V_H^{\pi',r}(s)]
    =\EE_{a\sim\pi_H}[A_H^{\pi',r}(s,a)].
  \]
  It remains to verify this equality. Since $V_{H+1}^{\pi,r}=V_{H+1}^{\pi',r}=g$, both $Q_H^{\pi,r}(s,a)$ and $Q_H^{\pi',r}(s,a)$ equal $\sigma_H(r_H(s,a),g)$, so $Q_H^{\pi,r}=Q_H^{\pi',r}$. Therefore
  \[
    V_H^{\pi,r}(s)-V_H^{\pi',r}(s)
    =\EE_{a\sim\pi_H}[Q_H^{\pi,r}(s,a)]-\EE_{a\sim\pi'_H}[Q_H^{\pi',r}(s,a)]
    =\EE_{a\sim\pi_H}[Q_H^{\pi',r}(s,a)]-V_H^{\pi',r}(s)
    =\EE_{a\sim\pi_H}[A_H^{\pi',r}(s,a)].
  \]

  \paragraph{Inductive step}
  Assume \cref{eq:gen-pdl-induction} holds at $h+1$. At stage~$h$, using the Bellman equation
  \[
    Q_h^{\pi,r}(s,a)=a_h(r_h(s,a))   \EE_{s'\sim P_h(\cdot\mid s,a)}[V_{h+1}^{\pi,r}(s')]+b_h(r_h(s,a)),
  \]
  (and analogously for $\pi'$), we get
  \begin{align*}
    Q_h^{\pi,r}(s,a)-Q_h^{\pi',r}(s,a)
    =a_h(r_h(s,a))   \EE_{s'\sim P_h}[V_{h+1}^{\pi,r}(s')-V_{h+1}^{\pi',r}(s')].
  \end{align*}
  Therefore
  \begin{align*}
    V_h^{\pi,r}(s)-V_h^{\pi',r}(s)
     & =\EE_{a\sim\pi_h}[Q_h^{\pi,r}(s,a)]-V_h^{\pi',r}(s)                                                       \\
     & =\EE_{a\sim\pi_h}[Q_h^{\pi',r}(s,a)-V_h^{\pi',r}(s)]+\EE_{a\sim\pi_h}[Q_h^{\pi,r}(s,a)-Q_h^{\pi',r}(s,a)] \\
     & =\EE_{a\sim\pi_h}[A_h^{\pi',r}(s,a)]
    +\EE_{a\sim\pi_h}   \left[a_h(r_h(s,a))   \EE_{s'\sim P_h}[V_{h+1}^{\pi,r}(s')-V_{h+1}^{\pi',r}(s')]\right].
  \end{align*}
  Substituting the induction hypothesis \cref{eq:gen-pdl-induction} at $h+1$ into the second term gives
  \begin{align*}
     & \EE_{a\sim\pi_h}   \left[a_h(r_h(s,a))   \EE_{s'}
                            \sum_{h'=h+1}^{H}
                            \EE_{\tau\sim\pi}   \left[
                              \prod_{j=h+1}^{h'-1}a_j(r_j(s_j,a_j))
                              \cdot A_{h'}^{\pi',r}(s_{h'},a_{h'})
                              \middle|    s_{h+1}=s'
                              \right]\right]              \\
     & =
    \sum_{h'=h+1}^{H}
    \EE_{\tau\sim\pi}   \left[
                          a_h(r_h(s_h,a_h))\prod_{j=h+1}^{h'-1}a_j(r_j(s_j,a_j))
                          \cdot A_{h'}^{\pi',r}(s_{h'},a_{h'})
                          \middle|    s_h=s
                          \right] \\
     & =
    \sum_{h'=h+1}^{H}
    \EE_{\tau\sim\pi}   \left[
                          \prod_{j=h}^{h'-1}a_j(r_j(s_j,a_j))
                          \cdot A_{h'}^{\pi',r}(s_{h'},a_{h'})
                          \middle|    s_h=s
                          \right].
  \end{align*}
  Combining gives
  \[
    V_h^{\pi,r}(s)-V_h^{\pi',r}(s)
    =
    \sum_{h'=h}^{H}
    \EE_{\tau\sim\pi}   \left[
      \prod_{j=h}^{h'-1}a_j(r_j(s_j,a_j))
      \cdot A_{h'}^{\pi',r}(s_{h'},a_{h'})
      \middle|    s_h=s
      \right],
  \]
  completing the induction.
\end{proof}

\section{Proof of the Generalized Outcome-Based Upper Bound}
\label{sec:proof-gen-upper-bound}

This appendix proves \Cref{thm:main-gen-outcome}. The argument follows the same blueprint as the proof of \Cref{theorem:outcome-based-offline-rl} in \Cref{section:proof-additive-upper-bound}: a regret decomposition produces three population losses, each is transferred to the data via concentration, and the resulting AM--GM bound is balanced by tuning $\beta$. The two new ingredients relative to the additive-return analysis are:
\begin{itemize}
  \item the trajectory return is no longer the per-step sum, so the per-step gap $r^\star_h-r_{k,h}$ is replaced by the Bellman--operator gap $\Delta_h^{r_k}\coloneqq(\cT_{r^\star}^{\widehat\pi}f)_h-(\cT_{r_k}^{\widehat\pi}f)_h$, and trajectory return differences no longer decompose into per-step increments;
  \item policy occupancies are weighted by the trajectory factor $W_h^r(\tau)=\prod_{j<h}a_j(r_j(s_j,a_j))$ from \cref{eq:gen-all-losses-emp}, which is in general policy- and reward-dependent.
\end{itemize}
Both are instances of the same principle: the generalized PDL (\Cref{lemma:gen-pdl}) replaces the additive performance-difference identity by a multiplicative one whose weights and per-step gaps couple to the unknown reward $r^\star$.

We adopt the notation of \Cref{section:main-result-generalized,sec:generalized_rl}: the affine aggregation $\sigma_h(u,v)=a_h(u)v+b_h(u)$; the offline dataset $\cD=\{(\tau_i,Y_i)\}_{i=1}^n$ with $\EE[Y_i\mid\tau_i]=R(\tau_i;r^\star)$ and $|Y_i|\le V_{\max}$, so the centred noise $\xi_i\coloneqq Y_i-R(\tau_i;r^\star)$ satisfies $\EE[\xi_i\mid\tau_i]=0$ and $|\xi_i|\le 2V_{\max}$ a.s.; the policy class $\Pi$, reward class $\cR$, and value class $\cF$; the generalized Bellman operator $\cT_r^\pi$ from \cref{eq:def-gen-bellman}, with $\cT^\pi\coloneqq\cT_{r^\star}^\pi$ (by \Cref{prop:eval-iff-det}, $Q^\pi$ is the unique fixed point of $\cT^\pi$); the trajectory return $R(\tau;r)$ from \cref{eq:main-traj-return}; the empirical losses $\cL_\cD^{W^r},\cL_\cD^{\mathrm{BE}},\cL_\cD^{\mathrm{RM}}$ in \cref{eq:gen-all-losses-emp}; the trajectory weights $W_h^r$; the state--action concentrability $C_{sa}(\pi^\star)$ from \cref{eq:def-Csa}; the reward-process complexity $\kappa\coloneqq\kappa_\mu(\sigma)$ from \Cref{def:kappa}; and the Bellman inverse coefficient $\chi\coloneqq\chi_\mu(\sigma)$ from \Cref{def:chi}. Throughout $r^\star=(r_h^\star)_{h=1}^H$ is the true per-step reward and $r=(r_h)_{h=1}^H\in\cR$ a candidate.

\subsection{Setup}
\label{subsec:gen-outcome-offline-rl}

\paragraph{Population losses}
For $(\pi,r,f)\in\Pi\times\cR\times\cF$, the population analogues of \cref{eq:gen-all-losses-emp} are
\begin{equation}\label{eq:gen-pop-losses}
  \begin{aligned}
    \cL_\mu(\pi,f)
     & \coloneqq
    \sum_{h=1}^H\EE_\mu\bigl[f_h(s_h,\pi)-f_h(s_h,a_h)\bigr],
    \\
    \cL_\mu^{W^r}(\pi,r,f)
     & \coloneqq
    \sum_{h=1}^H\EE_\mu\bigl[W_h^r(\tau)\bigl(f_h(s_h,\pi)-f_h(s_h,a_h)\bigr)\bigr],
    \\
    \cL_\mu^{\mathrm{BE}}(\pi,r,f)
     & \coloneqq
    \sum_{h=1}^H\EE_\mu\bigl[\bigl((f_h-\cT_r^\pi f)(s_h,a_h)\bigr)^2\bigr],
    \\
    \cL_\mu^{\mathrm{RM}}(r)
     & \coloneqq
    \EE_{\tau\sim\mu}\bigl[\bigl(R(\tau;r)-R(\tau;r^\star)\bigr)^2\bigr].
  \end{aligned}
\end{equation}
These extend \cref{eq:all_losses} from \Cref{section:proof-additive-upper-bound} in two ways: the policy loss carries the trajectory weight $W_h^r$, and the Bellman-error and reward-model losses use the generalized Bellman operator $\cT_r^\pi$ and trajectory return $R(\tau;\cdot)$ respectively. Because $\sigma_h$ is affine in its continuation, the one-step target $y_h^{r,f,\pi}$ defined in \Cref{section:main-result-generalized} satisfies $\EE[y_h^{r,f,\pi}\mid s_h,a_h]=(\cT_r^\pi f)_h(s_h,a_h)$; subtracting the baseline $\min_{f'\in\cF_h}$ in \cref{eq:gen-loss-be-emp} cancels the conditional variance of $y_h^{r,f,\pi}$, so the empirical loss is an unbiased estimator of $\cL_\mu^{\mathrm{BE}}$ in expectation, the same property used in the proof of \Cref{lemma:bellman-error-concentration}.

\paragraph{Per-step gaps and the $r$-free identity}
Fix $\widehat\pi\in\Pi$, $f\in\cF$, $r\in\cR$, and define
\begin{align*}
  \delta_h(s,a)   & \coloneqq f_h(s,a)-(\cT_r^{\widehat\pi}f)_h(s,a),                                                                                                                                                \\
  \Delta_h^r(s,a) & \coloneqq (\cT_{r^\star}^{\widehat\pi}f)_h(s,a)-(\cT_r^{\widehat\pi}f)_h(s,a)=[a_h(r_h^\star)-a_h(r_h)]\,\EE_{s'\sim P_h(\cdot\mid s,a)}[V_{h+1}^{\widehat\pi,f}(s')]+[b_h(r_h^\star)-b_h(r_h)].
\end{align*}
Both are per-step functions of $(s,a)$. Their difference satisfies the \emph{$r$-free identity}
\begin{equation}\label{eq:gen-bellman-split}
  (\cT_{r^\star}^{\widehat\pi}f)_h(s,a)-f_h(s,a)=\Delta_h^r(s,a)-\delta_h(s,a),
\end{equation}
whose right-hand side does not depend on $r$ even though each summand does. Identity~\cref{eq:gen-bellman-split} is the key replacement, in the generalized setting, of the additive identity $\cT^\pi f-f=r^\star-\tilde r$ used in \Cref{lemma:bound-diff-Q-and-f_pi}.

The next lemma propagates this one-step identity through the generalized Bellman recursion.

\begin{lemma}[Generalized simulation lemma]\label{lemma:gen-simulation}
  Let $e_h(s)\coloneqq V_h^{\widehat\pi}(s)-V_h^{\widehat\pi,f}(s)=\EE_{a\sim\widehat\pi_h(\cdot\mid s)}[Q_h^{\widehat\pi}(s,a)-f_h(s,a)]$ with the convention $e_{H+1}\equiv 0$. Then for every $h\in[H]$, $(s,a)\in\cS\times\cA$,
  \begin{equation}\label{eq:gen-sim-Q}
    Q_h^{\widehat\pi}(s,a)-f_h(s,a)=\Delta_h^r(s,a)-\delta_h(s,a)+a_h(r_h^\star(s,a))\,\EE_{s'\sim P_h(\cdot\mid s,a)}[e_{h+1}(s')].
  \end{equation}
\end{lemma}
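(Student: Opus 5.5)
The identity \cref{eq:gen-sim-Q} is a one-step statement; no induction is needed. I will verify it by expanding both Bellman operators using the affine form $\sigma_h(u,v)=a_h(u)v+b_h(u)$. Fix $h$ and $(s,a)$, and write $\bar V^{f}_{h+1}(s,a)\coloneqq\EE_{s'\sim P_h(\cdot\mid s,a)}[V_{h+1}^{\widehat\pi,f}(s')]$ and $\bar V_{h+1}(s,a)\coloneqq\EE_{s'\sim P_h(\cdot\mid s,a)}[V_{h+1}^{\widehat\pi}(s')]$. At $h=H$ both equal $g$ by the boundary conventions $V_{H+1}^{\widehat\pi,f}\equiv g$ and $V_{H+1}^{\widehat\pi,r^\star}\equiv g$, consistent with $e_{H+1}\equiv0$.

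\textbf{Step 1.} By \Cref{prop:eval-iff-det}, $Q^{\widehat\pi}=Q^{\widehat\pi,r^\star}$ is the fixed point of $\cT^{\widehat\pi}_{r^\star}$. Hence
\[
Q_h^{\widehat\pi}(s,a)=a_h(r_h^\star(s,a))\,\bar V_{h+1}(s,a)+b_h(r_h^\star(s,a)).
\]
By definition \cref{eq:def-gen-bellman}, we also have
\[
(\cT^{\widehat\pi}_{r^\star}f)_h(s,a)=a_h(r_h^\star(s,a))\,\bar V^f_{h+1}(s,a)+b_h(r_h^\star(s,a)).
\]
Subtracting, and using linearity of the conditional expectation,
\[
Q_h^{\widehat\pi}(s,a)-(\cT^{\widehat\pi}_{r^\star}f)_h(s,a)=a_h(r_h^\star(s,a))\,\EE_{s'\sim P_h(\cdot\mid s,a)}\bigl[V_{h+1}^{\widehat\pi}(s')-V_{h+1}^{\widehat\pi,f}(s')\bigr]=a_h(r_h^\star(s,a))\,\EE_{s'}[e_{h+1}(s')].
\]
The definition $e_{h+1}=V^{\widehat\pi}_{h+1}-V^{\widehat\pi,f}_{h+1}$ matches, since $V^{\widehat\pi,f}_{h+1}(s')=\EE_{a'\sim\widehat\pi_{h+1}}[f_{h+1}(s',a')]$.

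\textbf{Step 2.} Decompose
\[
Q_h^{\widehat\pi}-f_h=\bigl(Q_h^{\widehat\pi}-(\cT^{\widehat\pi}_{r^\star}f)_h\bigr)+\bigl((\cT^{\widehat\pi}_{r^\star}f)_h-f_h\bigr).
\]
Apply the $r$-free identity \cref{eq:gen-bellman-split} to the second bracket; it equals $\Delta_h^r-\delta_h$ by the definitions of $\Delta_h^r$ and $\delta_h$. Combining this with Step 1 yields \cref{eq:gen-sim-Q}.

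\textbf{Main obstacle.} There is no real obstacle; the key point is bookkeeping. The continuation term must be expanded under the true reward $r^\star$, so the coefficient is $a_h(r^\star_h)$ and not $a_h(r_h)$. All dependence on the candidate $r$ is absorbed into $\Delta_h^r-\delta_h$, which telescopes to the $r$-independent quantity $(\cT^{\widehat\pi}_{r^\star}f)_h-f_h$. The only other care needed is to check that the terminal boundary conventions agree at $h=H$.
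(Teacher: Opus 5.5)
Your proposal is correct and matches the paper's proof essentially line for line. Like the paper, you use the fixed-point property of $\cT_{r^\star}^{\widehat\pi}$ and the affine form of $\sigma_h$ to get $Q_h^{\widehat\pi}-(\cT_{r^\star}^{\widehat\pi}f)_h=a_h(r_h^\star)\,\EE_{s'}[e_{h+1}(s')]$, since the $b_h(r_h^\star)$ terms cancel. You then add $(\cT_{r^\star}^{\widehat\pi}f)_h-f_h=\Delta_h^r-\delta_h$ from the $r$-free identity, and your check of the terminal convention $e_{H+1}\equiv 0$ is a harmless extra detail.
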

\begin{proof}
  Since $Q^{\widehat\pi}$ is the fixed point of $\cT_{r^\star}^{\widehat\pi}$, expanding via $\sigma_h(u,v)=a_h(u)v+b_h(u)$ gives
  \[
    Q_h^{\widehat\pi}(s,a)-(\cT_{r^\star}^{\widehat\pi}f)_h(s,a)=a_h(r_h^\star(s,a))\,\EE_{s'}[V_{h+1}^{\widehat\pi}(s')-V_{h+1}^{\widehat\pi,f}(s')]=a_h(r_h^\star(s,a))\,\EE_{s'}[e_{h+1}(s')],
  \]
  since the $b_h(r_h^\star)$ terms cancel. Adding $\bigl[(\cT_{r^\star}^{\widehat\pi}f)_h-f_h\bigr]=\Delta_h^r-\delta_h$ from \cref{eq:gen-bellman-split} yields \cref{eq:gen-sim-Q}.
\end{proof}

\subsection{Generalized Regret Decomposition}
\label{subsec:gen-regret-decomp-fake}

\begin{lemma}[Generalized regret decomposition]\label{lemma:gen-regret-sharp}
  For any $\pi,\widehat\pi\in\Pi$, $r\in\cR$, and $f\in\cF$, with $\delta_h$, $\Delta_h^r$ from \cref{eq:gen-bellman-split} and $W_h^{r^\star}$ from \cref{eq:gen-all-losses-emp},
  \begin{align}
    J(\pi)-J(\widehat\pi)
     & =\underbrace{\sum_{h=1}^H\EE_\pi\bigl[W_h^{r^\star}(\tau)\bigl(\Delta_h^r-\delta_h\bigr)(s_h,a_h)\bigr]}_{(\mathrm{I}^\sharp)}+\underbrace{\sum_{h=1}^H\EE_\pi\bigl[W_h^{r^\star}(\tau)\bigl(f_h(s_h,a_h)-f_h(s_h,\widehat\pi)\bigr)\bigr]}_{(\mathrm{II}^\sharp)}+\underbrace{f_1(s_1,\widehat\pi)-J(\widehat\pi)}_{(\mathrm{III}^\sharp)}.
    \label{eq:gen-regret-sharp}
  \end{align}
  The right-hand side does not depend on the choice of $r\in\cR$, by \cref{eq:gen-bellman-split}.
\end{lemma}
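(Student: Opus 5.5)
The identity follows from the generalized performance difference lemma together with the generalized simulation lemma. The plan is to apply \Cref{lemma:gen-pdl} with reward $r^\star$, then rewrite every advantage through \Cref{lemma:gen-simulation}, so that the continuation errors $e_h$ telescope along the weighted trajectory. For a fixed $f$, consider the quantity
\[
  J(\pi)-f_1(s_1,\widehat\pi)=\EE_\pi\bigl[V_1^{\pi}(s_1)\bigr]-V_1^{\widehat\pi,f}(s_1),
\]
since $J(\pi)=V_1^\pi(s_1)$. Writing $J(\pi)-J(\widehat\pi)=\bigl(J(\pi)-f_1(s_1,\widehat\pi)\bigr)+(\mathrm{III}^\sharp)$, it suffices to show
\[
  J(\pi)-f_1(s_1,\widehat\pi)=(\mathrm{I}^\sharp)+(\mathrm{II}^\sharp).
\]

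I would prove this by a backward induction analogous to the proof of \Cref{lemma:gen-pdl}. Define $D_h(s)\coloneqq V_h^{\pi}(s)-V_h^{\widehat\pi,f}(s)$, with the convention $D_{H+1}\equiv g-g=0$. Split
\[
  D_h(s)=\EE_{a\sim\pi_h}\bigl[Q_h^\pi(s,a)-f_h(s,a)\bigr]+\EE_{a\sim\pi_h}\bigl[f_h(s,a)-f_h(s,\widehat\pi)\bigr].
\]
The second piece is the per-step contribution to $(\mathrm{II}^\sharp)$. For the first piece, use the affine form of $\sigma_h$ exactly as in \Cref{lemma:gen-simulation}, but with $\pi$ in place of $\widehat\pi$ on the $Q$ side. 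Since $Q^\pi$ is the fixed point of $\cT_{r^\star}^\pi$,
\[
  Q_h^\pi(s,a)-(\cT^{\widehat\pi}_{r^\star}f)_h(s,a)=a_h(r_h^\star(s,a))\,\EE_{s'}\bigl[V_{h+1}^\pi(s')-V_{h+1}^{\widehat\pi,f}(s')\bigr]=a_h(r_h^\star)\,\EE_{s'}[D_{h+1}(s')].
\]
The $b_h$ terms cancel because both operators share the same reward $r^\star$. Adding $(\cT^{\widehat\pi}_{r^\star}f)_h-f_h=\Delta_h^r-\delta_h$ from \cref{eq:gen-bellman-split} gives the recursion
\[
  D_h(s)=\EE_{a\sim\pi_h}\Bigl[(\Delta_h^r-\delta_h)(s,a)+\bigl(f_h(s,a)-f_h(s,\widehat\pi)\bigr)+a_h(r_h^\star(s,a))\,\EE_{s'}[D_{h+1}(s')]\Bigr].
\]

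Unrolling this recursion under trajectories of $\pi$ multiplies the stage-$h$ terms by $\prod_{j<h}a_j(r_j^\star(s_j,a_j))=W_h^{r^\star}(\tau)$. The terminal term vanishes because $D_{H+1}=0$. At $h=1$ the unrolled expression is exactly $(\mathrm{I}^\sharp)+(\mathrm{II}^\sharp)$. The $r$-independence claim is immediate from \cref{eq:gen-bellman-split}, because $\Delta_h^r-\delta_h=(\cT^{\widehat\pi}_{r^\star}f)_h-f_h$.

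The main subtlety is the bookkeeping of the mixed policies. The outer expectation is over $\pi$, while the Bellman operator and $V^{\widehat\pi,f}$ use $\widehat\pi$. The recursion must therefore be built on $D_h=V^\pi_h-V^{\widehat\pi,f}_h$, not on the $e_h$ of \Cref{lemma:gen-simulation}. In addition, the weights must use $r^\star$ rather than $r$, since $a_h(r_h^\star)$ is what arises from the fixed point of $\cT_{r^\star}^\pi$. This matches the statement's $W_h^{r^\star}$.
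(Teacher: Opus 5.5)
Your proof is correct, but it takes a more direct route than the paper's.

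The paper applies \Cref{lemma:gen-pdl} at reward $r^\star$ with $\widehat\pi$ as the reference policy. It splits each advantage as $A_h^{\widehat\pi}=(Q_h^{\widehat\pi}-f_h)+(f_h-f_h(\cdot,\widehat\pi))-e_h$ and rewrites the first piece with \Cref{lemma:gen-simulation}. Using $W_h^{r^\star}a_h(r_h^\star)=W_{h+1}^{r^\star}$, the errors $e_h=V_h^{\widehat\pi}-V_h^{\widehat\pi,f}$ then telescope along the weighted $\pi$-trajectory. Only $-e_1(s_1)=(\mathrm{III}^\sharp)$ survives.

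You instead compare $V^\pi$ directly with the surrogate $V^{\widehat\pi,f}$ through one mixed-policy backward recursion. This proves $J(\pi)-f_1(s_1,\widehat\pi)=(\mathrm{I}^\sharp)+(\mathrm{II}^\sharp)$ outright, and $(\mathrm{III}^\sharp)$ enters only by adding and subtracting $f_1(s_1,\widehat\pi)$. The recursion, the cancellation of the $b_h$ terms, the terminal condition $D_{H+1}=g-g=0$, and the identification of the unrolled weights with $W_h^{r^\star}$ are all correct.

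What each route buys:
\begin{itemize}
\item Your version is self-contained. It never touches $Q^{\widehat\pi}$ or the PDL and needs no telescoping cancellation; it is the generalized analogue of the usual Bellman-consistent pessimism decomposition.
\item The paper's version reuses two lemmas it needs anyway (\Cref{lemma:gen-simulation} reappears in Step (3e)). It also exhibits $(\mathrm{III}^\sharp)$ explicitly as the stage-one value error $-e_1(s_1)$, which is the form later handled through the $\mu$-based PDL.
\end{itemize}

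One small correction to your closing remark: building on $e_h$ is not ruled out. It works in the paper because the PDL supplies the $\pi$-weighted expectation; only your direct recursion requires $D_h$.
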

\begin{proof}
  By the generalized PDL (\Cref{lemma:gen-pdl}) at reward $r^\star$,
  $J(\pi)-J(\widehat\pi)=\sum_h\EE_\pi\bigl[W_h^{r^\star}(\tau)\,A_h^{\widehat\pi}(s_h,a_h)\bigr]$
  with $A_h^{\widehat\pi}(s,a)=Q_h^{\widehat\pi}(s,a)-V_h^{\widehat\pi}(s)$. Using $V_h^{\widehat\pi,f}(s)=f_h(s,\widehat\pi)$, decompose
  \[
    A_h^{\widehat\pi}(s,a)=\bigl[Q_h^{\widehat\pi}(s,a)-f_h(s,a)\bigr]+\bigl[f_h(s,a)-f_h(s,\widehat\pi)\bigr]-e_h(s).
  \]
  The simulation lemma \cref{eq:gen-sim-Q} gives $Q_h^{\widehat\pi}-f_h=(\Delta_h^r-\delta_h)+a_h(r_h^\star)\,\EE_{s'}[e_{h+1}]$, and the identity $W_h^{r^\star}(\tau)\cdot a_h(r_h^\star(s_h,a_h))=W_{h+1}^{r^\star}(\tau)$ together with the tower rule yields
  \[
    \sum_{h=1}^H\EE_\pi\bigl[W_h^{r^\star}\cdot a_h(r_h^\star)\,\EE_{s'}[e_{h+1}(s')]\bigr]
    =\sum_{h=2}^H\EE_\pi[W_h^{r^\star}\,e_h(s_h)],
  \]
  using $e_{H+1}\equiv 0$. Combining the displays and using $W_1^{r^\star}=1$, $e_1(s_1)=J(\widehat\pi)-f_1(s_1,\widehat\pi)$, the only surviving $e_h$ term is $-\EE_\pi[W_1^{r^\star}e_1(s_1)]=f_1(s_1,\widehat\pi)-J(\widehat\pi)$, which is $(\mathrm{III}^\sharp)$. The remaining two terms are $(\mathrm{I}^\sharp)$ and $(\mathrm{II}^\sharp)$.
\end{proof}

\subsection{Auxiliary Lemmas}
\label{subsec:gen-aux-lemmas}

We collect the structural lemmas used to bound the three terms of \cref{eq:gen-regret-sharp}. Throughout, $\delta_h$ and $\Delta_h^r$ are as in \cref{eq:gen-bellman-split}.

\paragraph{Fake reward}
Given a critic $f\in\cF$ and a policy $\widehat\pi\in\Pi$, we introduce a \emph{fake reward} $\tilde r=(\tilde r_1,\ldots,\tilde r_H)$ by forcing $f$ to be Bellman-consistent with $\widehat\pi$:
\begin{equation}\label{eq:fake-reward-gen}
  \sigma_h\bigl(\tilde r_h(s,a),\,\EE_{s'\sim P_h(\cdot\mid s,a)}[V_{h+1}^{\widehat\pi,f}(s')]\bigr)=f_h(s,a),\qquad h\in[H].
\end{equation}
The existence of a measurable fake reward is guaranteed by \Cref{assumption:sigma-regularity}: for each $f,\widehat\pi,h,s,a$, the equation above has a solution $\tilde r_h(s,a)$.
Thus $f$ is exactly the value function of $\widehat\pi$ in the auxiliary MDP with reward $\tilde r$: $f=Q^{\widehat\pi,\tilde r}$ and $J_{\tilde r}(\widehat\pi)=f_1(s_1,\widehat\pi)$. The fake reward is not estimated by the algorithm; it is a proof device that converts value-function inconsistency into a reward-level discrepancy. We write
\[
  \tilde W_h(\tau)\coloneqq\prod_{j=1}^{h-1}a_j(\tilde r_j(s_j,a_j)),\qquad
  G(\tau)\coloneqq R(\tau;r^\star)-R(\tau;\tilde r).
\]

The proof below uses the Bellman inverse coefficient only through the consequence
\begin{equation}\label{eq:assump-inv-L2}
  \sum_{h=1}^H\EE_\mu\bigl[(\tilde r_h-r_h)^2\bigr]\le \chi\,\cL_\mu^{\mathrm{BE}}(\widehat\pi,r,f),\qquad\forall(\widehat\pi,r,f)\in\Pi\times\cR\times\cF.
\end{equation}
This follows from \Cref{def:chi} by applying the coefficient to $(r,\tilde r)$, since $(\cT_{\tilde r}^{\widehat\pi}f)_h=f_h$.

\paragraph{Trajectory return telescoping}
The next lemma gives an exact expression for the difference between two generalized trajectory returns, by interpolating one reward at a time.

\begin{lemma}[Trajectory return telescoping]\label{lemma:traj-return-telescope}
  For any rewards $r,r'$ and trajectory $\tau=(s_1,a_1,\ldots,s_H,a_H)$,
  \begin{equation}\label{eq:traj-telescope-lemma}
    R(\tau;r')-R(\tau;r)
    =\sum_{k=1}^H\Bigl(\prod_{j=1}^{k-1}a_j(r'_j(s_j,a_j))\Bigr)\bigl\{[a_k(r'_k)-a_k(r_k)]\,R_{k+1}(\tau_{k+1};r)+[b_k(r'_k)-b_k(r_k)]\bigr\},
  \end{equation}
  where $R_{k+1}(\tau_{k+1};r)$ is the generalized return from step $k+1$ onward under reward $r$.
\end{lemma}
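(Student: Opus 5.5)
The plan is a hybrid (interpolation) argument that swaps the rewards one stage at a time, from the front of the trajectory toward the back. For $k=0,1,\ldots,H$, define the interpolated reward $r^{(k)}$ by
\[
r^{(k)}_j = r'_j \quad (j\le k), \qquad r^{(k)}_j = r_j \quad (j>k).
\]
Then $r^{(0)}=r$ and $r^{(H)}=r'$. The return difference telescopes as
\[
R(\tau;r')-R(\tau;r)=\sum_{k=1}^H\bigl[R(\tau;r^{(k)})-R(\tau;r^{(k-1)})\bigr].
\]
It then suffices to identify each summand with the $k$-th term of \cref{eq:traj-telescope-lemma}.

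Fix $k$. The rewards $r^{(k)}$ and $r^{(k-1)}$ agree at every stage $j\ne k$. In particular, they agree on all stages after $k$, where both equal $r$. Hence the backward recursion \cref{eq:main-traj-return} gives
\[
R_{k+1}(\tau_{k+1};r^{(k)})=R_{k+1}(\tau_{k+1};r^{(k-1)})=R_{k+1}(\tau_{k+1};r).
\]
At stage $k$ we use the affine form $\sigma_k(u,v)=a_k(u)v+b_k(u)$. Writing $v=R_{k+1}(\tau_{k+1};r)$, this yields
\[
R_k(\tau_k;r^{(k)})-R_k(\tau_k;r^{(k-1)})=[a_k(r'_k)-a_k(r_k)]\,v+[b_k(r'_k)-b_k(r_k)].
\]

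Next we propagate this stage-$k$ difference up to stage $1$. For stages $j<k$, both rewards equal $r'_j$, and $\sigma_j(r'_j,\cdot)$ is affine in the continuation with slope $a_j(r'_j)$. The additive terms $b_j(r'_j)$ therefore cancel in the difference, and each stage multiplies the difference by $a_j(r'_j(s_j,a_j))$. A short backward induction from $j=k-1$ down to $j=1$ gives
\[
R_1(\tau;r^{(k)})-R_1(\tau;r^{(k-1)})=\Bigl(\prod_{j=1}^{k-1}a_j(r'_j)\Bigr)\cdot\bigl(\text{stage-}k\text{ difference}\bigr).
\]
Summing over $k$ gives \cref{eq:traj-telescope-lemma}.

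No step is a real obstacle; the only care needed is bookkeeping. The interpolation must be ordered so that the prefix uses $r'$ and the suffix uses $r$. With this ordering, the weights are products of $a_j(r'_j)$ and the continuation is $R_{k+1}(\tau_{k+1};r)$, exactly as stated. At $k=H$, we use the conventions $R_{H+1}\equiv g$ and the empty product equal to $1$.
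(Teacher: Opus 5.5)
Your proof is correct and follows essentially the same route as the paper: the same front-to-back interpolation $r^{(k)}$ (prefix $r'$, suffix $r$), the same telescoping sum, the same stage-$k$ difference computed from the affine form with shared continuation $R_{k+1}(\tau_{k+1};r)$, and the same propagation to stage $1$ by the factors $a_j(r'_j)$ for $j<k$. Your explicit remark that the $b_j(r'_j)$ terms cancel in the propagation step fills in a detail the paper leaves implicit.
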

\begin{proof}
  Define interpolated rewards $r^{(k)}_h=r'_h$ for $h\le k$ and $r^{(k)}_h=r_h$ for $h>k$, so that $r^{(0)}=r$ and $r^{(H)}=r'$. Then $R(\tau;r')-R(\tau;r)=\sum_k[R(\tau;r^{(k)})-R(\tau;r^{(k-1)})]$. Since $r^{(k)}$ and $r^{(k-1)}$ agree on $h>k$, $R_{k+1}(\tau_{k+1};r^{(k)})=R_{k+1}(\tau_{k+1};r^{(k-1)})=R_{k+1}(\tau_{k+1};r)$, so at step~$k$,
  \[
    R_k(\tau_k;r^{(k)})-R_k(\tau_k;r^{(k-1)})=[a_k(r'_k)-a_k(r_k)]\,R_{k+1}(\tau_{k+1};r)+[b_k(r'_k)-b_k(r_k)].
  \]
  For $h<k$ both use $r'_h$, so the difference propagates by the multiplicative factor $a_h(r'_h)$. Unrolling from $h=1$ to $h=k-1$ gives \cref{eq:traj-telescope-lemma} for the $k$-th summand; sum over $k$.
\end{proof}

\begin{lemma}[Telescoping upper bound, contractive case]\label{lemma:telescope-bound}
  Suppose $a_h(u)\in[0,1]$, $a_h,b_h$ are $L$-Lipschitz, and $|R_h(\tau_h;r)|\le V_{\max}$ for all $h,r,\tau$. Then, for any reward functions $r,r'$ and any trajectory~$\tau$,
  \[
    |R(\tau;r')-R(\tau;r)|\le L(V_{\max}+1)\sum_{h=1}^H|r'_h-r_h|(s_h,a_h).
  \]
\end{lemma}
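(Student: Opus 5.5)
The plan is to start from the exact telescoping identity of \Cref{lemma:traj-return-telescope} and bound it term by term. By that lemma,
\[
  R(\tau;r')-R(\tau;r)=\sum_{k=1}^H\Bigl(\prod_{j=1}^{k-1}a_j(r'_j(s_j,a_j))\Bigr)\bigl\{[a_k(r'_k)-a_k(r_k)]\,R_{k+1}(\tau_{k+1};r)+[b_k(r'_k)-b_k(r_k)]\bigr\}.
\]
Taking absolute values and using the triangle inequality over $k$ reduces the claim to a per-summand bound.

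For each $k$, I would bound the three factors separately. First, the prefix weight $\prod_{j<k}a_j(r'_j(s_j,a_j))$ lies in $[0,1]$ because every $a_j(u)\in[0,1]$. This is where contractivity is essential: without it the weights could grow geometrically. Second, the continuation factor satisfies $|R_{k+1}(\tau_{k+1};r)|\le V_{\max}$ by hypothesis. For $k=H$ it equals the terminal constant $g=R_{H+1}$, which I will take to be covered by the same standing bound $|R_h|\le V_{\max}$ for all $h$, including $h=H+1$. Third, the Lipschitz property gives
\[
  |a_k(r'_k)-a_k(r_k)|\le L|r'_k-r_k|(s_k,a_k)
  \quad\text{and}\quad
  |b_k(r'_k)-b_k(r_k)|\le L|r'_k-r_k|(s_k,a_k).
\]

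Combining these three bounds, the $k$-th summand is at most
\[
  1\cdot\bigl(L V_{\max}+L\bigr)|r'_k-r_k|(s_k,a_k)=L(V_{\max}+1)\,|r'_k-r_k|(s_k,a_k).
\]
Summing over $k\in[H]$ gives the stated inequality.

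The only mildly delicate point is the boundary case $k=H$, where one must confirm that $|g|\le V_{\max}$. This follows from the convention $R_{H+1}\equiv g$ together with the hypothesis quantified over all $h$. If the hypothesis were read only for $h\in[H]$, I would instead use the fact that $g\in[0,V_{\max}]$ is implicit in the value-range conventions.
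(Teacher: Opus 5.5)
Your proof is correct and follows essentially the same route as the paper: apply the telescoping identity of \Cref{lemma:traj-return-telescope}, then bound the prefix product by $1$ using contractivity, the coefficient differences by $L|r'_k-r_k|(s_k,a_k)$ using Lipschitzness, and the continuation $R_{k+1}(\tau_{k+1};r)$ by $V_{\max}$. Your explicit check of the boundary term $R_{H+1}\equiv g$ is a small detail that the paper leaves implicit in its ``for all $h$'' hypothesis.
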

\begin{proof}
  Apply \Cref{lemma:traj-return-telescope}. Since $a_h(u)\in[0,1]$, $\prod_{j<k}a_j(r'_j)\in[0,1]$; since $a_h,b_h$ are $L$-Lipschitz, $|a_k(r'_k)-a_k(r_k)|\le L|r'_k-r_k|$ and similarly for $b_k$; and by boundedness, $|R_{k+1}(\tau_{k+1};r)|\le V_{\max}$.
\end{proof}

\begin{lemma}[Weight Lipschitz bound]\label{lemma:weight-lip}
  Suppose $a_h(u)\in[0,1]$ and $a_h$ is $L$-Lipschitz. Then, for any reward functions $r,r'$ and any trajectory~$\tau$,
  \[
    |W_h^{r'}(\tau)-W_h^r(\tau)|   \le   L\sum_{j=1}^{h-1}|r'_j-r_j|(s_j,a_j).
  \]
\end{lemma}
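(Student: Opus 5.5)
The bound will come from a telescoping identity for products, followed by termwise estimates. By definition, $W_h^{r}(\tau)=\prod_{j=1}^{h-1}a_j(r_j(s_j,a_j))$, and similarly for $r'$. Write $x_j\coloneqq a_j(r'_j(s_j,a_j))$ and $y_j\coloneqq a_j(r_j(s_j,a_j))$; by the contractivity condition both lie in $[0,1]$. The standard hybrid-product telescoping gives
\[
  \prod_{j=1}^{h-1}x_j-\prod_{j=1}^{h-1}y_j
  =\sum_{k=1}^{h-1}\Bigl(\prod_{j=1}^{k-1}x_j\Bigr)(x_k-y_k)\Bigl(\prod_{j=k+1}^{h-1}y_j\Bigr).
\]
To verify it, interpolate between the two products by swapping one factor at a time: the $k$-th hybrid uses $x_j$ for $j\le k$ and $y_j$ for $j>k$, and consecutive hybrids differ only in the $k$-th factor. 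This is the same interpolation device used in the proof of \Cref{lemma:traj-return-telescope}.

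Next, take absolute values term by term. Every factor in the two partial products lies in $[0,1]$, so each partial product has absolute value at most $1$. Hence
\[
  |W_h^{r'}(\tau)-W_h^r(\tau)|\le\sum_{k=1}^{h-1}|x_k-y_k|.
\]
Since $a_k$ is $L$-Lipschitz, $|x_k-y_k|=|a_k(r'_k(s_k,a_k))-a_k(r_k(s_k,a_k))|\le L\,|r'_k-r_k|(s_k,a_k)$. Summing over $k$ yields the claimed bound.

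For $h=1$ both weights equal $1$ and the sum on the right is empty, so the statement holds trivially. There is no real obstacle here. The only point that needs care is that the bound of $1$ on the partial products uses the no-expansion part of \Cref{assumption:sigma-regularity}, namely $a_h(u)\in[0,1]$, and in particular its nonnegativity, so that absolute values of the products are controlled.
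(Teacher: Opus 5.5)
Your proposal is correct and matches the paper's proof: you use the same hybrid-product telescoping identity, bound the surrounding partial products by $1$ using $a_j(u)\in[0,1]$, and apply the $L$-Lipschitz property to each factor difference $|a_k(r'_k)-a_k(r_k)|$. The remark that the $h=1$ case is trivial is a small addition the paper leaves implicit.
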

\begin{proof}
  Write $p_j\coloneqq a_j(r_j(s_j,a_j))$ and $p'_j\coloneqq a_j(r'_j(s_j,a_j))$. The standard telescoping identity gives
  \[
    \prod_{j=1}^{h-1}p'_j-\prod_{j=1}^{h-1}p_j
    =\sum_{k=1}^{h-1}\Bigl(\prod_{j<k}p'_j\Bigr)(p'_k-p_k)\Bigl(\prod_{k<j\le h-1}p_j\Bigr).
  \]
  Since $a_h(u)\in[0,1]$, $p_j,p'_j\in[0,1]$, so both surrounding products are bounded by~$1$. Since $a_h$ is $L$-Lipschitz, $|p'_k-p_k|\le L|r'_k-r_k|(s_k,a_k)$. Summing over $k$ yields the claim.
\end{proof}

\begin{lemma}[Trajectory $L^2$ bound on $G$]\label{lemma:G-L2-bound}
  Suppose $a_h(u)\in[0,1]$, $a_h,b_h$ are $L$-Lipschitz, and $|R_h(\tau_h;r)|\le V_{\max}$ for all $h,r,\tau$. Then, for any $r\in\cR$, $f\in\cF$, $\widehat\pi\in\Pi$,
  \[
    \EE_\mu[G(\tau)^2]   \le   2   \cL_\mu^{\mathrm{RM}}(r)+2   L^2(V_{\max}+1)^2 \chi   H   \cL_\mu^{\mathrm{BE}}(\widehat\pi,r,f).
  \]
\end{lemma}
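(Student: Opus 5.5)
The plan is to split $G$ through the candidate reward $r$ and treat the two pieces separately:
\[
G(\tau)=R(\tau;r^\star)-R(\tau;\tilde r)=\bigl[R(\tau;r^\star)-R(\tau;r)\bigr]+\bigl[R(\tau;r)-R(\tau;\tilde r)\bigr].
\]
Applying $(x+y)^2\le 2x^2+2y^2$ and taking $\EE_\mu$, the first square has expectation exactly $\cL_\mu^{\mathrm{RM}}(r)$ by the definition in \cref{eq:gen-pop-losses}. That accounts for the term $2\cL_\mu^{\mathrm{RM}}(r)$, so it remains to show
\[
\EE_\mu\bigl[(R(\tau;r)-R(\tau;\tilde r))^2\bigr]\le L^2(V_{\max}+1)^2\chi H\,\cL_\mu^{\mathrm{BE}}(\widehat\pi,r,f).
\]

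For this second piece, I will invoke \Cref{lemma:telescope-bound} pointwise in $\tau$, with the pair $(r,\tilde r)$. It gives
\[
|R(\tau;r)-R(\tau;\tilde r)|\le L(V_{\max}+1)\sum_{h=1}^H|r_h-\tilde r_h|(s_h,a_h).
\]
Squaring and using Cauchy--Schwarz, $(\sum_h x_h)^2\le H\sum_h x_h^2$, yields
\[
(R(\tau;r)-R(\tau;\tilde r))^2\le L^2(V_{\max}+1)^2H\sum_h(r_h-\tilde r_h)^2(s_h,a_h).
\]
Taking $\EE_\mu$ turns the right-hand side into $L^2(V_{\max}+1)^2H\sum_h\EE_\mu[(\tilde r_h-r_h)^2]$. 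The consequence \cref{eq:assump-inv-L2} of \Cref{def:chi} bounds $\sum_h\EE_\mu[(\tilde r_h-r_h)^2]$ by $\chi\,\cL_\mu^{\mathrm{BE}}(\widehat\pi,r,f)$. Combining these gives the claimed inequality.

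The main point needing care is the hypotheses of \Cref{lemma:telescope-bound}. The fake reward $\tilde r$ need not lie in $\cR$ or in the domain where $R_h$ is known to be bounded by $V_{\max}$. However, that lemma only uses the bound $|R_{k+1}(\tau_{k+1};r)|\le V_{\max}$ for the second argument of the telescoping expansion in \Cref{lemma:traj-return-telescope}. I will therefore apply it with $r'=\tilde r$ and base reward $r\in\cR$, so that only returns under $r$ (or the bounded continuation) appear. Contractivity $a_h\in[0,1]$ handles the prefix products $\prod_{j<k}a_j(\tilde r_j)$ without any bound on $\tilde r$. The only remaining requirement is that the fake reward exists, which is guaranteed by \Cref{assumption:sigma-regularity}(4).
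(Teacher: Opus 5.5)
Your proposal is correct and matches the paper's proof step for step. Both split $G$ through $r$, apply \Cref{lemma:telescope-bound} with $r'=\tilde r$, use Cauchy--Schwarz over $h$, and finish with \cref{eq:assump-inv-L2}. Your remark that the argument only needs continuations under $r\in\cR$, plus contractivity for the $\tilde r$-prefix products, is a correct justification that the paper leaves implicit.
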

\begin{proof}
  Write $G=(R(\tau;r^\star)-R(\tau;r))+(R(\tau;r)-R(\tau;\tilde r))$ and use $(x+y)^2\le 2x^2+2y^2$. The first part is $\cL_\mu^{\mathrm{RM}}(r)$. For the second, \Cref{lemma:telescope-bound} with $r'=\tilde r$ gives $|R(\tau;r)-R(\tau;\tilde r)|\le L(V_{\max}+1)\sum_h|r_h-\tilde r_h|$; squaring and applying Cauchy--Schwarz,
  \[
    (R(\tau;r)-R(\tau;\tilde r))^2\le L^2(V_{\max}+1)^2   H\sum_h(r_h-\tilde r_h)^2.
  \]
  Taking $\EE_\mu$ and applying \cref{eq:assump-inv-L2} yields $\EE_\mu[(R(\tau;r)-R(\tau;\tilde r))^2]\le L^2(V_{\max}+1)^2 \chi H   \cL_\mu^{\mathrm{BE}}(\widehat\pi,r,f)$.
\end{proof}

\subsection[Proof of theorem: generalized outcome-based upper bound]{Proof of \Cref{thm:main-gen-outcome}}
\label{subsec:gen-main-result}

We use the inverse-temperature parameter $\eta=V_{\max}\sqrt{{K}/{(8\log|\cA|)}}$,
as in \Cref{lemma:hedge-regret}.

\begin{proof}[Proof of \Cref{thm:main-gen-outcome}]
  Apply \Cref{lemma:gen-regret-sharp} with comparator $\pi=\pi^\star$, current policy $\widehat\pi=\pi_k$, and the pair $(r,f)=(r_k,f_k)$ chosen by Generalized OPAC. For each $k\in[K]$,
  \begin{align}
    J(\pi^\star)-J(\pi_k)
     & =
    \underbrace{\sum_{h=1}^H\EE_{\pi^\star}\bigl[W_h^{r^\star}(\tau)(\Delta_h^{r_k}-\delta_{k,h})(s_h,a_h)\bigr]}_{(\mathrm{I}^\sharp)_k}
    +\underbrace{\sum_{h=1}^H\EE_{\pi^\star}\bigl[W_h^{r^\star}(\tau)\bigl(f_{k,h}(s_h,a_h)-f_{k,h}(s_h,\pi_k)\bigr)\bigr]}_{(\mathrm{II}^\sharp)_k}\notag \\
     & \quad
    +\underbrace{f_{k,1}(s_1,\pi_k)-J(\pi_k)}_{(\mathrm{III}^\sharp)_k},
    \label{eq:gen-regret-sharp-k}
  \end{align}
  where
  \[
    \delta_{k,h}\coloneqq f_{k,h}-(\cT_{r_k}^{\pi_k}f_k)_h,\qquad
    \Delta_h^{r_k}\coloneqq(\cT_{r^\star}^{\pi_k}f_k)_h-(\cT_{r_k}^{\pi_k}f_k)_h.
  \]
  We average the three terms in \cref{eq:gen-regret-sharp-k} over $k\in[K]$ on a single high-probability event on which the concentration bounds below hold uniformly over $\Pi\times\cR\times\cF$.

  \medskip
  \noindent\textbf{Step 1: bound $(\mathrm{I}^\sharp)$.}
  Since $a_h(u)\in[0,1]$, $W_h^{r^\star}\in[0,1]$. Thus
  \[
    |(\mathrm{I}^\sharp)_k|
    \le\sum_{h=1}^H\EE_{d_h^{\pi^\star}}\bigl[|\delta_{k,h}|+|\Delta_h^{r_k}|\bigr].
  \]
  The pointwise change of measure $d_h^{\pi^\star}(s,a)\le C_{sa}(\pi^\star)d_h^\mu(s,a)$ and Cauchy--Schwarz give
  \[
    \sum_h\EE_{d_h^{\pi^\star}}[|\delta_{k,h}|]\le\sqrt{HC_{sa}(\pi^\star)\cL_\mu^{\mathrm{BE}}(\pi_k,r_k,f_k)}.
  \]
  Since $a_h,b_h$ are $L$-Lipschitz and $|V_{h+1}^{\pi_k,f_k}|\le V_{\max}$,
  \[
    |\Delta_h^{r_k}(s,a)|
    \le L(V_{\max}+1)|r_h^\star-r_{k,h}|(s,a).
  \]
  Applying the same change of measure and using the definition of $\kappa=\kappa_\mu(\sigma)$ from \Cref{def:kappa},
  \begin{align}
    \sum_h\EE_{d_h^{\pi^\star}}[|\Delta_h^{r_k}|]\le L(V_{\max}+1)\sqrt{\kappa HC_{sa}(\pi^\star)\cL_\mu^{\mathrm{RM}}(r_k)}.\label{eq:kappa-enter-bound}
  \end{align}
  Combining,
  \begin{equation}\label{eq:I-summary}
    |(\mathrm{I}^\sharp)_k|
    \le\phi_{\mathrm{BE}}^{\mathrm{I}}\sqrt{\cL_\mu^{\mathrm{BE}}(\pi_k,r_k,f_k)}+\phi_{\mathrm{RM}}^{\mathrm{I}}\sqrt{\cL_\mu^{\mathrm{RM}}(r_k)},
  \end{equation}
  with $\phi_{\mathrm{BE}}^{\mathrm{I}}\coloneqq\sqrt{HC_{sa}(\pi^\star)}$ and $\phi_{\mathrm{RM}}^{\mathrm{I}}\coloneqq L(V_{\max}+1)\sqrt{\kappa HC_{sa}(\pi^\star)}$.

  \begin{remark}\label{remark:kappa-enter-bound}
    \cref{eq:kappa-enter-bound} is the only place where $\kappa_\mu(\sigma)$ enters the final bound; in the classical cumulative-return case, the same estimate reduces to one application of change of trajectory measure (\Cref{lemma:change-of-trajectory}). For this generalized analysis, one may alternatively use
    \begin{align*}
      \sum_h\EE_{d_h^{\pi^\star}}[|\Delta_h^{r_k}|]\le L(V_{\max}+1)\sqrt{H\kappa_{\pi^\star}C_{\tau}(\pi^\star) \cL_\mu^{\mathrm{RM}}(r_k)}, \quad\kappa_{\pi^\star} \coloneqq \sup_{r\ne r'}\frac{\sum_{h=1}^H\EE_{\pi^\star}\bigl[(r_h-r'_h)^2(s_h,a_h)\bigr]}{\EE_{\pi^\star}\bigl[(R(\tau;r)-R(\tau;r'))^2\bigr]},
    \end{align*}
    which inserts $\kappa_{\pi^\star}C_{\tau}(\pi^\star)$ in place of $\kappa_\mu C_{sa}(\pi^\star)$. This trajectory-level form can be tighter when $\mu$ places sufficient mass on trajectories visited under $\pi^\star$. For consistency with the rest of the paper, which is stated under state--action concentrability $C_{sa}(\pi^\star)$ rather than trajectory concentrability $C_{\tau}(\pi^\star)$, we retain the present analysis.
  \end{remark}

  \medskip
  \noindent\textbf{Step 2: bound $(\mathrm{II}^\sharp)$.}
  Let $\mathcal F_{h-1}$ denote the trajectory prefix before action $a_h$ is drawn. Conditional on $\mathcal F_{h-1}$, the state $s_h$ and the weight
  \[
    W_h^{r^\star}(\tau)=\prod_{j<h}a_j(r_j^\star(s_j,a_j))
  \]
  are fixed, and $W_h^{r^\star}(\tau)\in[0,1]$. Therefore
  \[
    \EE_{\pi^\star}\!\left[
      W_h^{r^\star}(\tau)
      \bigl(f_{k,h}(s_h,a_h)-f_{k,h}(s_h,\pi_k)\bigr)
      \middle| \mathcal F_{h-1}
      \right]
    =
    W_h^{r^\star}(\tau)
    \bigl(f_{k,h}(s_h,\pi^\star)-f_{k,h}(s_h,\pi_k)\bigr).
  \]
  For every fixed $h$ and state $s$, the pointwise bound \cref{eq:pointwise-regret} in the proof of \Cref{lemma:hedge-regret}, with $p=\pi_h^\star(\cdot\mid s)$ and $q_k=\pi_{k,h}(\cdot\mid s)$, gives
  \[
    \frac1K\sum_{k=1}^K
    \bigl(f_{k,h}(s,\pi^\star)-f_{k,h}(s,\pi_k)\bigr)
    \le
    V_{\max}\sqrt{\frac{\log|\cA|}{2K}},
  \]
  since $f_{k,h}(s,\cdot)\in[0,V_{\max}]$. Multiplying this pointwise inequality by the nonnegative prefix weight $W_h^{r^\star}(\tau)\le1$, taking expectation over the prefix, and summing over $h$ yields
  \begin{align}
    \frac{1}{K}\sum_{k=1}^K(\mathrm{II}^\sharp)_k
    \le HV_{\max}\sqrt{\frac{\log|\cA|}{2K}}
    =\widetilde O\bigl(HV_{\max}/\sqrt K\bigr).\label{eq:II-summary}
  \end{align}

  \medskip
  \noindent\textbf{Step 3: bound $(\mathrm{III}^\sharp)$.}
  Fix $k$ and write $\widehat\pi=\pi_k$, $r=r_k$, and $f=f_k$.

  \emph{(3a) Express $(\mathrm{III}^\sharp)$ via population losses.} Apply the generalized PDL (\Cref{lemma:gen-pdl}) with the first policy $\mu$ and the comparator policy $\widehat\pi$. Under reward $\tilde r$, using $f=Q^{\widehat\pi,\tilde r}$ and hence $J_{\tilde r}(\widehat\pi)=f_1(s_1,\widehat\pi)$,
  \[
    f_1(s_1,\widehat\pi)-J_{\tilde r}(\mu)
    =\sum_h\EE_\mu\bigl[\tilde W_h(\tau)(f_h(s_h,\widehat\pi)-f_h(s_h,a_h))\bigr]
    =\cL_\mu^{W^{\tilde r}}(\widehat\pi,\tilde r,f),
  \]
  and under reward $r^\star$,
  \[
    J(\widehat\pi)-J(\mu)
    =\sum_h\EE_\mu\bigl[W_h^{r^\star}(\tau)(V_h^{\widehat\pi}(s_h)-Q_h^{\widehat\pi}(s_h,a_h))\bigr]
    =\cL_\mu^{W^{r^\star}}(\widehat\pi,r^\star,Q^{\widehat\pi}),
  \]
  where both last equalities use the definition of the weighted population loss in \cref{eq:gen-pop-losses}. Subtracting and using $J_{\tilde r}(\mu)-J(\mu)=-\EE_\mu[G(\tau)]$ with $G(\tau)\coloneqq R(\tau;r^\star)-R(\tau;\tilde r)$,
  \begin{equation}\label{eq:gen-pessimism-via-mu}
    (\mathrm{III}^\sharp)=\cL_\mu^{W^{\tilde r}}(\widehat\pi,\tilde r,f)-\cL_\mu^{W^{r^\star}}(\widehat\pi,r^\star,Q^{\widehat\pi})-\EE_\mu[G].
  \end{equation}

  \emph{(3b) Pivot via the in-class realizer.} Let $f_{\widehat\pi}\in\cF$ satisfy
  \[
    \sup_{h,\nu}
    \Vert f_{\widehat\pi,h}-(\cT_{r^\star}^{\widehat\pi}f_{\widehat\pi})_h\Vert_{2,\nu}^2
    \le \varepsilon_\cF,
  \]
  as guaranteed by the generalized realizability assumption. Insert $\pm\cL_\mu^{W^r}(\widehat\pi,r,f)$ and $\pm\cL_\mu^{W^{r^\star}}(\widehat\pi,r^\star,f_{\widehat\pi})$:
  \begin{align}
    (\mathrm{III}^\sharp)
     & =\underbrace{\cL_\mu^{W^{\tilde r}}(\widehat\pi,\tilde r,f)-\cL_\mu^{W^r}(\widehat\pi,r,f)}_{\text{(weight replacement)}}
    +\underbrace{\cL_\mu^{W^r}(\widehat\pi,r,f)-\cL_\mu^{W^{r^\star}}(\widehat\pi,r^\star,f_{\widehat\pi})}_{\text{(algorithm gap)}}\notag                                               \\
     & \quad+\underbrace{\cL_\mu^{W^{r^\star}}(\widehat\pi,r^\star,f_{\widehat\pi})-\cL_\mu^{W^{r^\star}}(\widehat\pi,r^\star,Q^{\widehat\pi})}_{\text{(realizability gap)}}-\EE_\mu[G].
    \label{eq:gen-III-split}
  \end{align}

  \emph{(3c) Concentration estimates.} On the high-probability event, the following generalized concentration bounds hold uniformly over $(\pi,r,f)\in\Pi\times\cR\times\cF$. They use the same Bernstein/Hoeffding arguments as in \Cref{subsection:concentration}, but with three generalized ingredients: the Bellman target is $y_h^{r,f,\pi}=\sigma_h(r_h,f_{h+1}(\cdot,\pi))$, the reward-model residual is the trajectory return $R(\tau;r)-R(\tau;r^\star)$, and the policy-loss summand carries the trajectory weight $W_h^r$.
  \begin{itemize}
    \item \emph{BE concentration}. Conditional on $(s_h,a_h)$,
          \[
            \EE[y_h^{r,f,\pi}\mid s_h,a_h]=(\cT_r^\pi f)_h(s_h,a_h),
          \]
          so the same bias--variance cancellation behind \Cref{lemma:bellman-error-concentration} applies with $\cT_r^\pi$ in place of the additive Bellman operator. The only change is that boundedness is now supplied by the generalized assumptions, giving per-step squared residuals of order $V_{\max}^2$ and hence
          \[
            \cL_\mu^{\mathrm{BE}}(\pi,r,f)\le 2\cL_\cD^{\mathrm{BE}}(\pi,r,f)+\varepsilon_{\mathrm{BE}}+4\varepsilon_{\cF,\cF},\qquad \varepsilon_{\mathrm{BE}}=\widetilde O(HV_{\max}^2/n).
          \]
          The $4\varepsilon_{\cF,\cF}$ term is the generalized Bellman-completeness slack.
    \item \emph{RM concentration, $r^\star$-centred}. Apply the Bernstein self-bounding proof of \Cref{lem:reward-model-concentration-bernstein} to
          \[
            U_i(r)\coloneqq(R(\tau_i;r)-Y_i)^2-(R(\tau_i;r^\star)-Y_i)^2.
          \]
          Since $\EE[Y_i\mid\tau_i]=R(\tau_i;r^\star)$, $\EE[U_i(r)]=\cL_\mu^{\mathrm{RM}}(r)$; since $|R(\tau;r)|\le V_{\max}$ and $|Y_i-R(\tau_i;r^\star)|\le 2V_{\max}$, the range and variance terms are of order $V_{\max}^2$:
          \[
            \cL_\mu^{\mathrm{RM}}(r)\le 2[\cL_\cD^{\mathrm{RM}}(r)-\cL_\cD^{\mathrm{RM}}(r^\star)]+\varepsilon_{\mathrm{RM}},\qquad \varepsilon_{\mathrm{RM}}=\widetilde O(V_{\max}^2/n).
          \]
    \item \emph{Weighted policy-loss concentration}. This is Hoeffding's inequality as in \Cref{lemma:concentration-L}, but union-bounded over $(\pi,r,f)$ rather than only $(\pi,f)$. The extra reward argument only changes the summand through the measurable weight $W_h^r(\tau)$; contractivity gives $W_h^r\in[0,1]$, so the per-trajectory summand remains bounded by $O(HV_{\max})$:
          \[
            \sup_{(\pi,r,f)}\bigl|\cL_\mu^{W^r}(\pi,r,f)-\cL_\cD^{W^r}(\pi,r,f)\bigr|\le\sqrt{\varepsilon_{\mathrm{Perf}}},\qquad\sqrt{\varepsilon_{\mathrm{Perf}}}=\widetilde O(HV_{\max}/\sqrt n).
          \]
  \end{itemize}

  \emph{(3d) Algorithm gap.} Since $r^\star\in\cR$, the pessimism step \cref{eq:gen-alg-pessimism} can be compared with $(f_{\widehat\pi},r^\star)$. After subtracting $\beta\cL_\cD^{\mathrm{RM}}(r^\star)$ from both sides,
  \begin{equation}\label{eq:gen-stepC-centred}
    \cL_\cD^{W^r}(\widehat\pi,r,f)+\beta\cL_\cD^{\mathrm{BE}}(\widehat\pi,r,f)+\beta\bigl[\cL_\cD^{\mathrm{RM}}(r)-\cL_\cD^{\mathrm{RM}}(r^\star)\bigr]\le\cL_\cD^{W^{r^\star}}(\widehat\pi,r^\star,f_{\widehat\pi})+\beta\varepsilon_{\mathrm{apx}},
  \end{equation}
  where the generalized analogue of \Cref{lemma:bounded-em-Bellman-error} gives
  \[
    \cL_\cD^{\mathrm{BE}}(\widehat\pi,r^\star,f_{\widehat\pi})
    \le \varepsilon_{\mathrm{apx}},
    \qquad
    \varepsilon_{\mathrm{apx}}=\widetilde O(HV_{\max}^2/n+\varepsilon_\cF).
  \]
  Combining \cref{eq:gen-stepC-centred} with the three concentration bounds, the centred noisy term $\cL_\cD^{\mathrm{RM}}(r^\star)$ cancels exactly, and
  \begin{equation}\label{eq:alg-gap-bound}
    \cL_\mu^{W^r}(\widehat\pi,r,f)-\cL_\mu^{W^{r^\star}}(\widehat\pi,r^\star,f_{\widehat\pi})
    \le-\frac{\beta}{2}(\cL_\mu^{\mathrm{BE}}+\cL_\mu^{\mathrm{RM}})+\frac{\beta}{2}(\varepsilon_{\mathrm{BE}}+\varepsilon_{\mathrm{RM}}+4\varepsilon_{\cF,\cF})+\beta\varepsilon_{\mathrm{apx}}+2\sqrt{\varepsilon_{\mathrm{Perf}}}.
  \end{equation}

  \emph{(3e) Realizability gap (third bracket).} Using $W_h^{r^\star}\in[0,1]$ and the triangle inequality,
  \[
    \bigl|\cL_\mu^{W^{r^\star}}(\widehat\pi,r^\star,f_{\widehat\pi})-\cL_\mu^{W^{r^\star}}(\widehat\pi,r^\star,Q^{\widehat\pi})\bigr|
    \le\sum_{h=1}^H\bigl(\EE_\mu[|f_{\widehat\pi,h}-Q_h^{\widehat\pi}|(s_h,\widehat\pi)]+\EE_\mu[|f_{\widehat\pi,h}-Q_h^{\widehat\pi}|(s_h,a_h)]\bigr).
  \]
  Apply \Cref{lemma:gen-simulation} with $r=r^\star$ and $f=f_{\widehat\pi}$, so $\Delta_h^{r^\star}\equiv 0$. Since $a_h(u)\in[0,1]$, unrolling \cref{eq:gen-sim-Q} gives
  \[
    |f_{\widehat\pi,h}-Q_h^{\widehat\pi}|(s,a)\le\EE^{\widehat\pi}\Bigl[\textstyle\sum_{h'\ge h}|\delta_{h'}^{\widehat\pi}|\,\Big|\,s_h=s,a_h=a\Bigr],
    \qquad \delta_{h'}^{\widehat\pi}\coloneqq f_{\widehat\pi,h'}-(\cT_{r^\star}^{\widehat\pi}f_{\widehat\pi})_{h'}.
  \]
  Taking expectations in the preceding display and using the tower rule, the two terms in the first display are bounded by sums of $L^1$ norms of $\delta_{h'}^{\widehat\pi}$ under rollout distributions obtained either from $(s_h,\widehat\pi(s_h))$ or from $(s_h,a_h)$ and then following $\widehat\pi$. These are state--action distributions at step $h'$, so the uniform realizability bound above and Jensen's inequality give
  \[
    \EE_{\nu}\bigl[|\delta_{h'}^{\widehat\pi}|\bigr]
    \le
    \|\delta_{h'}^{\widehat\pi}\|_{2,\nu}
    \le
    \sqrt{\varepsilon_\cF}.
  \]
  Summing the at-most $H^2$ residual terms for each of the two parts in the first display yields
  \begin{equation}\label{eq:gen-realizability-gap}
    \bigl|\cL_\mu^{W^{r^\star}}(\widehat\pi,r^\star,f_{\widehat\pi})-\cL_\mu^{W^{r^\star}}(\widehat\pi,r^\star,Q^{\widehat\pi})\bigr|\le 2H^2\sqrt{\varepsilon_\cF}.
  \end{equation}

  \emph{(3f) Weight-replacement bracket.} Since $|f_h|\le V_{\max}$,
  \[
    \bigl|\cL_\mu^{W^{\tilde r}}(\widehat\pi,\tilde r,f)-\cL_\mu^{W^r}(\widehat\pi,r,f)\bigr|\le 2V_{\max}\sum_{h=1}^H\EE_\mu[|\tilde W_h-W_h^r|].
  \]
  By the weight Lipschitz bound (\Cref{lemma:weight-lip}) and the trivial estimate $\sum_{h=1}^H\sum_{j<h}\le H\sum_{j=1}^H$, then Cauchy--Schwarz over $j$ followed by \cref{eq:assump-inv-L2},
  \[
    \sum_h\EE_\mu[|\tilde W_h-W_h^r|]\le LH\sum_j\EE_\mu[|\tilde r_j-r_j|]\le L\sqrt{\chi}\,H^{3/2}\sqrt{\cL_\mu^{\mathrm{BE}}(\widehat\pi,r,f)},
  \]
  hence
  \begin{equation}\label{eq:weight-replace-bound}
    \bigl|\cL_\mu^{W^{\tilde r}}(\widehat\pi,\tilde r,f)-\cL_\mu^{W^r}(\widehat\pi,r,f)\bigr|\le 2V_{\max}L\sqrt{\chi}\,H^{3/2}\sqrt{\cL_\mu^{\mathrm{BE}}(\widehat\pi,r,f)}.
  \end{equation}

  \emph{(3g) The $-\EE_\mu[G]$ piece.} By Cauchy--Schwarz, $|\EE_\mu[G]|\le\sqrt{\EE_\mu[G^2]}$. Substituting the bound on $\EE_\mu[G^2]$ from \Cref{lemma:G-L2-bound} and using $\sqrt{a+b}\le\sqrt a+\sqrt b$,
  \begin{equation}\label{eq:G-bound}
    |\EE_\mu[G]|\le\sqrt 2\sqrt{\cL_\mu^{\mathrm{RM}}(r)}+L(V_{\max}+1)\sqrt{2\chi H\,\cL_\mu^{\mathrm{BE}}(\widehat\pi,r,f)}.
  \end{equation}

  \emph{(3h) Combine.} Adding \cref{eq:alg-gap-bound}--\cref{eq:G-bound} (the sign of $-\EE_\mu[G]$ is absorbed into $|\EE_\mu[G]|$) yields
  \begin{equation}\label{eq:III-summary}
    (\mathrm{III}^\sharp)
    \le-\frac{\beta}{2}(\cL_\mu^{\mathrm{BE}}+\cL_\mu^{\mathrm{RM}})+\frac{\beta}{2}(\varepsilon_{\mathrm{BE}}+\varepsilon_{\mathrm{RM}}+4\varepsilon_{\cF,\cF})+\beta\varepsilon_{\mathrm{apx}}+2\sqrt{\varepsilon_{\mathrm{Perf}}}+2H^2\sqrt{\varepsilon_\cF}+\phi_{\mathrm{BE}}^{\mathrm{III}}\sqrt{\cL_\mu^{\mathrm{BE}}}+\phi_{\mathrm{RM}}^{\mathrm{III}}\sqrt{\cL_\mu^{\mathrm{RM}}},
  \end{equation}
  with $\phi_{\mathrm{BE}}^{\mathrm{III}}\coloneqq 2V_{\max}L\sqrt{\chi}\,H^{3/2}+L(V_{\max}+1)\sqrt{2\chi H}$ and $\phi_{\mathrm{RM}}^{\mathrm{III}}\coloneqq\sqrt 2$. All BE/RM losses are evaluated at $(\widehat\pi,r,f)$.

  \medskip
  \noindent\textbf{Step 4: combine and tune $\beta$.}
  Adding \cref{eq:I-summary} and \cref{eq:III-summary}, with all BE/RM losses evaluated at $(\widehat\pi,r,f)$,
  \begin{equation}\label{eq:I-III-combined}
    (\mathrm{I}^\sharp)+(\mathrm{III}^\sharp)
    \le-\frac{\beta}{2}(\cL_\mu^{\mathrm{BE}}+\cL_\mu^{\mathrm{RM}})+\frac{\beta}{2}\bar\varepsilon+2\sqrt{\varepsilon_{\mathrm{Perf}}}+2H^2\sqrt{\varepsilon_\cF}+\Phi_{\mathrm{BE}}\sqrt{\cL_\mu^{\mathrm{BE}}}+\Phi_{\mathrm{RM}}\sqrt{\cL_\mu^{\mathrm{RM}}},
  \end{equation}
  where $\bar\varepsilon\coloneqq\varepsilon_{\mathrm{BE}}+\varepsilon_{\mathrm{RM}}+4\varepsilon_{\cF,\cF}+2\varepsilon_{\mathrm{apx}}$ aggregates the slacks (mirroring $\varepsilon$ in \cref{eq:delta-k-final}) and
  \[
    \Phi_{\mathrm{BE}}\coloneqq\phi_{\mathrm{BE}}^{\mathrm{I}}+\phi_{\mathrm{BE}}^{\mathrm{III}},\qquad
    \Phi_{\mathrm{RM}}\coloneqq\phi_{\mathrm{RM}}^{\mathrm{I}}+\phi_{\mathrm{RM}}^{\mathrm{III}},\qquad
    \Phi^2\coloneqq\Phi_{\mathrm{BE}}^2+\Phi_{\mathrm{RM}}^2.
  \]
  With the coefficients from \cref{eq:I-summary} and \cref{eq:III-summary},
  \[
    \Phi
    =
    \widetilde O\bigl(
    L(V_{\max}+1)\sqrt{\kappa HC_{sa}(\pi^\star)}
    +V_{\max}L\sqrt{\chi H^3}
    +L(V_{\max}+1)\sqrt{\chi H}
    +\sqrt{HC_{sa}(\pi^\star)}
    +1
    \bigr).
  \]
  Under the usual normalization in which $H,V_{\max},L,\kappa,\chi,C_{sa}(\pi^\star)\ge1$, the $L(V_{\max}+1)\sqrt{\chi H}$ term is dominated by $V_{\max}L\sqrt{\chi H^3}$ and the constant term is dominated by $\sqrt{HC_{sa}(\pi^\star)}$, giving the simplified order for $\Phi$ stated in \Cref{thm:main-gen-outcome}.
  Young's inequality $X\sqrt Y\le X^2/(2\beta)+\beta Y/2$ cancels the negative BE/RM terms and leaves
  \[
    (\mathrm{I}^\sharp)+(\mathrm{III}^\sharp)\le\frac{\Phi^2}{2\beta}+\frac{\beta\bar\varepsilon}{2}+2\sqrt{\varepsilon_{\mathrm{Perf}}}+2H^2\sqrt{\varepsilon_\cF}.
  \]
  Choosing
  \[
    \beta
    =
    \frac{\Phi}{\sqrt{\bar\varepsilon}},
    \qquad
    \bar\varepsilon
    =
    \widetilde O\bigl(HV_{\max}^2/n+\varepsilon_\cF+\varepsilon_{\cF,\cF}\bigr).
  \]
  Hence, using $\sqrt{x+y+z}\le\sqrt x+\sqrt y+\sqrt z$,
  \begin{align}
    \frac{\Phi^2}{2\beta}+\frac{\beta\bar\varepsilon}{2}
     & =
    \Phi\sqrt{\bar\varepsilon} \notag \\
     & =
    \widetilde O\Bigl(
    \Phi V_{\max}\sqrt{\frac{H}{n}}
    +\Phi\sqrt{\varepsilon_\cF}
    +\Phi\sqrt{\varepsilon_{\cF,\cF}}
    \Bigr). \label{eq:gen-balanced-expanded}
  \end{align}
  Expanding the statistical part with the above bound on $\Phi$ gives
  \begin{align}
    \Phi V_{\max}\sqrt{\frac{H}{n}}
     & =
    \widetilde O\Bigl(
    V_{\max}^2L\sqrt{\frac{\kappa H^2C_{sa}(\pi^\star)}{n}}
    +V_{\max}^2L\sqrt{\frac{\chi H^4}{n}}
    +V_{\max}\sqrt{\frac{H^2C_{sa}(\pi^\star)}{n}}
    \Bigr). \label{eq:gen-stat-expanded}
  \end{align}
  The concentration remainder satisfies
  \[
    2\sqrt{\varepsilon_{\mathrm{Perf}}}
    =
    \widetilde O\!\left(V_{\max}H/\sqrt n\right).
  \]
  Under the same normalization, the last term in \cref{eq:gen-stat-expanded} and the concentration remainder are lower order than the two displayed statistical terms in \cref{eq:main-text-rate}. Therefore,
  \[
    (\mathrm{I}^\sharp)+(\mathrm{III}^\sharp)
    \le
    \widetilde O\!\left(
    V_{\max}^2L\sqrt{\frac{\kappa H^2C_{sa}(\pi^\star)}{n}}
    +V_{\max}^2L\sqrt{\frac{\chi H^4}{n}}
    +\Phi\sqrt{\varepsilon_{\cF,\cF}}
    +(\Phi+H^2)\sqrt{\varepsilon_\cF}
    \right).
  \]
  Averaging over $k$ and adding the no-regret contribution \cref{eq:II-summary} gives \cref{eq:main-text-rate}.
\end{proof}

\section{Examples of Generalized Objectives}\label{sec:examples}

This section illustrates the generalized objective framework of \Cref{section:main-result-generalized}. For a policy $\pi$ and per-step reward tuple $r=(r_1,\ldots,r_H)$, write the trajectory return as
\[
  R(\tau;r)=\sigma\bigl(r_1(s_1,a_1),\ldots,r_H(s_H,a_H)\bigr),
  \qquad
  J(\pi)=\EE_{\tau\sim\pi}[R(\tau;r^\star)].
\]
\Cref{thm:main-lb-allsuccess} shows that without structure on $\sigma$, outcome-based learning can require exponentially many trajectories. We therefore quantify tractability through two complementary complexity coefficients. First, because the trajectory outcome aggregates per-step rewards, we use the \emph{reward-process coefficient} $\kappa_\mu(\sigma)$ (\Cref{def:kappa}) to measure information loss from observing only a scalar return:
\[
  \kappa_\mu(\sigma)
  =
  \sup_{r\ne r'\in\cR}
  \frac{\sum_{h=1}^H\EE_\mu\bigl[(r_h-r'_h)^2(s_h,a_h)\bigr]}{\EE_{\tau\sim\mu}\bigl[(R(\tau;r)-R(\tau;r'))^2\bigr]}.
\]
Large $\kappa_\mu(\sigma)$ means distinct reward profiles are nearly indistinguishable from scalar outcomes alone.
Second, generalized objectives need not admit a standard Bellman dynamic-programming decomposition \citep{zhang2020variational,barakat2023reinforcement}. We therefore restrict attention to Bellman-learnable aggregations $\sigma_h(u,v)=a_h(u)v+b_h(u)$ and use the \emph{Bellman inverse coefficient} $\chi_\mu(\sigma)$ (\Cref{def:chi}) to measure how much one-step generalized Bellman targets can compress per-step reward differences:
\[
  \chi_\mu(\sigma)
  =
  \sup_{\pi\in\Pi,\,f\in\cF,\,r\ne r'\in\cR}
  \frac{\sum_{h=1}^H\EE_\mu\bigl[(r_h-r'_h)^2(s_h,a_h)\bigr]}{\sum_{h=1}^H\EE_\mu\bigl[\bigl((\cT_r^\pi f)_h-(\cT_{r'}^\pi f)_h\bigr)^2(s_h,a_h)\bigr]}.
\]

Under \Cref{assumption:sigma-regularity}, generalized OPAC satisfies the guarantee of \Cref{thm:main-gen-outcome}; informally, the dominant statistical terms scale as
\begin{align}
  \widetilde O\!\left(
  V_{\max}^2L\sqrt{\frac{\kappa_\mu(\sigma)H^2C_{sa}(\pi^\star)}{n}}
  +
  V_{\max}^2L\sqrt{\frac{\chi_\mu(\sigma)H^4}{n}}
  \right),\label{eq:gen-bound-appendix}
\end{align}
up to the no-regret term in $K$ and approximation errors.

In the following examples, we take each objective in turn and indicate when it aligns with the generalized Bellman framework behind \Cref{thm:main-gen-outcome}, as well as when sample-complexity blow-ups may arise: either from reward-process compression (large $\kappa_\mu$), or because the generalized Bellman operator is unavailable or compresses per-step information too aggressively (large $\chi_\mu$).

\paragraph{Cumulative return}
Take $a_h(u)=1$, $b_h(u)=u$, and $g=0$, then
$
  \sigma_h(u,v)=u+v,
  R(\tau)=\sum_{h=1}^H   r_h(s_h,a_h).
$
Here $J(\pi)=\EE_\pi[\sum_h r_h]$ is the standard cumulative return.
The generalized Bellman operator \cref{eq:def-gen-bellman} reduces to the usual Bellman backup, and \Cref{assumption:sigma-regularity} holds with  $L=1$.
Moreover, for every $\pi,f$ and every $h,(s,a)$,
\[
  (\cT_r^\pi f)_h(s,a)-(\cT_{r'}^\pi f)_h(s,a)=r_h(s,a)-r'_h(s,a),
\]
since the continuation term $\EE_{s'}[f_{h+1}(s',\pi)]$ cancels in the difference. Summing squares over $h$ shows that the numerator and denominator in \Cref{def:chi} coincide for every $(\pi,f,r,r')$, hence $\chi_\mu(\sigma)=1$.
For $\kappa_\mu(\sigma)$, one could follow the argument in \Cref{remark:kappa-enter-bound} to obtain an upper bound identical to setting $\kappa_\mu(\sigma)=1$ in \cref{eq:gen-bound-appendix}.

\paragraph{All-success aggregation}
For the all-success objective in \Cref{ex:allsuccess-outcome}, take $\sigma_h(u,v)=uv$ and $g=1$, so that $R(\tau;r)=\prod_{h=1}^H r_h(s_h,a_h)$.
This objective is Bellman-recursive and affine in the continuation value, but it can still be statistically hard because the multiplicative aggregation strongly attenuates per-step reward differences.
To see this concretely, consider the binary-tree construction underlying \Cref{thm:main-lb-allsuccess}: under the uniform behavior policy $\mu$, each root-to-leaf trajectory has probability $2^{-H}$, and for each $\btheta\in\{0,1\}^H$ define $r_{\btheta,h}(s_h,a)=\mathbf 1\{a=\theta_h\}$.
For any $\btheta\ne\btheta'$, let $S(\btheta,\btheta')=\{h:\theta_h\ne\theta_h'\}$ and $k=|S(\btheta,\btheta')|$.
Then the per-step reward profiles differ exactly on these $k$ stages, so
\[
  \sum_{h=1}^H\EE_\mu\bigl[(r_{\btheta,h}-r_{\btheta',h})^2(s_h,a_h)\bigr]=k.
\]
On the other hand, $R(\tau;r_\btheta)=1$ iff $a_h=\theta_h$ for all $h$, an event of probability $2^{-H}$ under $\mu$; the analogous event for $\btheta'$ is disjoint and also has probability $2^{-H}$.
Hence
\[
  \EE_\mu\bigl[(R(\tau;r_\btheta)-R(\tau;r_{\btheta'}))^2\bigr]
  =2\cdot 2^{-H}=2^{-(H-1)}.
\]
Therefore
\[
  \frac{
  \sum_{h=1}^H\EE_\mu[(r_{\btheta,h}-r_{\btheta',h})^2(s_h,a_h)]
  }{
  \EE_\mu[(R(\tau;r_\btheta)-R(\tau;r_{\btheta'}))^2]
  }
  = k\,2^{H-1},
\]
which is maximized at $k=H$, giving $\kappa_\mu(\sigma)\ge H2^{H-1}$ on this class.
Thus scalar all-success outcomes can hide per-step reward differences exponentially well, matching the exponential hardness in \Cref{thm:main-lb-allsuccess}.
A similar intuition applies to the Bellman inverse coefficient $\chi_\mu(\sigma)$.
For $\sigma_h(u,v)=uv$,
\[
  (\cT_r^\pi f)_h(s,a)-(\cT_{r'}^\pi f)_h(s,a)
  =
  (r_h-r_h')(s,a)\,
  \EE_{s'\sim P_h(\cdot\mid s,a)}[f_{h+1}(s',\pi)].
\]
Thus a reward difference at stage $h$ is multiplied by the continuation all-success value after $(s,a)$.
If this continuation value is uniformly lower bounded by $v_{\min}>0$, then $\chi_\mu(\sigma)\le v_{\min}^{-2}$; but in all-success problems this continuation value is typically the probability of completing all remaining stages, which may scale like $p_0^{H-h}$ when each remaining correct action is chosen with probability at least $p_0$.
Consequently $\chi_\mu(\sigma)$ can scale as large as $p_0^{-2H}$, and can even become infinite if some supported state-action pair has zero continuation success probability.
Therefore, all-success aggregation fits the generalized Bellman form, unlike max-reward RL, but its multiplicative structure can still induce exponential sample complexity through both $\kappa_\mu(\sigma)$ and $\chi_\mu(\sigma)$.

\paragraph{Conditional continuation (game-over)}
Take $a_h(u)=\mathbf 1\{u>0\}$, $b_h(u)=u$, and $g=0$:
$\sigma_h(u,v)=u+\mathbf 1\{u>0\}v,
    R(\tau)=\sum_{h=1}^{H_{\mathrm{stop}}} r_h(s_h,a_h)$,
where $H_{\mathrm{stop}}=\min\{h: r_h(s_h,a_h)\le 0\}\wedge H$.
Once a non-positive reward is received, all subsequent rewards are discarded.
This objective is well suited for per-step analysis, since observing the reward process directly reveals both the stopping time and the cause of failure.
However, it can be very hard for outcome-only learning because the discontinuous gate $a_h(u)=\mathbf 1\{u>0\}$ is not Lipschitz and can hide too much information about the reward process.
For example, when $H=10$, both reward sequences
\[
  (1,1,-1,\star,\ldots,\star)
  \qquad\text{and}\qquad
  (0.1,0.1,\ldots,0.1)
\]
produce the same scalar outcome $R(\tau)=1$.
The first trajectory has a catastrophic failure at stage $3$, after which all downstream rewards are discarded, while the second makes small positive progress at every stage.
Thus per-step feedback distinguishes these cases immediately, but the scalar outcome collapses them into the same observation.
Although the recursion is affine in the continuation value and contractive, the exact game-over objective lies outside \Cref{assumption:sigma-regularity} because of the hard, non-Lipschitz gate.
A smoothed continuation gate can satisfy the assumption, but the resulting complexity is governed by the conditioning of the smoothed gate and by how much information the scalar outcome preserves about the stopping event.

\paragraph{Exponential Utility}
Take $a_h(u)=e^{\lambda u}$ for a risk parameter $\lambda\in\RR$, $b_h(u)=0$, and $g=1$:
$\sigma_h(u,v)=e^{\lambda u}v,
  \qquad
  R(\tau)=\prod_{h=1}^H e^{\lambda r_h(s_h,a_h)}
  =e^{\lambda\sum_{h=1}^H r_h(s_h,a_h)}$.
The objective becomes
$J(\pi)=\EE_{\tau\sim\pi}\left[e^{\lambda\sum_h r_h}\right]$,
which is the moment generating function of the cumulative return evaluated at $\lambda$.
The quantity $\frac{1}{\lambda}\log J(\pi)$ is the \emph{entropic risk measure}, a standard criterion in risk-sensitive RL \citep{howard1972risk,wang2024reductions,han2025risk}.
The Bellman recursion is affine in the continuation value, and monotonicity in the continuation holds because $e^{\lambda u}>0$, so the generalized RL theory, including the Bellman equation and Bellman optimality equation in \Cref{sec:generalized_rl}, applies.
The contractive condition used in \Cref{thm:main-gen-outcome}, however, requires $e^{\lambda u}\le 1$ on the reward range, equivalently $\lambda u\le 0$.
Thus, risk-averse utilities with nonnegative costs, or risk-seeking utilities over nonpositive rewards, fit the contractive template after bounding the return; the opposite-sign regime is Bellman-recursive but not covered by the present theorem.
However, by carrying out the same calculation as in the all-success example, this exponential utility can also lead to exponential sample complexity.

\paragraph{Max-reward RL}
\citet{quah2006maximum,gottipati2020maximum,cui2023reinforcement,veviurko2402max} propose a paradigm in which an agent optimizes the maximum (or minimum) reward rather than the cumulative reward.
Consider $R^{\max}(\tau;r)=\max_{h\in[H]} r_h(s_h,a_h)$.
This objective can be represented recursively as $\sigma_h(u,v)=\max\{u,v\}$, but the recursion is not affine in $v$, so our generalized Bellman theory in \Cref{sec:generalized_rl} and \Cref{thm:main-gen-outcome} does not apply.
We can also illustrate a separate information-theoretic obstruction arising from reward-process aggregation.
Take a simple example: suppose that, for some finite-horizon MDP, every candidate reward satisfies $r_H(s_H,a_H)=1$ and $r_h(s_h,a_h)<1$ for all $h<H$ throughout the $\mu$-support.
Then $R^{\max}(\tau;r)=1$ holds $\mu$-almost surely for every $r\in\cR$, so $\EE_\mu[(R^{\max}(\tau;r)-R^{\max}(\tau;r'))^2]=0$ for all pairs $r,r'$, while earlier-stage rewards may still differ.
If $\cR$ contains such distinct pairs with
$\sum_{h=1}^H\EE_\mu[(r_h-r'_h)^2(s_h,a_h)]>0$,
then the denominator in \Cref{def:kappa} vanishes and $\kappa_\mu(\sigma)=+\infty$.
This means that trajectory-level observations make it impossible to recover per-step signals efficiently, and thus one cannot guarantee a finite-sample complexity analysis in offline RL under this objective.
Compared with our work, existing works mainly focus on convergence rather than finite-sample complexity analysis.

\section[The Underlying Statistical Problem: sigma-Composition Regression]{The Underlying Statistical Problem: $\sigma$-Composition Regression}
\label{app:sigma-regression}

This appendix records a self-contained regression abstraction: i.i.d.\ draws $x_i=(x_{i,1},\ldots,x_{i,H})\sim\mu$, scalar outcomes $Y_i$, and the reward-process coefficient $\kappa_\mu(\sigma)$ from \Cref{def:kappa}, read with $R(x;r)$ below in place of $R(\tau;r)$ and with $r_h$ evaluated at the coordinate~$x_h$. We argue that the complexity of learning per-step reward from trajectory outcomes is exactly what $\kappa_\mu(\sigma)$ captures.

\subsection{Regression formulation}
\label{app:subsec-sigma-regression}

Fix measurable spaces $\cX_1,\ldots,\cX_H$, a law $\mu$ on $\cX_1\times\cdots\times\cX_H$, a known $\sigma:\RR^H\to\RR$, and classes $\cR_h$ of measurable maps $r_h:\cX_h\to\RR$ with $\cR\subseteq\cR_1\times\cdots\times\cR_H$.
For $x=(x_1,\ldots,x_H)$ and $r=(r_1,\ldots,r_H)\in\cR$, define the composed outcome $R(x;r)\coloneqq\sigma\bigl(r_1(x_1),\ldots,r_H(x_H)\bigr)$.
Assume $|R(x;r)|\le V_{\max}$ for every $r\in\cR$ and $\mu$-a.e.\ $x$.
Let $r^\star\in\cR$ and suppose we observe
\[
  (x_i,Y_i),\qquad x_i\stackrel{\text{i.i.d.}}{\sim}\mu,\quad i\in[n],
\]
with $\EE[Y_i\mid x_i]=R(x_i;r^\star)$ and $|Y_i|\le V_{\max}$ almost surely.
Write
\[
  \|r-r'\|^2\coloneqq\sum_{h=1}^H\EE_\mu\bigl[(r_h(x_h)-r'_h(x_h))^2\bigr].
\]
The coefficient $\kappa_\mu(\sigma)$ is as in \Cref{def:kappa}; equivalently, with $\kappa=\kappa_\mu(\sigma)$, it is the smallest $\kappa\ge0$ such that
\[
  \|r-r'\|^2\le \kappa\,\EE_{x\sim\mu}\bigl[(R(x;r)-R(x;r'))^2\bigr]\quad\text{for all distinct }r,r'\in\cR,
\]
adopting the ratio conventions $a/0=+\infty$ for $a>0$ and $0/0=0$.
For distinct $r,r'\in\cR$ with $\EE_{x\sim\mu}\bigl[(R(x;r)-R(x;r'))^2\bigr]>0$, define the pair-specific ratio
\begin{equation}\label{eq:kappa-pair}
  \kappa_{r,r'}(\sigma)\coloneqq \frac{\|r-r'\|^2}{\EE_{x\sim\mu}\bigl[(R(x;r)-R(x;r'))^2\bigr]},
\end{equation}
so $\kappa_\mu(\sigma)=\sup_{r\ne r'}\kappa_{r,r'}(\sigma)$ under the same conventions.
The ERM estimator is
\begin{equation}\label{eq:erm-appendix}
  \hat r\in\argmin_{r\in\cR}\frac{1}{n}\sum_{i=1}^n\bigl(Y_i-R(x_i;r)\bigr)^2.
\end{equation}

\subsection[Minimax Rate for sigma-Composition Regression]{Minimax Rate for $\sigma$-Composition Regression}

The next theorem records a high-probability upper bound for the empirical risk minimizer from~\cref{eq:erm-appendix}.
\begin{theorem}[\texorpdfstring{Upper bound for $\sigma$-Composition Regression}{Upper bound for sigma-Composition Regression}]
  \label{thm:sigma-regression-upper}
  Suppose $\cR$ is finite, $r^\star\in\cR$, and $(x_i,Y_i)_{i=1}^n$ are independent with $x_i\sim\mu$ and
  \[
    \EE[Y_i\mid x_i]=R(x_i;r^\star).
  \]
  Assume moreover that
  \[
    |R(x;r)|\le V_{\max}\quad\text{for all }r\in\cR\text{ and }\mu\text{-a.e. }x,
    \qquad
    |Y_i|\le V_{\max}\quad\text{a.s.}
  \]
  Then there exists a universal constant $C>0$ such that, with probability at least $1-\delta$, any empirical risk minimizer $\hat r$ from~\cref{eq:erm-appendix} satisfies
  \[
    \|\hat r-r^\star\|^2
    \le
    C\,\kappa_\mu(\sigma)\,
    \frac{V_{\max}^2\log(|\cR|/\delta)}{n}.
  \]
\end{theorem}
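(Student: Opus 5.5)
The plan is to reduce the statement to the $r^\star$-centred Bernstein self-bounding argument already used in \Cref{lem:reward-model-concentration-bernstein}, now with the generic composed outcome $R(x;r)$ in place of the cumulative sum, and then convert the resulting prediction-error bound into a parameter-error bound using the definition of $\kappa_\mu(\sigma)$.

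\textbf{Step 1: the centred excess loss.} Fix $r\in\cR$ and write
\[
  \Delta_i\coloneqq R(x_i;r)-R(x_i;r^\star),\qquad \xi_i\coloneqq Y_i-R(x_i;r^\star).
\]
Then $\EE[\xi_i\mid x_i]=0$, $|\xi_i|\le 2V_{\max}$ and $|\Delta_i|\le 2V_{\max}$. Expanding the squares gives the per-sample excess loss
\[
  U_i(r)\coloneqq \bigl(Y_i-R(x_i;r)\bigr)^2-\bigl(Y_i-R(x_i;r^\star)\bigr)^2=\Delta_i^2-2\Delta_i\xi_i .
\]
Its mean is the prediction error $\cE(r)\coloneqq\EE_\mu[(R(x;r)-R(x;r^\star))^2]$. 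Exactly as in the earlier lemma, the cross-covariance between $\Delta_i^2$ and $\Delta_i\xi_i$ vanishes, so
\[
  \Var(U_i(r))\le \EE[\Delta_i^4]+4\EE[\Delta_i^2\xi_i^2]\lesssim V_{\max}^2\,\cE(r),
\]
and the range satisfies $|U_i(r)|\lesssim V_{\max}^2$.

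\textbf{Step 2: Bernstein, union bound, and the ERM property.} Apply one-sided Bernstein to $\{U_i(r)\}_{i=1}^n$, take a union bound over $r\in\cR$, and solve the self-bounding quadratic with AM--GM. This gives, simultaneously for all $r\in\cR$,
\[
  \cE(r)\le 2\Bigl[\tfrac1n\textstyle\sum_i U_i(r)\Bigr]+C'V_{\max}^2\log(|\cR|/\delta)/n .
\]
Because $r^\star\in\cR$ and $\hat r$ is an empirical risk minimizer, $\frac1n\sum_i U_i(\hat r)\le 0$. Hence
\[
  \cE(\hat r)\le C'V_{\max}^2\log(|\cR|/\delta)/n .
\]
The estimate for the randomly selected $\hat r$ is legitimate because the concentration bound holds uniformly over $\cR$.

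\textbf{Step 3: transfer via $\kappa$.} If $\hat r=r^\star$ the claim is trivial. Otherwise the definition of $\kappa_\mu(\sigma)$, applied to the pair $(\hat r,r^\star)$, gives $\|\hat r-r^\star\|^2\le\kappa_\mu(\sigma)\,\cE(\hat r)$. If $\cE(\hat r)=0$ while $\|\hat r-r^\star\|>0$, then $\kappa_\mu(\sigma)=\infty$ under the stated ratio convention and the bound holds vacuously. Combining with Step 2 yields the theorem with $C=C'$.

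\textbf{Main obstacle.} The only delicate point is the variance self-bound in Step 1. It has to scale with $\cE(r)$ rather than a constant, since this is what delivers the fast $1/n$ rate instead of $1/\sqrt n$. It relies on conditional mean-zero noise together with boundedness, both of which are assumed here. Everything else transfers verbatim from \Cref{lem:reward-model-concentration-bernstein}, with $H$ replaced by $V_{\max}$ in the range bounds.
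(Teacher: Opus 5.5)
Your proposal is correct and follows the paper's proof: the centred excess squared loss, a one-sided Bernstein bound whose variance is self-bounded by the prediction error, a union bound over the finite class $\cR$, the ERM property $\frac1n\sum_i U_i(\hat r)\le 0$ (using $r^\star\in\cR$), and finally the transfer to $\|\hat r-r^\star\|^2$ through the definition of $\kappa_\mu(\sigma)$. The one cosmetic difference is in the variance step: you use the noise decomposition $U_i=\Delta_i^2-2\Delta_i\xi_i$ as in \Cref{lem:reward-model-concentration-bernstein}, while the paper factors $Z_r=(g_\star-g_r)(2Y-g_r-g_\star)$ to get $\EE[Z_r^2]\le 16V_{\max}^2L_r$ from boundedness alone, so conditional mean-zero noise is needed only for the mean identity and not, as your closing remark suggests, for the variance bound.
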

The proof of \Cref{thm:sigma-regression-upper} is given in \Cref{proof:sigma-regression-upper}.
For a minimax lower bound, we isolate the hardness coming from compressing per-step structure into a single trajectory outcome: we take two candidates $r,r'$ at $\|\cdot\|$-separation.
Under i.i.d.\ Gaussian outcome noise at scale~$V_{\max}$, \Cref{thm:sigma-regression-lower} shows that $\Omega\bigl(V_{\max}^2\kappa_{r,r'}(\sigma)/\Delta^2\bigr)$ samples can be necessary even for distinguishing this pair.

\begin{theorem}[\texorpdfstring{Lower bound for $\sigma$-Composition Regression}{Lower bound for sigma-Composition Regression}]
  \label{thm:sigma-regression-lower}
  Suppose $\cD=\{(x_i,Y_i)\}_{i=1}^n$ with $x_i\stackrel{\text{i.i.d.}}{\sim}\mu$ and
  \[
    Y_i=R(x_i;r^\star)+\xi_i,
    \qquad
    \xi_1,\ldots,\xi_n\stackrel{\text{i.i.d.}}{\sim}\cN(0,V_{\max}^2),
  \]
  with $\xi_i$ independent of $x_i$.
  Fix distinct $r,r'\in\cR$ with $\|r-r'\|^2=\Delta^2>0$ and finite $\kappa_{r,r'}(\sigma)$ as in~\cref{eq:kappa-pair}.
  Then whenever $n\le V_{\max}^2\kappa_{r,r'}(\sigma)/\Delta^2$,
  \[
    \inf_{\hat r}\max_{r^\star\in\{r,r'\}}\EE\bigl[\|\hat r-r^\star\|^2\bigr]
    \ge
    \frac{\Delta^2}{16}.
  \]
  In particular, for any $\varepsilon<\Delta^2/16$, every estimator $\hat r$ with $\max_{r^\star\in\{r,r'\}}\EE[\|\hat r-r^\star\|^2]\le\varepsilon$ requires $n=\Omega\bigl(V_{\max}^2\kappa_{r,r'}(\sigma)/\Delta^2\bigr)$.
\end{theorem}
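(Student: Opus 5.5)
This is a two-point Le Cam argument on the pair $\{r,r'\}$. Write $P$ and $P'$ for the joint laws of the dataset $\cD$ under $r^\star=r$ and $r^\star=r'$. The proof has three steps.

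\emph{Step 1: reduce estimation to testing.} For any estimator $\hat r$, use the triangle inequality in the seminorm $\|\cdot\|$:
\[
\|\hat r-r\|+\|\hat r-r'\|\ge\Delta,
\quad\text{hence}\quad
\|\hat r-r\|^2+\|\hat r-r'\|^2\ge\Delta^2/2.
\]
Define the test $\psi=\mathbf 1\{\|\hat r-r'\|<\|\hat r-r\|\}$. If $\psi$ selects the wrong hypothesis, the loss under the true hypothesis is at least $\Delta^2/4$. Averaging the two hypotheses and applying the standard two-point bound gives
\[
\max_{r^\star\in\{r,r'\}}\EE\bigl[\|\hat r-r^\star\|^2\bigr]
\ge\frac{\Delta^2}{4}\cdot\frac{1}{2}\bigl(1-\mathrm{TV}(P,P')\bigr).
\]
So it suffices to show $\mathrm{TV}(P,P')\le 1/2$ whenever $n\le V_{\max}^2\kappa_{r,r'}(\sigma)/\Delta^2$.

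\emph{Step 2: compute the KL divergence.} The $x_i$ have the same law under both hypotheses, and conditionally on $x_i$ the outcomes are Gaussians with common variance $V_{\max}^2$. The chain rule and the Gaussian KL formula therefore give
\[
\KL(P\|P')=\frac{n}{2V_{\max}^2}\,\EE_\mu\bigl[(R(x;r)-R(x;r'))^2\bigr].
\]
By the definition \cref{eq:kappa-pair}, the expectation equals $\Delta^2/\kappa_{r,r'}(\sigma)$. Hence
\[
\KL(P\|P')=\frac{n\Delta^2}{2V_{\max}^2\kappa_{r,r'}(\sigma)}\le\frac12
\]
under the assumed bound on $n$.

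\emph{Step 3: bound TV and combine.} Pinsker gives
\[
\mathrm{TV}(P,P')\le\sqrt{\KL(P\|P')/2}\le\tfrac12.
\]
Plugging this into Step 1 yields $\Delta^2/16$. The ``in particular'' statement is the contrapositive: if $\max_{r^\star}\EE[\|\hat r-r^\star\|^2]<\Delta^2/16$, then $n$ must exceed $V_{\max}^2\kappa_{r,r'}/\Delta^2$.

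\emph{Where care is needed.} The only delicate point is the constant bookkeeping in Step 1. If the loss-gap factor comes out as $\Delta^2/8$ instead of $\Delta^2/4$, I would instead use the Bretagnolle--Huber bound $1-\mathrm{TV}\ge\tfrac12 e^{-\KL}$, or tighten the triangle-inequality step. Some slack in the constant is acceptable, since the conclusion is stated up to $\Omega(\cdot)$.
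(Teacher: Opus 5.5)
Your proposal is correct and follows the paper's proof essentially step for step: the same two-point Le Cam reduction giving $\frac{\Delta^2}{8}\bigl(1-\mathrm{TV}\bigr)$, the same Gaussian KL computation $n\Delta^2/(2V_{\max}^2\kappa_{r,r'}(\sigma))\le\frac12$, and Pinsker to conclude $\mathrm{TV}\le\frac12$. Your Step~1 already gives the exact $\Delta^2/4$ per-error loss, so the closing hedge is unnecessary; this matters because the main claim $\Delta^2/16$ is an exact constant, and only the ``in particular'' part is an $\Omega(\cdot)$ statement.
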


The proof of \Cref{thm:sigma-regression-lower} is given in \Cref{proof:sigma-regression-lower}.
Together, \Cref{thm:sigma-regression-upper,thm:sigma-regression-lower} make the role of $\kappa$ explicit in $\sigma$-composition regression: outcome aggregation compresses per-step differences into $R(x;r)$, so controlling $\|\hat r-r^\star\|^2$ through outcome residuals incurs an extra factor of order $\kappa$ in the minimax rate.

\subsection[Proof of theorem: upper bound for sigma-composition regression]{Proof of \Cref{thm:sigma-regression-upper}}\label{proof:sigma-regression-upper}
For $r\in\cR$ set $g_r(x)\coloneqq R(x;r)$ and $g_\star(x)\coloneqq R(x;r^\star)$.
Let $(x,Y)$ be an independent copy from the same distribution as each sample pair, so $x\sim\mu$ and $\EE[Y\mid x]=g_\star(x)$.
Let $\cD=\{(x_i,Y_i)\}_{i=1}^n$ and write
\[
  \EE_{\cD}[\phi]\coloneqq \frac{1}{n}\sum_{i=1}^n \phi(x_i,Y_i)
\]
for measurable $\phi$.
Define the squared loss $\ell_r(x,Y)\coloneqq (Y-g_r(x))^2$ and the excess loss
\[
  Z_r(x,Y)\coloneqq \ell_r(x,Y)-\ell_{r^\star}(x,Y).
\]
Then
\begin{align*}
  \EE[Z_r(x,Y)]
   & =
  \EE\bigl[(Y-g_r(x))^2-(Y-g_\star(x))^2\bigr] \\
   & =
  \EE_{x\sim\mu}\bigl[(g_r(x)-g_\star(x))^2\bigr]
  \eqqcolon L_r .
\end{align*}
In particular, $L_r$ is the population squared prediction error of $g_r$ under~$\mu$.

Fix $r\in\cR$. Expanding the square gives
\[
  Z_r(x,Y)
  =
  (Y-g_r(x))^2-(Y-g_\star(x))^2
  =
  (g_\star(x)-g_r(x))\bigl(2Y-g_r(x)-g_\star(x)\bigr).
\]
Since $|Y|,|g_r(x)|,|g_\star(x)|\le V_{\max}$ almost surely,
\[
  |Z_r(x,Y)|\le 4V_{\max}^2
\]
and
\[
  Z_r(x,Y)^2
  \le
  16V_{\max}^2(g_r(x)-g_\star(x))^2.
\]
Therefore
\[
  \Var(Z_r(x,Y))
  \le
  \EE[Z_r(x,Y)^2]
  \le
  16V_{\max}^2 L_r .
\]
Also $L_r=\EE_{x\sim\mu}[(g_r(x)-g_\star(x))^2]\le 4V_{\max}^2$, hence $|Z_r(x,Y)-L_r|\le 8V_{\max}^2$ almost surely.
Apply one-sided Bernstein to the i.i.d.\ variables $Z_r(x_1,Y_1),\ldots,Z_r(x_n,Y_n)$: for any fixed $r\in\cR$ and any $\eta\in(0,1)$, with probability at least $1-\eta$,
\[
  \EE_{\cD}[Z_r]-L_r
  \ge
  -\sqrt{\frac{32V_{\max}^2L_r\log(1/\eta)}{n}}
  -\frac{16V_{\max}^2\log(1/\eta)}{3n}.
\]
Set $\eta=\delta/|\cR|$ and take a union bound over $r\in\cR$.
With probability at least $1-\delta$, the preceding display holds simultaneously for every $r\in\cR$.
On this event, if $\EE_{\cD}[Z_r]\le 0$ then
\[
  L_r
  \le
  \sqrt{\frac{32V_{\max}^2L_r\log(|\cR|/\delta)}{n}}
  +\frac{16V_{\max}^2\log(|\cR|/\delta)}{3n}.
\]
Using $\sqrt{32V_{\max}^2L_r\log(|\cR|/\delta)/n}\le L_r/2+16V_{\max}^2\log(|\cR|/\delta)/n$ and rearranging yields
\[
  L_r
  \le
  C\frac{V_{\max}^2\log(|\cR|/\delta)}{n}
\]
for a universal constant $C>0$.

If $\hat r$ is an empirical risk minimizer and $r^\star\in\cR$, then $\EE_{\cD}[Z_{\hat r}]\le 0$.

On this same event,
\[
  \EE_{x\sim\mu}\bigl[(R(x;\hat r)-R(x;r^\star))^2\bigr]
  =
  L_{\hat r}
  \le
  C\frac{V_{\max}^2\log(|\cR|/\delta)}{n}.
\]

Finally, by \Cref{def:kappa},
\[
  \|\hat r-r^\star\|^2
  \le
  \kappa_\mu(\sigma)\,
  \EE_{x\sim\mu}\bigl[(R(x;\hat r)-R(x;r^\star))^2\bigr].
\]
Combining the last two displays completes the proof.

\subsection[Proof of theorem: lower bound for sigma-composition regression]{Proof of \Cref{thm:sigma-regression-lower}}\label{proof:sigma-regression-lower}

For $r^\star\in\{r,r'\}$, let $\PP_{r^\star}$ be the law of one draw $(x,Y)$ under that truth, and let $\EE_{r^\star}$ denote expectation when $\cD$ consists of $n$ i.i.d.\ draws from $\PP_{r^\star}$.
Write $\delta_R^2:=\EE_{x\sim\mu}\bigl[(R(x;r)-R(x;r'))^2\bigr]=\Delta^2/\kappa_{r,r'}(\sigma)$.
Conditional on $x$, the two models differ only in the mean of a Gaussian with variance $V_{\max}^2$, so
\[
  \KL(\PP_r\|\PP_{r'})
  =
  \frac{\delta_R^2}{2V_{\max}^2}.
\]
By tensorization, $\KL(\PP_r^{\otimes n}\|\PP_{r'}^{\otimes n})=n\,\KL(\PP_r\|\PP_{r'})$, and Pinsker's inequality gives
\[
  \mathrm{TV}\bigl(\PP_r^{\otimes n},\PP_{r'}^{\otimes n}\bigr)
  \le
  \sqrt{\frac{n}{2}\,\KL(\PP_r^{\otimes n}\|\PP_{r'}^{\otimes n})}
  =
  \sqrt{\frac{n\delta_R^2}{4V_{\max}^2}}
  \le
  \frac12
\]
when $n\le V_{\max}^2/\delta_R^2$.

The pair $(r,r')$ is $\Delta$-separated in $\|\cdot\|$ because $\|r-r'\|=\Delta$.
By the two-point Le Cam method \citep{yu1997assouad}, when $n\le V_{\max}^2/\delta_R^2$,
\[
  \inf_{\hat r}\max_{r^\star\in\{r,r'\}}\EE_{r^\star}\bigl[\|\hat r-r^\star\|^2\bigr]
  \ge
  \frac{\Delta^2}{8}\Bigl(1-\mathrm{TV}\bigl(\PP_r^{\otimes n},\PP_{r'}^{\otimes n}\bigr)\Bigr)
  \ge
  \frac{\Delta^2}{16}.
\]

\end{document}